\documentclass[11pt]{article}
\pdfoutput=1
\usepackage[margin=1in]{geometry} 

\usepackage{etoolbox}

\usepackage[round]{natbib}
\usepackage{array,amsmath,amsthm,amssymb,amsfonts,titling,bbm, titlesec, bm, physics, enumitem, accents, setspace, fancyhdr, natbib, geometry, pdflscape}
\usepackage{graphicx}
\usepackage{float}
\usepackage[font=footnotesize,labelfont=bf]{caption}
\usepackage{array}
\usepackage[title]{appendix}
\usepackage{afterpage}
\usepackage{listings}
\usepackage{prodint}
\usepackage{algorithm}
\usepackage{comment}
 \usepackage{booktabs}
\usepackage{multirow,booktabs}
\usepackage{subcaption}
\newlength{\continueindent}
\usepackage{etoolbox}
\usepackage{setspace}
\usepackage[dvipsnames]{xcolor}
\definecolor{Bleu}{RGB}{0,0,204}
\usepackage[hypertexnames=false]{hyperref}
\hypersetup{
colorlinks,
  citecolor=Bleu,
  linkcolor=Bleu,
  urlcolor=Bleu}
\newcommand{\Reg}{\mathrm{Reg}}
\newcommand{\Rem}{\mathrm{Rem}}

\newcolumntype{P}[1]{>{\centering\arraybackslash}p{#1}}

\usepackage{amsmath}
\usepackage{amssymb}
\usepackage{amsthm,thmtools}
\usepackage{enumitem}
\usepackage{amsfonts}
\RequirePackage{algorithm}
\RequirePackage{algorithmic}
\usepackage{bm,upgreek}
\usepackage{mathrsfs}  
\usepackage{calc}
\usepackage{subcaption}
\usepackage{authblk}

\theoremstyle{plain}
\theoremstyle{definition} 
\newtheorem{example}{Example} 

\newtheorem{theorem}{Theorem}
\newtheorem*{theorem*}{Theorem}    
\newtheorem{corollary}{Corollary}

\newtheorem{lemma}{Lemma}
\newtheorem{remark}{Remark}
\newtheorem*{remark*}{Remark}
\DeclareMathOperator{\E}{E}

\newcommand{\Var}{\operatorname{Var}}

\usepackage[utf8]{inputenc}
\usepackage{textcomp}

\DeclareMathOperator*{\argmin}{argmin}

\DeclareMathOperator*{\esssup}{ess\,sup}

\usepackage[capitalize,noabbrev]{cleveref}

\usepackage{authblk}

\usepackage[most]{tcolorbox}

\DeclareFontFamily{U}{jkpmia}{}
\DeclareFontShape{U}{jkpmia}{m}{it}{<->s*jkpmia}{}
\DeclareFontShape{U}{jkpmia}{bx}{it}{<->s*jkpbmia}{}
\DeclareMathAlphabet{\mathfrak}{U}{jkpmia}{m}{it}
\SetMathAlphabet{\mathfrak}{bold}{U}{jkpmia}{bx}{it}

\usepackage[hang,flushmargin]{footmisc}

\usepackage{tikz}
\usetikzlibrary{arrows.meta,positioning,fit}
\usetikzlibrary{calc}

\title{A Researcher’s Guide to Empirical Risk Minimization}

\author{Lars van der Laan}

\affil{Department of Statistics, University of Washington}

 \usepackage{setspace}
\newcommand{\slightspacing}{\setstretch{1.175}}

\begin{document}

\maketitle

\slightspacing

\begin{abstract}
This guide provides a reference for high-probability regret bounds in empirical risk minimization (ERM). The presentation is modular: we begin with intuition and general proof strategies, then state broadly applicable guarantees under high-level conditions and provide tools for verifying them for specific losses and function classes. We emphasize that many ERM rate derivations can be organized around a three-step recipe---a basic inequality, a uniform local concentration bound, and a fixed-point argument---which yields regret bounds in terms of a critical radius, defined via localized Rademacher complexity, under a mild Bernstein-type variance--risk condition \citep{bartlett2005local,wainwright2019high}. To make these bounds concrete, we upper bound the critical radius using local maximal inequalities and metric-entropy integrals, thereby recovering familiar rates for VC-subgraph, Sobolev/H\"older, and bounded-variation classes.

We also study ERM with nuisance components---including weighted ERM and Neyman-orthogonal losses---as they arise in causal inference, missing data, and domain adaptation. Following \citet{foster2023orthogonal}, we highlight that these problems often admit regret-transfer bounds linking regret under an estimated loss to population regret under the target loss. These bounds typically decompose the regret into (i) statistical error under the estimated loss and (ii) approximation error due to nuisance estimation. Under sample splitting or cross-fitting, the first term can be controlled using standard fixed-loss ERM regret bounds, while the second depends only on nuisance-estimation accuracy. As a novel contribution, we also treat the in-sample regime, in which the nuisances and the ERM are fit on the same data, deriving regret bounds and showing that fast oracle rates remain attainable under suitable smoothness and Donsker-type conditions. 
\end{abstract}

\begingroup
\setcounter{tocdepth}{2} 
\renewcommand{\contentsname}{Table of contents}
\tableofcontents
\endgroup
\bigskip

\section{Introduction}\label{sec:intro}

Empirical risk minimization (ERM) is a central tool in modern statistics and machine learning. In a typical problem, we choose \(\hat f_n\) by minimizing the empirical risk $$P_n \ell(\cdot, f) := \frac{1}{n}\sum_{i=1}^n \ell(Z_i,f)$$
over a function class \(\mathcal F\). The corresponding population risk is $R(f) := E_P[\ell(Z,f)],$
and a basic goal is to obtain guarantees on the \emph{regret} (or excess risk) $$R(\hat f_n)-R(f_0), \quad f_0 \in \arg\min_{f\in\mathcal F} R(f)$$
where $f_0$
is a population risk minimizer. Although the ERM principle is simple, deriving sharp regret rates in new settings can be technically involved. Proofs typically rely on tools from empirical process theory, such as uniform local maximal inequalities and concentration bounds \citep{van1996weak,geer2000empirical,gine2006concentration,wainwright2019high}. To avoid re-deriving regret bounds separately for each loss and function class, a common goal is to express them in terms of standard complexity measures, such as covering numbers and entropy integrals, for which a well-developed calculus is available.

This guide presents a high-probability regret analysis of ERM. We take a modular
approach that isolates proof patterns and complexity bounds with broad
applicability. The main message is that many ERM rate derivations can be
organized around a three-step recipe
\citep{geer2000empirical,abbeel2004apprenticeship,bartlett2005local,koltchinskii2011oracle,wainwright2019high}:
(i) a deterministic basic inequality; (ii) a uniform local concentration bound
for the empirical-process term; and (iii) a fixed-point argument that converts
the local bound into a regret rate. Under a mild Bernstein-type variance
condition on the loss, we give a general theorem that bounds ERM regret in
terms of the \emph{critical radius}---equivalently, the \emph{localized
Rademacher complexity}---of the loss-difference class
\[
\mathcal F_\ell := \{\ell(\cdot,f)-\ell(\cdot,f_0): f\in\mathcal F\}.
\]
This critical-radius viewpoint separates the \emph{statistical} task of
controlling the local complexity of \(\mathcal F_\ell\) from the
\emph{algebraic} task of solving the associated fixed-point inequality to
derive a regret rate.

To carry out step~(ii) and compute critical radii in practice, we develop maximal
inequalities that upper bound the local complexity of \(\mathcal F_\ell\), and hence its
critical radius, in terms of metric-entropy integrals
\citep{van2011local,lei2016local,wainwright2019high}. These results reduce many rate
calculations to bounding covering numbers for \(\mathcal F_\ell\). Covering numbers are
well understood for many classical function classes and behave stably under basic
operations, such as forming star-shaped hulls or applying Lipschitz transformations,
which streamlines these calculations. This approach recovers familiar critical radii
and regret rates for VC-subgraph classes, Sobolev/H\"older classes, and bounded-variation
classes.

A second focus of this guide is ERM with \emph{nuisance components}. Many modern estimators minimize empirical risks of the form \(P_n \ell_{\hat g}(\cdot,f)\), where the loss depends on a nuisance component \(\hat g\) estimated from the data, for example through inverse-probability weighting or pseudo-outcome regression. Following \citet{foster2023orthogonal},
nuisance components often do \emph{not} require a new regret analysis: one can
apply a standard regret bound to the ERM under the \emph{estimated} loss and
then use a \emph{regret-transfer} inequality to control the error induced by
using \(\hat g\) in place of \(g\), namely, the discrepancy between the population
risks (and minimizers) under \(g\) and \(\hat g\). Building on
\citet{foster2023orthogonal}, we develop regret-transfer bounds for weighted ERM
with estimated weights. Under sample splitting or cross-fitting, these bounds
combine with fixed-loss regret results to yield high-probability guarantees for
nuisance-dependent ERM. We also treat the \emph{in-sample} setting, in which the
nuisances are estimated on the same data used to compute \(\hat f_n\), and which
therefore requires additional uniform concentration arguments. For suitably smooth
optimization classes, such as H\"older or Sobolev classes, we show that oracle
rates are attainable under Donsker-type conditions on the nuisance class.

\paragraph{Scope and intent.}
These notes are not intended to be a comprehensive survey of ERM, nor do they aim to provide the sharpest possible regret bounds. Instead, they collect proof patterns and complexity bounds that I have found useful for deriving regret rates across a range of problems. This guide is intended as a technical reference on regret analysis for researchers with some familiarity with empirical process theory \citep{van1996weak,sen2018gentle} and empirical risk minimization \citep{wainwright2019high}. More specifically, the goal is to bridge two complementary literatures: the generality of localized Rademacher complexity arguments
\citep{bartlett2002rademacher,bousquet2003introduction,bartlett2005local,wainwright2019high}
and the practical convenience of uniform-entropy and maximal-inequality arguments
\citep{van1996weak,van2011local}. Similarly, Section~\ref{sec:nuis} is intended to complement the orthogonal statistical learning framework of \citet{foster2023orthogonal}. These are working notes, and comments and corrections are very welcome.

\paragraph{Useful resources.}
For introductory background on ERM, helpful lecture-note treatments include
\citet{bartlett2013stat210b}, \citet{guntuboyina2018stat210b}, and
\citet{sen2018gentle}, while book-length introductions include
\citet{shalev2014understanding}, \citet{mohri2018foundations},
\citet{wainwright2019high}, and \citet{koltchinskii2011oracle}. For
empirical-process treatments of ERM and M-estimation, see
\citet{van1996weak}, \citet{geer2000empirical}, and \citet{sen2018gentle}.
I found \citet{sen2018gentle} and \citet{wainwright2019high} especially
helpful as complementary resources.

 \paragraph{Organization.}
Section~2 introduces the ERM setup and the high-probability language used throughout, and recalls how regret bounds translate into error bounds under curvature. Section~3 presents a high-level blueprint and intuition for regret proofs. Section~4.1 develops a general regret theorem in terms of localized Rademacher complexity and critical radii, while Section~4.2 provides entropy-based bounds that make critical-radius calculations concrete for common classes. Section~5 introduces ERM with nuisance components, including weighted ERM and orthogonal losses with sample splitting and in-sample nuisance estimation. The appendix collects several local concentration inequalities for empirical processes and empirical inner products that support the main results and may be useful more broadly in ERM analyses.

\section{Preliminaries}
\label{sec:prelim}
\subsection{Problem setup and notation}

We observe i.i.d.\ data \(Z_1,\ldots,Z_n\in\mathcal Z\) drawn from an unknown
distribution \(P\), and write \(Z\sim P\) for an independent draw. The empirical
distribution of the sample is \(P_n := \frac{1}{n}\sum_{i=1}^n \delta_{Z_i}\),
where \(\delta_z\) denotes a point mass at \(z\in\mathcal Z\). For any measurable
function \(g:\mathcal Z\to\mathbb R\), we use the shorthand
\[
Pg := E\{g(Z)\},
\qquad
P_n g := \frac{1}{n}\sum_{i=1}^n g(Z_i).
\]
We write \(\|g\|:=\|g\|_{L^2(P)}:=\{Pg^2\}^{1/2}\) and
\(\|g\|_\infty:=\esssup_{z\in\mathcal Z}|g(z)|\), and use
\(\|g\|_n:=\|g\|_{L^2(P_n)}\) as shorthand for the empirical \(L^2\) norm. For simplicity, we let
\(\mathcal F\) denote a class of real-valued functions on \(\mathcal Z\). If
\(Z=(X,Y)\) and the functions in \(\mathcal F\) depend only on \(X\), we adopt
the convention that for \(z=(x,y)\), \(f(z):=f(x)\) for all \(f\in\mathcal F\). Throughout, \(\lesssim\) and \(\gtrsim\) denote inequalities that hold up to
universal constants, unless stated otherwise. We denote the minimum and maximum operators by
\(x \wedge y := \min\{x,y\}\) and \(x \vee y := \max\{x,y\}\).

Our goal is to learn a (constrained) population risk minimizer
\[
f_0 \in \arg\min_{f\in\mathcal{F}} R(f), \qquad
R(f) := E\{\ell(Z,f)\} = P \ell(\cdot,f),
\]
where \(\mathcal{F}\) is a class of candidate predictors and \(R(f)\) is the population risk induced by a loss function \(\ell(z,f)\in\mathbb{R}\). The loss function \(\ell\) specifies the notion of predictive error being optimized, while \(\mathcal{F}\) may encode structural assumptions such as sparsity, smoothness, or low dimensionality. In practice, \(f_0\) is often viewed as an approximation to a global minimizer
\(f_0^\star \in \arg\min_{f\in L^2(P)} R(f)\) (i.e., the minimizer over \(\mathcal{F}=L^2(P)\)).

Empirical risk minimization (ERM) is a basic principle for approximating population risk minimizers such as \(f_0\). The idea is simple: replace the unknown distribution \(P\) in \(R(f)=P\ell(\cdot,f)\) with its empirical counterpart \(P_n\), and then minimize the resulting objective. Concretely, an empirical risk minimizer is any solution
\[
\hat f_n \in \arg\min_{f\in\mathcal F} R_n(f), \qquad
R_n(f) := P_n\ell(\cdot,f)=\frac{1}{n}\sum_{i=1}^n \ell(Z_i,f),
\]
where \(R_n(f)\) is the empirical risk, computed over the sample $\{Z_i\}_{i=1}^n$.

Examples of loss functions and their purposes in machine learning can be found,
for example, in \citet{hastie2009elements}, \citet{james2013introduction},
\citet{shalev2014understanding}, \citet{wainwright2019high}, and
\citet{foster2023orthogonal}. For instance, in regression with \(Z=(X,Y)\), a
common choice is the squared loss \(\ell\{(x,y),f\}:=\{y-f(x)\}^2\). In this
case, the population risk is globally minimized over \(L^2(P_X)\) by the
regression function \(f_0^\star(x)=E_0[Y\mid X=x]\). The function class
\(\mathcal F\) specifies a working model for \(f_0^\star\) and need not contain
\(f_0^\star\) exactly. For example, one may assume that \(f_0^\star\) is
contained in, or well approximated by, the linear class
\(\mathcal F:=\{x\mapsto x^\top\beta:\ \beta\in\mathbb R^p\}\) (or a constrained
variant such as \(\mathcal F:=\{x\mapsto x^\top\beta:\ \|\beta\|_1\le B\}\) in
high-dimensional settings).

In binary classification with \(Y\in\{0,1\}\), one often models \(f(x)\in(0,1)\)
as a class probability and uses the cross-entropy (negative log-likelihood) loss
\[
\ell\{(x,y),f\}:= -y\log f(x) - (1-y)\log\{1-f(x)\},
\]
for which a global minimizer is \(f_0^\star(x)=P(Y=1\mid X=x)\). Different
parameterizations of \(P(Y=1\mid X=x)\) lead to different but equivalent loss
representations. For example, if \(\mathcal F\) parameterizes the log-odds (the
logit), so that \(f(x)\in\mathbb R\) and
\(p_f(x):=\{1+\exp(-f(x))\}^{-1}\) is the induced conditional probability, then
the cross-entropy loss can be written as the logistic regression loss
\[
\ell\{(x,y),f\}:=\log\bigl(1+\exp\{f(x)\}\bigr)-y f(x).
\]
The benefit of this parameterization is that \(f\) is unconstrained, so
\(\mathcal F\) can be taken to be a convex set (or a linear space), which often
simplifies optimization, for example via gradient-based methods. The squared and
logistic losses are special cases of a broad class of losses derived from
(working or quasi-)log-likelihoods for exponential family models, which also
includes the Poisson (log-linear) loss and many others \citep{hastie2009elements}.

\subsection{High-probability guarantees (PAC-style bounds)}

A central goal of this guide is to understand how quickly \(\hat f_n\) approaches \(f_0\) as \(n\) grows. We derive convergence rates under general conditions, focusing on two accuracy measures: the regret (or excess risk) \(R(\hat f_n)-R(f_0)\) and the \(L^2(P)\) estimation error \(\|\hat f_n-f_0\|\).

To state finite-sample guarantees, we will often use \textit{probably approximately correct} (\emph{PAC})--style bounds \citep{vapnik1999overview,geer2000empirical,shalev2014understanding,mohri2018foundations,wainwright2019high}. Concretely, a PAC-style bound specifies a function \(\varepsilon(n,\eta)\) of the sample size \(n\) and a failure probability \(\eta\in(0,1)\) such that, for every \(n\in\mathbb{N}\) and every \(\eta\in(0,1)\),
\[
R(\hat f_n)-R(f_0)\le \varepsilon(n,\eta) \qquad\text{with probability at least }1-\eta.
\]
The parameter $\eta$ is user-chosen: taking $\eta$ smaller makes the event more likely, at the cost of a larger (more conservative) bound.

For example, consider linear prediction with squared loss, where $\mathcal F:=\{x\mapsto x^\top\beta:\ \|\beta\|_1\le B\}$ and $f_0(x)=x^\top\beta_0$ for an $s$-sparse vector $\beta_0\in\mathbb R^p$ (i.e., $\|\beta_0\|_0=s$). Let $\hat f$ be the corresponding $\ell^1$-constrained ERM (equivalently, the lasso). Under standard design conditions, one has the high-probability regret bound \citep{wainwright2019high}
\begin{equation}
\label{eqn::highdim}
R(\hat f_n)-R(f_0)
\ \lesssim\
\frac{\sigma^2\, s\,\log(p/\eta)}{n}
\qquad\text{with probability at least }1-\eta,
\end{equation}
where $\sigma^2$ is a noise level (e.g., the variance of the regression errors) and $p$ is the ambient dimension.

PAC-style bounds provide more information than \(O_p\) (big-Oh in probability) statements because they make the dependence on the failure probability and sample size explicit.
The statement \(R(\hat f_n)-R(f_0)=O_p(\varepsilon_n)\) means that, for every \(\eta\in(0,1)\), there exist constants \(C_\eta<\infty\) and \(N_\eta\) such that, for all \(n\ge N_\eta\),
\[
R(\hat f_n)-R(f_0)\le C_\eta\,\varepsilon_n
\qquad\text{with probability at least }1-\eta.
\]
The constant \(C_\eta\) is left unspecified and may depend poorly on \(\eta\).
Moreover, this bound need not hold until the sample size exceeds \(N_\eta\), which may be arbitrarily large.
In contrast, a PAC-style bound gives an explicit finite-sample guarantee by providing an \(\eta\)-dependent error function \(\varepsilon(n,\eta)\) (and, if needed, an explicit \(N_{\eta}\)). For example, the high-dimensional linear regression bound in \eqref{eqn::highdim} yields the \(O_p\) statement
\(R(\hat f_n)-R(f_0)=O_p\bigl(\sigma^2 s\log p/n\bigr)\).
It also shows how the guarantee degrades as \(\eta\) decreases: \(\eta\) enters only through \(\log(1/\eta)\), which grows logarithmically as \(\eta\downarrow 0\). PAC-style bounds can also be used to prove almost-sure convergence. In particular, if
\[
\sum_{n=1}^\infty \Pr\!\left(R(\hat f_n)-R(f_0) > \varepsilon(n,\eta_n)\right) < \infty
\]
for some sequence \(\{\eta_n\}_{n\in\mathbb N}\), then the Borel--Cantelli lemma implies that \(R(\hat f_n)-R(f_0) \le \varepsilon(n,\eta_n)\) eventually almost surely.

Another advantage of PAC-style bounds over \(O_p\) bounds is that they facilitate simultaneous control of many events. For example, suppose we wish to control the errors of a collection of ERMs \(\{\hat f_{n,k}: k\in \mathcal{K}_n\}\), where the size \(K(n) := |\mathcal{K}_n|\) of the index set \(\mathcal{K}_n\) grows with \(n\). An \(O_p\) statement for each \(k\) is inherently pointwise: knowing \(R(\hat f_{n,k})-R(f_{0,k}) = O_p(\varepsilon_{n,k})\) for every fixed \(k\) does not, by itself, imply that all bounds hold simultaneously when \(K(n)\to\infty\). In contrast, if we have PAC-style bounds
\[
\Pr\left(R(\hat f_{n,k})-R(f_{0,k}) \le \varepsilon_k(n,\eta)\right)\ge 1-\eta
\quad\text{for each }k,
\]
then a union bound yields
\[
\Pr\left(\forall k\in \mathcal{K}_n:\ R(\hat f_{n,k})-R(f_{0,k}) \le \varepsilon_k\left(n,\frac{\eta}{K(n)}\right)\right)\ge 1-\eta.
\]
Thus, PAC-style bounds let us tune \(\eta\) to control the probability of \emph{any} failure across a growing family of estimators, typically at the cost of an additional \(\log K(n)\) term.

In regression, a common approach is to partition the covariate space \(\mathcal X\) into \(K(n)\) disjoint bins and perform empirical risk minimization within each bin \citep{wasserman2006all, takezawa2005introduction}. For example, if \(\mathcal F\) consists of constant predictors and we use the squared-error loss, then \(\hat f_{n,k}\) is the constant function equal to the empirical mean of the \(\approx n/K(n)\) responses in bin \(k\). In this case, the maximum squared error over \(K(n)\) independent empirical means, each based on \(n/K(n)\) observations, is typically of order \(K(n)\log K(n)/n\), a fact that can be used to derive regret bounds for histogram regression \citep{wasserman2006all}. Similar needs for uniform control arise beyond classical regression, for instance in sequential or iterative regression procedures in reinforcement learning, such as fitted value iteration and fitted \(Q\)-iteration \citep{munos2005error, munos2008finite, antos2007fitted, van2025inverse, van2025fitted, van2025stationary}. In these settings, regret is incurred at each iteration, and one must control the cumulative regret with high probability; see, for example, Lemma~3 and Theorem~2 of \citet{van2025fitted}.

\paragraph{Switching between different ways of writing high-probability bounds}
We state our results in PAC-style form, whereas some related works, such as \cite{bousquet2002bennett}, \cite{bartlett2005local}, and \cite{wainwright2019high}, express analogous results as exponential tail bounds. For example, PAC-style bounds often take the form
\[
R(\hat f_n)-R(f_0)\lesssim \varepsilon_n+\frac{\log(1/\eta)}{n}
\]
with probability at least \(1-\eta\), where \(\varepsilon_n\) is the leading rate term and \(\log(1/\eta)/n\) is the deviation term. Setting \(\Delta=\log(1/\eta)/n\) yields the equivalent formulation that, with probability at least \(1-e^{-n\Delta}\), the regret exceeds \(\varepsilon_n\) by at most \(\Delta\):
\[
R(\hat f_n)-R(f_0)\lesssim \varepsilon_n+\Delta.
\]
Thus, the PAC-style form immediately implies an exponential tail bound. Alternatively, following \cite{wainwright2019high}, we may choose \(\eta=e^{-t n\varepsilon_n}\) for some \(t>0\). Then \(\log(1/\eta)/n=t\varepsilon_n\), so the above display implies that, with probability at least \(1-e^{-t n\varepsilon_n}\),
\[
R(\hat f_n)-R(f_0)\lesssim (1+t)\varepsilon_n.
\]
In particular, \(n\varepsilon_n\) can be viewed as an effective sample size for learning \(f_0\), since it governs the exponent in the tail probability. More generally, if \(n\varepsilon_n=O(1)\), then for any fixed confidence level \(1-\alpha\in(0,1)\), we can choose \(t\) large enough that \(1-e^{-t n\varepsilon_n}\ge 1-\alpha\), and hence \(R(\hat f_n)-R(f_0)=O_p(\varepsilon_n)\). Integrating the tail bound over \(t\) yields the mean regret bound \(\mathbb{E}\{R(\hat f_n)-R(f_0)\}=O(\varepsilon_n+1/n)\) \citep{wainwright2019high}.

Each formulation parametrizes the same high-probability bound in a different way and may be more convenient in different applications. The first uses the failure probability \(\eta\), the second uses the additive deviation \(\Delta\), and the third uses the multiplicative inflation parameter \(t\). The first is often convenient when combining many events via a union bound. The second and third more directly quantify the size of the deviation, and the third makes it especially easy to read off \(O_p\) convergence rates.

\subsection{From regret to $L^2(P)$ error: curvature and strong convexity}
\label{sec::ell2}

It is common to derive high-probability bounds on the regret
\(R(\hat f_n)-R(f_0)\). In many problems, the regret is locally quadratic in
\(f-f_0\), so \(\{R(\hat f_n)-R(f_0)\}^{1/2}\) behaves like a distance between
\(\hat f_n\) and \(f_0\). Nevertheless, the quantity of primary interest is
often the \(L^2(P)\) estimation error \(\|\hat f_n-f_0\|\), which is typically
more interpretable and is often needed for downstream analyses. In this
section, we show how regret bounds can be translated into \(L^2(P)\) error
bounds.

A standard sufficient condition is a local quadratic-growth, or curvature,
inequality: there exists \(\kappa>0\) such that, for all \(f\in\mathcal F\),
\[
R(f)-R(f_0)\ge \kappa\,\|f-f_0\|^2.
\]
When this holds, any high-probability bound of the form
\(R(\hat f_n)-R(f_0)\lesssim \varepsilon\) immediately implies
\(\|\hat f_n-f_0\| \lesssim \kappa^{-1/2}\sqrt{\varepsilon}\) at the same
confidence level. This condition is natural when \(R\) is twice
differentiable with positive curvature near \(f_0\), since a second-order
Taylor expansion around \(f_0\) yields local quadratic growth in
\(\|f-f_0\|\).

We now formalize this intuition. A function class \(\mathcal F\) is
\emph{convex} if, for any \(f,g\in\mathcal F\) and any \(t\in[0,1]\), the
convex combination \(tf+(1-t)g\) also belongs to \(\mathcal F\). We say that
the risk function \(R:\mathcal F\to\mathbb R\) is \emph{strongly convex at
\(f_0\)} if there exists a curvature constant \(\kappa\in(0,\infty)\) such
that, for all \(f\in\mathcal F\),
\[
R(f)-R(f_0)-\dot R_{f_0}(f-f_0) \ge \kappa\,\|f-f_0\|^2,
\]
where \(\dot R_{f_0}(h):=\left.\frac{d}{dt}R(f_0+th)\right|_{t=0}\) denotes
the directional derivative of \(R\) at \(f_0\) in direction \(h\); see
Equation~14.42 of \citealp{wainwright2019high}.

\begin{lemma} \label{lemma::strongconvexity}
Suppose $\mathcal F$ is convex and $R:\mathcal F\to\mathbb R$ is strongly convex around $f_0$ with curvature constant $\kappa\in(0,\infty)$. Then, for all $f\in\mathcal F$,
\[
R(f)-R(f_0)\ge \kappa\,\|f-f_0\|^2.
\]
\end{lemma}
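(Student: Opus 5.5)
The plan is to combine the strong convexity inequality with the first-order optimality condition for $f_0$ over the convex set $\mathcal F$. By the definition of strong convexity at $f_0$, for every $f\in\mathcal F$,
\[
R(f)-R(f_0)\ \ge\ \dot R_{f_0}(f-f_0)+\kappa\,\|f-f_0\|^2 .
\]
So it suffices to show that $\dot R_{f_0}(f-f_0)\ge 0$ for all $f\in\mathcal F$. This is the variational inequality characterizing a constrained minimizer.

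To prove this, fix $f\in\mathcal F$ and set $h:=f-f_0$. For $t\in[0,1]$, the function $f_0+th=(1-t)f_0+tf$ is a convex combination of elements of $\mathcal F$. By convexity of $\mathcal F$, it therefore lies in $\mathcal F$. Since $f_0$ minimizes $R$ over $\mathcal F$, we have $R(f_0+th)-R(f_0)\ge 0$ for every $t\in(0,1]$, and hence
\[
\frac{R(f_0+th)-R(f_0)}{t}\ \ge\ 0 \qquad\text{for all } t\in(0,1].
\]
Letting $t\downarrow 0$, the left-hand side converges to $\dot R_{f_0}(h)$. This uses the existence of the directional derivative, which is implicit in the definition of strong convexity. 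We conclude $\dot R_{f_0}(f-f_0)\ge 0$.

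Substituting this into the first display gives $R(f)-R(f_0)\ge \kappa\|f-f_0\|^2$, as claimed. The only delicate point is interpreting the derivative. Because $f_0+th$ is only guaranteed to lie in $\mathcal F$ for $t\ge 0$, the derivative should be read as the one-sided limit $t\downarrow 0$. If the definition is meant as a two-sided derivative, that version also exists and coincides with the one-sided limit, so the argument still applies. I would add a remark saying this explicitly, and otherwise expect no real obstacle.
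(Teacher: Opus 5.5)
Your proof is correct and follows the paper's argument exactly: apply the strong convexity inequality, then use convexity of $\mathcal F$ and minimality of $f_0$ on the segment $f_0+t(f-f_0)$, divide by $t>0$, and let $t\downarrow 0$ to get $\dot R_{f_0}(f-f_0)\ge 0$. Your remark that the derivative should be read as a one-sided limit is a reasonable clarification that the paper leaves implicit.
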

\begin{proof}
By strong convexity of $R$, we have
\[
R(f)-R(f_0)\ge \dot R_{f_0}(f-f_0) + \kappa\,\|f-f_0\|^2.
\]
Since $\mathcal F$ is convex, the line segment $f_0+t(f-f_0)$ lies in $\mathcal F$ for all $t\in[0,1]$. Because $f_0$ minimizes $R$ over $\mathcal F$, we have
\[
R\{f_0+t(f-f_0)\}-R(f_0)\ge 0
\qquad\text{for all }t\in[0,1].
\]
Dividing by $t>0$ and taking the limit as $t\downarrow 0$ yields
\[
0 \le \lim_{t\downarrow 0}\frac{R\{f_0+t(f-f_0)\}-R(f_0)}{t}
= \dot R_{f_0}(f-f_0).
\]
Thus $\dot R_{f_0}(f-f_0)\ge 0$, and dropping this term from the strong convexity bound gives $R(f)-R(f_0)\ge \kappa\,\|f-f_0\|^2,$
as claimed.
\end{proof}

We illustrate Lemma~\ref{lemma::strongconvexity} by verifying the curvature condition explicitly for the least-squares loss using properties of projections onto convex sets.

\begin{example}[Strong convexity for least squares via projection]
Let \(\ell(z,f):=\{y-f(x)\}^2\) and \(R(f):=E\{(Y-f(X))^2\}\). Let \(\mathcal F\subset L^2(P_X)\) be convex, and let
\(f_0\in\arg\min_{f\in\mathcal F} R(f)\). Write \(f^\star(x):=E(Y\mid X=x)\) for the (unconstrained) risk minimizer. Conditioning on \(X\) yields
\[
R(f)
=E\Bigl[E\bigl[(Y-f(X))^2\mid X\bigr]\Bigr]
=E\bigl[\Var(Y\mid X)+\{f^\star(X)-f(X)\}^2\bigr].
\]
Therefore, for any \(f\in\mathcal F\),
\begin{align*}
R(f)-R(f_0)
&=
E\bigl[\{f^\star(X)-f(X)\}^2-\{f^\star(X)-f_0(X)\}^2\bigr]\\
&=
\|f-f_0\|^2
-2E\bigl[(f-f_0)(f^\star-f_0)\bigr].
\end{align*}
Since \(f_0\) is the least-squares projection of \(f^\star\) onto the convex set \(\mathcal F\), the first-order optimality condition gives
\(E[(f-f_0)(f^\star-f_0)]\le 0\) for all \(f\in\mathcal F\). Hence
\[
R(f)-R(f_0)\ \ge\ \|f-f_0\|^2,\qquad f\in\mathcal F.
\]
If \(\mathcal F\) is a linear subspace, then \(E[(f-f_0)(f^\star-f_0)]=0\), and the inequality becomes an equality:
\(R(f)-R(f_0)=\|f-f_0\|^2\).
\end{example}

In the remainder of the guide, we focus primarily on bounding the regret, since the lemma above lets us translate regret bounds directly into \(L^2(P)\) error bounds under mild curvature conditions.

\section{Proof blueprint for regret bound}

\label{sec:overview}

\subsection{Step 1: the basic inequality (deterministic regret bound)}

The starting point of any ERM analysis is a deterministic upper bound on the
regret \(R(\hat f_n)-R(f_0)\). The following theorem establishes the so-called
``basic inequality,'' which provides such a bound.

\begin{theorem}[Basic inequality for constrained ERM]
\label{theorem::basic}
For any empirical risk minimizer $\hat f_n\in\arg\min_{f\in\mathcal F} R_n(f)$ and any population risk minimizer $f_0\in\arg\min_{f\in\mathcal F} R(f)$,
\[
R(\hat f_n)-R(f_0) \le (P_n-P)\bigl\{\ell(\cdot,f_0)-\ell(\cdot,\hat f_n)\bigr\}.
\]
\end{theorem}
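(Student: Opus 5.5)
The plan is to add and subtract the empirical risks so that the population regret splits into an empirical-process term plus an empirical-regret term. The empirical-regret term has a sign we control, because \(\hat f_n\) minimizes \(R_n\) over \(\mathcal F\). No probabilistic argument is needed; the whole proof is deterministic algebra. Throughout, I use the shorthand \(R(f)=P\ell(\cdot,f)\) and \(R_n(f)=P_n\ell(\cdot,f)\) from Section~\ref{sec:prelim}.

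Concretely, I will write
\[
R(\hat f_n)-R(f_0)
=\bigl\{R(\hat f_n)-R_n(\hat f_n)\bigr\}+\bigl\{R_n(\hat f_n)-R_n(f_0)\bigr\}+\bigl\{R_n(f_0)-R(f_0)\bigr\}.
\]
Since \(f_0\in\mathcal F\) and \(\hat f_n\) minimizes \(R_n\) over \(\mathcal F\), the middle term satisfies \(R_n(\hat f_n)-R_n(f_0)\le 0\), so I drop it to get an upper bound. The two remaining terms are \(-(P_n-P)\ell(\cdot,\hat f_n)\) and \((P_n-P)\ell(\cdot,f_0)\). By linearity of \(P_n-P\), they combine into \((P_n-P)\{\ell(\cdot,f_0)-\ell(\cdot,\hat f_n)\}\), which is the claimed bound.

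The only point needing care, and hence the mild ``obstacle,'' is well-definedness. Here \(P\ell(\cdot,\hat f_n)\) is understood as \(R(f)\) evaluated at \(f=\hat f_n\), i.e., \(P\) integrates over an independent draw \(Z\) with \(\hat f_n\) held fixed, not as an expectation over the sample. With this convention, and assuming the risks are finite for \(f\in\mathcal F\), the identity holds pointwise for every realization of the data. Note also that the optimality of \(f_0\) for \(R\) is not used; the bound holds for any comparator in \(\mathcal F\).
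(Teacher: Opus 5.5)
Your proof is correct and follows the paper's argument exactly: the same three-term add-and-subtract decomposition, dropping \(R_n(\hat f_n)-R_n(f_0)\le 0\), and recombining the two remaining terms by linearity of \(P_n-P\). Your side remark is also accurate: the paper's proof likewise never uses that \(f_0\) minimizes \(R\), so the bound holds for any comparator in \(\mathcal F\).
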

\begin{proof}
Since $\hat f_n$ minimizes $R_n$, we have $R_n(\hat f_n)\le R_n(f_0)$, i.e.,
$0\le R_n(f_0)-R_n(\hat f_n)$. Add and subtract $R(f_0)$ and $R(\hat f_n)$ to obtain
\begin{align*}
R(\hat f_n)-R(f_0)
&= \bigl\{R(\hat f_n)-R_n(\hat f_n)\bigr\} + \bigl\{R_n(\hat f_n)-R_n(f_0)\bigr\} + \bigl\{R_n(f_0)-R(f_0)\bigr\} \\
&\le \bigl\{R(\hat f_n)-R_n(\hat f_n)\bigr\} + \bigl\{R_n(f_0)-R(f_0)\bigr\}\\
&= \bigl\{R_n(f_0)-R(f_0)\bigr\}-\bigl\{R_n(\hat f_n)-R(\hat f_n)\bigr\}\\
&= (P_n-P)\bigl\{\ell(\cdot,f_0)-\ell(\cdot,\hat f_n)\bigr\}.
\end{align*}
\end{proof}

The basic inequality in Theorem~\ref{theorem::basic} reduces the regret
analysis to controlling the empirical-process fluctuation
\((P_n-P)\{\ell(\cdot,f_0)-\ell(\cdot,\hat f_n)\}\). The remainder of the
argument is to bound this term and translate the resulting bound into a rate
for \(R(\hat f_n)-R(f_0)\).

\subsection{Warm-up: turning the basic inequality into rates}

This section illustrates, at a heuristic level, how the basic inequality yields
regret rates. The goal is to build intuition for the empirical-process
fluctuation term
\((P_n-P)\{\ell(\cdot,f_0)-\ell(\cdot,\hat f_n)\}\) and to preview the main
proof techniques used in ERM analyses.

It is helpful to distinguish the two sources of randomness in the empirical-process
fluctuation
\((P_n-P)\{\ell(\cdot,f_0)-\ell(\cdot,\hat f_n)\}\).
First, \(P_n\) averages over an i.i.d.\ sample. Second, the function being
averaged is itself random because it depends on the data through the ERM
\(\hat f_n\). Any ERM analysis must account for both effects. The first source
alone can be handled using standard concentration inequalities for i.i.d.\
averages of the form \((P_n-P)h\) with \(h\) fixed. The second source is the
main complication: because \(\hat f_n\) is data dependent, this fixed-function
viewpoint is no longer sufficient, and one instead needs \emph{uniform}
control of the empirical process
\[
\bigl\{(P_n-P)\{\ell(\cdot,f_0)-\ell(\cdot,f)\} : f\in\mathcal F\bigr\}
\]
over the class \(\mathcal F\). Indeed,
\((P_n-P)\{\ell(\cdot,f_0)-\ell(\cdot,\hat f_n)\}\) is obtained by evaluating
this process at the data-dependent index \(f=\hat f_n\). We therefore begin
with a warm-up in the fixed-function setting, isolating source~(i), to
illustrate how regret bounds are derived. Handling source~(ii) requires more
sophisticated uniform, typically local, concentration tools, but once such
bounds are available, the overall ERM analysis follows the same proof template
with only minor modifications.

As a thought experiment, suppose we have a deterministic sequence
\(\{f_n\}_{n=1}^\infty \subset \mathcal F\) that happens to satisfy the basic
inequality
\begin{equation}
\label{eqn::basicdeterm}
R(f_n)-R(f_0)\le (P_n-P)\{\ell(\cdot,f_0)-\ell(\cdot,f_n)\}.
\end{equation}
This is generally not possible in practice, since \eqref{eqn::basicdeterm} is
typically obtained only for a data-dependent choice, such as \(f_n=\hat f_n\).
In the deterministic case, however, the fluctuation term reduces to a sample
mean of independent, mean-zero random variables. Indeed, letting
\[
X_{n,i}
:=
\bigl\{\ell(Z_i,f_0)-\ell(Z_i,f_n)\bigr\}
-
P\{\ell(\cdot,f_0)-\ell(\cdot,f_n)\},
\]
we have \((P_n-P)\{\ell(\cdot,f_0)-\ell(\cdot,f_n)\}=n^{-1}\sum_{i=1}^n X_{n,i}\).
Under mild tail conditions, a central limit theorem for triangular arrays
suggests that, for large \(n\), this quantity behaves like a normal random
variable with mean \(0\) and standard deviation \(\sigma_{f_n}/\sqrt{n}\), where
\(\sigma_{f_n}^2:=\Var\bigl(\ell(Z,f_0)-\ell(Z,f_n)\bigr)\). In particular,
\[
(P_n-P)\{\ell(\cdot,f_0)-\ell(\cdot,f_n)\}=O_p(\sigma_{f_n}/\sqrt{n}).
\]
Combining this with \eqref{eqn::basicdeterm} yields
\(R(f_n)-R(f_0)=O_p(\sigma_{f_n}/\sqrt{n})\), and hence the \emph{slow rate}
\(R(f_n)-R(f_0)=O_p(n^{-1/2})\) provided that \(\sigma_{f_n} = O(1)\).

As the following lemma shows, one can in fact obtain an exponential-tail, PAC-style bound for the empirical-process fluctuation, and hence for the regret; see \cite{sridharan2002gentle} for an introduction to concentration inequalities.

\begin{lemma}[Bernstein-type point-wise local concentration bound]
\label{lem:fixed_fn_bernstein}
Fix \(n\) and let \(f\in\mathcal F\) be deterministic. Define
\(\sigma_f^2:=\Var\{\ell(Z,f_0)-\ell(Z,f)\}\) and
\(M_f:=\|\ell(\cdot,f_0)-\ell(\cdot,f)\|_{\infty}\).
Then there exists a universal constant \(C>0\) such that, for all
\(\eta\in(0,1)\),
\[
\left|(P_n-P)\{\ell(\cdot,f_0)-\ell(\cdot,f)\}\right|
\le
C\left\{
\sigma_f\sqrt{\frac{\log(2/\eta)}{n}}
+
M_f\frac{\log(2/\eta)}{n}
\right\}
\qquad\text{with probability at least }1-\eta.
\]
\end{lemma}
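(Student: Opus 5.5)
The plan is to apply the classical Bernstein inequality for sums of independent, bounded, centered random variables, and then invert its tail bound into PAC form. Since \(f\) is deterministic, set \(h:=\ell(\cdot,f_0)-\ell(\cdot,f)\) and \(X_i:=h(Z_i)-Ph\) for \(i=1,\ldots,n\). The \(X_i\) are i.i.d. with mean zero and variance \(\Var(X_i)=\sigma_f^2\). They are also bounded: \(|X_i|\le |h(Z_i)|+|Ph|\le 2M_f\) almost surely. In addition, \((P_n-P)h=n^{-1}\sum_{i=1}^n X_i\). If \(M_f=\infty\), the statement is vacuous, so we may assume \(M_f<\infty\).

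Bernstein's inequality gives, for every \(t>0\),
\[
\Pr\Bigl(\Bigl|\tfrac1n\textstyle\sum_{i=1}^n X_i\Bigr|\ge t\Bigr)\le 2\exp\Bigl(-\frac{n t^2}{2\sigma_f^2+\tfrac{2}{3}(2M_f)t}\Bigr).
\]
Set the right-hand side equal to \(\eta\) and write \(L:=\log(2/\eta)\). Then \(t\) solves the quadratic \(n t^2 = L(2\sigma_f^2+\tfrac43 M_f t)\). Its positive root is
\[
t^\ast=\frac{\tfrac43 M_f L+\sqrt{(\tfrac43 M_f L)^2+8 n\sigma_f^2 L}}{2n}.
\]
Applying \(\sqrt{a+b}\le\sqrt a+\sqrt b\) gives
\[
t^\ast\le \sqrt{2}\,\sigma_f\sqrt{L/n}+\tfrac43 M_f L/n.
\]
Hence, with probability at least \(1-\eta\),
\[
|(P_n-P)h|\le t^\ast,
\]
and the claim follows with, for instance, \(C=2\).

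The only step needing care is this algebraic inversion of the Bernstein tail into the additive \(\sigma\sqrt{L/n}+ML/n\) form. It is routine once the quadratic is written down. A minor point is handling the degenerate case \(\sigma_f=0\): then \(X_i=0\) almost surely, and the bound holds trivially.
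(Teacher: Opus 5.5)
Your argument is correct: centering $h$, bounding $|X_i|\le 2M_f$, applying the two-sided Bernstein inequality, and solving the resulting quadratic in $t$ gives the stated bound with $C=2$, and your separate treatment of the degenerate case $\sigma_f=0$ is right. The paper states this lemma without proof as a standard consequence of Bernstein's inequality, so your route is the intended one.
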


If \(b_{f_n}\lesssim 1\), then Lemma~\ref{lem:fixed_fn_bernstein} combined with
\eqref{eqn::basicdeterm} yields the PAC-style guarantee that, whenever
\(n\gtrsim \log(1/\eta)\),
\begin{equation}
\label{eqn::slowrate}
R(f_n)-R(f_0)
\ \lesssim\
\sqrt{\frac{\log(1/\eta)}{n}}
\qquad\text{with probability at least }1-\eta.
\end{equation}
This \emph{slow rate} is often loose; for example, by Section~\ref{sec::ell2}, it typically implies only
\(\|f_n-f_0\|=O_p(n^{-1/4})\).
The looseness stems from the fact that \eqref{eqn::slowrate} ignores the \emph{local} variance term in Bernstein's inequality,
\(\sigma_{f_n}^2=\Var\{\ell(Z,f_0)-\ell(Z,f_n)\}\),
which can shrink when \(f_n\) is close to \(f_0\).
Treating \(\sigma_{f_n}\) as a constant forces the leading term to scale like \(n^{-1/2}\), regardless of how close \(f_n\) is to \(f_0\).
We now show how exploiting the localization of \(f_n\) around \(f_0\) can yield substantially faster behavior in our deterministic thought experiment, namely \(R(f_n)-R(f_0)=O_p(n^{-1})\) and \(\|f_n-f_0\|=O_p(n^{-1/2})\).

To formalize this \emph{localization}, a standard approach is to impose a
Bernstein condition, which relates the variance of the loss difference
\(\ell(\cdot,f)-\ell(\cdot,f_0)\) to the corresponding regret. Specifically, assume there exists \(c_{\mathrm{Bern}}>0\) such that, for all \(f\in\mathcal F\),
\begin{equation}
\label{eq:bernstein_condition}
\Var\bigl(\ell(Z,f)-\ell(Z,f_0)\bigr)\le c_{\mathrm{Bern}}\{R(f)-R(f_0)\}.
\end{equation}
In other words, small regret implies small variance of the loss difference \(\ell(Z,f)-\ell(Z,f_0)\). This condition is mild and holds in many standard settings—for example, when
\(\mathcal F\) is convex and the risk is strongly convex. In particular, it
follows by combining a Lipschitz-type variance bound with local quadratic
curvature of the risk, as in Section~\ref{sec::ell2}.\footnote{For convex risks
over convex classes, strong convexity---and hence the Bernstein condition---can
also be enforced via Tikhonov regularization; see
Appendix~\ref{appendix::regularridge}.}

\begin{lemma}[Sufficient conditions for the Bernstein condition]
\label{lemma::suffbern}
Suppose that, for all \(f\in\mathcal{F}\), \(\Var\bigl(\ell(Z,f)-\ell(Z,f_0)\bigr)\le L\|f-f_0\|^2\) and \(\kappa \|f-f_0\|^2\le \{R(f)-R(f_0)\}\). Then, for all \(f\in\mathcal{F}\), \(\Var\bigl(\ell(Z,f)-\ell(Z,f_0)\bigr)\le c_{\mathrm{Bern}}   \{R(f)-R(f_0)\}\) with $c_{\mathrm{Bern}} := L \kappa^{-1}$.
\end{lemma}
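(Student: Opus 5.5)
The proof chains the two hypotheses pointwise in \(f\); no probabilistic argument is needed. Fix an arbitrary \(f\in\mathcal F\). The first hypothesis is a Lipschitz-type variance bound,
\[
\Var\bigl(\ell(Z,f)-\ell(Z,f_0)\bigr)\le L\|f-f_0\|^2 .
\]
The second hypothesis is quadratic growth of the risk. It holds, for example, under the conditions of Lemma~\ref{lemma::strongconvexity}. Since \(\kappa\in(0,\infty)\), dividing it by \(\kappa\) gives
\[
\|f-f_0\|^2\le \kappa^{-1}\{R(f)-R(f_0)\}.
\]

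Substituting the second display into the first, and using \(L\ge 0\) so that multiplying by \(L\) preserves the inequality, yields
\[
\Var\bigl(\ell(Z,f)-\ell(Z,f_0)\bigr)\le L\kappa^{-1}\{R(f)-R(f_0)\}=c_{\mathrm{Bern}}\{R(f)-R(f_0)\}.
\]
The bound \(L\ge 0\) is automatic: whenever some \(f\ne f_0\) in \(L^2(P)\), the left side of the first hypothesis is a variance, hence nonnegative. If every \(f\in\mathcal F\) equals \(f_0\) almost surely, the first hypothesis forces the variance to be zero, so the claim holds trivially. Because \(f\) was arbitrary, the inequality holds for all \(f\in\mathcal F\), which is exactly the Bernstein condition~\eqref{eq:bernstein_condition} with \(c_{\mathrm{Bern}}=L/\kappa\).

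There is no real obstacle here. The only points needing care are the sign conventions just used: \(\kappa>0\) so the division is legitimate, and \(L\ge 0\) so the multiplication preserves the inequality.
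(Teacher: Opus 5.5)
Your proof is correct and is the same argument the paper intends. The paper gives no separate proof; it only remarks that the condition follows by combining the Lipschitz-type variance bound with quadratic curvature, i.e. $\Var\{\ell(Z,f)-\ell(Z,f_0)\}\le L\|f-f_0\|^2\le L\kappa^{-1}\{R(f)-R(f_0)\}$.

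Your digression on the sign of $L$ is unnecessary, since $L$ is implicitly a nonnegative constant. It is also slightly overstated in the degenerate case. If $L<0$ were allowed and $f=f_0$ a.s., the conclusion would still require $R(f)=R(f_0)$, and this does not follow from the hypotheses for a loss that is not determined by the values of $f$ on the support of $P$. It is cleaner to take $L\ge 0$ as given.
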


We now illustrate how the Bernstein condition can yield faster rates. Under the
Bernstein condition, the variance
\(\sigma_{f_n}^2:=\Var\{\ell(Z,f_0)-\ell(Z,f_n)\}\) shrinks as the regret shrinks.
Since Lemma~\ref{lem:fixed_fn_bernstein} concentrates
\((P_n-P)\{\ell(\cdot,f_0)-\ell(\cdot,f_n)\}\) at the scale \(\sigma_{f_n}/\sqrt{n}\),
combining it with the basic inequality yields
\begin{equation}
\label{eqn::highprobboundfixed}
R(f_n)-R(f_0)=O_p\left(\frac{\sigma_{f_n}}{\sqrt{n}}\right).
\end{equation}
A crude bound treats \(\sigma_{f_n}\) as constant and gives the slow rate
\(R(f_n)-R(f_0)=O_p(n^{-1/2})\). The Bernstein condition improves this by
linking variance to regret: \(\sigma_{f_n}^2\lesssim R(f_n)-R(f_0)\), so
\eqref{eqn::highprobboundfixed} becomes
\[
R(f_n)-R(f_0)
=
O_p\left(\frac{\{R(f_n)-R(f_0)\}^{1/2}}{\sqrt{n}}\right).
\]
Rearranging yields \(\{R(f_n)-R(f_0)\}^{1/2}=O_p(n^{-1/2})\), and therefore
\(R(f_n)-R(f_0)=O_p(n^{-1})\).
This \(n^{-1}\) rate is generally unattainable in nonparametric classes, because
uniform control of the empirical process introduces additional complexity terms.

\paragraph{From deterministic to random functions via uniform local concentration inequalities.}
Much of the intuition from the deterministic thought experiment carries over to ERM, where the random minimizer \(\hat f_n\) satisfies the basic inequality.
The main complication is that Lemma~\ref{lem:fixed_fn_bernstein} controls the empirical-process fluctuation only for a \emph{fixed} (deterministic) function, and therefore does not apply directly to the data-dependent choice \(\hat f_n\).
Extending the argument requires high-probability bounds that hold \emph{uniformly} over \(f\in\mathcal F\).
As we will see, such uniformity is typically costly: outside low-dimensional parametric models, it rules out the optimistic \(O_p(n^{-1})\) regret rate, and slower rates are the norm.

A simple, though loose, way to handle the randomness in \(\hat f_n\) is to upper bound the empirical-process fluctuation \((P_n-P)\{\ell(\cdot,f_0)-\ell(\cdot,\hat f_n)\}\) by the \emph{global} supremum
\[
\sup_{f \in \mathcal{F}} (P_n - P)\{\ell(\cdot,f_0)-\ell(\cdot,f)\}.
\]
This removes the data dependence of \(\hat f_n\), at the cost of controlling the worst-case fluctuation of the empirical process over the entire class.
When \(\mathcal{F}\) is not too large (for example, when it is Donsker), one can often apply a uniform law of large numbers or central limit theorem \citep{van1996weak} to show that this supremum is \(O_p(n^{-1/2})\), and hence
\[
(P_n-P)\{\ell(\cdot,f_0)-\ell(\cdot,\hat f_n)\}=O_p(n^{-1/2}).
\]
This recovers the ``slow rate'' \(R(\hat f_n)-R(f_0)=O_p(n^{-1/2})\) that we argued heuristically in the deterministic case.
However, as in the deterministic setting, this approach does not exploit the localization of \(\hat f_n\) around \(f_0\), since it replaces \(\hat f_n\) by a supremum over all \(f \in \mathcal{F}\).

The next section addresses this limitation using \emph{uniform local concentration} inequalities.
Unlike the global supremum bound, these results control
\((P_n-P)\{\ell(\cdot,f_0)-\ell(\cdot,f)\}\) \emph{simultaneously} over \(f\in\mathcal F\), but with a
high-probability bound that adapts to \(f\) through the local scale parameter \(\sigma_f = \{\Var\{\ell(Z,f_0)-\ell(Z,f)\}\}^{1/2}\).
For intuition, one can derive these bounds by controlling self-normalized, ratio-type empirical process
suprema, as in \citet{gine2006concentration}.
Concretely, one seeks a bound of the form
\[
\sup_{f\in\mathcal F}\frac{(P_n-P)\{\ell(\cdot,f_0)-\ell(\cdot,f)\}}{\sigma_f \vee \delta_{n,\eta}}
\ \lesssim\ \delta_{n,\eta},
\qquad\text{with probability at least }1-\eta,
\]
where \(\delta_{n,\eta}\) is a deterministic complexity term (often characterized via a critical
radius) that may decay more slowly than \(n^{-1/2}\) in nonparametric settings.
Such a bound implies that, with probability at least \(1-\eta\),
\begin{equation}
    (P_n-P)\{\ell(\cdot,f_0)-\ell(\cdot,\hat f_n)\}
\ \lesssim\ (\sigma_{\hat f_n} \vee \delta_{n,\eta})\,\delta_{n,\eta}
\ \lesssim\ \sigma_{\hat f_n}\,\delta_{n,\eta} + \delta_{n,\eta}^2. \label{example::rateineq}
\end{equation}
Consequently, arguing as in \eqref{eqn::highprobboundfixed}, one can combine this inequality with the basic inequality to obtain the regret bound
\(R(\hat f_n)-R(f_0)=O_p\bigl(\sigma_{\hat f_n}\,\delta_{n,\eta} + \delta_{n,\eta}^2\bigr)\).
If, in addition, a Bernstein-type variance--risk condition holds so that
\(\sigma_{\hat f_n}^2 \lesssim R(\hat f_n)-R(f_0)\),
then similar algebra to the deterministic case yields
\(R(\hat f_n)-R(f_0)=O_p\bigl(\delta_{n,\eta}^2\bigr)\).

\subsection{A three-step template for ERM rates}
\label{sec:template}

In this section, we outline a high-level template for deriving ERM rates. Many
existing ERM analyses follow this blueprint, and later sections instantiate
each step in concrete settings. The presentation here is intentionally
schematic; readers who prefer to start with a fully specified result may skip
ahead to Section~\ref{sec:genregret}.

The template consists of three steps. First, derive a deterministic basic
inequality, as we already did in Theorem~\ref{theorem::basic}. Second, obtain a high probability bound for the empirical-process fluctuation \((P_n-P)\{\ell(\cdot,f_0)-\ell(\cdot,\hat f_n)\}\) appearing in the basic
inequality. To obtain fast rates, this bound should ideally reflect the
\emph{localization} of \(\hat f_n\) around \(f_0\), as illustrated in the
previous section. Third, combine the basic inequality with
the high-probability bound to obtain regret bounds for the ERM.

\noindent \textbf{Step 1. Derive a deterministic inequality.}
Theorem~\ref{theorem::basic} establishes the basic inequality
\begin{equation}
R(\hat f_n)-R(f_0)
\le
(P_n-P)\bigl\{\ell(\cdot,f_0)-\ell(\cdot,\hat f_n)\bigr\}.
\label{eqn::basicgen}
\end{equation}
This is the key inequality for constrained ERM. In other settings, different
basic inequalities are needed; for example, in regularized ERM (e.g., ridge- or
lasso-type penalties) the optimization problem includes a penalty term, and the
corresponding basic inequality features additional terms involving the
regularizer \citep{wainwright2019high}. Likewise, when the loss involves
estimated nuisance components, further empirical-process and remainder terms may
appear (see Section~\ref{sec:insample} for one such inequality). In general, the
goal is to upper bound the regret by a finite sum of empirical-process-type
remainders (to be controlled via high-probability bounds) and population
quantities depending on \(\hat f_n\) and \(f_0\) (e.g., terms that can be bounded
by a fractional power of the regret \(\{R(\hat f_n)-R(f_0)\}^{\gamma}\)).

\noindent \paragraph{Step 2. Control randomness through a high-probability bound.}
The next step is to control the empirical-process fluctuation
\((P_n-P)\bigl\{\ell(\cdot,f_0)-\ell(\cdot,\hat f_n)\bigr\}\).
Concretely, we assume a Bernstein-type uniform local concentration bound on the empirical process
\(\{(P_n-P)\{\ell(\cdot,f_0)-\ell(\cdot,f)\} : f \in \mathcal F\}\) of the following form:
\begin{enumerate}[label=\bfseries E\arabic*), ref={E\arabic*}, series=ex]
  \item \label{cond::highprob} (Uniform local concentration bound.)
  There exists a function \(\varepsilon(n,\sigma,\eta)\), nondecreasing in
  \(\sigma\), such that for all \(f\in\mathcal F\) with
  \(\sigma_f^2:=\Var\bigl(\ell(Z,f_0)-\ell(Z,f)\bigr)<\infty\) and all
  \(\eta\in(0,1)\),
  \[
  (P_n-P)\{\ell(\cdot,f_0)-\ell(\cdot,f)\}\le \varepsilon(n,\sigma_f,\eta)
  \qquad\text{with probability at least }1-\eta.
  \]
\end{enumerate}
This bound ensures that, uniformly over \(f\in\mathcal F\), we can control the empirical-process fluctuation
\((P_n-P)\{\ell(\cdot,f_0)-\ell(\cdot,f)\}\),
and that this control is \emph{local} around \(f_0\) in the sense that it adapts to the standard deviation parameter \(\sigma_f\).
Such uniform concentration bounds are widely available (e.g., Theorem~8 of \citealp{bousquet2003introduction}; Theorem~3.3 of
\citealp{bartlett2005local}; Theorem~2.1 of \citealp{gine2006concentration};
Theorem~14.20(b) of \citealp{wainwright2019high}; Appendix~K of
\citealp{foster2023orthogonal}).
For example, in Section~\ref{sec:genregret}, Theorem~\ref{theorem::localmaxloss} shows that \(\varepsilon(n,\sigma_f,\eta)\)
can be taken proportional to \(\delta_{n,\eta}\sigma_f+\delta_{n,\eta}^2\),
where \(\delta_{n,\eta}\asymp \delta_n+\sqrt{\log(1/\eta)/n}\) and \(\delta_n\)
is a critical radius determined by the complexity of the loss-difference class
\(\mathcal F_\ell:=\{\ell(\cdot,f_0)-\ell(\cdot,f): f\in\mathcal F\}\).

Since the bound in \ref{cond::highprob} holds for all \(f\in\mathcal F\), it also holds for the (random) choice \(f=\hat f_n\).
In particular,
\begin{equation}
\label{eqn::highprobboundgeneric}
(P_n-P)\bigl\{\ell(\cdot,f_0)-\ell(\cdot,\hat f_n)\bigr\}
\le \varepsilon (n,\sigma_{\hat f_n},\eta)
\qquad\text{with probability at least }1-\eta.
\end{equation}

 \noindent \paragraph{Step 3. Bound regret and extract a rate.} Denote the regret by \(\hat d_n^2 := R(\hat f_n)-R(f_0)\). Combining the basic inequality in \eqref{eqn::basicgen} and our high-probability bound in \eqref{eqn::highprobboundgeneric}, we find that
\[
\hat d_n^2  = R(\hat f_n)-R(f_0)  \le \varepsilon (n,\sigma_{\hat f_n},\eta)
\qquad\text{with probability at least }1-\eta.
\] 
Next, as in  the previous section, we exploit the localization of \(\hat f_n\) around \(f_0\) by assuming the following Bernstein condition: there exists \(c_{\mathrm{Bern}}>0\) such that, for all \(f\in\mathcal F\),
\begin{equation*}
\Var\bigl(\ell(Z,f)-\ell(Z,f_0)\bigr)\le c_{\mathrm{Bern}}\{R(f)-R(f_0)\}.
\end{equation*}
We have \(\sigma_{\hat f_n}^2 \le c_{\mathrm{Bern}} \hat{d}_n^2\), and hence
\(\varepsilon(n,\sigma_{\hat f_n},\eta) \le \varepsilon(n, c_{\mathrm{Bern}} \hat{d}_n,\eta)\), since
\(\sigma \mapsto \varepsilon(n,\sigma,\eta)\) is nondecreasing.
It then follows that
\begin{equation}
\label{eqn::fixedpointgen}
\hat d_n^2 \ \le\ \varepsilon(n, c_{\mathrm{Bern}}\hat d_n,\eta)
\qquad\text{with probability at least }1-\eta.
\end{equation}
This is a \emph{fixed-point} (self-bounding) inequality: \(\hat d_n\) appears on
both sides, and the right-hand side typically grows sub-quadratically in
\(\hat d_n\). Consequently, the inequality forces \(\hat d_n\) to be small: if
\(\hat d_n\) were too large, the left-hand side would dominate the right-hand
side, contradicting \eqref{eqn::fixedpointgen}.

We obtain a high-probability bound for \(\hat d_n\) by comparing it to the
largest deterministic \(\delta(n,\eta)\) that satisfies the same inequality.
Define
\[
\delta(n,\eta)
:=
\sup\Bigl\{\delta\ge 0:\ \delta^2 \le \varepsilon\left(n, c_{\mathrm{Bern}} \delta,\eta\right)\Bigr\}.
\]
On any event where \(\hat d_n^2 \le \varepsilon(n, c_{\mathrm{Bern}}\hat d_n,\eta)\) holds, we
have \(\hat d_n \le \delta(n,\eta)\), and hence \(\hat d_n^2 \le \delta^2(n,\eta)\).
Therefore, by \eqref{eqn::fixedpointgen},
\[
R(\hat f_n)-R(f_0) \ \le\ \delta^2(n,\eta)
\qquad\text{with probability at least }1-\eta.
\]
In practice, the worst-case rate \(\delta(n,\eta)\) is rarely computed exactly.
Instead, one typically manipulates the fixed-point inequality in
\eqref{eqn::fixedpointgen} to obtain a closed-form algebraic upper bound on
\(\hat d_n\). For this purpose, Young's inequality is often useful
\citep{hardy1952inequalities}.

\begin{lemma}[Young's inequality]
\label{lem:young}
Let \(x,y\ge 0\) and let \(p,q>1\) satisfy \(1/p+1/q=1\). Then
\[
xy \ \le\ \frac{x^p}{p}+\frac{y^q}{q}.
\]
In particular, for any \(\lambda>0\),
\[
xy \ \le\ \frac{\lambda}{2}x^2+\frac{1}{2\lambda}y^2.
\]
\end{lemma}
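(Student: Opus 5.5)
The plan is to prove the general inequality by concavity of the logarithm (weighted AM--GM), and then obtain the ``in particular'' statement as the special case $p=q=2$ after a rescaling of $x$ and $y$. First, if $x=0$ or $y=0$ the left-hand side is $0$ and the right-hand side is nonnegative, so there is nothing to prove; assume $x,y>0$ from now on.

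For the main step, since $1/p+1/q=1$ with both weights in $(0,1)$, concavity of $\log$ on $(0,\infty)$ gives
\[
\log\Bigl(\frac{1}{p}x^p+\frac{1}{q}y^q\Bigr)\ \ge\ \frac{1}{p}\log(x^p)+\frac{1}{q}\log(y^q)=\log x+\log y=\log(xy).
\]
Exponentiating (the exponential is increasing) yields $xy\le x^p/p+y^q/q$. If one prefers to avoid Jensen, the same conclusion can be reached by fixing $y>0$ and minimizing $\phi(x):=x^p/p+y^q/q-xy$ over $x\ge 0$. Setting $\phi'(x)=x^{p-1}-y=0$ gives $x_\ast=y^{1/(p-1)}=y^{q-1}$, using $(p-1)(q-1)=1$. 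Then $\phi(x_\ast)=y^q/p+y^q/q-y^q=0$, and convexity of $\phi$ shows this is the global minimum.

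For the weighted version, I take $p=q=2$ and apply the inequality to the pair $(\sqrt{\lambda}\,x,\ y/\sqrt{\lambda})$, whose product is still $xy$. This gives
\[
xy=(\sqrt{\lambda}\,x)(y/\sqrt{\lambda})\le \frac{\lambda x^2}{2}+\frac{y^2}{2\lambda}.
\]
Alternatively, this follows directly from $0\le(\sqrt{\lambda}\,x-y/\sqrt{\lambda})^2$.

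There is no real obstacle here. The only point requiring care is the justification of concavity, or equivalently the identity $(p-1)(q-1)=1$ used in the calculus route, which follows at once from $p+q=pq$.
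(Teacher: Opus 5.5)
Your proof is correct. The concavity-of-$\log$ (weighted AM--GM) argument, the calculus check using $(p-1)(q-1)=1$, and the rescaling $(\sqrt{\lambda}\,x,\ y/\sqrt{\lambda})$ with $p=q=2$ are all valid.

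The paper gives no proof of its own---it simply cites \citet{hardy1952inequalities}---so there is nothing to compare against. Your rescaling trick, applied to $(t x,\ y/t)$ with $t^p=p\varepsilon$, also gives the $\varepsilon$-weighted form $xy\le \varepsilon x^p+C_p\,\varepsilon^{-q/p}y^q$, which the paper uses with $p=2/\alpha$ and $q=2/(2-\alpha)$ in the proof of Theorem~\ref{theorem::insamplenuis}.
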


To illustrate Lemma~\ref{lem:young}, suppose we have shown that
\(\varepsilon(n,C\hat d_n,\eta)\le c\,\delta_{n,\eta}\hat d_n+\delta_{n,\eta}^2\)
for some \(\delta_{n,\eta}>0\) and \(c>0\), as would follow from \eqref{example::rateineq}. Then \eqref{eqn::fixedpointgen} becomes
\[
\hat d_n^2 \ \le\ \delta_{n,\eta}^2 \;+\; c\,\delta_{n,\eta}\,\hat d_n .
\]
Applying Young's inequality with \(x=\hat d_n\), \(y=c\,\delta_{n,\eta}\), and \(\lambda=1\) yields
\[
c\,\delta_{n,\eta}\hat d_n \ \le\ \tfrac12 \hat d_n^2 \;+\; \tfrac12 c^2\delta_{n,\eta}^2.
\]
Substituting and rearranging give \(\hat d_n^2 \le (2+c^2)\delta_{n,\eta}^2\), i.e.,
\(\hat d_n \lesssim \delta_{n,\eta}\) with probability at least $1 - \eta$.

\paragraph{Remark.}
When the risk is strongly convex and satisfies the curvature bound
\(\kappa\|f-f_0\|^2 \le R(f)-R(f_0)\), it is often simplest to analyze the
\(L^2\) error directly \citep{wainwright2019high}. Indeed, combining this
curvature bound with the basic inequality of Theorem~\ref{theorem::basic} yields
\[
\kappa\|\hat f_n-f_0\|^2
\ \le\
(P_n-P)\bigl\{\ell(\cdot,f_0)-\ell(\cdot,\hat f_n)\bigr\}.
\]
On the event in Condition~\ref{cond::highprob} (which has probability at least \(1-\eta\)),
\[
\kappa\|\hat f_n-f_0\|^2
\ \le\
\varepsilon\bigl(n,\sigma_{\hat f_n},\eta\bigr).
\]
If moreover \(\ell\) is pointwise Lipschitz, so that
\(|\ell(Z,f_0)-\ell(Z,f)|\le L\,|f_0(Z)-f(Z)|\) for all \(f\in\mathcal F\), then
\(\sigma_{\hat f_n}\le L\|\hat f_n-f_0\|\), and hence
\[
\kappa\|\hat f_n-f_0\|^2
\ \le\
\varepsilon\bigl(n,L\|\hat f_n-f_0\|,\eta\bigr).
\]
This is a fixed-point inequality for \(\|\hat f_n-f_0\|\) (rather than for the
regret), from which rates follow by the same algebra as in Step~3. Along this
route, the Bernstein condition is not invoked explicitly; it is implied by the
curvature bound together with the Lipschitz control of \(\sigma_{\hat f_n}\) via
Lemma~\ref{lemma::suffbern}.

\section{Regret via localized Rademacher complexity}

\subsection{A general high-probability regret theorem}
\label{sec:genregret}
In this section, we present a general regret theorem for ERM, which is the fulcrum of this paper. Our bound uses the proof recipe of Section~\ref{sec:template} to obtain a PAC-style guarantee in terms of a function-class complexity measure. The key technical step is to control the empirical-process fluctuation
\[
(P_n-P)\{\ell(\cdot,f_0)-\ell(\cdot,\hat f_n)\},
\]
since this term upper bounds the regret $R(\hat f_n)-R(f_0)$ of the empirical risk minimizer $\hat f_n$. The magnitude of this fluctuation is governed by the size of the loss-difference class
\[
\mathcal F_{\ell} := \{\ell(\cdot,f_0)-\ell(\cdot,f) : f\in\mathcal F\}.
\]
Intuitively, richer classes give ERM more opportunities to fit random noise, so the empirical risk $P_n\ell(\cdot,\hat f_n)$ can be much smaller than the population risk $P\ell(\cdot,\hat f_n)$ at the data-selected $\hat f_n$, unless the class is suitably controlled. Consequently, regret bounds are typically expressed in terms of complexity measures for $\mathcal F_{\ell}$, which can often be related back to those of $\mathcal F$.

We quantify the complexity of a function class \(\mathcal{G}\) through its \emph{Rademacher complexity} \citep{bartlett2002rademacher, bartlett2005local, wainwright2019high}. For a radius \(\delta \in (0,\infty)\), the \emph{localized Rademacher complexity} of \(\mathcal{G}\) is
\[
\mathfrak{R}_n(\mathcal G,\delta)
:=
\mathbb{E}\left[
\sup_{\substack{f \in \mathcal{G}\\ \|f\| \le \delta}}
\frac{1}{n} \sum_{i=1}^n \epsilon_i f(Z_i)
\right],
\]
where \(\epsilon_1,\ldots,\epsilon_n \in \{-1,1\}\) are i.i.d.\ Rademacher random variables, independent of \(Z_1,\ldots,Z_n\). Intuitively, this quantity measures how well functions in \(\mathcal{G}\) can align their values \(f(Z_i)\) with random signs \(\epsilon_i\). If the supremum is large, then \(\mathcal{G}\) is flexible enough to fit noise in the sample and is therefore more prone to overfitting; if it is small, then \(\mathcal{G}\) is more constrained and should generalize better. If the supremum is effectively taken over a single function \(g\) with \(\|g\|\le \delta\), then typically \(\mathfrak{R}_n(\mathcal G,\delta)\asymp \delta n^{-1/2}\). The same scaling arises for low-dimensional parametric classes. For many nonparametric classes, by contrast, the dependence on \(\delta\) is slower, often of the form \(\delta^{\alpha}n^{-1/2}\) for some \(\alpha\in(0,1)\); see Table~\ref{tab:critical_radii_examples}.

Generalization bounds, such as those in \cite{wainwright2019high}, often involve star-shaped classes because localized Rademacher complexity satisfies a useful scaling property on such classes. We define the star hull of \(\mathcal{G}\) as
\[
\mathrm{star}(\mathcal{G}) := \{\, th : h \in \mathcal{G},\ t \in [0,1] \,\}.
\]
Any convex class containing \(0\) is star-shaped and therefore equals its star hull. A key feature of \(\mathrm{star}(\mathcal{G})\), which we use repeatedly in our proofs, is that for all \(t \in [0,1]\),
\[
\mathfrak{R}_n\bigl(\mathrm{star}(\mathcal{G}), t\delta\bigr)
\ \ge\
t\,\mathfrak{R}_n\bigl(\mathrm{star}(\mathcal{G}), \delta\bigr),
\]
and hence \(\delta \mapsto \mathfrak{R}_n(\mathrm{star}(\mathcal{G}),\delta)/\delta\) is nonincreasing; see Lemma~\ref{lem:rad_scaling} in Appendix~\ref{appendix::localmax}. In words, shrinking functions in \(\mathrm{star}(\mathcal{G})\) toward zero by a factor \(t \in [0,1]\) reduces the local complexity at least proportionally.

A key characteristic of a function class \(\mathcal{G}\) is its \emph{critical radius}, which governs local concentration and, consequently, the regret of ERM. The critical radius \(\delta_n(\mathcal{G})\) is defined as the smallest \(\delta>0\) such that \(\mathfrak{R}_n(\mathcal{G},\delta)\le \delta^2\), that is,
\[
\delta_n(\mathcal{G})
:= \inf\Bigl\{\delta>0:\ \mathfrak{R}_n(\mathcal{G},\delta)\le \delta^2\Bigr\}.
\]
Larger classes typically have larger critical radii. Intuitively, \(\delta_n(\mathcal G)\) marks the scale at which the empirical \(L^2(P_n)\) norm and the population \(L^2(P)\) norm begin to be comparable on \(\mathcal G\); see Theorem~14.1 of \citealp{wainwright2019high}. Roughly, when \(g\in\mathcal{G}\) satisfies \(\|g\|\asymp \delta_n(\mathcal{G})\), its empirical norm \(\|g\|_n\) is typically of the same order. Below this scale, sampling noise can dominate, so \(\|g\|_n\) may be much larger than \(\|g\|\) purely by chance. For this reason, critical radii enter uniform local concentration inequalities for empirical processes; see Theorem~\ref{theorem:loc_max_ineq} in Appendix~\ref{appendix::localmax}. Localized complexities and critical radii are known for many function classes, and common examples are given in Table~\ref{tab:critical_radii_examples}. In the next section, we develop general tools for upper bounding the critical radius via metric entropy bounds on localized Rademacher complexity.

\begin{table}[h]
\centering
\caption{Examples of localized Rademacher complexity scalings \(\mathfrak{R}_n(\mathcal{G},\delta)\) and critical radii \(\delta_n\) for common function classes. The reported local complexities are upper bounds, and the displayed scaling may hold only for \(\delta\) larger than the critical radius \(\delta_n\). See Appendix \ref{app:references} for references.}
\label{tab:critical_radii_examples}
\begin{tabular}{|lll|}
\hline
Function class $\mathcal{G}$ & Local scaling & Critical radius $\delta_n$ \\
\hline
Star hull of single function $\mathrm{star}(\{f\})$
& $n^{-1/2}\delta$
& $n^{-1/2}$ \\
$s$-sparse linear predictors in $\mathbb{R}^p$
& $n^{-1/2}\delta \sqrt{s\log(e p/s)}$
& $\sqrt{s\log(e p/s)/n}$
\\
VC-subgraph of dimension $V$
& $n^{-1/2}\delta \sqrt{V\log(1/\delta)}$
& $\sqrt{V\log(n/V)/n}$
\\
H\"older/Sobolev with smoothness $s$ in dimension $d < 2s$
& $n^{-1/2}\delta^{1-d/(2s)}$
& $n^{-s/(2s+d)}$
\\
Bounded Hardy--Krause variation
& $n^{-1/2}\delta^{1/2}(\log(1/\delta))^{d-1}$
& $n^{-1/3}(\log n)^{2(d-1)/3}$
\\
RKHS with eigenvalue decay $\sigma_j \asymp j^{-2\alpha}, \alpha > 1/2$
& $n^{-1/2}\delta^{1-1/(2\alpha)}$
& $n^{-\alpha/(2\alpha+1)}$
\\
\hline
\end{tabular}
\end{table}

To bound the regret, we first establish a uniform local concentration inequality for the empirical process \(\{(P_n-P)g : g\in\mathcal F_\ell\}\) of the form in Condition~\ref{cond::highprob}. The resulting bound is governed by the critical radius of the centered loss-difference class
\[
\overline{\mathcal{F}}_{\ell}
:=
\Bigl\{\ell(\cdot,f_0)-\ell(\cdot,f)-P\{\ell(\cdot,f_0)-\ell(\cdot,f)\}:\ f\in\mathcal F\Bigr\}.
\]
In the next section, we study how to bound the associated critical radius. The result and its proof follow \citet{bartlett2005local} and \citet{bartlett2006empirical}, exploiting the fact that the map  $\delta \mapsto \mathfrak{R}_n(\mathrm{star}(\overline{\mathcal{F}}_{\ell}),\delta) / \delta$
is nonincreasing. Alternative uniform local concentration inequalities are given in Theorem~14.20 of \citet{wainwright2019high}; see also Appendix~K of \citet{foster2023orthogonal}. Appendix~\ref{appendix::localuniform} provides a general template for deriving such bounds from local maximal inequalities.

\begin{theorem}[Uniform local concentration inequality]
\label{theorem::localmaxloss}
Let $\delta_n>0$ satisfy the critical radius condition
$\mathfrak{R}_n(\mathrm{star}(\overline{\mathcal{F}}_{\ell}),\delta_n)\le \delta_n^2$.
Suppose there exists \(B \in [1,\infty)\) such that
\(\sup_{f \in \mathcal F}\|\ell(\cdot,f_0)-\ell(\cdot,f)\|_{\infty} \le B\),
and define \(\sigma_f^2 := \Var\{\ell(Z,f_0)-\ell(Z,f)\}\) for each
\(f \in \mathcal F\).
For all $\eta \in (0,1)$, there exists a universal constant $C>0$ such that, with probability at least $1-\eta$, for every $f\in\mathcal{F}$,
\[
(P_n-P) \{\ell(\cdot, f_0) - \ell(\cdot, f)\}
\le
C\left[
\sigma_f\delta_n
+
\delta_n^2
+
\sigma_f B  \sqrt{\frac{\log(1/\eta)}{n}}
+
B^2\frac{\log(1/\eta)}{n}
\right].
\]
Alternatively, if $\delta_n>0$ satisfies the critical radius condition
$\mathfrak{R}_n(\mathrm{star}(\mathcal{F}_{\ell}),\delta_n)\le \delta_n^2$, then the same bound holds with
$\sigma_f$ replaced by $\|\ell(\cdot, f_0) - \ell(\cdot, f)\|$.
\end{theorem}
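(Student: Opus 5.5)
The plan is the standard localization argument of \citet{bartlett2005local}: Talagrand's concentration inequality on a fixed local ball, the star-shaped scaling of localized Rademacher complexity, and a peeling/rescaling step.

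Write $\mathcal G:=\mathrm{star}(\overline{\mathcal F}_\ell)$. Every $g\in\mathcal G$ has mean zero, $\|g\|_\infty\le 2B$, and $\|g\|$ equals $\sigma$ of the corresponding loss difference (scaled by $t\in[0,1]$). For a fixed radius $r\ge\delta_n$, define
\[
Z(r):=\sup_{g\in\mathcal G,\ \|g\|\le r}(P_n-P)g .
\]

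\textbf{Step 1 (expectation).} Symmetrization gives $E Z(r)\le 2\mathfrak R_n(\mathcal G,r)$. Lemma~\ref{lem:rad_scaling} says $r\mapsto\mathfrak R_n(\mathcal G,r)/r$ is nonincreasing. Hence for $r\ge\delta_n$,
\[
\mathfrak R_n(\mathcal G,r)\le (r/\delta_n)\,\mathfrak R_n(\mathcal G,\delta_n)\le r\delta_n .
\]

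\textbf{Step 2 (Talagrand/Bousquet).} Bousquet's version of Talagrand's inequality states that, with probability at least $1-\eta$,
\[
Z(r)\le 2E Z(r)+ r\sqrt{2\log(1/\eta)/n}+ cB\log(1/\eta)/n .
\]
This rests on the variance bound $\sup\Var(g)\le r^2$ and the sup-norm bound $2B$; the cross term $\sqrt{B\,E Z\,\log/n}$ is absorbed by AM--GM. Combining with Step 1 yields
\[
Z(r)\lesssim r\delta_n+r\sqrt{\log(1/\eta)/n}+B\log(1/\eta)/n .
\]

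\textbf{Step 3 (uniformization over $\sigma_f$).} Set $r_0:=\delta_n\vee\sqrt{\log(1/\eta)/n}$ and apply Step 2 only at $r=r_0$. For $f$ with $\sigma_f\le r_0$, the element $g_f$ lies in the ball and is bounded directly. For $\sigma_f>r_0$, put $t:=r_0/\sigma_f\in(0,1)$. Then $tg_f\in\mathcal G$ by star-shapedness and $\|tg_f\|=r_0$, so
\[
(P_n-P)g_f\le (\sigma_f/r_0)Z(r_0)\lesssim \sigma_f\delta_n+\sigma_f\sqrt{\log(1/\eta)/n}+\sigma_f B\log(1/\eta)/(n r_0).
\]
Since $r_0\ge\sqrt{\log(1/\eta)/n}$, the last term is at most $\sigma_f B\sqrt{\log(1/\eta)/n}$. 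This trick avoids a union bound over peeling shells, so no $\log\log$ factor appears.

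\textbf{Step 4 (collect terms).} In the case $\sigma_f\le r_0$, the bound $r_0\delta_n+r_0\sqrt{\log/n}+B\log/n$ is at most a constant times
\[
\delta_n^2+B\delta_n\sqrt{\log/n}+B^2\log/n .
\]
This uses $B\ge1$ and expands $r_0^2\le\delta_n^2+\log/n$. The cross term $\delta_n\sqrt{\log/n}\le \delta_n^2+\log/n$ is handled by Young's inequality (Lemma~\ref{lem:young}).

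\textbf{Uncentered variant.} For the alternative statement, run the same argument on $\mathrm{star}(\mathcal F_\ell)$, with radius $\|\ell(\cdot,f_0)-\ell(\cdot,f)\|$ in place of $\sigma_f$. The only change is that the symmetrization and Talagrand steps must handle uncentered functions. Symmetrization still applies to $(P_n-P)h$ for uncentered $h$, and $\Var(h)\le\|h\|^2$ still holds, so the steps go through verbatim.

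\textbf{Main obstacle.} The delicate step is Step 2: invoking Talagrand's inequality with the correct variance proxy, and absorbing the $\sqrt{B\,EZ\,\log/n}$ cross term without losing the $B^2\log/n$ form.
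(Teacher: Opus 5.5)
Your proposal is correct and follows essentially the same route as the paper. The paper applies Theorem~\ref{theorem:loc_max_ineq} to $\mathrm{star}(\overline{\mathcal F}_\ell)$ (resp.\ $\mathrm{star}(\mathcal F_\ell)$): Bousquet's inequality plus symmetrization on the self-normalized class $\{f/(\|f\|\vee\delta_n)\}$, whose complexity is reduced to $\mathfrak R_n(\mathcal F,\delta_n)/\delta_n$ by exactly your star-shaped rescaling (Lemma~\ref{lem:self_normalize_rad}), with no peeling. The differences are cosmetic: you fix the radius $r_0=\delta_n\vee\sqrt{\log(1/\eta)/n}$ up front rather than enlarging $\delta_n$ by $(M\vee 1)\sqrt{\log(1/\eta)/n}$ afterwards, and you keep the factor $2$ on $E Z(r)$ that comes from absorbing Bousquet's cross term, which is the more accurate form of that inequality.
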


Our main result is the following PAC-style regret bound. Its proof follows the template of Section~\ref{sec:template}, using that \ref{cond::highprob} holds with $\varepsilon(n,\sigma_f,\eta)$ given by the right-hand side of Theorem~\ref{theorem::localmaxloss}. We assume the following high-level Bernstein condition; sufficient conditions are provided in Lemma~\ref{lemma::suffbern}.

\begin{enumerate}[label=\bf{A\arabic*)}, ref={A\arabic*}, series = main]
  \item \label{cond::bernstein} (Bernstein variance--risk bound) There exists $c_{\mathrm{Bern}} \in [1,\infty)$ such that, for all $f\in\mathcal F$,
\begin{equation*}
\Var\bigl(\ell(Z,f)-\ell(Z,f_0)\bigr)\le c_{\mathrm{Bern}}\{R(f)-R(f_0)\}.
\end{equation*}
\end{enumerate}

\begin{theorem}[Regret bound for ERM]
\label{theorem:genregret}
Assume \ref{cond::bernstein}. Let \(\delta_n>0\) satisfy the critical radius condition
\(\mathfrak{R}_n(\mathrm{star}(\overline{\mathcal{F}}_{\ell}),\delta_n)\lesssim \delta_n^2\), and suppose
\(\sup_{f \in \mathcal{F}} \|\ell(\cdot, f)\|_{\infty} \leq M\) for some \(M \in [1, \infty)\).
Then, for all \(\eta \in (0, 1)\), there exists a universal constant
\(C\in(0,\infty)\) such that, with probability at least \(1-\eta\),
\[
R(\hat f_n)-R(f_0)
\le
C\,c_{\mathrm{Bern}}\Biggl[
\delta_n^2
+
M^2 \frac{\log(1/\eta)}{n}
\Biggr].
\]
Furthermore, if the right-hand side is at most \(1\), then
\(\Pr\{R(\hat f_n)-R(f_0)\le 1\}\ge 1-\eta\).
In this case, possibly with a different universal constant \(C\), the same conclusion holds
with probability at least \(1-\eta\) when \(\delta_n>0\) is chosen to satisfy
\(\mathfrak{R}_n(\mathrm{star}(\mathcal{F}_{\ell}),\delta_n)\lesssim \delta_n^2\).
\end{theorem}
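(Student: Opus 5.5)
We follow the three-step template of Section~\ref{sec:template}. First, by the basic inequality (Theorem~\ref{theorem::basic}), \(\hat d_n^2 := R(\hat f_n)-R(f_0)\le (P_n-P)\{\ell(\cdot,f_0)-\ell(\cdot,\hat f_n)\}\). Since \(\sup_f\|\ell(\cdot,f)\|_\infty\le M\), the loss differences are bounded by \(B:=2M\ge 1\), so Theorem~\ref{theorem::localmaxloss} applies. The condition \(\mathfrak{R}_n(\mathrm{star}(\overline{\mathcal F}_\ell),\delta_n)\lesssim\delta_n^2\) holds only up to a constant \(c\). If \(c>1\), we replace \(\delta_n\) by \(\tilde\delta_n:=\max(c,1)\delta_n\). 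Star-shapedness and Lemma~\ref{lem:rad_scaling} then give \(\mathfrak{R}_n(\cdot,\tilde\delta_n)\le (\tilde\delta_n/\delta_n)\mathfrak{R}_n(\cdot,\delta_n)\le c\,\tilde\delta_n\delta_n\le\tilde\delta_n^2\), and this only inflates constants. On the resulting event of probability at least \(1-\eta\), which is uniform in \(f\) and hence applies to \(f=\hat f_n\), we have
\[
\hat d_n^2\le C\Bigl[\sigma_{\hat f_n}\delta_n+\delta_n^2+\sigma_{\hat f_n}M\sqrt{\log(1/\eta)/n}+M^2\log(1/\eta)/n\Bigr].
\]

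Second, we localize using \ref{cond::bernstein}, which gives \(\sigma_{\hat f_n}\le c_{\mathrm{Bern}}^{1/2}\hat d_n\). Setting \(\rho:=\delta_n+M\sqrt{\log(1/\eta)/n}\), the display becomes \(\hat d_n^2\le C c_{\mathrm{Bern}}^{1/2}\rho\,\hat d_n+2C\rho^2\), using \((a+b)^2\le 2a^2+2b^2\) on the remaining terms. Young's inequality (Lemma~\ref{lem:young}) with \(\lambda=1\) gives \(Cc_{\mathrm{Bern}}^{1/2}\rho\hat d_n\le \tfrac12\hat d_n^2+\tfrac12C^2c_{\mathrm{Bern}}\rho^2\). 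Rearranging and using \(c_{\mathrm{Bern}}\ge1\) yields \(\hat d_n^2\lesssim c_{\mathrm{Bern}}\rho^2\lesssim c_{\mathrm{Bern}}[\delta_n^2+M^2\log(1/\eta)/n]\), which is the first claim. The ``furthermore'' statement is immediate: on the same event, \(\hat d_n^2\) is at most the right-hand side, which is at most \(1\).

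Third, we treat the uncentered variant. Here the second half of Theorem~\ref{theorem::localmaxloss} gives the same bound with \(\sigma_f\) replaced by \(\|\ell(\cdot,f_0)-\ell(\cdot,f)\|\). We have
\[
\|\ell(\cdot,f_0)-\ell(\cdot,f)\|^2=\sigma_f^2+\{R(f)-R(f_0)\}^2\le c_{\mathrm{Bern}}\hat d_n^2+\hat d_n^4 .
\]
The main obstacle is the quartic term \(\hat d_n^4\), which breaks the linear fixed-point structure. This is exactly why the regime where the right-hand side is at most \(1\) is invoked. On the event from the first part (intersected, at the price of replacing \(\eta\) by \(\eta/2\) and adjusting constants), \(\hat d_n^2\le1\), so \(\hat d_n^4\le\hat d_n^2\). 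Hence \(\|\ell(\cdot,f_0)-\ell(\cdot,\hat f_n)\|\le (2c_{\mathrm{Bern}})^{1/2}\hat d_n\). The same Young's-inequality algebra then gives the bound with a different universal constant.

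One subtlety is that this argument uses the centered-class radius to secure \(\hat d_n\le 1\). If only the uncentered radius is available, we would instead obtain \(\hat d_n\le1\) by a peeling/contradiction argument on the convex risk path. The simpler route is to note that \(\mathrm{star}(\overline{\mathcal F}_\ell)\) differs from \(\mathrm{star}(\mathcal F_\ell)\) by constants, whose Rademacher contribution is \(O(\delta/\sqrt n)\lesssim \delta\,\delta_n\). That contribution is absorbed after enlarging \(\delta_n\) by a constant, so the uncentered radius condition also yields the centered one and the first part applies directly.
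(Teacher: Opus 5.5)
Your proof is correct and essentially matches the paper's. The ingredients are the same. For the first claim: the basic inequality, Theorem~\ref{theorem::localmaxloss} applied on the star hull, the Bernstein bound $\sigma_{\hat f_n}\le c_{\mathrm{Bern}}^{1/2}\hat d_n$, and Young's inequality. For the uncentered radius: the identity $\|\ell(\cdot,f_0)-\ell(\cdot,f)\|^2=\sigma_f^2+\{R(f)-R(f_0)\}^2$ on the event $\{\hat d_n^2\le 1\}$ supplied by the centered bound, plus a union bound at level $\eta/2$. Grouping the four terms into $\rho$ (rather than first assuming $\delta_n\gtrsim M\sqrt{\log(1/\eta)/n}$ and then enlarging) and rescaling $\delta_n$ via Lemma~\ref{lem:rad_scaling} to remove the $\lesssim$ are cosmetic differences.

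You should, however, drop your final ``simpler route'' paragraph. It is unnecessary, because the ``In this case'' clause of the statement already supplies a centered radius for which the right-hand side is at most $1$. It is also false in general: the uncentered critical condition does not imply the centered one. The constraint $\|t(h-Ph)\|\le\delta$ that defines the centered localized ball gives no control on $t|Ph|$. So $th$ need not lie in the uncentered localized ball, and the two classes do not differ by constants of size $O(\delta)$.

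For example, suppose $\ell(\cdot,f_0)-\ell(\cdot,f_j)=-1+g_j$ with $Pg_j=0$, $\|g_j\|_\infty\le 1$ and $\|g_j\|=s$ small, so \ref{cond::bernstein} holds with $c_{\mathrm{Bern}}=1$. The uncentered star hull is localized at $t\lesssim\delta$, whereas the centered one permits $t$ up to $\delta/s$. The centered localized complexity can then exceed the uncentered one by a factor of order $1/s$, and the centered critical radius is correspondingly larger.

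This is exactly why the theorem, and your own Step~3, must first secure $\{\hat d_n^2\le 1\}$ from the centered bound. The alternative peeling argument ``on the convex risk path'' is also unavailable here, since $\mathcal F$ is not assumed convex in this theorem.
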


When working instead with the \(L^2\) error and under a stronger Bernstein condition, applying the second part of Theorem~\ref{theorem::localmaxloss} yields a slightly cleaner bound than Theorem~\ref{theorem:genregret}. In that case, the critical radius may be taken with respect to the uncentered loss-difference class \(\mathrm{star}(\mathcal F_{\ell})\) at any sample size.

\begin{enumerate}[label=\bf{A\arabic*)}, ref={A\arabic*}, resume = main]
  \item \label{cond::bernstein::ell2} (Bernstein-type MSE--risk bound) There exists $L\in [1,\infty)$ and $\kappa \in (0,\infty)$ such that, for all $f\in\mathcal F$,
\begin{equation*}
\kappa \|f - f_0\|^2 \leq R(f) - R(f_0), \qquad \|\ell(\cdot,f)-\ell(\cdot,f_0)\| \le L \|f- f_0\|.
\end{equation*}
\end{enumerate}

\begin{theorem}[$L^2$ error bound for ERM]
\label{theorem:genregret::ell2}
Assume \ref{cond::bernstein::ell2}. Let \(\delta_n>0\) satisfy the critical radius condition
\(\mathfrak{R}_n(\mathrm{star}(\mathcal{F}_{\ell}),\delta_n)\lesssim \delta_n^2\), and suppose
\(\sup_{f \in \mathcal{F}} \|\ell(\cdot, f)\|_{\infty} \leq M\) for some \(M \in [1, \infty)\).
Then, for all \(\eta \in (0, 1)\), there exists a universal constant
\(C\in(0,\infty)\) such that, with probability at least \(1-\eta\),
\[
 \|\hat f_n - f_0\|^2 
\le
C\,\kappa^{-1} L^2 \Biggl[
\delta_n^2
+
M^2 \frac{\log(1/\eta)}{n}
\Biggr].
\]
\end{theorem}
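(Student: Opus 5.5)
The plan follows the three-step template of Section~\ref{sec:template}, working directly with the $L^2$ error as in the Remark after Step~3, and using the second (uncentered) part of Theorem~\ref{theorem::localmaxloss}.

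\textbf{Step 1: basic inequality and curvature.} Combine Theorem~\ref{theorem::basic} with the curvature half of \ref{cond::bernstein::ell2} to get the deterministic bound
\[
\kappa\|\hat f_n-f_0\|^2\le R(\hat f_n)-R(f_0)\le (P_n-P)\{\ell(\cdot,f_0)-\ell(\cdot,\hat f_n)\}.
\]

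\textbf{Step 2: uniform local concentration.} Apply the uncentered version of Theorem~\ref{theorem::localmaxloss} with $B:=2M$, since $\|\ell(\cdot,f_0)-\ell(\cdot,f)\|_\infty\le 2M$. The critical radius condition is assumed only up to a universal constant, $\mathfrak R_n(\mathrm{star}(\mathcal F_\ell),\delta_n)\le c\,\delta_n^2$. I would replace $\delta_n$ by $\tilde\delta_n:=\max(1,c)\,\delta_n$. Because $\delta\mapsto\mathfrak R_n(\mathrm{star}(\mathcal F_\ell),\delta)/\delta$ is nonincreasing (Lemma~\ref{lem:rad_scaling}), this gives
\[
\mathfrak R_n(\mathrm{star}(\mathcal F_\ell),\tilde\delta_n)\le \max(1,c)\,\mathfrak R_n(\mathrm{star}(\mathcal F_\ell),\delta_n)\le \max(1,c)\,c\,\delta_n^2\le\tilde\delta_n^2,
\]
so the exact condition holds at $\tilde\delta_n\asymp\delta_n$. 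Then, with probability at least $1-\eta$, simultaneously for all $f$, and in particular for $f=\hat f_n$,
\[
(P_n-P)\{\ell(\cdot,f_0)-\ell(\cdot,f)\}\lesssim s_f\delta_n+\delta_n^2+s_f M\sqrt{\log(1/\eta)/n}+M^2\log(1/\eta)/n,
\]
where $s_f:=\|\ell(\cdot,f_0)-\ell(\cdot,f)\|$. By the Lipschitz half of \ref{cond::bernstein::ell2}, $s_{\hat f_n}\le L\hat d$ with $\hat d:=\|\hat f_n-f_0\|$.

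\textbf{Step 3: fixed point.} Set $\gamma:=\delta_n+M\sqrt{\log(1/\eta)/n}$. On the event from Step~2,
\[
\kappa\hat d^2\le C_0\bigl(L\hat d\,\gamma+\gamma^2\bigr).
\]
By Young's inequality (Lemma~\ref{lem:young}),
\[
C_0L\gamma\hat d\le \tfrac{\kappa}{2}\hat d^2+\tfrac{C_0^2L^2}{2\kappa}\gamma^2.
\]
Rearranging gives
\[
\hat d^2\lesssim \gamma^2\bigl(\kappa^{-1}+L^2\kappa^{-2}\bigr).
\]

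\textbf{Main obstacle: matching the stated constant.} The statement has the factor $\kappa^{-1}L^2$, whereas Young's inequality naturally produces $L^2/\kappa^2$ plus $1/\kappa$. These agree with the stated form only if $\kappa$ is bounded below, for instance when $\kappa\gtrsim1$ or under a normalization such as $\kappa\le L$. I would address this by checking whether an implicit relation between $\kappa$ and $L$ holds. A natural route is to argue that $\kappa\|f-f_0\|^2\le R(f)-R(f_0)\le\|\ell(\cdot,f)-\ell(\cdot,f_0)\|\le L\|f-f_0\|$ forces scale restrictions on $\kappa$. Failing that, I would state the bound as $C(\kappa^{-1}+L^2\kappa^{-2})$ and absorb constants where the paper's conventions (for example $\kappa\le1\le L$ as a normalization) allow.

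Finally, using $(a+b)^2\le2a^2+2b^2$ gives $\gamma^2\lesssim\delta_n^2+M^2\log(1/\eta)/n$, which is the stated bound.
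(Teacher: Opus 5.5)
Your argument is essentially the paper's: the basic inequality plus the curvature half of \ref{cond::bernstein::ell2}, the uncentered concentration bound (Corollary~\ref{cor::localmaxloss::ell2}), the $L^2$-Lipschitz half to get $s_{\hat f_n}\le L\hat d$, and Young's inequality. Your bookkeeping is also correct. Rescaling $\delta_n$ by $\max(1,c)$ via Lemma~\ref{lem:rad_scaling} turns the assumed $\lesssim$ into an exact critical inequality, a step the paper omits. Carrying $\gamma=\delta_n+M\sqrt{\log(1/\eta)/n}$ in place of the paper's $\tilde\delta_n=\delta_n\vee cM\sqrt{\log(1/\eta)/n}$ is equivalent.

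The obstacle you flag is genuine, and the paper does not overcome it. After Young's inequality it asserts $u^2\le C\kappa^{-1}L^2\tilde\delta_n^2$, but absorbing $(\kappa/2)u^2$ actually gives $u^2\le(C^2L^2\kappa^{-2}+2C\kappa^{-1})\tilde\delta_n^2$, which is your bound.

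Your proposed rescues cannot work. The chain $\kappa\|f-f_0\|^2\le R(f)-R(f_0)\le L\|f-f_0\|$ yields only $\kappa\sup_{f}\|f-f_0\|\le L$, which is an \emph{upper} bound on $\kappa$. The normalizations $\kappa\le L$ and $\kappa\le 1\le L$ are also upper bounds. Trading $L^2\kappa^{-2}$ for $L^2\kappa^{-1}$ requires the opposite, namely $\kappa\gtrsim 1$.

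In fact the stated factor is false for small $\kappa$. Take the following example.
\begin{itemize}
\item Let $Y$ be uniform on $\{-1,1\}$, let $\mathcal F$ be the constants $\theta\in[-1,1]$, and let $\ell(y,\theta)=\kappa\theta^2-\theta y$ with $\kappa\in(0,1]$.
\item Then $f_0=0$ and $R(\theta)-R(0)=\kappa\theta^2$. Also $\|\ell(\cdot,\theta)-\ell(\cdot,0)\|=|\theta|(1+\kappa^2\theta^2)^{1/2}\le 2|\theta|$ and $M\le 2$.
\item Since $\mathrm{star}(\mathcal F_\ell)\subseteq\mathrm{span}\{1,y\}$ with $1,y$ orthonormal, $\delta_n=\sqrt{2/n}$ is admissible.
\item The ERM is $\bar Y/(2\kappa)$ clipped to $[-1,1]$. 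So for $n\ge(2\kappa)^{-2}$ one has $\|\hat f_n-f_0\|^2\ge(4n\kappa^2)^{-1}$ on $\{|\sqrt n\,\bar Y|\ge 1\}$, an event of asymptotic probability about $0.32$.
\item The claimed bound is of order $C\kappa^{-1}\{1+\log(1/\eta)\}/n$. At $\eta=0.1$ it therefore fails once $\kappa\ll 1/C$.
\end{itemize}
The same example shows the $L^2\kappa^{-2}$ dependence is sharp.

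So drop the speculative normalizations. What you prove, what the paper's argument actually proves, and what is true is
$\|\hat f_n-f_0\|^2\le C(\kappa^{-1}+L^2\kappa^{-2})[\delta_n^2+M^2\log(1/\eta)/n]$.
This reduces to the stated $\kappa^{-1}L^2$ form only when $\kappa$ is bounded below by a constant.
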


Theorem~\ref{theorem:genregret} reduces the problem of bounding the regret \(R(\hat f_n)-R(f_0)\) to controlling the localized Rademacher complexity—and the corresponding critical radius—of the star hull of the centered loss-difference class \(\overline{\mathcal F}_{\ell}\). Consequently, the regret converges to zero at rate \(O_p(\delta_n^2)\), up to a typically second-order \(O_p(n^{-1})\) term. For strongly convex losses over convex classes, Lemma~\ref{lemma::strongconvexity} further yields the \(L^2\) error bound \(\|\hat f_n-f_0\|=O_p(\delta_n)+O_p(n^{-1/2})\). When working directly with \(L^2\) error, Theorem~\ref{theorem:genregret::ell2} shows that \(\|\hat f_n-f_0\|=O_p(\delta_n)+O_p(n^{-1/2})\), where \(\delta_n\) satisfies \(\mathfrak{R}_n(\mathrm{star}(\mathcal F_{\ell}),\delta_n)\lesssim \delta_n^2\), with the critical radius now taken with respect to the uncentered loss-difference class. Alternatively, if \(R(\hat f_n)-R(f_0)=o_p(1)\), the second part of Theorem~\ref{theorem:genregret} permits this choice of \(\delta_n\) for all sufficiently large \(n\), yielding \(R(\hat f_n)-R(f_0)=O_p(\delta_n^2)+O_p(n^{-1})\).

 A general strategy for selecting $\delta_n$ to satisfy the critical radius condition is as follows:
(i) construct a nondecreasing envelope $\phi_n:[0,\infty)\to[0,\infty)$ such that, for all $\delta$ in the relevant range,
\begin{equation}
\label{eqn::env}
\mathfrak{R}_n\left(\mathrm{star}(\overline{\mathcal{F}}_{\ell}),\delta\right)
\ \lesssim\ 
\frac{1}{\sqrt{n}}\,\phi_n(\delta);
\end{equation}
and (ii) choose $\delta_n$ such that
\[
\phi_n(\delta_n) \ \lesssim\ \sqrt{n}\,\delta_n^2.
\]
Then $\delta_n$ is, up to universal constants, a valid upper bound on the critical radius of
$\mathrm{star}(\overline{\mathcal{F}}_{\ell})$, and therefore satisfies the critical radius condition in
Theorem~\ref{theorem:genregret}. A convenient way to construct such envelopes $\phi_n$ is via metric-entropy
integrals, which we develop next.

\paragraph{Aside: eventual almost-sure bounds}

In some applications, one is mainly interested in an asymptotic rate rather than a sharp finite-sample probability bound. The following corollaries yield events that hold eventually almost surely, meaning that there exists a (potentially random) \(N<\infty\) such that, for all \(n\ge N\), the events hold almost surely; this can simplify proofs. The idea is simple: take \(\eta \asymp n^{-2}\) in the high-probability bound and apply the Borel--Cantelli lemma. This introduces only a \(\log n\) factor.

\begin{corollary}[Eventual almost-sure local concentration bound]
\label{cor::localmaxloss_as}
Let \(\delta_n>0\) satisfy the critical radius condition
\(\mathfrak{R}_n(\mathrm{star}(\overline{\mathcal{F}}_{\ell}),\delta_n)\lesssim \delta_n^2\). Suppose the conditions of Theorem~\ref{theorem::localmaxloss} hold. Then there exists a universal constant \(C>0\) such that, with probability one, for all sufficiently large \(n\), every \(f\in\mathcal F\) satisfies
\[
(P_n-P)\{\ell(\cdot,f_0)-\ell(\cdot,f)\}
\le
C\left[
\sigma_f\delta_n
+
\delta_n^2
+
\sigma_f B\sqrt{\frac{\log n}{n}}
+
B^2\frac{\log n}{n}
\right].
\]
The same conclusion holds under the alternative critical-radius condition
\(\mathfrak{R}_n(\mathrm{star}(\mathcal{F}_{\ell}),\delta_n)\le \delta_n^2\), with
\(\sigma_f\) replaced by \(\|\ell(\cdot,f_0)-\ell(\cdot,f)\|\).
\end{corollary}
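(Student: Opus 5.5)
We apply Theorem~\ref{theorem::localmaxloss} along the sequence of sample sizes with the failure probability $\eta_n := n^{-2}$, and then use the Borel--Cantelli lemma. For each $n\ge 2$, let $E_n$ be the event that, for every $f\in\mathcal F$,
\[
(P_n-P)\{\ell(\cdot,f_0)-\ell(\cdot,f)\}
\le
C_0\left[\sigma_f\delta_n+\delta_n^2+\sigma_f B\sqrt{\frac{\log(1/\eta_n)}{n}}+B^2\frac{\log(1/\eta_n)}{n}\right],
\]
where $C_0$ is the universal constant from Theorem~\ref{theorem::localmaxloss}. The constant $C_0$ does not depend on $\eta$ or $n$, and this uniformity is what makes the argument work. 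Theorem~\ref{theorem::localmaxloss} then gives $\Pr(E_n^c)\le \eta_n = n^{-2}$.

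The only preliminary point is the critical radius condition. The corollary assumes only $\mathfrak{R}_n(\mathrm{star}(\overline{\mathcal F}_\ell),\delta_n)\lesssim\delta_n^2$, say $\mathfrak{R}_n(\mathrm{star}(\overline{\mathcal F}_\ell),\delta_n)\le c\,\delta_n^2$ with $c\ge1$, whereas Theorem~\ref{theorem::localmaxloss} requires the inequality with constant one. We replace $\delta_n$ by $\tilde\delta_n := c\,\delta_n$. By the star-hull scaling property (Lemma~\ref{lem:rad_scaling}), $\delta\mapsto \mathfrak{R}_n(\mathrm{star}(\cdot),\delta)/\delta$ is nonincreasing, so
\[
\mathfrak{R}_n(\mathrm{star}(\overline{\mathcal F}_\ell),\tilde\delta_n)\le c\,\mathfrak{R}_n(\mathrm{star}(\overline{\mathcal F}_\ell),\delta_n)\le c^2\delta_n^2=\tilde\delta_n^2 .
\]
Using $\tilde\delta_n$ in place of $\delta_n$ multiplies the first two terms of the bound by at most $c^2$, which is absorbed into the constant. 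Strictly speaking this constant is universal only in the paper's $\lesssim$ sense. The same reasoning applies verbatim to the alternative condition on $\mathrm{star}(\mathcal F_\ell)$, with $\sigma_f$ replaced by $\|\ell(\cdot,f_0)-\ell(\cdot,f)\|$ through the second part of Theorem~\ref{theorem::localmaxloss}.

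Since $\sum_{n\ge2}\Pr(E_n^c)\le\sum_n n^{-2}<\infty$, the Borel--Cantelli lemma gives $\Pr(E_n^c \text{ infinitely often})=0$. Hence, with probability one, there is a (random) $N$ such that $E_n$ holds for all $n\ge N$. On $E_n$ we have $\log(1/\eta_n)=2\log n$, so $\sqrt{\log(1/\eta_n)/n}\le\sqrt2\sqrt{\log n/n}$ and $\log(1/\eta_n)/n=2\log n/n$. Substituting these into the bound yields the stated inequality with $C:=2c^2C_0$.

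Independence of the samples across $n$ is not needed, because Borel--Cantelli requires only summability of the marginal probabilities. The main point to check is therefore just that the constant in Theorem~\ref{theorem::localmaxloss} is uniform in $\eta$ and $n$, which the theorem asserts.
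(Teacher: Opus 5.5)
Your proposal is correct and follows the paper's proof: apply Theorem~\ref{theorem::localmaxloss} with \(\eta_n=n^{-2}\), then use summability of \(\eta_n\) and the first Borel--Cantelli lemma, and absorb the factor from \(\log(1/\eta_n)=2\log n\) into the constant. Your extra rescaling \(\tilde\delta_n=c\,\delta_n\) via Lemma~\ref{lem:rad_scaling} turns the \(\lesssim\) critical-radius assumption into the exact inequality the theorem requires; the paper leaves this detail implicit.
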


\begin{corollary}[Eventual almost-sure regret bound]
\label{cor:genregret_as}
Assume Condition~\ref{cond::bernstein}. Suppose \(\sup_{f \in \mathcal{F}} \|\ell(\cdot, f)\|_{\infty} \leq M\) for some \(M \in [1, \infty)\), and let \(\delta_n>0\) satisfy
$\mathfrak{R}_n(\mathrm{star}(\overline{\mathcal{F}}_{\ell}),\delta_n)\lesssim \delta_n^2.$
Then there exists a universal constant \(C\in(0,\infty)\) such that, with probability one, for all sufficiently large \(n\),
\[
R(\hat f_n)-R(f_0)
\le
C\,c_{\mathrm{Bern}}\Biggl[
\delta_n^2
+
M^2\frac{\log n}{n}
\Biggr].
\]
\end{corollary}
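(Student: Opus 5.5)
The plan is to apply Theorem~\ref{theorem:genregret} with the sample-size-dependent failure probability \(\eta_n := n^{-2}\), and then use the Borel--Cantelli lemma to pass from a high-probability statement at each fixed \(n\) to an eventual almost-sure statement. No new empirical-process argument is needed; the only work is bookkeeping of constants.

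\textbf{Step 1: apply the regret bound at each \(n\).} For each \(n\ge 2\), I will invoke Theorem~\ref{theorem:genregret} with \(\eta=\eta_n=n^{-2}\in(0,1)\). This requires Condition~\ref{cond::bernstein}, the uniform bound \(\sup_f\|\ell(\cdot,f)\|_\infty\le M\), and the critical radius condition on \(\mathrm{star}(\overline{\mathcal F}_\ell)\), all of which are assumed. Define the event
\[
A_n := \Bigl\{R(\hat f_n)-R(f_0) > C_0\,c_{\mathrm{Bern}}\Bigl[\delta_n^2+M^2\tfrac{\log(n^2)}{n}\Bigr]\Bigr\}.
\]
Here \(C_0\) is the universal constant from Theorem~\ref{theorem:genregret}. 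The theorem then gives \(\Pr(A_n)\le n^{-2}\).

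\textbf{Step 2: Borel--Cantelli.} Since \(\sum_n \Pr(A_n)\le \sum_n n^{-2}<\infty\), the Borel--Cantelli lemma implies that with probability one only finitely many \(A_n\) occur. Equivalently, there is a random \(N<\infty\) such that \(A_n^c\) holds for every \(n\ge N\). Because \(\log(n^2)=2\log n\), the conclusion follows with \(C:=2C_0\), using \(\delta_n^2\le 2\delta_n^2\) to absorb the factor into both terms.

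\textbf{Main obstacle.} The main point to check is that the constant \(C_0\) in Theorem~\ref{theorem:genregret} is genuinely universal. In particular, it must not depend on \(\eta\) or \(n\), which is what lets a single \(C\) work along the whole sequence. This is asserted by the theorem: the \(\eta\)-dependence enters only through \(\log(1/\eta)\), and the constant implicit in the \(\lesssim\) of the critical-radius condition is fixed across \(n\).

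A minor point is that the events for different \(n\) live on a common probability space, namely the infinite i.i.d.\ sequence \(Z_1,Z_2,\ldots\). Borel--Cantelli does not require independence across \(n\), so this causes no difficulty.
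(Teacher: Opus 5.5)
Your proposal is correct and matches the paper's proof: apply Theorem~\ref{theorem:genregret} with $\eta_n=n^{-2}$, use Borel--Cantelli since $\sum_n n^{-2}<\infty$, and absorb the factor $2$ from $\log(1/\eta_n)=2\log n$ into the universal constant. Your added bookkeeping is sound and is left implicit in the paper: starting at $n\ge 2$ so that $\eta_n\in(0,1)$, and requiring the constant to be uniform in $\eta$ and $n$.
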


\subsection{From entropy to critical radii: bounding localized complexity}
\label{sec:entropy}

This section develops upper bounds on localized Rademacher complexities in terms
of metric entropies of function classes. These bounds are useful because they
reduce the task of deriving ERM rates to controlling covering numbers for
\(\mathcal{F}_{\ell}\), which are well understood for many common classes. In
particular, they yield workable envelopes \(\phi_n(\delta)\) satisfying \eqref{eqn::env},
thereby reducing critical-radius calculations to bounding suitable metric-entropy
integrals \citep{lei2016local, wainwright2019high}.

Let \(\mathcal{G}\) be a class of measurable functions. Let
\(N\left(\varepsilon,\ \mathcal{G},\ L^2(Q)\right)\) denote the \(\varepsilon\)--covering number of \(\mathcal{G}\)
with respect to the \(L^2(Q)\) metric \(d_{2}(f,g) := \|f-g\|_{L^2(Q)}\). That is,
\(N\left(\varepsilon,\ \mathcal{G},\ L^2(Q)\right)\) is the smallest number of \(d_2\)--balls of radius
\(\varepsilon\) needed to cover \(\mathcal{G}\); for background, see Chapter~2 of \cite{sen2018gentle},
Chapter~2 of \cite{van1996weak}, and Chapter~5 of \cite{wainwright2019high}. Define the uniform $L^2$-entropy integral
\[
\mathcal{J}_2(\delta,\mathcal{G})
:= \sup_Q \int_{0}^{\delta} \sqrt{\log N\left(\varepsilon,\ \mathcal{G},\ L^2(Q)\right)}\, d\varepsilon,
\]
where the supremum is over all discrete distributions \(Q\) supported on \(\mathrm{supp}(P)\). We also define the $L^\infty$-entropy integral
\[
\mathcal{J}_{\infty}(\delta, \mathcal{G})
:= \int_0^{\delta} \sqrt{\log N_{\infty}(\varepsilon, \mathcal{G})}\,d\varepsilon,
\]
where \(N_{\infty}(\varepsilon, \mathcal{G})\) is the \(\varepsilon\)--covering number with respect to the
\(P\)-essential supremum metric. Note that \(\mathcal{J}_2(\delta,\mathcal{G}) \le \mathcal{J}_{\infty}(\delta,\mathcal{G})\). Entropy integrals enjoy a useful monotonicity: \(\delta \mapsto \mathcal{J}_2(\delta,\mathcal{G})\) is nondecreasing, while \(\delta \mapsto \mathcal{J}_2(\delta,\mathcal{G})/\delta\) is nonincreasing. Here the lower limit of integration is \(0\), so the integral may diverge; for
example, it diverges for H\"older classes when \(d>2s\). We do not pursue this here, but a standard workaround is to use a truncated
entropy integral, as in Corollary~14.3 of \citealp{wainwright2019high} and Section 4.1 of \cite{foster2023orthogonal}, which
excludes a small neighborhood of \(0\). The ERM analysis can then be modified
to control the regret only on the event that \(\hat f_n\) is not already
sufficiently close to \(f_0\).

A key advantage of entropy integrals is that they behave well under Lipschitz transformations of
function classes. This allows us to control the complexity of composite classes by reducing the task
to entropy bounds for simpler, ``primitive'' classes. In particular, the following Lipschitz preservation bound is often useful; it is
a special case of Theorem~2.10.20 in \citealp{van1996weak}.

\begin{theorem}[Lipschitz preservation for uniform entropy]
\label{theorem:vw_lipschitz_entropy}
Let \(\mathcal F_1,\ldots,\mathcal F_k\) be classes of measurable functions on \(\mathcal Z\), and let
\(\varphi:\mathcal Z\times\mathbb R^k\to\mathbb R\) satisfy, for some \(L_1,\ldots,L_k\ge 0\),
\[
|\varphi(z,x)-\varphi(z,y)| \le \sum_{j=1}^k L_j |x_j-y_j|,
\qquad z\in\mathcal Z,\ x,y\in I,
\]
on a set \(I\subseteq\mathbb R^k\) containing
\(\{(f_1(z),\ldots,f_k(z)):\ f_j\in\mathcal F_j,\ z\in\mathcal Z\}\). Define
\[
\varphi(\mathcal F_1,\ldots,\mathcal F_k)
:=
\bigl\{z\mapsto \varphi\bigl(z,f_1(z),\ldots,f_k(z)\bigr):\ f_j\in\mathcal F_j\bigr\}.
\]
Then, for every \(\delta>0\), up to universal constants,
\[
\mathcal J_2\left(\delta,\ \varphi(\mathcal F_1,\ldots,\mathcal F_k)\right)
\ \lesssim\
\sum_{j=1}^k \mathcal J_2\left(\frac{\delta}{L_j},\ \mathcal F_j\right).
\]
If \(|\varphi(z,x)-\varphi(z,y)| \le L\|x-y\|_2\) for all \(z\in\mathcal Z\) and all \(x,y\in I\), then $\mathcal J_2\left(\delta,\ \varphi(\mathcal F_1,\ldots,\mathcal F_k)\right)
\ \lesssim\
\sum_{j=1}^k \mathcal J_2\left(\frac{\delta}{L},\ \mathcal F_j\right).$
\end{theorem}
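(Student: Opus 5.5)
The plan is a direct covering argument. First build $L^2(Q)$ covers of the composite class from covers of the primitive classes $\mathcal F_j$. Then integrate the resulting entropy bound, and use the monotonicity of $\mathcal J_2$ noted just before the theorem to reach the displayed form $\sum_j \mathcal J_2(\delta/L_j,\mathcal F_j)$. As worked out below, that last step only holds with a hidden constant depending on $k$ and on $\max_j(L_j\vee 1)$, not a universal one.

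\emph{Step 1: covers of the composite class.} Fix a discrete $Q$ on $\mathrm{supp}(P)$ and $\varepsilon>0$. For each $j$, take a minimal $\varepsilon/(kL_j)$-cover $\{g_{j,1},\ldots,g_{j,N_j}\}$ of $\mathcal F_j$ in $L^2(Q)$. If $L_j=0$, coordinate $j$ does not affect $\varphi$ and is simply dropped. Given $(f_1,\ldots,f_k)$, choose the nearest cover elements $g_j$. Then, by the pointwise Lipschitz condition on $I$ and Minkowski's inequality,
\[
\bigl\|\varphi(\cdot,f_1,\ldots,f_k)-\varphi(\cdot,g_1,\ldots,g_k)\bigr\|_{L^2(Q)}
\le \sum_{j=1}^k L_j\|f_j-g_j\|_{L^2(Q)}\le \varepsilon .
\]
The centers $\varphi(\cdot,g_1,\ldots,g_k)$ need not lie in the class. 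This only changes the covering number by the standard factor of $2$ in the radius, which is absorbed into constants. Hence
\[
\log N\bigl(2\varepsilon,\varphi(\mathcal F_1,\ldots,\mathcal F_k),L^2(Q)\bigr)\le \sum_j \log N\bigl(\varepsilon/(kL_j),\mathcal F_j,L^2(Q)\bigr).
\]

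\emph{Step 2: integrate.} Use $\sqrt{\sum_j a_j}\le\sum_j\sqrt{a_j}$ and substitute $u=\varepsilon/(kL_j)$ in each integral. Taking the supremum over $Q$ on both sides then gives
\[
\mathcal J_2\bigl(\delta,\varphi(\mathcal F_1,\ldots,\mathcal F_k)\bigr)\lesssim \sum_{j=1}^k kL_j\,\mathcal J_2\bigl(\delta/(kL_j),\mathcal F_j\bigr).
\]
The supremum of a sum is at most the sum of the suprema. For the $\ell_2$-Lipschitz version, note that $\|x-y\|_2\le\sum_j|x_j-y_j|$, so it is the case $L_j\equiv L$.

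\emph{Step 3: reaching the stated form.} Since $\delta\mapsto\mathcal J_2(\delta,\mathcal F_j)$ is nondecreasing and $kL_j\ge L_j$, we have
\[
kL_j\,\mathcal J_2\bigl(\delta/(kL_j),\mathcal F_j\bigr)\le kL_j\,\mathcal J_2(\delta/L_j,\mathcal F_j)\le k\max(L_j,1)\,\mathcal J_2(\delta/L_j,\mathcal F_j).
\]
This yields the theorem's display with a hidden constant depending on $k$ and $\max_j(L_j\vee 1)$. When $L_j\le 1$ for all $j$ and $k$ is fixed, this matches the displayed bound with $\lesssim$ hiding only the factor $k$. This is the regime of Theorem~2.10.20 in \citealp{van1996weak}, where such factors are treated as constants.

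\emph{Main obstacle.} The $L_j$ prefactor cannot be removed in general. Take $k=1$ and $\varphi(z,x)=Lx$. A change of variables gives $\mathcal J_2(\delta,L\mathcal F)=L\,\mathcal J_2(\delta/L,\mathcal F)$, which exceeds $\mathcal J_2(\delta/L,\mathcal F)$ by the factor $L$ whenever $L>1$. So the displayed inequality can only hold with $\lesssim$ absorbing $k$ and $\max_j(L_j\vee1)$, and the proof will state this explicitly. The remaining technical care is in Step 1: the Lipschitz bound must only be used at points of $I$, which the hypothesis guarantees because every $g_{j,i}$ is itself an element of $\mathcal F_j$.
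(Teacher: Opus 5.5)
Your proposal is correct. It also does more than the paper, which gives no argument for this result beyond calling it a special case of Theorem~2.10.20 of van der Vaart and Wellner.

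Your counterexample is valid. For $k=1$ and $\varphi(z,x)=Lx$, scaling is a bijection of $L^2(Q)$, so $N(\varepsilon,L\mathcal F,L^2(Q))=N(\varepsilon/L,\mathcal F,L^2(Q))$. Hence $\mathcal J_2(\delta,L\mathcal F)=L\,\mathcal J_2(\delta/L,\mathcal F)$, and the displayed inequality cannot hold with a universal constant once $L>1$. The dimensionally consistent form is $\mathcal J_2(\delta,\varphi(\mathcal F_1,\ldots,\mathcal F_k))\lesssim k\sum_j L_j\,\mathcal J_2(\delta/L_j,\mathcal F_j)$, which is exactly what your Steps~1--3 deliver.

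The cited theorem agrees with this, but not for the reason your Step~3 gives. It is not a result confined to the regime $L_j\le 1$. It is stated for uniform entropy integrals normalized by envelopes, with the composite class measured relative to the envelope $\sum_j L_jF_j$. Arbitrary, even $z$-dependent, Lipschitz factors are therefore absorbed by the normalization. Un-normalizing it, say with constant envelopes and $k=1$, reproduces your factor $L\,\mathcal J_2(\delta/L,\mathcal F)$.

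What each route buys:
\begin{itemize}
\item The citation gives generality: function-valued $L_j$, envelopes, and no factor of $k$ under its $\ell_2$-type Lipschitz condition.
\item Your direct covering argument gives a self-contained proof in the paper's unnormalized notation, and it makes the true dependence on $L_j$ and $k$ visible.
\end{itemize}

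One repair is needed in Step~1: the factor $2$ does not come from composite centers leaving the class. If $N(\cdot,\mathcal F_j,L^2(Q))$ counts external covers, a minimal cover's centers need not lie in $\mathcal F_j$. Their values can then fall outside $I$, and the Lipschitz hypothesis cannot be applied at them. You must first replace each center by a point of $\mathcal F_j$ inside its ball, which doubles that radius.

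After this replacement, the composite centers $\varphi(\cdot,g_1,\ldots,g_k)$ do lie in $\varphi(\mathcal F_1,\ldots,\mathcal F_k)$. So your sentence saying they need not lie in the class contradicts your closing remark. With this justification, your bound $\log N(2\varepsilon,\varphi(\mathcal F_1,\ldots,\mathcal F_k),L^2(Q))\le\sum_j\log N(\varepsilon/(kL_j),\mathcal F_j,L^2(Q))$ is correct as written, and the rest goes through.
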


Uniform entropy bounds are available for many function classes (Table \ref{tab:uniform_entropy_examples}). For example, for \(d\)-variate Sobolev or H\"older classes with smoothness exponent \(s>d/2\), one typically has
\(\log N_{\infty}(\varepsilon,\mathcal{G}) \lesssim \varepsilon^{-\alpha}\) with \(\alpha := d/s\), and therefore
\(\mathcal{J}_{\infty}(\delta,\mathcal{G}) \lesssim \delta^{\,1-\frac{\alpha}{2}}\)
\citep{van1996weak,nickl2007bracketing}. Classes of bounded Hardy--Krause variation satisfy
\(\mathcal{J}_{2}(\delta,\mathcal{G}) \lesssim \delta^{1/2}\{\log(1/\delta)\}^{d-1}\)
(Proposition 2 of \citealp{bibaut2019fast}). VC-subgraph classes with VC dimension $V$ satisfy the uniform entropy bound $\mathcal{J}_2(\delta,\mathcal{G})
\ \lesssim\
\delta\,\sqrt{V\log\left(\frac{1}{\delta}\right)} $ (Theorem~2.6.7 of \citealp{van1996weak}). In particular, if $\mathcal{G}$ is a linear space of real-valued functions of (finite) dimension $p$, then its VC-subgraph dimension satisfies $V\leq p + 2$ (see, e.g., Lemma~2.6.15 of \citealp{van1996weak}).

\begin{table}[h]
\centering
\caption{Examples of uniform entropy bounds for common function classes.}
\label{tab:uniform_entropy_examples}
\begin{tabular}{|ll|}
\hline
Function class $\mathcal{G}$ & Entropy bound \\
\hline
H\"older/Sobolev (smoothness $s>d/2$ in dimension $d$)
& $\mathcal{J}_{\infty}(\delta,\mathcal{G}) \lesssim \delta^{\,1-d/2s}$;\\
Monotone functions on $[0,1]$  
& $\mathcal{J}_2(\delta,\mathcal{G}) \lesssim \delta^{1/2}$
\\
Bounded Hardy--Krause variation (dimension $d$)
& $\mathcal{J}_{2}(\delta,\mathcal{G}) \lesssim \delta^{1/2}\{\log(1/\delta)\}^{d-1}$
\\
$s$-sparse linear predictors in $\mathbb{R}^p$ (support size $\le s$)
& $\mathcal{J}_2(\delta,\mathcal{G}) \lesssim \delta\sqrt{s\log(ep/(s\delta))}$ \\
VC-subgraph (VC dimension $V$)
& $\mathcal{J}_2(\delta,\mathcal{G}) \lesssim \delta\sqrt{V\log(1/\delta)}$
\\
\hline
\end{tabular}
\end{table}

We now present local maximal inequalities for Rademacher complexities. Our first result bounds the localized Rademacher complexity of \(\mathcal{G}\) in terms of the uniform entropy integral and is adapted from Theorem~2.1 of \cite{van2011local}. It shows that, up to universal constants, a solution to the critical inequality \(\mathfrak{R}_n(\mathcal{G},\delta)\le \delta^2\) can be obtained by solving the entropy-based inequality \(\mathcal{J}_2(\delta,\mathcal{G}) \le \sqrt{n}\,\delta^2\). Related extensions that allow for unbounded function classes or envelopes appear in \cite{chernozhukov2014gaussian} and \cite{vaart2023empirical}.

In what follows, our bounds are stated in terms of the localized class
\(\mathcal{G}(\delta) := \{f \in \mathcal{G}: \|f\|_2 \le \delta\}\). The same conclusion holds if \(\mathcal{G}(\delta)\) is replaced by \(\mathcal{G}\), since \(\mathcal{J}_2(\delta,\mathcal{G}(\delta)) \le \mathcal{J}_2(\delta,\mathcal{G})\).

\begin{theorem}[Local maximal inequality under
uniform $L^2$-entropy]
\label{theorem::maximaluniform}
Let $\delta >0$. Assume $\delta_{\infty}
:= \sup_{f \in \mathcal{G}(\delta)}\|f\|_{\infty} < \infty$
Then, there exists a universal constant $C \in (0,\infty)$ such that
\[
\mathfrak{R}_n(\mathcal{G},\delta)
\ \le\
\frac{C}{\sqrt{n}}\,\mathcal{J}_2\bigl(\delta,\mathcal{G}(\delta)\bigr)
\left\{
1
+
\frac{\delta_{\infty}}{\sqrt{n}\,\delta^2}\,
\mathcal{J}_2\bigl(\delta,\mathcal{G}(\delta)\bigr)
\right\}.
\]
\end{theorem}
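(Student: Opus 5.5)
The plan follows the route of Theorem~2.1 of \citet{van2011local}. It has three stages: condition on the data, bound the conditional Rademacher average by chaining in the random metric \(L^2(P_n)\), and then undo the conditioning using a bound on the random radius \(\hat\sigma_n^2 := \sup_{f\in\mathcal G(\delta)}\|f\|_n^2\).

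\textbf{Step 1 (conditional chaining).} Conditionally on \(Z_1,\ldots,Z_n\), the process \(f\mapsto n^{-1/2}\sum_i\epsilon_i f(Z_i)\) is sub-Gaussian with respect to \(\|\cdot\|_n\). Dudley's entropy integral, applied with \(Q=P_n\) (a discrete distribution on \(\mathrm{supp}(P)\)), then gives
\[
\sqrt n\,E_\epsilon\sup_{f\in\mathcal G(\delta)}\tfrac1n\textstyle\sum_i\epsilon_i f(Z_i)\ \lesssim\ \int_0^{\hat\sigma_n}\sqrt{\log N(\varepsilon,\mathcal G(\delta),L^2(P_n))}\,d\varepsilon\ \le\ \mathcal J_2(\hat\sigma_n,\mathcal G(\delta)).
\]
The mean-zero convention can be handled by including \(0\) or a fixed element, which costs at most an additive term of the same order.

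\textbf{Step 2 (remove the conditioning).} We use two properties of \(\mathcal J:=\mathcal J_2(\cdot,\mathcal G(\delta))\): it is nondecreasing, and \(t\mapsto\mathcal J(t)/t\) is nonincreasing. Together these make \(t\mapsto \mathcal J(\sqrt t)\) concave; the key fact is that \(\mathcal J(\sqrt t)\) is \(\sqrt t\) times a nonincreasing function, as in van der Vaart--Wellner. Jensen's inequality then yields \(\sqrt n\,\mathfrak R_n(\mathcal G,\delta)\lesssim \mathcal J\bigl((E\hat\sigma_n^2)^{1/2}\bigr)\). For \(E\hat\sigma_n^2\), we write \(\|f\|_n^2\le \delta^2+(P_n-P)f^2\). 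Symmetrization and the contraction principle, valid since \(x\mapsto x^2\) is \(2\delta_\infty\)-Lipschitz on \([-\delta_\infty,\delta_\infty]\), give
\[
E\hat\sigma_n^2\le \delta^2+4\delta_\infty\,\mathfrak R_n(\mathcal G,\delta)=:\delta^2+4\delta_\infty \mathfrak R.
\]

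\textbf{Step 3 (solve the self-bounding inequality).} Step~2 gives \(\sqrt n\,\mathfrak R\lesssim \mathcal J\bigl(\sqrt{\delta^2+4\delta_\infty\mathfrak R}\bigr)\). Setting \(r:=\sqrt{\delta^2+4\delta_\infty\mathfrak R}/\delta\ge1\) and using the monotonicity of \(\mathcal J(t)/t\), we get \(\mathcal J(r\delta)\le r\,\mathcal J(\delta)\). Hence \(\sqrt n\,\mathfrak R\lesssim \mathcal J(\delta)\sqrt{1+4\delta_\infty\mathfrak R/\delta^2}\). This is a quadratic inequality in \(x=\sqrt{\mathfrak R}\), of the form
\[
x^2\le a+b x \quad\text{with}\quad a=\mathcal J(\delta)/\sqrt n,\quad b=2\mathcal J(\delta)\sqrt{\delta_\infty}/(\sqrt n\,\delta).
\]
It implies \(x^2\lesssim a+b^2\), that is,
\[
\mathfrak R\lesssim n^{-1/2}\mathcal J(\delta)\bigl\{1+\delta_\infty\mathcal J(\delta)/(\sqrt n\,\delta^2)\bigr\},
\]
which is the claimed bound.

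The main obstacle is the concavity/Jensen step in Step~2, which is what moves the expectation inside the random upper limit of the entropy integral. If concavity of \(t\mapsto\mathcal J(\sqrt t)\) is not available directly, we will instead replace \(\mathcal J\) by its least concave majorant. Alternatively, we can split on the events \(\hat\sigma_n\le r\delta\) versus \(\hat\sigma_n>r\delta\) and use the scaling \(\mathcal J(\hat\sigma_n)\le (\hat\sigma_n/\delta\vee1)\mathcal J(\delta)\) together with Cauchy--Schwarz; this gives the same final inequality up to constants.
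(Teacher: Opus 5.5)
Your proposal is correct and follows the paper's proof essentially step for step: conditional Dudley chaining, Jensen via concavity of $t\mapsto\mathcal J_2(\sqrt t,\cdot)$, and symmetrization plus contraction to get $E\hat\sigma_n^2\le\delta^2+4\delta_\infty\mathfrak R_n(\mathcal G,\delta)$, followed by a fixed-point argument; the one difference is that you solve the self-bounding inequality directly in $\mathfrak R_n(\mathcal G,\delta)$ (via $\mathcal J(r\delta)\le r\mathcal J(\delta)$ and a quadratic), whereas the paper solves it in $E\hat\sigma_n^2$ via Lemma~2.1 of \citet{van2011local}. One correction: monotonicity of $\mathcal J$ and of $\mathcal J(t)/t$ does not by itself make $t\mapsto\mathcal J(\sqrt t)$ concave, but it does bound that map's least concave majorant by twice the map, so your least-concave-majorant fallback closes the Jensen step at the cost of a factor $2$.
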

Theorem~\ref{theorem::maximaluniform} yields an envelope bound of the form
\(\mathfrak{R}_n(\mathcal{G},\delta)\lesssim n^{-1/2}\phi_n(\delta)\), where
\[
\phi_n(\delta)
:=
\mathcal{J}_2\bigl(\delta,\mathcal{G}\bigr)
\left\{
1+\frac{\delta_{\infty}}{\sqrt{n}\,\delta^2}\,\mathcal{J}_2\bigl(\delta,\mathcal{G}\bigr)
\right\}.
\]
If $\delta_n$ satisfies \(\sqrt{n}\,\delta_n^2 \gtrsim \mathcal{J}_2\bigl(\delta_n,\mathcal{G}\bigr)\), then the bracketed factor is $O(1)$ at $\delta=\delta_n$, so \(\phi_n(\delta_n)\lesssim \mathcal{J}_2\bigl(\delta_n,\mathcal{G}\bigr)\lesssim \sqrt{n}\,\delta_n^2\).
Consequently, $\delta_n$ is, up to constants, a solution to the critical inequality \(\mathfrak{R}_n(\mathcal{G},\delta_n)\lesssim \delta_n^2\) and therefore upper bounds the critical radius of $\mathcal{G}$.

The next theorem sharpens Theorem~\ref{theorem::maximaluniform} when metric entropy is measured in the supremum norm. It shows that \(\mathfrak{R}_n(\mathcal{G},\delta)\lesssim n^{-1/2}\phi_n(\delta)\) with \(\phi_n(\delta):=\mathcal{J}_\infty\bigl(\delta\vee n^{-1/2},\mathcal{G}\bigr)\); in particular, \(\mathfrak{R}_n(\mathcal{G},\delta)\) is of order \(\frac{1}{\sqrt{n}}\,\mathcal{J}_\infty\bigl(\delta,\mathcal{G}\bigr)\) whenever \(\delta \gtrsim n^{-1/2}\). In most applications, however, Theorem~\ref{theorem::maximaluniform} already suffices to derive ERM rates via Theorem~\ref{theorem:genregret}. The proof invokes Theorem~2.2 of \citet{van2014uniform} on concentration of empirical \(L^2\) norms under $L^\infty$-entropy.

\begin{theorem}[Local maximal inequality under $L^\infty$-entropy]
\label{theorem::maximalsup}
Let \(\delta > 0\). Define $\mathcal{G}(\delta) := \{ f \in \mathcal{G}: \|f\| \le \delta\}$ and assume
\(B := 1 \vee \mathcal{J}_\infty\left(\infty, \mathcal{G}\right) < \infty\).
Then there exists a universal constant \(C\in(0,\infty)\) such that
\begin{equation*}
\mathfrak{R}_n(\mathcal{G},\delta)
\leq
\frac{C B}{\sqrt{n}}\,\mathcal{J}_\infty(\delta \vee n^{-1/2},\mathcal{G}(\delta)).
\end{equation*}
\end{theorem}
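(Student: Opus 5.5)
We bound the localized Rademacher complexity in three stages: symmetrize, chain conditionally on the sample with a random empirical radius, and then control that empirical radius uniformly over the class using the $L^\infty$-entropy concentration result of \citet{van2014uniform}.

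\textbf{Conditional chaining.} Conditionally on $Z_1,\ldots,Z_n$, the process $f\mapsto n^{-1/2}\sum_i \epsilon_i f(Z_i)$ is sub-Gaussian with respect to the empirical metric $\|\cdot\|_n$. Define the random empirical radius
\[
\hat\delta_n := \sup_{f\in\mathcal G(\delta)}\|f\|_n .
\]
Dudley's entropy integral bound, applied conditionally, gives
\[
\mathbb E_\epsilon\Bigl[\sup_{f\in\mathcal G(\delta)} \tfrac1n\textstyle\sum_i\epsilon_i f(Z_i)\Bigr]
\lesssim \frac{1}{\sqrt n}\int_0^{\hat\delta_n}\sqrt{\log N(\varepsilon,\mathcal G(\delta),L^2(P_n))}\,d\varepsilon .
\]
Since $\|\cdot\|_n\le\|\cdot\|_\infty$ almost surely, we have $N(\varepsilon,\cdot,L^2(P_n))\le N_\infty(\varepsilon,\cdot)$. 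Hence the right-hand side is at most $n^{-1/2}\mathcal J_\infty(\hat\delta_n,\mathcal G(\delta))$. If the class does not contain $0$, a base point $f^*$ is handled by the single-function term $\mathbb E|n^{-1}\sum_i\epsilon_i f^*(Z_i)|\le \delta/\sqrt n$, which is dominated by $n^{-1/2}\mathcal J_\infty(\delta\vee n^{-1/2},\mathcal G(\delta))$ up to constants.

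\textbf{Removing the randomness in $\hat\delta_n$.} The function $x\mapsto\mathcal J_\infty(x,\mathcal G(\delta))$ is concave, being the integral of a nonincreasing function. Jensen's inequality therefore gives
\[
\mathfrak R_n(\mathcal G,\delta)\lesssim n^{-1/2}\mathcal J_\infty\bigl(\mathbb E\hat\delta_n,\mathcal G(\delta)\bigr).
\]
It remains to bound $\mathbb E\hat\delta_n$. I will invoke Theorem~2.2 of \citet{van2014uniform}, which controls $\sup_{f\in\mathcal G(\delta)}|\|f\|_n^2-\|f\|^2|$ (equivalently $\mathbb E\hat\delta_n$) under $L^\infty$-entropy. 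I expect it to yield a bound of the form
\[
\mathbb E\hat\delta_n\lesssim \delta + \frac{B}{\sqrt n}\quad\text{or}\quad \mathbb E\hat\delta_n\lesssim B(\delta\vee n^{-1/2}),
\]
where $B=1\vee\mathcal J_\infty(\infty,\mathcal G)$ enters as the total-entropy constant.

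\textbf{Final scaling.} Substituting this bound gives $\mathcal J_\infty\bigl(cB(\delta\vee n^{-1/2}),\mathcal G(\delta)\bigr)$. Because $x\mapsto\mathcal J_\infty(x)/x$ is nonincreasing, for $t\ge1$ we have $\mathcal J_\infty(tx)\le t\,\mathcal J_\infty(x)$. Pulling out the factor $cB$ this way yields the claimed bound
\[
\mathfrak R_n(\mathcal G,\delta)\lesssim \frac{B}{\sqrt n}\,\mathcal J_\infty\bigl(\delta\vee n^{-1/2},\mathcal G(\delta)\bigr).
\]

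\textbf{Main obstacle.} The hard step is the middle one: stating the empirical-norm concentration in exactly the form needed. Theorem~2.2 of \citet{van2014uniform} is formulated in terms of $L^\infty$-entropy and yields an expectation of the supremum of $|\|f\|_n^2-\|f\|^2|$ that is quadratic in entropy integrals. Converting that to $\mathbb E\hat\delta_n\lesssim B(\delta\vee n^{-1/2})$ requires the following.
\begin{itemize}
\item \emph{Square-root step.} Apply $\sqrt{a+b}\le\sqrt a+\sqrt b$ together with $\mathbb E\sqrt{X}\le\sqrt{\mathbb E X}$.
\item \emph{Absorbing the quadratic term.} The term $\mathcal J_\infty(\infty,\mathcal G)^2/n$ can be absorbed as $B^2/n$, giving the $Bn^{-1/2}$ contribution.
\item \emph{Bounding the linear term.} The term $\delta\,\mathcal J_\infty/\sqrt n$ is bounded by $\delta^2+B^2/n$ via Young's inequality (Lemma~\ref{lem:young}).
\end{itemize}
If that theorem instead requires a uniform sup-norm bound on $\mathcal G(\delta)$, I would note that the total entropy controls the diameter in sup-norm. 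This is because $N_\infty(\varepsilon)\ge2$ for all $\varepsilon$ below half the $L^\infty$ diameter, which forces $\mathcal J_\infty(\infty)\gtrsim\mathrm{diam}_\infty$ up to a $\sqrt{\log 2}$ factor.
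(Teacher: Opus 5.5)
Your proposal is essentially the paper's proof. The shared steps are: a conditional Dudley bound in terms of the random empirical radius $\hat\delta_n$; Jensen's inequality to move the expectation inside the concave entropy integral (you use concavity of $x\mapsto\mathcal J_\infty(x,\mathcal G(\delta))$ plus $\mathbb E\hat\delta_n\le(\mathbb E\hat\delta_n^2)^{1/2}$, the paper uses concavity of $t\mapsto\mathcal J_2(\sqrt t,\mathcal G)$, which comes to the same thing); the empirical-norm bound of \citet{van2014uniform}, giving $\mathbb E\hat\delta_n^2\le\delta^2+2B\delta/\sqrt n+4B^2/n$; and $\mathcal J_\infty(tx)\le t\,\mathcal J_\infty(x)$ for $t\ge 1$ to pull out $B$. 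Chaining over $\mathcal G(\delta)$ and keeping $\delta\vee n^{-1/2}$ throughout is, if anything, slightly cleaner than the paper's ``without loss of generality $\delta>n^{-1/2}$'' reduction.

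The one slip is the base-point remark, which you should delete. Since $\mathfrak R_n$ has no absolute value and $\mathbb E_\epsilon[n^{-1}\sum_i\epsilon_i f^*(Z_i)]=0$ for fixed $f^*$, no extra term arises. Your claim that $\delta/\sqrt n$ is dominated by $n^{-1/2}\mathcal J_\infty(\delta\vee n^{-1/2},\mathcal G(\delta))$ is also false in general: for the segment $\{f^*+th:\ t\in[0,1]\}$ with $\|f^*\|=\delta/2$ and $\|h\|_\infty=\gamma\ll\delta$, the entropy integral is only $O(\gamma)$.
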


The bounds above on localized Rademacher complexities can be used to select radii satisfying the critical inequality in Theorem~\ref{theorem:genregret}. In particular, an upper bound on the critical radius of \(\mathrm{star}(\overline{\mathcal{F}}_{\ell})\) can be obtained by choosing \(\delta_n>0\) to satisfy the corresponding entropy-based critical inequality,
\[
\mathcal{J}_2\bigl(\delta_n,\mathrm{star}(\overline{\mathcal{F}}_{\ell})\bigr) \lesssim \sqrt{n}\,\delta_n^2
\qquad\text{or}\qquad
\mathcal{J}_\infty\bigl(\delta_n,\mathrm{star}(\overline{\mathcal{F}}_{\ell})\bigr) \lesssim \sqrt{n}\,\delta_n^2.
\]
It is sharper to solve the former, since \(\mathcal{J}_2\bigl(\delta,\mathrm{star}(\overline{\mathcal{F}}_{\ell})\bigr)\le \mathcal{J}_\infty\bigl(\delta,\mathrm{star}(\overline{\mathcal{F}}_{\ell})\bigr)\) for all \(\delta>0\). We summarize this in the following lemma.

\begin{lemma}[Critical radii via entropy integrals]
\label{lemma::critentropy}
Let \(\phi:(0,\infty)\to(0,\infty)\) satisfy
\(\mathcal{J}_2\bigl(\delta,\mathrm{star}(\overline{\mathcal{F}}_{\ell})\bigr)\lesssim \phi(\delta)\) for all \(\delta>0\).
If \(\delta_n>0\) satisfies \(\phi(\delta_n)\lesssim \sqrt{n}\,\delta_n^2\), then
\(\mathfrak{R}_n\bigl(\mathrm{star}(\overline{\mathcal{F}}_{\ell}),\delta_n\bigr)\lesssim \delta_n^2\).
\end{lemma}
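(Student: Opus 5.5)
The plan is to apply Theorem~\ref{theorem::maximaluniform} to $\mathcal G:=\mathrm{star}(\overline{\mathcal F}_\ell)$ at radius $\delta=\delta_n$, and then use the choice of $\delta_n$ to show that the correction factor in that bound is $O(1)$. Write $\mathcal J(\delta):=\mathcal J_2(\delta,\mathcal G(\delta))$. Since $\mathcal G(\delta)\subseteq\mathcal G$, we have $\mathcal J(\delta)\le \mathcal J_2(\delta,\mathcal G)\lesssim\phi(\delta)$.

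\textbf{Step 1 (uniform boundedness).} We need $\delta_\infty:=\sup_{g\in\mathcal G(\delta_n)}\|g\|_\infty<\infty$. In the setting of Theorem~\ref{theorem:genregret}, every element of $\overline{\mathcal F}_\ell$ is bounded by $2\cdot 2M$ in sup norm. Star hulls do not increase sup norms, so $\delta_\infty\le 4M$. This is a constant, and I will absorb it into $\lesssim$, consistent with how the paper uses $\lesssim$ in the critical-radius conditions. I will state explicitly that this boundedness is implicitly assumed.

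\textbf{Step 2 (plug in).} Theorem~\ref{theorem::maximaluniform} gives
\[
\mathfrak R_n(\mathcal G,\delta_n)\le \frac{C}{\sqrt n}\,\mathcal J(\delta_n)\Bigl\{1+\frac{\delta_\infty}{\sqrt n\,\delta_n^2}\,\mathcal J(\delta_n)\Bigr\}.
\]
By hypothesis, $\mathcal J(\delta_n)\lesssim\phi(\delta_n)\le c\sqrt n\,\delta_n^2$. Hence the bracketed factor is at most $1+c\,\delta_\infty\lesssim 1$, and the prefactor satisfies $n^{-1/2}\mathcal J(\delta_n)\lesssim\delta_n^2$. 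Multiplying these gives $\mathfrak R_n(\mathcal G,\delta_n)\lesssim\delta_n^2$, which is the claim.

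\textbf{Main obstacle.} The only delicate point is the dependence on constants. The hidden constant in the conclusion scales like $c(1+c\,\delta_\infty)$, so it depends on the sup-norm bound and on the constants in the two $\lesssim$ hypotheses. It is therefore not universal unless $M$ is treated as fixed.

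If one insists on a conclusion with exactly $\le\delta_n^2$, I would rescale instead. Replace $\delta_n$ by $K\delta_n$ for a large constant $K\ge1$. The monotonicity of $\delta\mapsto\mathcal J_2(\delta,\cdot)/\delta$ gives $\mathcal J(K\delta_n)\le K\mathcal J(\delta_n)\lesssim K\sqrt n\,\delta_n^2$, which is smaller by a factor of $K$ than $\sqrt n(K\delta_n)^2$. With $K$ large, the bound becomes at most $(K\delta_n)^2$. Since the lemma is stated only up to constants, the direct argument above suffices.
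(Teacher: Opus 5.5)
Your proposal is correct and matches the paper's own justification: the paper applies Theorem~\ref{theorem::maximaluniform} to $\mathrm{star}(\overline{\mathcal F}_\ell)$ at $\delta=\delta_n$ and observes that the bracketed correction factor is $O(1)$ once $\mathcal J_2(\delta_n,\cdot)\lesssim\sqrt n\,\delta_n^2$, which gives $\mathfrak R_n\lesssim n^{-1/2}\mathcal J_2(\delta_n,\cdot)\lesssim\delta_n^2$. Your explicit bound $\delta_\infty\le 4M$, and your note that the hidden constant depends on $M$, make precise what the paper absorbs silently; one small fix in your optional rescaling remark is to apply the monotonicity of $\delta\mapsto\mathcal J_2(\delta,\mathcal G)/\delta$ to the unlocalized class $\mathcal G$ rather than to $\mathcal J(\delta)=\mathcal J_2(\delta,\mathcal G(\delta))$, since the localized class changes with $\delta$.
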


The regret bound in Theorem~\ref{theorem:genregret} is stated in terms of the critical radius of
\(\mathrm{star}(\overline{\mathcal{F}}_{\ell})\), the star hull of the centered loss-difference class.
At first glance, taking a star hull may appear to substantially enlarge the class, inflating the critical radius and degrading rates.
The following lemma alleviates this concern by showing that centering and star-hull closure increase the relevant entropy integral only mildly.
Moreover, when the loss is pointwise Lipschitz, the entropy integral of \(\mathcal{F}_{\ell}\) can be controlled by that of the original ERM class \(\mathcal{F}\).
Together with Lemma~\ref{lemma::critentropy}, this implies that ERM rates can be read off directly from the entropy integrals for \(\mathcal{F}\).

\begin{enumerate}[label=\bfseries A\arabic*), ref={A\arabic*}, resume=main]
  \item \label{cond::lipschitz} (Lipschitz loss.)
  There exists a constant \(L\in(0,\infty)\) such that, for all \(f\in\mathcal F\),
  \[
  \bigl|\ell(Z,f)-\ell(Z,f_0)\bigr|
  \ \le\ 
  L\,\bigl|f(Z)-f_0(Z)\bigr|
  \qquad\text{almost surely.}
  \]
\end{enumerate}

\begin{lemma}[Uniform entropy for star hulls and Lipschitz transformations]
\label{lemma::starentropybounds}
Let $\delta > 0$ and assume $M :=  \sup_{f \in \mathcal{F}}\|\ell(\cdot, f)\|_{\infty} < \infty$. Then, up to universal constants,
\[
\mathcal J_2\left(\delta,\ \mathrm{star}(\overline{\mathcal{F}}_{\ell})\right)
\lesssim
\mathcal J_2\left(\delta,\ \mathcal{F}_{\ell}\right)
+
\delta\,\sqrt{\log\left(1+\frac{M}{\delta}\right)}.
\]
If \ref{cond::lipschitz} holds, we may further upper bound
$\mathcal J_2(\delta,\mathcal F_\ell)$ by $\mathcal J_2(\delta/L,\mathcal F)$ in the display above.
\end{lemma}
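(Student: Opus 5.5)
The plan is to bound the covering numbers of $\mathrm{star}(\overline{\mathcal F}_\ell)$ in $L^2(Q)$, uniformly over discrete $Q$, in terms of those of $\mathcal F_\ell$. We then integrate, and finally apply Theorem~\ref{theorem:vw_lipschitz_entropy} for the Lipschitz part. We proceed in three layers: centering, star hull, and Lipschitz transfer.

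\emph{Centering.} Write $g_f:=\ell(\cdot,f_0)-\ell(\cdot,f)$. Then $\overline{\mathcal F}_\ell=\{g_f-Pg_f\}$ is contained in $\mathcal F_\ell+\mathcal C$, where $\mathcal C:=\{c\in\mathbb R: |c|\le 2M\}$ is the class of constants. Note that $|Pg_f|\le\|g_f\|_\infty\le 2M$. For any $Q$, if $g_1,g_2$ are $\varepsilon$-close in $L^2(Q)$ and $|c_1-c_2|\le\varepsilon$, then $\|(g_1+c_1)-(g_2+c_2)\|_{L^2(Q)}\le 2\varepsilon$. Hence
\[
\log N(2\varepsilon,\overline{\mathcal F}_\ell,L^2(Q))\le \log N(\varepsilon,\mathcal F_\ell,L^2(Q))+\log(1+4M/\varepsilon).
\]
There is a minor subtlety here: the centered element paired with $g_f$ uses the constant $Pg_f$, not an arbitrary one. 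This only requires covering a superset, so it is harmless. (Covering numbers of a subset are bounded by those of the superset at radius $2\varepsilon$.)

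\emph{Star hull.} For a class $\mathcal H$ with $\sup_{h}\|h\|_{L^2(Q)}\le \|h\|_\infty\le 4M$, take an $\varepsilon$-net $\{h_j\}$ of $\mathcal H$ and an $\varepsilon/(4M)$-grid $\{t_k\}$ of $[0,1]$. Then $th$ is approximated by $t_kh_j$ with error $|t-t_k|\|h\|+t\|h-h_j\|\le 2\varepsilon$. This gives
\[
\log N(2\varepsilon,\mathrm{star}(\mathcal H),L^2(Q))\le \log N(\varepsilon,\mathcal H,L^2(Q))+\log(1+4M/\varepsilon).
\]
Combining the two layers with $\mathcal H=\overline{\mathcal F}_\ell$, we then take square roots using $\sqrt{a+b}\le\sqrt a+\sqrt b$, take the supremum over $Q$, and integrate over $\varepsilon\in(0,\delta)$ with the change of variables $\varepsilon\mapsto c\varepsilon$. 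This yields $\mathcal J_2(\delta,\mathrm{star}(\overline{\mathcal F}_\ell))\lesssim \mathcal J_2(\delta,\mathcal F_\ell)+\int_0^\delta\sqrt{\log(1+cM/\varepsilon)}\,d\varepsilon$. The integral obtained after rescaling, $\mathcal J_2(c'\delta,\mathcal F_\ell)$, is bounded by a constant multiple of $\mathcal J_2(\delta,\mathcal F_\ell)$ because $\delta\mapsto \mathcal J_2(\delta,\cdot)/\delta$ is nonincreasing.

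\emph{The logarithmic integral.} This is the one computation needing care, and I expect it to be the main (though mild) obstacle. Substituting $\varepsilon=\delta u$ gives $\delta\int_0^1\sqrt{\log(1+cM/(\delta u))}\,du$. Using $\log(1+ab)\le\log(1+a)+\log(1+b)$ with $a=cM/\delta$ and $b=1/u$, this is at most $\delta\sqrt{\log(1+cM/\delta)}+\delta\int_0^1\sqrt{\log(1+1/u)}\,du$. The last integral is a finite constant. The resulting $\delta$ term is absorbed into $\delta\sqrt{\log(1+M/\delta)}$ because $M\ge1$ forces $\log(1+M/\delta)$ to be bounded below on $\delta\lesssim M$. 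For $\delta\gtrsim M$, the class has $L^2(Q)$-diameter $\lesssim M$, so the integrand vanishes beyond that scale. Constants inside the logarithm are adjusted similarly.

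\emph{Lipschitz part.} Under \ref{cond::lipschitz}, $\mathcal F_\ell$ is the image of $\mathcal F$ under $\varphi(z,x)=\ell(z,f_0)-\ell(z,x)$. Assumption~\ref{cond::lipschitz} is stated only relative to $f_0$, so for the covering argument I will read it as Lipschitz in the function value, $|\ell(z,f)-\ell(z,g)|\le L|f(z)-g(z)|$, as Theorem~\ref{theorem:vw_lipschitz_entropy} requires. Under this reading, Theorem~\ref{theorem:vw_lipschitz_entropy} with $k=1$ gives $\mathcal J_2(\delta,\mathcal F_\ell)\lesssim\mathcal J_2(\delta/L,\mathcal F)$.
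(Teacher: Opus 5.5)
Your proof is correct and follows essentially the same route as the paper: a constant-shift cover for the centering, a net-times-$t$-grid cover for the star hull, integration, and Theorem~\ref{theorem:vw_lipschitz_entropy} for the Lipschitz step. For that last step the paper implicitly relies on the same two-point reading of \ref{cond::lipschitz} that you make explicit. The differences are minor: the paper first notes $\mathrm{star}(\overline{\mathcal F}_\ell)=\overline{\mathrm{star}(\mathcal F_\ell)}\subseteq\mathrm{star}(\mathcal F_\ell)+[-2M,2M]$ and then applies Corollary~\ref{cor:J2_star_hull_lip} with the identity map, whereas you center first and cover the star hull of the bounded centered class directly. Your treatment of the logarithmic integral, including the $\delta\gtrsim M$ regime, also fills in a step the paper covers only with ``integrating this bound.''
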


It is useful to note that \(\delta\sqrt{\log(1+1/\delta)} \lesssim \sqrt n\,\delta^2\) whenever \(\delta \gtrsim \{\log(e n)/n\}^{1/2}\). Consequently, if \(\delta_n\) satisfies \(\mathcal J_2(\delta_n,\mathcal F_{\ell}) \lesssim \sqrt n\,\delta_n^2\), then, by Lemma~\ref{lemma::starentropybounds}, the enlarged radius \(\widetilde{\delta}_n := \delta_n \vee \{\log(en)/n\}^{1/2}\) satisfies the critical inequality \(\mathcal J_2(\widetilde{\delta}_n,\mathrm{star}(\overline{\mathcal F}_{\ell})) \lesssim \sqrt n\,\widetilde{\delta}_n^2\), up to constants depending on $M$. Combining Theorem~\ref{theorem:genregret} with Lemma~\ref{lemma::critentropy} and Lemma~\ref{lemma::starentropybounds} yields the following corollary.

\begin{corollary}[Regret bound under uniform entropy]
\label{cor:regretentropy}
Assume \ref{cond::bernstein} and \ref{cond::lipschitz}, and suppose that
\(\sup_{f \in \mathcal{F}} \|\ell(\cdot, f)\|_{\infty} \leq M\) for some \(M \in [1, \infty)\).
Suppose there exists an envelope function \(\phi\) such that
\(\mathcal{J}_2(\delta,\mathcal{F}) \lesssim \phi(\delta)\) for all \(\delta>0\), and let \(\delta_n>0\)
satisfy \(\phi(\delta_n)\lesssim \sqrt{n}\,\delta_n^2\). Then, for all \(\eta \in (0,1)\), there exists
a universal constant \(C\in(0,\infty)\) such that, with probability at least \(1-\eta\),
\[
R(\hat f_n)-R(f_0)
\le
C\,c_{\mathrm{Bern}}\Biggl[
L^2 \delta_n^2
+ {\frac{\log (e Mn)}{n}}
+ M^2\frac{\log(1/\eta)}{n}
\Biggr].
\]
\end{corollary}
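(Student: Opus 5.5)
The plan is to deduce Corollary~\ref{cor:regretentropy} from Theorem~\ref{theorem:genregret}. The only work is to exhibit a radius $\widetilde\delta_n$ satisfying the critical inequality $\mathfrak{R}_n(\mathrm{star}(\overline{\mathcal F}_\ell),\widetilde\delta_n)\lesssim\widetilde\delta_n^2$ with $\widetilde\delta_n^2\lesssim L^2\delta_n^2+\log(eMn)/n$. Plugging such a radius into Theorem~\ref{theorem:genregret} then gives, with probability at least $1-\eta$, the bound $R(\hat f_n)-R(f_0)\le C c_{\mathrm{Bern}}[\widetilde\delta_n^2+M^2\log(1/\eta)/n]$, which is exactly the displayed conclusion.

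\textbf{Step 1: bound the entropy of the loss class.} Lemma~\ref{lemma::starentropybounds} gives
\[
\mathcal J_2(\delta,\mathrm{star}(\overline{\mathcal F}_\ell))\lesssim \mathcal J_2(\delta/L,\mathcal F)+\delta\sqrt{\log(1+M/\delta)}.
\]
Combined with the envelope assumption, this yields the bound $\psi(\delta):=\phi(\delta/L)+\delta\sqrt{\log(1+M/\delta)}$. By Lemma~\ref{lemma::critentropy}, it then suffices to find $\widetilde\delta_n$ with $\psi(\widetilde\delta_n)\lesssim\sqrt n\,\widetilde\delta_n^2$.

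\textbf{Step 2: choose the radius.} I would take
\[
\widetilde\delta_n := (L\delta_n)\vee\{\log(eMn)/n\}^{1/2}.
\]
Each term of $\psi$ is handled separately.

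For the first term, I need $\phi(\widetilde\delta_n/L)\lesssim\sqrt n\,\widetilde\delta_n^2$. Write $u:=\widetilde\delta_n/L\ge\delta_n$. I would use that $\phi$ may be taken so that $\phi(\delta)/\delta$ is nonincreasing. This is legitimate because $\mathcal J_2(\delta,\mathcal F)/\delta$ is nonincreasing, so $\phi$ can be replaced by the least envelope with that property, such as $\delta\sup_{t\ge\delta}\mathcal J_2(t,\mathcal F)/t$ up to constants. With this, $\phi(u)\le (u/\delta_n)\phi(\delta_n)\lesssim u\sqrt n\,\delta_n\le \sqrt n\,u^2$. Since $L\ge1$ may be assumed, or else the $L$ factors are absorbed appropriately, $\sqrt n\,u^2\le\sqrt n\,\widetilde\delta_n^2$. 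If $L<1$, the same scaling argument gives $\phi(u)\lesssim\sqrt n\,u\,\delta_n=\sqrt n\,\widetilde\delta_n\delta_n/L\le\sqrt n\,\widetilde\delta_n^2$, using $\widetilde\delta_n\ge L\delta_n$. So the first term is fine in either case.

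For the second term, I need $\delta\sqrt{\log(1+M/\delta)}\lesssim\sqrt n\,\delta^2$, i.e.\ $\log(1+M/\delta)\lesssim n\delta^2$. At $\delta=\widetilde\delta_n\ge\{\log(eMn)/n\}^{1/2}\ge n^{-1/2}$, the left side is at most $\log(1+M\sqrt n)\lesssim\log(eMn)\le n\widetilde\delta_n^2$.

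\textbf{Step 3: conclude.} Apply Theorem~\ref{theorem:genregret} with $\widetilde\delta_n$ and use $\widetilde\delta_n^2\le L^2\delta_n^2+\log(eMn)/n$.

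The main obstacle is the scaling step for $\phi$ in Step 2. The corollary only assumes $\phi(\delta_n)\lesssim\sqrt n\,\delta_n^2$ at one point, so moving to the larger radius $u\ge\delta_n$ requires the nonincreasing-ratio property. I would justify this by replacing $\phi$ with its star-shaped regularization, which is still an envelope and still satisfies the critical inequality at $\delta_n$, since it is dominated by $\phi$ at scales where $\phi$ already has the property. Failing that, I would state it as a harmless assumption, as Table~\ref{tab:uniform_entropy_examples} envelopes all satisfy it.
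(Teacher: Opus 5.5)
This is essentially the paper's own proof. It uses the same radius $\widetilde\delta_n=(L\delta_n)\vee\{\log(eMn)/n\}^{1/2}$, the same envelope $\phi(\delta/L)+\delta\sqrt{\log(1+M/\delta)}$ from Lemma~\ref{lemma::starentropybounds}, the same treatment of the logarithmic term, and then Lemma~\ref{lemma::critentropy} and Theorem~\ref{theorem:genregret}.

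Your explicit appeal to a nonincreasing ratio $\phi(\delta)/\delta$ fills in a step the paper uses silently. The paper asserts $\phi(\delta_{n,1}/L)\le c\sqrt n(\delta_{n,1}/L)^2$ ``by definition of $\delta_{n,0}$'', which requires the critical inequality to persist at larger radii. The cleanest justification is to take $\mathcal J_2(\cdot,\mathcal F)$ itself as the envelope. It satisfies $\mathcal J_2(\delta_n,\mathcal F)\lesssim\phi(\delta_n)\lesssim\sqrt n\,\delta_n^2$, and $\delta\mapsto\mathcal J_2(\delta,\mathcal F)/\delta$ is nonincreasing. Your formula $\delta\sup_{t\ge\delta}\mathcal J_2(t,\mathcal F)/t$ is exactly this function.

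One correction: your $L<1$ sub-case is wrong. The bound $\widetilde\delta_n\ge L\delta_n$ does not give $\delta_n/L\le\widetilde\delta_n$ when $L<1$. For instance, if $\widetilde\delta_n=L\delta_n$, then $\sqrt n\,\widetilde\delta_n\delta_n/L=\sqrt n\,\delta_n^2>\sqrt n\,\widetilde\delta_n^2$.

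The paper's proof also silently needs $L\ge1$, for example in the step $\phi(\delta_{n,0})\le c\sqrt n\,\delta_{n,0}^2\le c\sqrt n\,L^2\delta_{n,0}^2$. The honest fix is to replace $L$ by $L\vee 1$, which Condition~\ref{cond::lipschitz} permits. The bound then carries $(L\vee1)^2\delta_n^2$ instead of $L^2\delta_n^2$.
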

 
The corollary is stated in terms of the entropy-based critical radius
\(\delta_n\) of the original ERM class \(\mathcal F\), rather than the
critical radius of \(\mathrm{star}(\overline{\mathcal F}_{\ell})\). This
simplification comes at a cost: by Lemma~\ref{lemma::starentropybounds}, it
introduces an additional \(\log(eMn)/n\) term in the regret bound. This term
is typically loose and can often be improved with more refined arguments; see,
for example, Theorem~14.20(b) of \cite{wainwright2019high},
Appendix~K of \cite{foster2023orthogonal}; Chapter 8 of \cite{sen2018gentle}. In particular, Appendix~\ref{appendix::locallip} gives a local concentration inequality,
derived from Theorem~14.20 of \cite{wainwright2019high}, that may be preferred to
Theorem~\ref{theorem::localmaxloss} when the loss \(\ell\) satisfies
Condition~\ref{cond::lipschitz}. When \(\mathcal F\) is convex, it replaces
\(\log(eMn)\) by \(\log\log(eMn)\), and it allows the entropy integral to be
replaced by the localized Rademacher complexity of \(\mathcal F\). Alternatively, if one only seeks the
in-probability rate \(R(\hat f_n)-R(f_0)=O_p(\delta_n^2)\), it suffices to
combine Theorem~3.4.1 of \cite{van1996weak} with
Theorem~\ref{theorem::maximaluniform} applied directly to
\(\mathcal{F}_{\ell}\), thereby avoiding star-shaped hulls and the resulting
extra logarithmic factors. Nonetheless, the corollary suffices for most
nonparametric applications. Outside low-dimensional parametric settings---in
which \(\delta_n^2\) can scale as \(n^{-1}\) and \(\log n / n\) may
dominate---it yields the usual sharp rates.

\subsection{Illustrative examples}

In the next example, we illustrate the standard workflow for computing critical radii and applying the regret theorems: we bound the uniform entropy integral of the loss-difference class via Theorem~\ref{theorem:vw_lipschitz_entropy} in terms of that of \(\mathcal F\), evaluate the resulting integral, and verify the critical-radius condition.

\begin{example}[Least-Squares Regression]
Consider the regression setting where \(Z=(X,Y)\), with \(X\) used to predict \(Y\), and let \(\mathcal F\) be a convex class.
Assume \(|Y|\le B\) almost surely and \(\sup_{f\in\mathcal F}\|f\|_\infty \le B\).
The least-squares loss is \(\ell(z,f):=\{y-f(x)\}^2\) with \(z=(x,y)\). For least squares, the population risk \(R(f):=E\{(Y-f(X))^2\}\) is strongly convex in \(f\) and satisfies
\[
R(f)-R(f_0)
\ \ge\
E\bigl\{(f(X)-f_0(X))^2\bigr\}
\;=\;
\|f-f_0\|^2.
\]
Moreover, uniform boundedness of \(Y\) and \(\mathcal F\) implies pointwise Lipschitzness: for all \(f,f'\in\mathcal F\),
\[
\bigl|\ell\{(X,Y),f\}-\ell\{(X,Y),f'\}\bigr|
=
\bigl|(Y-f(X))^2-(Y-f'(X))^2\bigr|
\le
4B\,|f(X)-f'(X)|
\qquad\text{a.s.}
\]
In particular, Theorem~\ref{theorem:vw_lipschitz_entropy} yields
\(\mathcal J_2(\delta,\mathcal F_{\ell}) \lesssim \mathcal J_2(\delta/(4B),\mathcal F)\). Thus Condition~\ref{cond::lipschitz} holds and, by Lemma~\ref{lemma::suffbern}, so does Condition~\ref{cond::bernstein}.
We therefore apply Corollary~\ref{cor:regretentropy}.

For illustration, we bound the uniform entropy integral of the loss-difference class explicitly. For least squares, $\mathcal F_{\ell}
:=
\{z \mapsto f_0(x)^2 - f(x)^2 - 2y\{f_0(x)-f(x)\}:\ f \in \mathcal F\}.$
This class is contained in \(\widetilde{\mathcal F}_{\ell}-\widetilde{\mathcal F}_{\ell}\), where
\(\widetilde{\mathcal F}_{\ell}:=\{z \mapsto f(x)^2 - 2y f(x):\ f \in \mathcal F\}\). Hence
\[
\mathcal J_2(\delta,\mathcal F_{\ell})
\ \le\
\mathcal J_2\bigl(\delta,\widetilde{\mathcal F}_{\ell}-\widetilde{\mathcal F}_{\ell}\bigr)
\ \lesssim\
\mathcal J_2(\delta,\widetilde{\mathcal F}_{\ell}),
\]
where the last step uses the standard difference-class bound
\(\mathcal J_2(\delta,\mathcal H-\mathcal H)\lesssim \mathcal J_2(\delta,\mathcal H)\).

Next, the class \(\widetilde{\mathcal F}_{\ell}\) is a Lipschitz transformation of \(\mathcal F^2\times \mathcal F\), where
\(\mathcal F^2:=\{f^2:\ f\in\mathcal F\}\), via the map \(\varphi:\mathbb R\times\mathbb R^2\to\mathbb R\) defined by
\(\varphi(y;u,v):=u-2y v\). Indeed, for any \(|y|\le B\) and any \(u,u'\in\mathbb R\), \(v,v'\in\mathbb R\),
\[
|\varphi(y;u,v)-\varphi(y;u',v')|
=
|(u-u')-2y(v-v')|
\le
|u-u'| + 2|y|\,|v-v'|
\le
|u-u'| + 2B\,|v-v'|.
\]
Thus \(\varphi(y;\cdot,\cdot)\) is Lipschitz uniformly over \(|y|\le B\) with respect to the weighted \(\ell_1\) metric
\((u,v)\mapsto |u|+2B|v|\). Consequently, by Lemma~\ref{theorem:vw_lipschitz_entropy}, $\mathcal J_2(\delta,\widetilde{\mathcal F}_{\ell})
\ \lesssim\
\mathcal J_2(\delta,\mathcal F^2)\;+\;\mathcal J_2\left(\frac{\delta}{2B},\mathcal F\right).$
Moreover, since \(t\mapsto t^2\) is \(2B\)-Lipschitz on \([-B,B]\), $\mathcal J_2(\delta,\mathcal F^2)
\ \lesssim\
\mathcal J_2\left(\frac{\delta}{2B},\mathcal F\right).$
Combining the displays yields \(\mathcal J_2(\delta,\mathcal F_{\ell})\lesssim \mathcal J_2\left(\delta/(2B),\mathcal F\right)\).

To apply Corollary~\ref{cor:regretentropy}, it remains to choose \(\delta_n\) satisfying the critical-radius condition
\(\mathcal J_2(\delta_n,\mathcal F_\ell)\lesssim \sqrt n\,\delta_n^2\). Suppose the metric entropy of \(\mathcal F\) grows polynomially:
for some \(\alpha\in(0,2)\),
\[
\sup_Q \log N\left(\varepsilon,\mathcal F,L^2(Q)\right)\ \lesssim\ \varepsilon^{-\alpha},
\qquad \varepsilon\in(0,1].
\]
For H\"older or Sobolev classes on \([0,1]^d\) with smoothness \(s>d/2\), this holds with \(\alpha=d/s\). Then
\(\mathcal J_2(\delta,\mathcal F)\lesssim \int_0^\delta \varepsilon^{-\alpha/2}\,d\varepsilon \asymp \delta^{1-\alpha/2}\),
up to constants depending only on \(\alpha\). Consequently,
\[
\mathcal J_2(\delta,\mathcal F_\ell)
\ \lesssim\
\mathcal J_2\left(\frac{\delta}{B},\mathcal F\right)
\ \lesssim \delta^{1-\alpha/2},
\]
up to constants depending on \(\alpha\) and \(B\). The critical-radius condition holds provided $\delta_n^{1-\alpha/2}\ \lesssim\ \sqrt n\,\delta_n^2,$
equivalently \(\delta_n^{1+\alpha/2}\gtrsim n^{-1/2}\). Thus one may take $\delta_n \asymp n^{-\frac{1}{2+\alpha}}.$
Corollary~\ref{cor:regretentropy}, together with strong convexity, yields
\(\|\hat f_n-f_0\| = O_p(\delta_n) = O_p(n^{-1/(2 + \alpha)})\).
Alternatively, one may apply Theorem~\ref{theorem:genregret::ell2}, using that \(\delta_n\) satisfies the critical inequality via Lemma~\ref{lemma::critentropy} with envelope \(\phi(\delta)\asymp \delta^{1-\alpha/2}\).

\end{example}

\section{ERM with nuisance components}
 \label{sec:nuis}
\subsection{Weighted ERM and regret transfer}

 \label{sec:weightedERM}

Weighted empirical risk minimization arises in many settings where the empirical distribution of the observed sample differs from the target distribution of interest, including missing data, causal inference, and domain adaptation. In these problems, the target risk can often be written as a weighted expectation
\[
R(f;w_0)=E_P\{w_0(Z)\,\ell(Z,f)\},
\]
for a potentially unknown weight function $w_0:\mathcal Z\to\mathbb R$. A natural estimator of $R(f;w_0)$ is the weighted empirical risk $P_n\{\hat w(\cdot)\,\ell(\cdot,f)\}$, where $\hat w$ estimates $w_0$.
The weighted empirical risk minimizer is defined as any solution
\[
\hat f_n \in \arg\min_{f \in \mathcal{F}} \sum_{i=1}^n \hat{w}(Z_i)\,\ell(Z_i,f).
\]
This section develops regret bounds for weighted ERM, highlighting how weight estimation modifies the usual ERM guarantees.

The following theorem characterizes the excess regret incurred by estimating
the weight function. In what follows, let
\(f_0 \in \arg\min_{f\in\mathcal F} R(f;w_0)\) denote a population risk
minimizer, and let \(\hat f_n\) be any random element of \(\mathcal F\). For
any weight function \(w\), define the regret under weight \(w\) by
\[
\Reg(f;w) := R(f;w) - \inf_{f\in\mathcal F} R(f;w),
\]
where $R(f;w) := P\{w(\cdot)\,\ell(\cdot,f)\}.$
We also impose a weighted variant of the Bernstein condition.

\begin{enumerate}[label=\bfseries B\arabic*), ref={B\arabic*}, series=nuis]
    \item \label{cond::weightedberm} (Weighted Bernstein bound.) There exists a constant $c \in (0,\infty)$ such that, for all $f \in \mathcal{F}$,
\[
\Var\left\{w_0(Z)\bigl(\ell(Z,f)-\ell(Z,f_0)\bigr)\right\}
\le
c\bigl\{R(f;w_0)-R(f_0;w_0)\bigr\}.
\]
\end{enumerate}

\begin{theorem}[Regret transfer under weight error]
\label{theorem:weightedERM}
Assume \ref{cond::weightedberm} and that $\left\|1-\frac{\hat w}{w_0}\right\| < 1/2$. Then
\[
\Reg(\hat f_n; w_0)
\le
\Reg(\hat f_n;\hat w)
+
4c^2\left\|1-\frac{\hat w}{w_0}\right\|^2.
\]
\end{theorem}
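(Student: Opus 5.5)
We will relate the two risks through a change-of-measure identity and then absorb the cross term using the Bernstein condition. Write \(\Delta_f := \ell(\cdot,f)-\ell(\cdot,f_0)\) and \(\rho := 1-\hat w/w_0\), so that \(\hat w = w_0(1-\rho)\). For any \(f\in\mathcal F\),
\[
R(f;\hat w)-R(f_0;\hat w) = P\{w_0\Delta_f\} - P\{\rho\, w_0\Delta_f\}.
\]
Thus the \(\hat w\)-excess risk of \(f\) relative to \(f_0\) equals the \(w_0\)-regret minus a cross term.

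\emph{Step 1: bound the cross term.} By Cauchy--Schwarz,
\[
|P\{\rho\, w_0\Delta_f\}| \le \|\rho\|\,\|w_0\Delta_f\|.
\]
To pass from the second moment to a variance, we would handle the mean separately. Either note that \(\|w_0\Delta_f\|^2 = \Var(w_0\Delta_f) + \{P(w_0\Delta_f)\}^2\), or center \(\rho\) (replacing \(\rho\) by \(\rho - P\rho\), which does not increase \(\|\cdot\|\)), absorbing the mean part via \(|P\rho|\le\|\rho\|<1/2\). With \(d_f^2 := R(f;w_0)-R(f_0;w_0) = P(w_0\Delta_f)\), condition \ref{cond::weightedberm} gives \(\Var(w_0\Delta_f)\le c\, d_f^2\). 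We therefore aim for
\[
|P\{\rho\, w_0\Delta_f\}| \le \|\rho\|\bigl(\sqrt{c}\,d_f + d_f^2\bigr).
\]
The term \(\|\rho\|\,d_f^2 \le d_f^2/2\) is where the hypothesis \(\|\rho\|<1/2\) enters.

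\emph{Step 2: compare infima.} Let \(\tilde f\in\arg\min_{\mathcal F} R(\cdot;\hat w)\), or an approximate minimizer if no minimizer exists. Then
\[
\Reg(\hat f_n;\hat w) = \{R(\hat f_n;\hat w)-R(f_0;\hat w)\} + \{R(f_0;\hat w)-R(\tilde f;\hat w)\}
\]
and the second bracket is \(\ge 0\). Hence \(\Reg(\hat f_n;\hat w)\ge R(\hat f_n;\hat w)-R(f_0;\hat w)\). Combining with Step 1 at \(f=\hat f_n\), writing \(d:=d_{\hat f_n}\),
\[
d^2 \le \Reg(\hat f_n;\hat w) + \|\rho\|\sqrt c\, d + \|\rho\|\, d^2.
\]
Since \(\Reg(\hat f_n;w_0)=d^2\) because \(f_0\) minimizes \(R(\cdot;w_0)\), this is a self-bounding inequality of exactly the Step~3 type in Section~\ref{sec:template}.

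\emph{Step 3: solve the fixed point.} Use \(\|\rho\| d^2\le d^2/2\) and Young's inequality (Lemma~\ref{lem:young}):
\[
\|\rho\|\sqrt c\, d \le \tfrac14 d^2 + c\|\rho\|^2.
\]
Rearranging yields \(\tfrac14 d^2 \le \Reg(\hat f_n;\hat w) + c\|\rho\|^2\). This gives a bound of the form
\[
\Reg(\hat f_n;w_0)\lesssim \Reg(\hat f_n;\hat w) + c\|\rho\|^2,
\]
with some leading constant on the first term.

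\emph{Main obstacle.} The delicate part is matching the exact constants in the statement: coefficient \(1\) on \(\Reg(\hat f_n;\hat w)\) and \(4c^2\). Rather than dividing through, we will use the \(w_0\)-\(\hat w\) identity in the other direction. Bound
\[
d^2 - \Reg(\hat f_n;\hat w) \le P\{\rho\, w_0\Delta_{\hat f_n}\} + \bigl[\inf R(\cdot;\hat w) - R(f_0;\hat w)\bigr].
\]
Control the bracket by applying Step 1 at \(f=\tilde f\), so that it is at most \(\|\rho\|(\sqrt c\, d_{\tilde f} + d_{\tilde f}^2) - d_{\tilde f}^2\), which is bounded by \(c\|\rho\|^2\) after maximizing over \(d_{\tilde f}\) using \(\|\rho\|<1/2\). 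Then the remaining cross term at \(\hat f_n\) is bounded by \(\tfrac12 d^2 + \tfrac12 c\|\rho\|^2\) plus the \(d^2/2\)-type term. The bookkeeping (and the \(c\) versus \(c^2\) discrepancy, which suggests the paper bounds \(\|w_0\Delta_f\|\le c\,d_f\) for \(d_f\le1\) or uses \(c\ge1\)) is what I expect to need care. If the coefficient \(1\) on \(\Reg(\hat f_n;\hat w)\) cannot be obtained cleanly, I would settle for a universal constant in front of it.
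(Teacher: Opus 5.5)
Your Steps~1--3 are correct. In your notation ($\rho=1-\hat w/w_0$, $\Delta_f=\ell(\cdot,f)-\ell(\cdot,f_0)$, $d^2=\Reg(\hat f_n;w_0)$) they prove
\[
\Reg(\hat f_n;w_0)\le 4\,\Reg(\hat f_n;\hat w)+4c\,\|\rho\|^2 .
\]
What you do not get is the stated coefficient $1$ on $\Reg(\hat f_n;\hat w)$, and your ``main obstacle'' repair cannot supply it.

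First, the bracket $\inf_{\mathcal F}R(\cdot;\hat w)-R(f_0;\hat w)=d_{\tilde f}^2-P\{\rho\,w_0\Delta_{\tilde f}\}$ is already $\le 0$. The quantity you bounded by $\|\rho\|(\sqrt c\,d_{\tilde f}+d_{\tilde f}^2)-d_{\tilde f}^2$ is its negative; this is a sign slip, and that step buys nothing.

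All the difficulty sits in the cross term $P\{\rho\,w_0\Delta_{\hat f_n}\}$. Its mean part $P(\rho)\,d^2$ is first order in $\|\rho\|$ times the regret itself. It can only be absorbed by dividing through by something like $1-\|\rho\|$, which always leaves a factor larger than $1$ on $\Reg(\hat f_n;\hat w)$.

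This is not a defect of your bookkeeping: the statement as written is false, so no argument can close it. If $\rho\equiv r$ is constant, then $R(f;\hat w)=(1-r)R(f;w_0)$, so $\Reg(f;\hat w)=(1-r)\Reg(f;w_0)$. The claim then reduces to $\Reg(\hat f_n;w_0)\le 4c^2 r$, which fails for any $\hat f_n$ with regret above $4c^2r$.

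Concretely, take $w_0\equiv1$, $Y=\pm\tfrac12$ with probability $\tfrac12$ each, squared loss, and $\mathcal F$ the constants in $[-1,1]$. Then $f_0=0$ and $\Var(\Delta_f)=f^2=R(f;w_0)-R(f_0;w_0)$, so Condition~\ref{cond::weightedberm} holds with $c=1$.
\begin{itemize}
\item With $\hat w\equiv0.9$ and $\hat f_n\equiv1$, the left side is $1$ and the right side is $0.9+0.04$.
\item Even the weighted ERM fails: with $n=1$ and $\hat w\equiv0.99$ it equals $Y_1$, giving $0.25>0.2475+0.0004$ almost surely.
\end{itemize}
So the bound you actually reached, a constant $>1$ in front of $\Reg(\hat f_n;\hat w)$ plus $O(c\|\rho\|^2)$, is the right kind of conclusion.

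For comparison, the paper's proof only shows $R(\hat f_0;w_0)-R(f_0;w_0)\le4c^2\|\rho\|^2$ for the $\hat w$-population minimizer $\hat f_0$. That is essentially your Step~1 plus the fixed-point argument, applied at $\hat f_0$. It never bounds $R(\hat f_n;w_0)-R(\hat f_0;w_0)$ by $\Reg(\hat f_n;\hat w)$, and that step fails in the example above, where $\hat f_0=f_0$.

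Its $c^2$ also comes from writing $\Var^{1/2}\le c\,d$ instead of $\sqrt c\,d$, which is valid only when $c\ge1$. Your suspicion there is right: the constant that argument actually yields is $4c$.
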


The quantity \(\Reg(\hat f_n;\hat w)\) is the regret incurred with respect to the
estimated weights \(\hat w\). When \(\hat w\) is trained on data independent of
\(\{Z_i\}_{i=1}^n\), for example via sample splitting, \(\Reg(\hat f_n;\hat w)\)
can be bounded by applying Theorem~\ref{theorem:genregret} conditional on the
weight-training data. Sample splitting is a standard technique for obtaining
fast regret bounds: conditional on the weight-training data, the weighted loss
\(\hat w(\cdot)\,\ell(\cdot,f)\) is fixed and deterministic, which enables the
use of regret-analysis tools for known losses \citep{foster2023orthogonal}. In
this case, Theorem~\ref{theorem:weightedERM} shows that the additional regret
incurred by weight estimation, relative to an oracle procedure that uses the
true weights, is of order \(\|1-\hat w/w_0\|^2\), which is the chi-squared
divergence between \(\hat w\) and \(w_0\).

In the next sections, we study how this dependence on weight-estimation error can be mitigated using Neyman-orthogonal loss functions, and how in-sample nuisance estimation affects regret rates through $\Reg(\hat f_n;\hat w)$.  

\begin{remark}[Weighted losses are orthogonal under realizability]
Consider the regression setting in which $R(f;w_0)=E_P\{w_0(X)\,\ell((X,Y),f)\}$ and $\mathcal{F}$ consists of functions $f(x)$. Appendix~G.2 of \cite{foster2023orthogonal} shows that, under a global realizability condition—typically requiring that the global minimizer lies in $\mathcal{F}$—the weighted loss is orthogonal. Consequently, the quadratic remainder term $\bigl\|1 - \tfrac{\hat w}{w_0}\bigr\|^2$ in Theorem~\ref{theorem:weightedERM} can be replaced by a higher-order term, such as $\bigl\|1 - \tfrac{\hat w}{w_0}\bigr\|_{L^4(P)}^4$. Intuitively, this robustness arises because, when the global minimizer is realizable within $\mathcal{F}$, the choice of weights does not change the population minimizer. As a result, small estimation errors in the weights affect the population risk only at higher order.
\end{remark}

\subsection{Orthogonal losses and nuisance-robust learning}

In many modern learning problems, the loss depends on unknown nuisance components,
especially in causal inference and missing-data settings
\citep{rubin2006doubly, van2003unified, Tlearner, nie2021quasi, kennedy2023towards, foster2023orthogonal, van2023causal, yang2023forster, van2024combining}.
The weighted ERM setup from the previous section is a simple instance, with the
weight function playing the role of the nuisance. A general treatment of
learning with nuisance components, and the role of orthogonality in obtaining
fast rates, is given in \citet{foster2023orthogonal}. Here we summarize the main
ideas and refer the reader there for a comprehensive analysis.

Concretely, suppose we are given a family of losses
$\{\ell_{g}(z,f): g\in\mathcal{G},\ f\in\mathcal{F}\}$, where $g$ denotes nuisance
functions (e.g., regression functions, propensity scores, or density ratios) that
are not themselves the primary learning target. Let $g_0$ denote the true
nuisance, and define the population risk
\[
R(f; g) := P\,\ell_{g}(\cdot,f), \qquad
f_0 \in \arg\min_{f\in\mathcal{F}} R(f; g_0).
\]
In practice, one replaces $g_0$ with an estimator $\hat g$ (often fit using
flexible regression methods and typically cross-fitted), and then computes an
ERM under the plug-in loss,
\[
\hat f_n \in \arg\min_{f\in\mathcal{F}} P_n\,\ell_{\hat g}(\cdot,f).
\]
A key difficulty is that the plug-in objective can be \emph{first-order}
sensitive to nuisance error: even if \(\hat f_n\) nearly minimizes the estimated
risk $R(f; \hat g)$, the regret under the \emph{true} risk,
\[
\Reg(\hat f_n;g_0) := R(\hat f_n;g_0)-R(f_0;g_0),
\]
can depend on \(\hat g-g_0\) through a leading term of order \(\|\hat g-g_0\|^2\)
(rather than a higher-order term, such as \(\|\hat g-g_0\|^4\)). Equivalently,
the \(L^2\) error \(\|\hat f_n-f_0\|\) may depend on the nuisance through the
first-order error \(\|\hat g-g_0\|\), rather than \(\|\hat g-g_0\|^2\). This issue
appears, for example, in weighted ERM, where Theorem~\ref{theorem:weightedERM}
shows that weight-estimation error enters at first order.

\textit{Orthogonal statistical learning} \citep{foster2023orthogonal} addresses
this issue by working with \emph{Neyman-orthogonal} losses. Specifically, for all
$f'\in\mathcal{F}$ and $g'\in\mathcal{G}$,
\[
\partial_{g}\partial_f R(f_0; g_0)\,[f'-f_0,g'-g_0]
\ :=\
\left.\frac{d}{dt}\right|_{t=0}\left.\frac{d}{ds}\right|_{s=0}
R\left(f_0+s (f'-f_0);\, g_0+t(g'-g_0)\right)
\ =\ 0,
\]
so the first-order optimality conditions at $(f_0,g_0)$ are locally insensitive
to perturbations of the nuisance around $g_0$. In many problems, one can obtain
an orthogonal loss from a non-orthogonal criterion by adding a bias-correction
term (often derived from an influence-function expansion) that cancels
the leading effect of nuisance error; see Appendix~D of
\citet{foster2023orthogonal} and Section~2.3 of \citet{van2024combining}.

In the framework of \citet{foster2023orthogonal}, one first controls the regret
under the \emph{estimated} loss,
\[
\Reg(\hat f_n;\hat g) := R(\hat f_n; \hat g) - \inf_{f\in\mathcal{F}} R(f; \hat g),
\]
and then relates it to regret under the \emph{true} loss. In particular,
Theorem~1 of \citet{foster2023orthogonal} gives a a regret-transfer bound of the form
\[
\Reg(\hat f_n;g_0)
\ \le\
C\,\Reg(\hat f_n;\hat g)
\ +\
\Rem(\hat g,g_0),
\]
where $\Rem(\hat g,g_0)$ is a \emph{second-order} remainder in the nuisance error
(e.g., $\Rem(\hat g,g_0)\lesssim \|\hat g-g_0\|^4$ under appropriate conditions), and $C$ is a problem-dependent constant. Thus, if $\hat f_n$
achieves small regret under the plug-in loss and $\hat g$ converges
sufficiently quickly, then $\Reg(\hat f_n;g_0)$ can be controlled without a
first-order nuisance penalty. This decomposition is useful because it separates
(i) statistical/optimization error for the main learner under the estimated loss
from (ii) higher-order error due to nuisance estimation.

To bound the plug-in regret \(\Reg(\hat f_n;\hat g)\), one can reduce to a fixed-loss
analysis via sample splitting: split the data, estimate \(\hat g\) on the first
part, and compute \(\hat f_n\) on the second using \(\ell_{\hat g}\), so that,
conditional on the nuisance-training sample (and hence on \(\hat g\)), standard
high-probability ERM regret bounds for deterministic losses (e.g.,
Theorem~\ref{theorem:genregret}) apply; see the proof of Theorem~3 in
\cite{foster2023orthogonal} for a formal argument.  To avoid the inefficiency of using only
part of the sample at each stage, one can instead use $K$-fold cross-fitting
\citep{van2011cross, chernozhukov2018double}: partition $\{1,\ldots,n\}$ into
folds $I_1,\ldots,I_K$, fit $\hat g^{(-k)}$ on $I_k^c$, evaluate
$\ell_{\hat g^{(-k)}}$ on $I_k$, and minimize the resulting cross-fitted risk
\[
P_n^{\mathrm{cf}}\ell_{\hat g}(\cdot,f)
:=
\frac{1}{K}\sum_{k=1}^K \frac{1}{|I_k|}\sum_{i\in I_k} \ell_{\hat g^{(-k)}}(Z_i,f),
\qquad
\hat f_n \in \arg\min_{f\in\mathcal{F}} P_n^{\mathrm{cf}}\ell_{\hat g}(\cdot,f).
\]
Conditional on $\{\hat g^{(-k)}\}_{k=1}^K$, each observation is evaluated under a
loss fit on independent data, so the same conditional-independence argument as
in sample splitting typically yields comparable ERM regret bounds while using
all observations for both nuisance estimation and learning.

\subsection{In-sample nuisance estimation: ERM without sample splitting}
\label{sec:insample}
With nuisance components, the regret under the true nuisance,
\(\Reg(\hat f_n; g_0)\), can often be decomposed into the regret
\(\Reg(\hat f_n;\hat g)\) under the estimated nuisance plus an approximation
error term reflecting nuisance estimation error. The main additional challenge,
therefore, is to understand how replacing \(g_0\) with \(\hat g\) affects the
regret. In sample-splitting settings, \(\hat g\) is estimated on an independent
subsample and can therefore be treated as fixed when analyzing
\(\Reg(\hat f_n;\hat g)\), so standard regret bounds for fixed losses apply
directly. Sample splitting, however, is data-inefficient, and although
cross-fitting can recover statistical efficiency, it may be computationally
expensive. Moreover, in-sample nuisance estimation is intrinsic to certain
calibration procedures \citep{van2024stabilized, van2024doubly} and debiasing
procedures \citep{van2024combining}. For example, in the Efficient Plug-in (EP)
learning framework of \cite{van2024combining}---a plug-in alternative to
orthogonal learning---initial cross-fitted nuisance estimators are calibrated
post hoc through an in-sample, sieve-based adjustment designed to remove
first-order bias in the nuisance components.

In this section, we bound \(\Reg(\hat f_n;\hat g)\) when the nuisance is
estimated in sample. The main message is that, for suitably smooth
optimization classes \(\mathcal F\), ERM without sample splitting can achieve
the same rate as with sample splitting, provided the nuisance class
\(\mathcal G\) satisfies a Donsker-type condition.  Intuitively, this
condition requires that the nuisance estimator \(\hat g\) belong to a class
that is not too complex. That such conditions
suffice is well known in semiparametric statistics and debiased machine
learning for scalar parameters \citep{van2007empirical}. To our knowledge, however, prior work has not
shown that this extends to functional parameters, as arise in ERM with nuisances. That said, we generally recommend sample splitting in practice,
particularly when using highly adaptive machine learning methods (e.g., neural
networks or boosted trees), for which such Donsker-type conditions are
unlikely to hold.

Let \(\mathcal{F}\) be a class of candidate functions and \(\mathcal{G}\) a
nuisance class. For \(g\in\mathcal{G}\) and \(f\in\mathcal{F}\), let
\(\ell_g(\cdot,f)\) denote the loss evaluated at nuisance value \(g\), and
define the population risk \(R(f;g):=P\ell_g(\cdot,f)\). Denote the empirical
and population risk minimizers under the estimated nuisance \(\hat g\) by
\[
\hat f_n \ :=\ \arg\min_{f\in\mathcal{F}} P_n \ell_{\hat g}(\cdot, f),
\qquad
\hat f_0 \ :=\ \arg\min_{f\in\mathcal{F}} R(f;\hat g).
\]
Our goal is to bound the regret \(\Reg(\hat f_n;\hat g):=R(\hat f_n;\hat g)-R(\hat f_0;\hat g)\)
with respect to the estimated nuisance \(\hat g\). We make no sample-splitting
or cross-fitting assumptions on \(\hat g\). We impose the following mild conditions on the loss.

\begin{enumerate}[label=\bfseries B\arabic*), ref={B\arabic*}, resume=nuis]
\item \textit{(Strong convexity)} \label{cond::strongconvexnuis}
The class $\mathcal{F}$ is convex, and there exists \(\kappa\in(0,\infty)\) such that
\[
R(f;\hat g)-R(\hat f_0;\hat g)\ \ge\ \kappa\,\|f-\hat f_0\|^2
\qquad\text{for all } f\in\mathcal F.
\]

\item \textit{(Product-structured loss)} \label{cond::lipcross}
Assume that the following hold:
\begin{enumerate}[label=(\roman*), ref={\ref{cond::lipcross}-\roman*}]
\item \textit{(Product structure)} \label{cond::productloss}
There exist functions \(m_1,r_1:\mathcal Z\times\mathcal F\to\mathbb R\) and
\(m_2,r_2:\mathcal Z\times\mathcal G\to\mathbb R\) such that, for all
\(z\in\mathcal Z\),
\[
\ell_g(z,f) \ =\ m_1(z,f)\,m_2(z,g) \ +\ r_1(z,f) \ +\ r_2(z,g).
\]

\item \textit{(Pointwise Lipschitz)} \label{cond::Liploss}
There exists \(L\in(0,\infty)\) such that, for all \(z\in\mathcal Z\),
\[
|r_1(z,f_1)-r_1(z,f_2)| \le L |f_1(z)-f_2(z)|,\qquad
|m_1(z,f_1)-m_1(z,f_2)| \le L |f_1(z)-f_2(z)|,
\]
\[
|m_2(z,g_1)-m_2(z,g_2)| \le L |g_1(z)-g_2(z)|.
\]
\end{enumerate}
\end{enumerate}
Condition~\ref{cond::strongconvexnuis} holds under the sufficient conditions of
Lemma~\ref{lemma::strongconvexity}. Condition~\ref{cond::Liploss} ensures that
\(f\) and \(g\) enter the loss \(\ell_g(\cdot,f)\) in a pointwise Lipschitz
manner. The product structure in Condition~\ref{cond::productloss} is satisfied
by many losses. For example, it holds for pseudo-outcome regression losses
\citep{rubin2006doubly, yang2023forster}, such as the DR-learner
\citep{TMLEOptTrt, superLearnOptRule, kennedy2023towards}, which take the form
\(\ell_g(z,f)=\{g(z)-f(z)\}^2\) for a pseudo-outcome function \(g\). In this case,
one may set \(m_1(z,f):=f(z)\), \(m_2(z,g):=-2g(z)\), \(r_1(z,f):=f(z)^2\), and
\(r_2(z,g):=g(z)^2\). The condition also holds for the weighted loss in
Section~\ref{sec:weightedERM}, \(\ell_g(z,f):=g(z)\,\ell(z,f)\), with
\(m_1(z,f):=\ell(z,f)\) and \(m_2(z,g):=g(z)\). More generally, our analysis extends to losses that can be expressed as finite sums of terms with this product structure, potentially involving multiple nuisance components, including the loss class studied by \cite{van2024combining}.

Our analysis of ERM with in-sample nuisance estimation follows the template of
Section~\ref{sec:template}, but uses a refined basic inequality. In particular,
starting from \eqref{eqn::basicgen}, we further decompose
\begin{align*}
R(\hat f_n; \hat g)-R(\hat f_0; \hat g)
&\le
(P_n-P)\bigl\{\ell_{\hat g}(\cdot,\hat f_0)-\ell_{\hat g}(\cdot,\hat f_n)\bigr\}\\
&=
(P_n-P)\bigl\{\ell_{g_0}(\cdot,\hat f_0)-\ell_{g_0}(\cdot,\hat f_n)\bigr\}\\
&\quad+
(P_n-P)\Bigl[
\bigl\{\ell_{\hat g}(\cdot,\hat f_0)-\ell_{\hat g}(\cdot,\hat f_n)\bigr\}
-
\bigl\{\ell_{g_0}(\cdot,\hat f_0)-\ell_{g_0}(\cdot,\hat f_n)\bigr\}
\Bigr],
\end{align*}
where we may take $g_0\in\mathcal{G}$ to be the true nuisance, or more generally any $L^2$ limit of $\hat g$, i.e., $\|\hat g-g_0\|=o_p(1)$, assuming such a limit exists.
The first term is handled exactly as in Section~\ref{sec:template}, since it involves the fixed loss \(\ell_{g_0}\). The second term captures the
data-dependence of \(\ell_{\hat g}\) and is the main difficulty in our analysis,
since \(\ell_{\hat g}\) is random when \(\hat g\) is estimated in-sample.
Nonetheless, the term is localized in both \(\hat f_n\) and \(\hat g\), and we
control it using specialized maximal inequalities that exploit this double
localization and its difference structure. In particular,
Condition~\ref{cond::productloss} shows that it reduces to bounding an empirical
inner product of the form
\[
(P_n-P)\Bigl\{m_1(\cdot,\hat f_n)-m_1(\cdot,\hat f_0)\Bigr\}
\Bigl\{m_2(\cdot,\hat g)-m_2(\cdot,g_0)\Bigr\},
\]
for which the required concentration tools are developed in
Appendix~\ref{appendix:innerproduct}.

With in-sample nuisance estimation, the regret typically depends on the local
complexity of a \emph{loss-difference difference} class,
\[
\Bigl\{\bigl[\ell_{g}(\cdot,f)-\ell_{g}(\cdot,f')\bigr]
-
\bigl[\ell_{g_0}(\cdot,f)-\ell_{g_0}(\cdot,f')\bigr]
:\ f,f'\in\mathcal{F},\ g\in\mathcal{G}\Bigr\},
\]
rather than only on the complexity of \(\mathcal{F}_{\ell_{g_0}}\) under the fixed nuisance
\(g_0\). When \(\mathcal{G}\) is substantially more complex than \(\mathcal{F}\),
oracle rates for ERM under \(\ell_{g_0}\) may be unattainable. In the worst
case, the regret can scale with the square of the larger of the critical radii
associated with the classes used to learn \(\hat f_n\) and \(\hat g\). In
particular, if \(\hat g\) is an ERM over \(\mathcal{G}\), then
\(\Reg(\hat f_n;\hat g)\), and hence \(\Reg(\hat f_n; g_0)\), may be dominated by the estimation error of \(\hat g\),
regardless of whether the loss is orthogonal.

Nonetheless, under suitable smoothness assumptions on \(\mathcal F\)—namely, that
the supremum norm \(\|\cdot\|_{\infty}\) is controlled by a fractional power
\(\|\cdot\|^{\alpha}\) of the \(L^2(P)\) norm over the pairwise difference class
\(\mathcal F-\mathcal F\)—the sensitivity of ERM to nuisance complexity can be
mitigated. Under additional conditions, this yields oracle rates even without
sample splitting. Our main result relies on the following additional
conditions.

\begin{enumerate}[label=\bfseries B\arabic*), ref={B\arabic*}, resume=nuis]

\item \textit{\textit{(\(L^2\)-to-\(L^\infty\) interpolation inequality)}} \label{cond::holdersup}
There exist \(c_{\infty} \in (0,\infty)\) and \(\beta\ \geq 1/2\) such that
\[
\|f-f'\|_\infty \le c_{\infty}\,\|f-f'\|^{1 - \tfrac{1}{2\beta}}
\qquad\text{for all } f,f'\in\mathcal F.
\]

\item \textit{(PAC-style error bound for the nuisance)} \label{cond::nuispac}
For all \(\eta\in(0,1)\),
\[
\|\hat g-g_0\|_{\mathcal G}\ \le\ \varepsilon_{\mathrm{nuis}}(n,\eta)
\qquad\text{with probability at least } 1-\eta/2.
\]
\end{enumerate}
Condition~\ref{cond::holdersup} always holds with \(\beta=1/2\) and \(c_\infty := 2\sup_{f\in\mathcal F}\|f\|_\infty\), but for smoothness classes it often holds with \(\beta>1/2\). For finite-dimensional linear models of dimension \(p\), one may take \(c_\infty\asymp \sqrt{p}\) and \(\beta \rightarrow \infty\). It also holds for reproducing kernel Hilbert space balls with eigenvalue decay \(\lambda_j\asymp j^{-2\beta}\) \citep[Lemma~5.1]{mendelson2010regularization}, for \(\ell^1\)-balls (signed convex hulls) in suitable high- or infinite-dimensional linear spaces \citep[Lemma~2.1]{van2014uniform}, and for \(d\)-variate H\"older and Sobolev classes of order \(s>d/2\) on bounded domains, where \(\beta=s/d\) (see \citealt[Lemma~4]{bibaut2021sequential} and \citealt{adams2003sobolev,triebel2006theory}). Condition~\ref{cond::nuispac} requires that the nuisance estimator satisfy a high-level PAC-style guarantee. For example, this follows from Theorem~\ref{theorem:genregret} when the nuisance is itself obtained by empirical risk minimization over \(\mathcal G\), with \(\varepsilon_{\mathrm{nuis}}(n,\eta)\) scaling with the critical radius of \(\mathcal G\).

The following theorem characterizes the \(L^2(P)\) error of \(\hat f_n\) in
terms of the critical radius of the optimization
class \(\mathcal F\) and the critical radius of the nuisance class
\(\mathcal G\). Its proof relies on a local concentration bound for empirical
inner products of the form
\(\{(P_n-P)(fg):\ f\in\mathcal F,\ g\in\mathcal G\}\)
(Theorem~\ref{lem:uniform_local_conc_lipschitz_product_increments_pointwise}
in Appendix~\ref{appendix:innerproduct}) and on the localized Rademacher
complexity
\(\mathfrak{R}_n\bigl(\mathcal G,\varepsilon_{\mathrm{nuis}}(n,\eta)\bigr)\)
of \(\mathcal G\). We write \(\mathcal F-\mathcal F\) for the pairwise
difference class \(\{f-f' : f,f'\in\mathcal F\}\), and similarly for
\(\mathcal G-\mathcal G\).

\begin{theorem}[$L^2$-error bounds without sample splitting]
\label{theorem::insamplenuis}
Assume Conditions~\ref{cond::strongconvexnuis}--\ref{cond::holdersup}. Let
\(\delta_{n,\mathcal F}>0\) and \(\delta_{n,\mathcal G}>0\) satisfy the critical inequalities
\[
\delta_{n,\mathcal F}^2
\ \gtrsim\
\mathfrak{R}_n\bigl(\delta_{n,\mathcal F},\,\mathcal F-\mathcal F\bigr), \qquad \delta_{n,\mathcal G}^2
\ \gtrsim\
\mathfrak{R}_n\bigl(\delta_{n,\mathcal G},\,\mathrm{star}(\mathcal G-\mathcal G)\bigr).
\]
Fix \(\eta\in(0,1/2)\) and assume further that
\[
\delta_{n,\mathcal F} \wedge \delta_{n,\mathcal G}
\ \gtrsim\
M \sqrt{\frac{\log(1/\eta)+\log\log(eMn)}{n}}.
\]
where \(M:=1 \vee \sup_{g \in \mathcal{G}} \|g\|_\infty \vee \sup_{f \in \mathcal{F}} \|f\|_\infty\). Then, with
probability at least \(1-\eta\),
\[
\|\hat f_n-\hat f_0\|^2
\ \lesssim\
\delta_{n,\mathcal F}^2
+
\Bigl\{\delta_{n,\mathcal G}^2 + \delta_{n,\mathcal G} \, \varepsilon_{\mathrm{nuis}}(n,\eta)\Bigr\}^{4\beta /(2\beta+1)}.
\]
where the implicit constant depends only on \(\kappa, L, c_{\infty}, \alpha\),
\(\sup_{f\in\mathcal F}\|f\|_{\infty}\), and \(\sup_{g\in\mathcal G}\|g\|_{\infty}\).
\end{theorem}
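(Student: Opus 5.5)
We follow the three-step template of Section~\ref{sec:template}, using the refined basic inequality displayed before the statement, and we work directly with the $L^2$ error. Write $\hat d_n:=\|\hat f_n-\hat f_0\|$. Combining Condition~\ref{cond::strongconvexnuis} with the decomposition above gives
\[
\kappa\,\hat d_n^2\le (P_n-P)\bigl\{\ell_{g_0}(\cdot,\hat f_0)-\ell_{g_0}(\cdot,\hat f_n)\bigr\}+T_n,
\]
where, by Condition~\ref{cond::productloss}, the $r_1$ and $r_2$ pieces cancel in the difference-of-differences, so that
\[
T_n=(P_n-P)\bigl[\{m_1(\cdot,\hat f_0)-m_1(\cdot,\hat f_n)\}\{m_2(\cdot,\hat g)-m_2(\cdot,g_0)\}\bigr].
\]

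\textbf{First term.} The first term involves the fixed loss $\ell_{g_0}$, but it is indexed by the pair $(\hat f_0,\hat f_n)\in\mathcal F\times\mathcal F$. We would therefore bound it uniformly over pairs $f,f'\in\mathcal F$. By Condition~\ref{cond::Liploss}, $\ell_{g_0}(\cdot,f)-\ell_{g_0}(\cdot,f')$ is an $L$-Lipschitz image of $f-f'\in\mathcal F-\mathcal F$. We then apply the uniform local concentration bound of Theorem~\ref{theorem::localmaxloss} (or the Lipschitz variant in Appendix~\ref{appendix::locallip}, which is the source of the $\log\log(eMn)$ term) with radius $\delta_{n,\mathcal F}$. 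This holds with probability $1-\eta/4$ and yields a bound
\[
\lesssim \delta_{n,\mathcal F}\,\hat d_n+\delta_{n,\mathcal F}^2,
\]
where the deviation terms are absorbed using the assumed lower bound on $\delta_{n,\mathcal F}$.

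\textbf{Second term.} This is the main obstacle, since it is doubly localized and involves the random $\hat g$. On the event of Condition~\ref{cond::nuispac} (probability $1-\eta/2$), we have $\|\hat g-g_0\|\le\varepsilon_{\mathrm{nuis}}:=\varepsilon_{\mathrm{nuis}}(n,\eta)$. We would invoke the empirical inner-product concentration bound, Theorem~\ref{lem:uniform_local_conc_lipschitz_product_increments_pointwise}, applied uniformly over $h=m_1(\cdot,f)-m_1(\cdot,f')$ with $f-f'\in\mathcal F-\mathcal F$ and $u=m_2(\cdot,g)-m_2(\cdot,g_0)$ with $g-g_0\in\mathcal G-\mathcal G$. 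Its outcome should be a bound governed by the complexity of $\mathcal G$ at scale $\varepsilon_{\mathrm{nuis}}$ times the sup-norm size of the $\mathcal F$-increment. Concretely, because $|h|\le L|f-f'|$ pointwise, a contraction argument should give
\[
|T_n|\lesssim \|\hat f_n-\hat f_0\|_\infty\,\bigl\{\delta_{n,\mathcal G}^2+\delta_{n,\mathcal G}\,\varepsilon_{\mathrm{nuis}}\bigr\},
\]
using that $\mathfrak{R}_n(\mathrm{star}(\mathcal G-\mathcal G),\varepsilon)\lesssim \delta_{n,\mathcal G}\,\varepsilon$ for $\varepsilon\ge\delta_{n,\mathcal G}$, by the star-shaped scaling in Lemma~\ref{lem:rad_scaling}. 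There may also be a lower-order $\delta_{n,\mathcal F}$-type cross term, which would be absorbed. If this does not come out of the appendix result directly, we would first try peeling over dyadic shells of $\|f-f'\|$ and $\|g-g_0\|$ combined with a union bound; this is where $\log\log$ factors enter.

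\textbf{Interpolation and fixed point.} By Condition~\ref{cond::holdersup}, $\|\hat f_n-\hat f_0\|_\infty\le c_\infty\hat d_n^{\,1-1/(2\beta)}$. Setting $A:=\delta_{n,\mathcal G}^2+\delta_{n,\mathcal G}\varepsilon_{\mathrm{nuis}}$, we obtain, on an event of probability at least $1-\eta$,
\[
\hat d_n^2\lesssim \delta_{n,\mathcal F}\,\hat d_n+\delta_{n,\mathcal F}^2+A\,\hat d_n^{\,1-1/(2\beta)}.
\]
Young's inequality (Lemma~\ref{lem:young}) handles the first term. For the last term, apply Young with exponents $p=4\beta/(2\beta-1)$ and $q=4\beta/(2\beta+1)$:
\[
A\,\hat d_n^{\,(2\beta-1)/(2\beta)}\le \tfrac12\hat d_n^2+C A^{4\beta/(2\beta+1)}.
\]
Rearranging gives
\[
\hat d_n^2\lesssim \delta_{n,\mathcal F}^2+A^{4\beta/(2\beta+1)},
\]
which is the stated bound. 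The case $\beta=1/2$ is handled directly, since then the exponent $4\beta/(2\beta+1)$ equals $1$.
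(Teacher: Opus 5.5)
Your proposal follows the paper's proof essentially step for step. It uses the same refined basic inequality with the $r_1,r_2$ cancellation, a uniform-over-pairs Lipschitz local concentration bound at radius $\delta_{n,\mathcal F}$ for the fixed-loss term, and Theorem~\ref{lem:uniform_local_conc_lipschitz_product_increments_pointwise} intersected with the nuisance PAC event for the cross term (its extra $\delta_{n,\mathcal F}$ cross term and $\log\log$ remainders are absorbed as you anticipate). It then applies Young's inequality with your exponents $4\beta/(2\beta-1)$ and $4\beta/(2\beta+1)$, which are the paper's $2/\alpha$ and $2/(2-\alpha)$ with $\alpha=1-1/(2\beta)$, and treats $\beta=1/2$ separately. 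The one justification worth adding, which the paper makes explicit, is that convexity of $\mathcal F$ in Condition~\ref{cond::strongconvexnuis} makes $\mathcal F-\mathcal F$ star-shaped; this is what lets the unstarred critical inequality for $\delta_{n,\mathcal F}$ in the statement satisfy the star-hull hypotheses of the appendix concentration results.
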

As a special case, taking $\hat g = g_0$ and $\mathcal{G} := \{g_0\}$, our proof permits setting $\delta_{n,\mathcal G} = 0$, thereby recovering Theorem~14.20 in \cite{wainwright2019high} for the regret under fixed (known) losses: $\|\hat f_n - f_0\| \lesssim \delta_{n,\mathcal F}$. In contrast to Theorem~\ref{theorem:genregret::ell2}, Theorem~\ref{theorem::insamplenuis} leverages  \ref{cond::Liploss} to express the regret bound in terms of the critical radius of $\mathcal F - \mathcal F$, rather than that of the loss-difference class.

For simplicity, suppose that \(\hat g\) is obtained by empirical risk
minimization over \(\mathcal G\). In this case,
Theorem~\ref{theorem:genregret} typically yields the PAC-style bound
\(\varepsilon_{\mathrm{nuis}}(n,\eta)\asymp \delta_{n,\mathcal G}\), where
\(\delta_{n,\mathcal G}\) is the critical radius of \(\mathcal G\). Under
this specialization, Theorem~\ref{theorem::insamplenuis} makes explicit how
the critical radii of \(\mathcal F\) and \(\mathcal G\) jointly control the
\(L^2(P)\) error of \(\hat f_n\). The leading term,
\(\delta_{n,\mathcal F}^2\), matches the oracle rate that would be obtained
if the nuisance were known. The additional price of estimating the nuisance
in sample appears through
\(\{\delta_{n,\mathcal G}^2\}^{4\beta/(2\beta+1)}\). The H\"older exponent \(\beta\) in Condition~\ref{cond::holdersup}
quantifies how effectively \(L^2(P)\) localization translates into sup-norm
control, and therefore governs how strongly the nuisance terms are
attenuated. More generally, the theorem highlights two regimes. In the oracle
regime, if $\delta_{n,\mathcal G}
= O(\delta_{n,\mathcal F}^{(2\beta+1)/(4\beta)}),$
then the nuisance terms are \(O(\delta_{n,\mathcal F}^2)\), and the overall
rate matches that of an oracle that knows \(g_0\); in particular,
\(\|\hat f_n-\hat f_0\|^2 \lesssim \delta_{n,\mathcal F}^2\) up to constants.
By contrast, if \(\delta_{n,\eta,\mathcal G}\) exceeds
\(\delta_{n,\mathcal F}^{(2\beta+1)/(4\beta)}\), then the nuisance term
dominates \(\delta_{n,\mathcal F}^2\), and the rate is governed by the
complexity of the nuisance class.

The following corollary shows that the oracle rate can be attained under the
mild Donsker-type requirement that \(\delta_{n,\mathcal G}^2=O(n^{-1/2})\)
(and likewise \(\varepsilon_{\mathrm{nuis}}^2(n,\eta)=O(n^{-1/2})\)) for
optimization classes satisfying the critical-radius scaling
\(\delta_{n,\mathcal F}\asymp n^{-\beta/(2\beta+1)}\).
\begin{corollary}[Sufficient condition for the oracle rate]
\label{cor::oracleeff}
Fix
\(\eta\in(0,1/2)\), and assume the setup and conditions of Theorem~\ref{theorem::insamplenuis}.  Suppose, in addition, that
\[
\delta_{n,\mathcal G}^2 \ \vee\ \varepsilon^2(n,\eta) 
\;=\;
O(n^{-1/2})
\qquad\text{and}\qquad
\delta_{n,\mathcal F}\asymp n^{-\beta/(2\beta+1)}
\]
for the same \(\beta \geq 1/2\) as in Condition~\ref{cond::holdersup}. Then, with
probability at least \(1-\eta\),
\[
\|\hat f_n-\hat f_0\|^2
\ \lesssim\
\delta_{n,\mathcal F}^2 \asymp n^{-2\beta/(2\beta+1)},
\]
where the implicit constant depends only on \(\kappa, L, c_{\infty}, \beta\),
\(\sup_{f\in\mathcal F}\|f\|_{\infty}\), and \(\sup_{g\in\mathcal G}\|g\|_{\infty}\).
\end{corollary}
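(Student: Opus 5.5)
This corollary is a direct specialization of Theorem~\ref{theorem::insamplenuis}. The plan is to plug the assumed rates into that bound and check that the nuisance term is dominated by \(\delta_{n,\mathcal F}^2\). Since the conditions of the theorem are assumed, the theorem gives, with probability at least \(1-\eta\),
\[
\|\hat f_n-\hat f_0\|^2 \lesssim \delta_{n,\mathcal F}^2 + \Bigl\{\delta_{n,\mathcal G}^2+\delta_{n,\mathcal G}\,\varepsilon_{\mathrm{nuis}}(n,\eta)\Bigr\}^{4\beta/(2\beta+1)}.
\]
It therefore suffices to show deterministically that the second term is \(O(\delta_{n,\mathcal F}^2)\), with constants depending only on the quantities listed.

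\textbf{Step 1: bound the bracket by \(n^{-1/2}\).} By the Cauchy--Schwarz or AM--GM inequality, \(\delta_{n,\mathcal G}\varepsilon_{\mathrm{nuis}}(n,\eta)\le \tfrac12\{\delta_{n,\mathcal G}^2+\varepsilon_{\mathrm{nuis}}^2(n,\eta)\}\). Hence
\[
\delta_{n,\mathcal G}^2+\delta_{n,\mathcal G}\varepsilon_{\mathrm{nuis}}(n,\eta)\lesssim \delta_{n,\mathcal G}^2\vee\varepsilon_{\mathrm{nuis}}^2(n,\eta)=O(n^{-1/2}),
\]
where the last equality is the Donsker-type hypothesis, reading \(\varepsilon(n,\eta)\) as \(\varepsilon_{\mathrm{nuis}}(n,\eta)\).

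\textbf{Step 2: raise to the power.} The map \(x\mapsto x^{4\beta/(2\beta+1)}\) is nondecreasing on \([0,\infty)\), so the nuisance term is at most
\[
\bigl(Cn^{-1/2}\bigr)^{4\beta/(2\beta+1)}\lesssim n^{-2\beta/(2\beta+1)}.
\]
By the assumed scaling \(\delta_{n,\mathcal F}\asymp n^{-\beta/(2\beta+1)}\), this equals \(\delta_{n,\mathcal F}^2\) up to constants. The constant \(C^{4\beta/(2\beta+1)}\) depends only on \(\beta\) and the implicit constant in the \(O(\cdot)\). Since \(4\beta/(2\beta+1)\in[1,2)\) for \(\beta\ge 1/2\), this constant stays bounded.

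\textbf{Step 3: combine.} Adding the two terms gives \(\|\hat f_n-\hat f_0\|^2\lesssim \delta_{n,\mathcal F}^2\asymp n^{-2\beta/(2\beta+1)}\) on the same event of probability at least \(1-\eta\).

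The only point needing care is bookkeeping rather than analysis. The \(O(n^{-1/2})\) hypothesis must be read as holding for the fixed \(\eta\), uniformly in \(n\), so that the constant may depend on \(\eta\). I would also check that the lower-bound requirement of Theorem~\ref{theorem::insamplenuis}, namely \(\delta_{n,\mathcal F}\wedge\delta_{n,\mathcal G}\gtrsim M\sqrt{\{\log(1/\eta)+\log\log(eMn)\}/n}\), is part of the assumed setup rather than something to derive. It is, since the corollary assumes the conditions of the theorem. I do not expect any genuine obstacle beyond this.
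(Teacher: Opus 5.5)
Your proposal is correct and is essentially the paper's argument. The paper gives no separate proof: it treats the corollary as a direct substitution into Theorem~\ref{theorem::insamplenuis}, as in its remark that the nuisance term is $O(\delta_{n,\mathcal F}^2)$ whenever $\delta_{n,\mathcal G}=O(\delta_{n,\mathcal F}^{(2\beta+1)/(4\beta)})$, i.e.\ $O(n^{-1/4})$ under the assumed scaling. Your bookkeeping remark is also apt, since the constants implicit in the $O(n^{-1/2})$ and $\asymp$ hypotheses necessarily enter the final constant alongside those listed in the statement.
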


When \(\delta_{n,\mathcal G} \asymp \varepsilon_{\mathrm{nuis}}(n,\eta)\),
Corollary~\ref{cor::oracleeff} yields a simple sufficient condition for oracle
performance \emph{without} sample splitting. Specifically, the oracle rate
holds if (i) \(\delta_{n,\mathcal G}^2 = O(n^{-1/2})\) and (ii) there exists
\(\beta>1/2\) such that \(\delta_{n,\mathcal F}\asymp
n^{-\beta/(2\beta+1)}\) and the sup-norm bound in
Condition~\ref{cond::holdersup} holds with the same \(\beta\). Condition~(i),
equivalently \(\delta_{n,\mathcal G}=O(n^{-1/4})\), is a Donsker-type
requirement on the nuisance class. For example, this condition holds whenever the global Rademacher complexity
satisfies
\(\mathfrak{R}_n(\infty,\mathrm{star}(\mathcal G-\mathcal G))=O(n^{-1/2})\),
or, more classically, whenever \(\mathcal J_2(\infty,\mathcal G)<\infty\),
which is sufficient for \(\mathcal G\) to be Donsker
\citep{van1996weak}. In
turn, the scaling in (ii) is implied if the local Rademacher complexity%
\footnote{The scaling
\(\sqrt{n}\,\mathfrak{R}_n(\delta,\mathcal F-\mathcal F)\asymp
\delta^{1-\tfrac{1}{2\beta}}\) need only hold for
\(\delta \gtrsim \delta_{n,\mathcal F}\) to ensure that
\(\delta_{n,\mathcal F}\asymp n^{-\beta/(2\beta+1)}\).}
and a local \(L^2(P)\)-to-\(L^\infty\) bound both hold at the same scale:
\begin{equation}
\sqrt{n}\,\mathfrak{R}_n(\delta,\mathcal F-\mathcal F)\asymp \delta^{1-\tfrac{1}{2\beta}}
\qquad\text{and}\qquad
\sup_{f \in \mathcal{F}-\mathcal{F}:\ \|f\|\le \delta}\|f\|_{\infty}\ \lesssim\ \delta^{1-\tfrac{1}{2\beta}}.
\label{eqn::scaling}
\end{equation}

Requirement~(ii) (and \eqref{eqn::scaling}) holds for several standard function classes, including H\"older and Sobolev classes, as well as certain RKHS classes. It typically holds for finite-dimensional linear spaces of dimension \(k(n)\), with scaling proportional to \(\sqrt{k(n)}\,\delta\) \citep{van2014uniform}. Theorem~2.3 of \cite{van2014uniform} further suggests that this scaling is natural for \(\ell^1\)-balls in potentially high- or infinite-dimensional linear spaces. For H\"older and Sobolev classes on \(\mathbb{R}^d\) with smoothness \(s>d/2\), one typically has \eqref{eqn::scaling} with \(\beta=s/d\) (see \citealt[Lemma~4]{bibaut2021sequential} and \citealt{adams2003sobolev,triebel2006theory}). The same scaling holds for a broad class of reproducing kernel Hilbert space balls (RKHSs) whose kernel-operator eigenvalues satisfy \(\lambda_j \asymp j^{-2\beta}\). In particular, the sup-norm bound in Condition~\ref{cond::holdersup} follows from Lemma~5.1 of \citet{mendelson2010regularization}, whereas the corresponding critical-radius scaling is established in Theorem~4.7 of \citet{mendelson2010regularization} and Chapter~13 of \citet{wainwright2019high}.

The \(n^{-1/4}\)-rate condition \(\varepsilon_{\mathrm{nuis}}(n,\eta)=O(n^{-1/4})\) imposed by~(i), together with the scalings in~(ii), also appears in the R-learner analyses of \citet{nie2021quasi}, which study ERM of the CATE with cross-fitted nuisance estimation over RKHSs (see also Example~1 on p.~19 of \cite{foster2023orthogonal}). This suggests that, at least for RKHSs, the requirements for oracle rates are similar with and without cross-fitting, provided that \(\mathcal G\) satisfies a Donsker-type condition and that \(\delta_{n,\mathcal G}\asymp \varepsilon_{\mathrm{nuis}}(n,\eta)\), as is typical when \(\hat g\) is obtained by ERM over \(\mathcal G\). Finally, our analysis establishes the oracle rate only up to multiplicative constants; a sharper argument, as in Theorem~5 of \citet{van2024combining}, also recovers the oracle constant.

\section{Conclusion}

The goal of this guide is to provide a practical reference and proof template for analyzing empirical risk minimization through high-probability regret bounds. The central message is that many ERM analyses can be organized around a basic inequality, a uniform local concentration bound, and a fixed-point argument, yielding rates governed by localized complexity and a critical radius. We also discuss how these critical radii can be related to entropy integrals in common settings, and how the same template extends to nuisance-dependent losses under both sample splitting and in-sample nuisance estimation. Overall, we view these notes as a compact guide to this proof strategy, with room for additional examples and extensions in future revisions.

For simplicity, this guide focuses mainly on uniformly bounded loss functions and function classes. These boundedness assumptions can be relaxed under alternative conditions on moments, envelopes, or tail behavior, including sub-Gaussian, sub-exponential, and more general heavy-tailed settings
\citep{van1996weak,lecue2012general,bibaut2019fast,grunwald2020fast}. Analyses in these settings are typically more involved and technical, and often require more specialized concentration and empirical-process tools
\citep{chernozhukov2014gaussian,balazs2016chaining,wainwright2019high}. Natural directions for future work include extending the analysis to unbounded losses and function classes, as well as to dependent data.

\paragraph{Acknowledgements.}
This guide reflects my understanding of empirical risk minimization as it developed while learning the topic, beginning in my master's program. I first encountered ERM through \citet{wainwright2019high} in the theoretical statistics course STAT~210B at UC Berkeley. My understanding of ERM---particularly in settings with nuisance components and without sample splitting---was further shaped during my PhD at the University of Washington, in part through a dissertation project on Efficient Plug-in learning \citep{van2024combining}. I thank my PhD advisors, Marco Carone and Alex Luedtke, whose guidance, together with Chapter~3 of \citet{van1996weak} and \citet{van2011local}, helped clarify the structure of ERM analyses. I also thank Aur\'elien Bibaut and Nathan Kallus, whose collaboration introduced me to uniform local concentration inequalities and PAC-style analyses of ERM.

\bibliography{ref}

\appendix

\section{Additional details}

\subsection{References for critical radii in Table \ref{tab:critical_radii_examples}}
\label{app:references}

\begin{table}[h]
\centering
\caption{Examples of critical radii for common function classes.}
\label{tab:critical_radii_examples2}
\begin{tabular}{|lll|}
\hline
Function class $\mathcal{G}$ & Critical radius $\delta_n$ & Reference \\
\hline
Star hull of single function $\mathrm{star}(\{f\})$
& $n^{-1/2}$
& \citet{wainwright2019high} \\
$s$-sparse linear predictors in $\mathbb{R}^p$
& $\sqrt{s\log(e p/s)/n}$
& \citet{wainwright2019high}
\\
VC-subgraph of dimension $V$
& $\sqrt{V\log(n/V)/n}$
& \citet{van1996weak}
\\
H\"older/Sobolev with smoothness $s$ in dimension $d$
& $n^{-s/(2s+d)}$
& \citet{nickl2007bracketing}
\\
Bounded Hardy--Krause variation
& $n^{-1/3}(\log n)^{2(d-1)/3}$
& \citet{bibaut2019fast}
\\
RKHS with eigenvalue decay $\sigma_j \asymp j^{-2\alpha}, \alpha > 1/2$
& $n^{-\alpha/(2\alpha+1)}$
& \citet{mendelson2010regularization}
\\
\hline
\end{tabular}
\end{table}

\subsection{Some facts about Reproducing Kernel Hilbert Spaces (RKHS)}
\label{app::rkhs}

RKHS classes provide a canonical example in which approximation error,
local complexity, and critical radii can be characterized explicitly
in terms of kernel eigenvalues. Because of this structure, they serve
as a useful benchmark throughout the guide. In this section, we briefly
review RKHSs and their basic properties; for further details, see
\cite{mendelson2010regularization} and Chapters~13--14 of
\cite{wainwright2019high}.

Consider the regression setting where \(Z=(X,Y)\) with
\(\mathcal Z:=\mathcal X\times \mathcal Y\). Let \(\mu_X\) be a measure
on \(\mathcal X\) such that \(P_X\ll \mu_X\).
Let \((\varphi_j)_{j\ge 1}\) be an orthonormal basis of \(L^2(\mu_X)\),
and let \((\lambda_j)_{j\ge 1}\) be a corresponding sequence of positive
numbers. For example, \((\varphi_j)\) may be the eigenbasis of a compact,
self-adjoint, positive operator on \(L^2(\mu_X)\), with \((\lambda_j)\)
the associated eigenvalues.

\noindent\paragraph{Definition.}
Given an orthonormal basis \((\varphi_j)_{j\ge 1}\) of \(L^2(\mu_X)\) and a sequence
of positive numbers \((\lambda_j)_{j\ge 1}\), the associated RKHS is
\[
\mathcal H
=
\Bigl\{
f=\sum_{j\ge 1}\theta_j \varphi_j
:\ 
\sum_{j\ge 1}\theta_j^2/\lambda_j<\infty
\Bigr\},
\qquad
\|f\|_{\mathcal H}^2
=
\sum_{j\ge 1}
\frac{|\langle f,\varphi_j\rangle_{L^2(\mu_X)}|^2}{\lambda_j}
=
\sum_{j\ge 1}\frac{\theta_j^2}{\lambda_j}.
\]
Equivalently, \(\mathcal H\) is generated by the kernel
\(K(x,x')=\sum_{j\ge 1}\lambda_j \varphi_j(x)\varphi_j(x')\),
which satisfies the reproducing property
\(f(x)=\langle f, K(x,\cdot)\rangle_{\mathcal H}\).
Indeed, if \(f=\sum_{j\ge 1}\theta_j\varphi_j\), then
\[
\langle f, K(x,\cdot)\rangle_{\mathcal H}
=
\sum_{j\ge 1}\theta_j\varphi_j(x)
=
f(x).
\]
A common optimization class is the RKHS ball
\[
\mathcal F:=\{f\in\mathcal H:\|f\|_{\mathcal H}\le R\},
\]
where where radius $R \in (0,\infty)$ plays the role of a regularization parameter.
RKHSs can be viewed as infinite-dimensional analogues of finite-dimensional
linear spaces and retain many convenient structural properties,
such as continuity of the evaluation functional.  

\noindent \paragraph{Finite-dimensional approximation.} Functions in an RKHS ball can be approximated by the finite-dimensional
subspace \(\mathcal H_{k(n)}:=\mathrm{span}\{\varphi_j:1\le j\le k(n)\}\).
If \(h=\sum_{j\ge 1}\theta_j\varphi_j\), its \(L^2(\mu_X)\)-projection
onto \(\mathcal H_{k(n)}\) is
\(\Pi_{k(n)}h=\sum_{j=1}^{k(n)}\theta_j\varphi_j\), and
\[
\|h-\Pi_{k(n)}h\|_{L^2(\mu_X)}^2
=
\sum_{j>k(n)}\theta_j^2
\le
\lambda_{k(n)+1}\|h\|_{\mathcal H}^2.
\]
Thus, \(\lambda_{k(n)+1}\) controls the approximation error of the truncated
expansion; faster eigenvalue decay corresponds to smoother RKHS classes.

\noindent \paragraph{Relation between supremum norm and $L^2$ norm.} RKHSs also admit a convenient \(L^\infty\)–\(L^2\) interpolation inequality
(Lemma~5.1 of \cite{mendelson2010regularization}): every \(f\in\mathcal H\)
satisfies
\[
\|f\|_\infty
\le
\Bigl(\sup_{x\in\mathcal X}\sum_{j\ge 1}\lambda_j \varphi_j(x)^2\Bigr)^{1/2}
\|f\|_{\mathcal H}.
\]
If, in addition, the eigenfunctions are uniformly bounded and
\(\lambda_j\lesssim j^{-1/p}\) for some \(p\in(0,1)\), then
\[
\|f\|_\infty
\lesssim
\|f\|_{\mathcal H}^{\,p}\|f\|_{L^2(\mu_X)}^{\,1-p},
\qquad f\in\mathcal H.
\]

\noindent \paragraph{Local Radamacher complexity.} Finally, the localized Rademacher complexity of an RKHS admits a sharp
bound in terms of the kernel eigenvalues. If
\(\mathcal H(R):=\{f\in\mathcal H:\|f\|_{\mathcal H}\le R\}\) and
\((\lambda_j)_{j\ge 1}\), then 
\[
\mathfrak{R}_n\bigl(\mathcal H(R), \delta\bigr)
\lesssim
\left(\frac{1}{n}\sum_{j\ge 1}
\min\{\delta^2,\ R^2\lambda_j\}\right)^{1/2}.
\]
See Chapter~14 of \citet{wainwright2019high}.
This bound makes explicit how eigenvalue decay governs local complexity:
faster decay of \((\lambda_j)\) reduces the effective number of active
directions at scale \(\delta\), leading to a smaller critical radius. For example, if \(\lambda_j\asymp j^{-2\beta}\) for some \(\beta>1/2\),
then solving the associated fixed-point inequality yields
\(\delta_n^2\asymp n^{-2\beta/(2\beta+1)}\), up to universal constants.

\subsection{Enforcing strong convexity via Tikhonov regularization (Ridge)}

 \label{appendix::regularridge}
Strong convexity of the risk is desirable because it yields fast ERM rates over convex classes. In
our setting, it also ensures that the Bernstein condition in Condition~\ref{cond::bernstein} holds, by
Lemma~\ref{lemma::suffbern}. While many risks are strongly convex, some are not. Tikhonov (ridge)
regularization is a general strategy for enforcing strong convexity by adding a quadratic penalty \citep{hoerl1970ridge, tihonov1977solutions}.

For simplicity, we take the regularization penalty to be the squared $L^2(P)$ norm. For $\lambda\ge 0$,
define the Tikhonov-regularized \emph{population} and \emph{empirical} risks
\[
R_\lambda(f)
:=
P\ell(\cdot,f)+\frac{\lambda}{2}\|f\|^2,
\qquad
R_{n,\lambda}(f)
:=
P_n\ell(\cdot,f)+\frac{\lambda}{2}\|f\|_n^2,
\]
where $\|f\|^2 := P(f^2)$ and $\|f\|_n^2 := P_n(f^2)$. Denote the regularized population minimizer and a regularized empirical minimizer by
\[
f_{0,\lambda}\ \in\ \arg\min_{f\in\mathcal F} R_\lambda(f),
\qquad
\hat f_{n,\lambda}\ \in\ \arg\min_{f\in\mathcal F} R_{n,\lambda}(f).
\]
Other penalties are possible (e.g., an RKHS norm for kernel classes or an $\ell_2$ penalty on
coefficients for linear classes), and our arguments extend to these settings with minor
changes.

Suppose that the risk $R$ is convex and $\mathcal{F}$ is convex. Then $\|\cdot\|^2$ is strongly convex
on $\mathcal F$, and the penalty makes $R_\lambda$ strongly convex even when $R$ is not. An argument
similar to Lemma~\ref{lemma::strongconvexity} yields the quadratic growth bound
\[
R_\lambda(f)-R_\lambda(f_{0,\lambda})
\ \gtrsim\
\lambda\,\|f-f_{0,\lambda}\|^2,
\qquad f\in\mathcal F.
\]
Consequently, the Bernstein condition in Condition~\ref{cond::bernstein} holds for the regularized
objective (by Lemma~\ref{lemma::suffbern}), with $f_0$ replaced by $f_{0,\lambda}$ and Bernstein
constant $c_{\mathrm{Bern}} \asymp \lambda^{-1}$.

Next, apply Theorem~\ref{theorem:genregret} to the modified loss
\(\ell_{\lambda}(z,f) := \ell(z,f) + \tfrac{\lambda}{2} f(z)^2\). This yields
\[
R_\lambda(\hat f_{n,\lambda})-R_\lambda(f_{0,\lambda})
=
O_p\left(\lambda^{-1}\delta_n^2\right),
\]
where \(\delta_n\) is the critical radius associated with the loss class
\(\mathcal{F}_{\ell,\lambda} := \{\ell_{\lambda}(\cdot,f): f \in \mathcal{F}\}\). The quadratic growth bound implies
\[
\| \hat f_{n,\lambda}-f_{0,\lambda}\|
=
O_p\left(\lambda^{-1/2}\delta_n\right).
\] The radius $\delta_n$
can be obtained as in Section~\ref{sec:entropy} by combining entropy integral bounds for
$\mathcal{F}_{\ell}$ and $\mathcal{F}$. In particular, by Lipschitz preservation for entropy integrals
(e.g., Theorem~\ref{theorem:vw_lipschitz_entropy}),
\[
\mathcal{J}_2(\delta, \mathcal{F}_{\ell,\lambda})
\ \lesssim\
\mathcal{J}_2(\delta, \mathcal{F}_{\ell})
+
\mathcal{J}_2\left(\frac{\delta}{\lambda B}, \mathcal{F}\right),
\qquad
B := \sup_{f \in \mathcal{F}}\|f\|_{\infty}.
\]
Moreover, if $\lambda B \lesssim 1$, then by monotonicity of $\delta\mapsto\mathcal J_2(\delta,\mathcal F)$,
Moreover, if $\lambda B \lesssim 1$, then by monotonicity of $\delta\mapsto\mathcal J_2(\delta,\mathcal F)$ we have
\(\mathcal{J}_2\left(\tfrac{\delta}{\lambda B}, \mathcal{F}\right)\lesssim \mathcal{J}_2(\delta, \mathcal{F})\).

It remains to bound the regularization bias $R_\lambda(f_{0,\lambda})-R(f_0)$, i.e., the discrepancy between
the regularized population minimizer and the unregularized minimizer. Indeed, by the triangle
inequality,
\begin{equation}
\label{eqn::trianglereg}
    R_\lambda(\hat f_{n,\lambda})-R(f_0)
\le
R_\lambda(f_{0,\lambda})-R(f_0)
+
O_p\left(\lambda^{-1}\delta_n^2\right).
\end{equation}

\begin{lemma}[Regularization bias bound]
\label{lem:tikhonov_bias_variational}
Let $f_0 \in \argmin_{f \in \mathcal{F}} R(f)$. It holds that 
\[
R_\lambda(f_{0,\lambda})-R(f_0)
\le
\inf_{f \in \mathcal{F}} \left\{\{R(f)-R(f_0)\}+\frac{\lambda}{2}\|f\|^2\right\}.
\]
In particular, if $\|f_0\|<\infty$, then $R_\lambda(f_{0,\lambda})-R(f_0)\le \frac{\lambda}{2}\|f_0\|^2.$
\end{lemma}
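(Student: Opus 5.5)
The plan is to use only the minimizing property of \(f_{0,\lambda}\) together with the sign of the penalty; no probabilistic argument is needed. The key step is that \(f_{0,\lambda}\) minimizes \(R_\lambda\) over \(\mathcal F\). Hence, for every \(f\in\mathcal F\),
\[
R_\lambda(f_{0,\lambda}) \le R_\lambda(f) = R(f) + \frac{\lambda}{2}\|f\|^2 .
\]
Subtracting \(R(f_0)\) from both sides gives
\[
R_\lambda(f_{0,\lambda})-R(f_0) \le \{R(f)-R(f_0)\} + \frac{\lambda}{2}\|f\|^2
\qquad\text{for all } f\in\mathcal F .
\]
The left-hand side does not depend on \(f\), so I would take the infimum over \(f\in\mathcal F\) on the right. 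This yields the variational bound in the statement.

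For the second claim, I would specialize the variational bound to \(f=f_0\in\mathcal F\). The regret term \(R(f_0)-R(f_0)\) then vanishes, leaving \(\frac{\lambda}{2}\|f_0\|^2\). This is finite under the assumption \(\|f_0\|<\infty\).

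There is essentially no obstacle; the only care needed is well-definedness. I would assume that \(R(f)\) is finite for \(f\in\mathcal F\), so that the differences are not of the form \(\infty-\infty\). It is also worth remarking that the bound is one-sided and need not be nonnegative, which is harmless. Since \(\lambda\ge 0\), we have \(R_\lambda\ge R\) pointwise, so \(R_\lambda(f_{0,\lambda})\ge R(f_{0,\lambda})\ge R(f_0)\). The left-hand side is therefore in fact nonnegative, and this is consistent with the triangle-inequality use in \eqref{eqn::trianglereg}.
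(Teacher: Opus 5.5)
Your proof is correct and is the intended argument. The paper states this lemma without a written proof, and it follows immediately from $R_\lambda(f_{0,\lambda})\le R_\lambda(f)=R(f)+\frac{\lambda}{2}\|f\|^2$ for every $f\in\mathcal F$, then taking the infimum over $f$ and specializing to $f=f_0$ (your side remark that the left-hand side is nonnegative, since $R_\lambda\ge R$ and $f_{0,\lambda}\in\mathcal F$, is also correct).
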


Substituting the bound the bound $R(f_{0,\lambda})-R(f_0)\le \frac{\lambda}{2}\|f_0\|^2$ from Lemma~\ref{lem:tikhonov_bias_variational} into \eqref{eqn::trianglereg}, we obtain
\[
    R_\lambda(\hat f_{n,\lambda})-R(f_0)
\ \lesssim\
\lambda
+
O_p\left(\lambda^{-1}\delta_n^2\right).
\]
Balancing the two terms suggests taking $\lambda\asymp \delta_n$, which yields
\[
    R_\lambda(\hat f_{n,\lambda})-R(f_0)
\ =\
O_p(\delta_n).
\]

\paragraph{Remark (on the choice of penalty).}
We use an $L^2(P)$ penalty primarily for expositional convenience: it yields strong convexity directly in the same norm used throughout our analysis. More generally, regularization can improve regret rates by trading off estimation error against the regularization bias $R(f_{0,\lambda})-R(f_0)$; when this bias decays rapidly as $\lambda \downarrow 0$, one may take $\lambda$ smaller and obtain a faster overall rate. The limitation of an $L^2(P)$ penalty is that it does not enforce smoothness beyond $L^2$, and therefore does not deliver the classical gains associated with RKHS or Sobolev regularization in ill-posed problems where the risk itself is not strongly convex. Under stronger penalties and additional smoothness assumptions on $f_0$ (e.g., source conditions), the regularization bias can often be controlled more sharply \citep{engl1996regularization,caponnetto2007optimal, steinwart2008support, steinwart2009optimal, zhang2025optimal}.

\section{Uniform local concentration inequalities for empirical processes}

\label{appendix::localmax} 

\subsection{Notation}

We introduce the following notation. Let $Z_1,\ldots,Z_n \in \mathcal{Z}$ be independent random variables. Let $\mathcal F$ be a class of measurable functions $f:\mathcal Z\to\mathbb R$.
For any $f\in\mathcal{F}$, define
\[
\|f\| := \sqrt{\frac{1}{n} \sum_{i=1}^n \mathbb{E}\{f(Z_i)^2\}}.
\]
We define the global and localized Rademacher complexities by
\[
\mathfrak{R}_n(\mathcal{F})
:=
\mathbb{E}\left[
\sup_{f \in \mathcal{F}}
\frac{1}{n} \sum_{i=1}^n \epsilon_i f(Z_i)\right],
\qquad
\mathfrak{R}_n(\mathcal{F},\delta)
:=
\mathbb{E}\left[
\sup_{\substack{f \in \mathcal{F} \\ \|f\| \le \delta}}
\frac{1}{n} \sum_{i=1}^n \epsilon_i f(Z_i)
\right],
\]
where $\epsilon_1,\ldots,\epsilon_n$ are i.i.d.\ Rademacher random variables, independent of $Z_1,\ldots,Z_n$.
We define the \emph{critical radius} of $\mathcal{F}$ as the smallest $\delta_n\in(0,\infty)$ such that $\mathfrak{R}_n(\mathcal{F},\delta_n) \le \delta_n^2$, that is,
\[
\delta_n := \inf \bigl\{\delta>0 : \mathfrak{R}_n(\mathcal{F},\delta) \le \delta^2 \bigr\}.
\]
Finally, define
\[
P_n f := \frac{1}{n}\sum_{i=1}^n f(Z_i),
\qquad
Pf := \frac{1}{n}\sum_{i=1}^n \mathbb{E}\{f(Z_i)\},
\qquad
P_n^\epsilon f := \frac{1}{n}\sum_{i=1}^n \epsilon_i f(Z_i).
\]
Let $\|\cdot\| := \|\cdot\|_{L^2(P)}$ and $\|\cdot\|_n := \|\cdot\|_{L^2(P_n)}$ denote the population and empirical $L^2$ norms. We say that a function class $\mathcal{F}$ is \emph{star-shaped} (with respect to $0$) if, for every $f\in\mathcal{F}$ and every $\lambda\in[0,1]$, we have $\lambda f \in \mathcal{F}$. We define the star hull of $\mathcal{F}$ as
\[
\mathrm{star}(\mathcal F):=\{t f:\ f\in\mathcal F,\ t\in[0,1]\}.
\]


\subsection{Uniform local concentration for empirical processes}

The following theorem is our main result and provides a local maximal inequality.
Related inequalities appear in \citet{bartlett2005local} and \citet{wainwright2019high};
see also \citet[Appendix~K, Lemmas~11--14]{foster2023orthogonal} and
\citet[Lemma~11]{van2025nonparametric}. Our proof largely follows the technique
outlined in Section~3.1.4 of \citet{bartlett2005local}.

\begin{theorem}[Uniform local concentration inequality]\label{theorem:loc_max_ineq}
Let $\mathcal{F}$ be star-shaped and satisfy $\sup_{f\in\mathcal{F}}\|f\|_\infty\le M$.
Let $\delta_n>0$ satisfy the critical radius condition
$\mathfrak{R}_n(\mathcal{F},\delta_n)\le \delta_n^2$.
Then there exists a universal constant $C>0$ such that, with probability at least $1-\eta$, for every $f\in\mathcal{F}$,
\[
(P_n-P)f
\leq
C \left( \|f\|\delta_n
+
\delta_n^2
+
(M \vee 1) \|f\| \sqrt{\frac{\log(1/\eta)}{n}}
+
(M^2 \vee 1)\frac{\log(1/\eta)}{n} \right).
\]
In particular, if $\delta_n \ \gtrsim\ (M \vee 1)\sqrt{\frac{\log(1/\eta)}{n}}$, then, up to universal constants, $(P_n-P)f
\lesssim
\|f\|\delta_n
+
\delta_n^2.$
\end{theorem}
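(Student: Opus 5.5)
We follow the peeling-free rescaling argument of Section~3.1.4 of \citet{bartlett2005local}, combining Talagrand's concentration inequality (in Bousquet's form) on a localized ball with the star-shaped scaling of Lemma~\ref{lem:rad_scaling}. Fix a radius $r\ge\delta_n$ and consider the localized class $\mathcal F(r):=\{f\in\mathcal F:\|f\|\le r\}$. Functions in $\mathcal F(r)$ satisfy $\|f\|_\infty\le M$ and $\Var f(Z)\le r^2$. Define
\[
Z_r:=\sup_{f\in\mathcal F(r)}(P_n-P)f .
\]
Step one applies Bousquet's version of Talagrand's inequality. It gives, with probability at least $1-\eta$,
\[
Z_r\le 2E Z_r + C\Bigl(r\sqrt{\log(1/\eta)/n}+M\log(1/\eta)/n\Bigr).
\]
Here the cross term $\sqrt{E Z_r\,M\log(1/\eta)/n}$ has been absorbed by AM--GM. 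Step two symmetrizes: $E Z_r\le 2\mathfrak R_n(\mathcal F,r)$. Because $\mathcal F$ is star-shaped, $r\mapsto \mathfrak R_n(\mathcal F,r)/r$ is nonincreasing, so for $r\ge\delta_n$
\[
\mathfrak R_n(\mathcal F,r)\le \frac{r}{\delta_n}\mathfrak R_n(\mathcal F,\delta_n)\le r\delta_n .
\]

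Step three removes the dependence on a single radius without peeling, via rescaling. Fix
\[
r:=\delta_n \vee (M\vee1)\sqrt{\log(1/\eta)/n}
\]
and work on the single event from step one, with $r\ge\delta_n$. Take any $f\in\mathcal F$. If $\|f\|\le r$, then $f\in\mathcal F(r)$, so
\[
(P_n-P)f\le Z_r\lesssim r\delta_n+r\sqrt{\log(1/\eta)/n}+M\log(1/\eta)/n .
\]
If $\|f\|>r$, set $g:=(r/\|f\|)f$. Then $g\in\mathcal F$ by star-shapedness and $\|g\|=r$, so
\[
(P_n-P)f=\frac{\|f\|}{r}(P_n-P)g\le \frac{\|f\|}{r}Z_r .
\]
In both cases
\[
(P_n-P)f\le \max\{1,\|f\|/r\}\,Z_r\le Z_r+\|f\|\,Z_r/r .
\]

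Step four is the algebra. Using $r\ge\delta_n$ and $r\ge (M\vee1)\sqrt{\log(1/\eta)/n}$, we get
\[
Z_r/r\lesssim \delta_n+\sqrt{\log(1/\eta)/n}+\frac{M\log(1/\eta)}{nr}\lesssim \delta_n+(M\vee1)\sqrt{\log(1/\eta)/n}.
\]
We also get
\[
Z_r\lesssim r^2\lesssim \delta_n^2+(M^2\vee1)\log(1/\eta)/n .
\]
Substituting these into the display of step three gives the claimed bound, with one event of probability $1-\eta$ serving all $f$ simultaneously. The final ``in particular'' statement is then immediate: when $\delta_n\gtrsim (M\vee1)\sqrt{\log(1/\eta)/n}$, the last two terms are dominated by $\|f\|\delta_n+\delta_n^2$.

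The main obstacle is the bookkeeping in step one. We must verify the form of Talagrand's inequality used, with variance proxy $r^2$ (the uncentered second moment bounds the variance) and envelope $2M$ after centering, and absorb $\sqrt{r\delta_n M\log(1/\eta)/n}$ into the stated terms by AM--GM. The rescaling trick itself is clean precisely because star-shapedness makes $\mathcal F(r)$ capture all directions at scale $r$.
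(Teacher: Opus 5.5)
Your proof is correct and is essentially the paper's argument. Both use a single application of Bousquet's inequality, symmetrization, monotonicity of $\delta\mapsto\mathfrak{R}_n(\mathcal F,\delta)/\delta$, and star-shaped rescaling in place of peeling, with the radius inflated to order $\delta_n+(M\vee1)\sqrt{\log(1/\eta)/n}$. The only difference is bookkeeping: the paper applies Bousquet to the self-normalized class $\{f/(\|f\|\vee\delta_n): f\in\mathcal F\}$, rescales inside its Rademacher complexity (Lemma~\ref{lem:self_normalize_rad}), and inflates $\delta_n$ at the end, whereas you concentrate on the ball $\mathcal F(r)$ at the already-inflated radius and rescale pathwise, which by star-shapedness gives exactly $\sup_{f\in\mathcal F}(P_n-P)f/(\|f\|\vee r)=Z_r/r$.
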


The proof of the theorem above relies on the following technical lemmas. The first lemma, due to \citet{bousquet2002bennett}, provides a finite-sample concentration inequality for the supremum of an empirical process around its expectation. This result is a Bennett/Bernstein-type refinement of Talagrand's concentration inequality, and its explicit dependence on the envelope and the maximal variance is particularly convenient for localization. For a related bound, see Theorem~3.27 of \cite{wainwright2019high}.

\begin{lemma}[Bousquet’s version of Talagrand’s inequality.]\label{lem:bousquet}
 Let $\mathcal{G}$ be a class of measurable
functions $g:\mathcal{O}\to\mathbb{R}$ such that
$\sup_{g\in\mathcal{G}}\|g\|_\infty \le M$ and
$\sigma^2 := \sup_{g\in\mathcal{G}} \Var\{g(O)\} < \infty$.
Then there exists a universal constant $c>0$ such that, for all $u\ge 0$, with
probability at least $1-e^{-u}$,
\[
\sup_{g\in\mathcal{G}} (P_n-P)g
\le
\mathbb{E}\Bigl[\sup_{g\in\mathcal{G}} (P_n-P)g\Bigr]
+
c\Bigl(
  \sqrt{\frac{u\sigma^2}{n}}
  + \frac{Mu}{n}
\Bigr).
\]
\end{lemma}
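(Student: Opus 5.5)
The plan is to derive the display directly from Bousquet's Bennett-type inequality for the supremum of an empirical process, in its original unnormalized form, and then simplify its variance proxy.

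\textbf{Step 1 (setup).} Set $\bar g := g - Pg$. Then $\|\bar g\|_\infty \le 2M$ and $\Var\{\bar g(O)\} \le \sigma^2$. Define
\[
Z := \sup_{g\in\mathcal G}\sum_{i=1}^n \bar g(O_i) = n \sup_{g\in\mathcal G}(P_n-P)g .
\]
Bousquet's theorem \citep{bousquet2002bennett} is stated for centered functions bounded by $b:=2M$, and this setup also covers countable classes or separable versions. It gives, for all $u\ge 0$, with probability at least $1-e^{-u}$,
\[
Z \le \mathbb{E}Z + \sqrt{2u\,(n\sigma^2 + 2b\,\mathbb{E}Z)} + \frac{bu}{3}.
\]
Dividing by $n$ and using $\sqrt{a+b}\le\sqrt a+\sqrt b$, this becomes
\[
\sup_g (P_n-P)g \le \mathbb{E}\Bigl[\sup_g (P_n-P)g\Bigr] + \sqrt{\frac{2u\sigma^2}{n}} + 2\sqrt{\frac{2Mu}{n}\,\mathbb{E}\Bigl[\sup_g(P_n-P)g\Bigr]} + \frac{2Mu}{3n}.
\]
Every term is of the claimed form except the cross term $T := 2\sqrt{2MuE_n/n}$, where $E_n := \mathbb{E}\sup_g(P_n-P)g$.

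\textbf{Step 2 (the cross term).} This is the main obstacle, because the statement puts coefficient exactly one on $E_n$. My first attempt will be a case split.
\begin{itemize}
\item If $E_n \le \sigma^2/M$, then $T \le 2\sqrt{2u\sigma^2/n}$, which is absorbed into $c\sqrt{u\sigma^2/n}$.
\item If $E_n \le Mu/n$, then $T \le 2\sqrt2\,Mu/n$, which is absorbed into $cMu/n$.
\end{itemize}

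\textbf{Step 3 (the remaining regime).} The leftover case is $E_n$ larger than both $\sigma^2/M$ and $Mu/n$. Here I would use Young's inequality (Lemma~\ref{lem:young}): $T \le \epsilon E_n + 2Mu/(\epsilon n)$. This yields the display with $E_n$ multiplied by $1+\epsilon$ and $c$ replaced by $c(1+\epsilon^{-1})$. I expect this regime to be genuinely non-removable. When $E_n \gg \sigma\sqrt{u/n} \vee Mu/n$, the fluctuation of $Z$ is governed by Bousquet's variance proxy $n\sigma^2 + 2b\,\mathbb{E}Z$, which can be dominated by the $\mathbb{E}Z$ part. I therefore do not see how to get coefficient exactly one on $E_n$ in that regime without an extra term of order $\sqrt{ME_n u/n}$.

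So the honest conclusion of the plan is the following. The argument proves the stated inequality in the two regimes of Step 2. Otherwise it proves the version with $(1+\epsilon)E_n$, or equivalently with the additional term $\sqrt{ME_n u/n}$, for which $\epsilon=1$ is a convenient choice.

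\textbf{Step 4 (downstream use).} In the proof of Theorem~\ref{theorem:loc_max_ineq}, the lemma is applied to a localized class in which $E_n \lesssim \mathfrak{R}_n(\mathcal F,\delta)\lesssim \delta\delta_n$. At that point any universal multiplicative factor on $E_n$ is harmless, and so is the extra term $\sqrt{ME_nu/n}\lesssim \delta\delta_n + M u/n$. Hence the localization argument goes through unchanged with the $(1+\epsilon)$ version. That is what I would actually prove and record, noting explicitly that the coefficient-one form holds in the two regimes identified in Step 2.
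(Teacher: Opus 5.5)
Your diagnosis is correct. The obstacle you hit in Step~3 is not an artifact of your method: the lemma as stated is false, so no argument can give coefficient one on the expectation with only $\sigma^2$ and $M$ in the deviation term. The paper gives no proof of its own; it only cites \citet{bousquet2002bennett}. Bousquet's theorem has variance proxy $n\sigma^2+2b\,\mathbb E Z$, exactly as in your Step~1, and the $\mathbb E Z$ part cannot be dropped.

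\emph{Counterexample.}
\begin{itemize}
\item[(a)] Let $O=0$ with probability $1/2$, and otherwise let $O$ be uniform on $\{1,\dots,N\}$, with $N=n^2$. Take the finite class $\mathcal G=\{\mathbf 1_A-P(A):\ A\subseteq\{1,\dots,N\},\ |A|\le n\}$. Then $M=1$ and $\sigma^2\le n/(2N)=1/(2n)$.
\item[(b)] Let $K:=\#\{i:O_i\neq 0\}\sim\mathrm{Bin}(n,1/2)$ and $S:=\sup_{g\in\mathcal G}(P_n-P)g$. Choosing $A=\{O_i:O_i\neq 0\}$ gives $K-\tfrac12\le nS\le K$. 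Since $n\,\mathbb E S\le \mathbb E K=n/2$, this yields $n(S-\mathbb E S)\ge K-n/2-\tfrac12$.
\item[(c)] For $u=1$, the claimed bound says $n(S-\mathbb E S)\le c(\sqrt{n\sigma^2}+1)\le 2c$ with probability at least $1-e^{-1}$.
\item[(d)] By the central limit theorem, $\Pr(K-n/2>2c+\tfrac12)\to 1/2>e^{-1}$. So the claimed bound fails for large $n$, for any universal $c$.
\end{itemize}
Here $\mathbb E S\approx 1/2$ dominates both $\sigma^2/M$ and $Mu/n$, so this is exactly your third regime. The fluctuations of $S$ are of order $\sqrt{M\,\mathbb E S\,u/n}$, which is precisely the cross term you could not remove.

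The correct statement is the one you propose to record:
\[
\sup_g(P_n-P)g\le(1+\epsilon)\,\mathbb E\Bigl[\sup_g(P_n-P)g\Bigr]+c(1+\epsilon^{-1})\Bigl(\sqrt{u\sigma^2/n}+Mu/n\Bigr).
\]
Equivalently, keep coefficient one and replace $\sigma^2$ by $\sigma^2+2b\,\mathbb E[\sup_g(P_n-P)g]$. This is the form used in Theorem~2.1 of \citet{bartlett2005local}. Your derivation of it from Bousquet, via $\sqrt{a+b}\le\sqrt a+\sqrt b$ and Young's inequality, is right.

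Your Step~4 repair is also right, with one correction of detail. In the proof of Theorem~\ref{theorem:loc_max_ineq}, the lemma is applied to the self-normalized class $\{f/(\|f\|\vee\delta_n)\}$, not to a localized class. That class has envelope $M/\delta_n$ and expectation term at most $4\delta_n$. Taking $\epsilon=1$ therefore only replaces $4\delta_n$ by $8\delta_n$ and doubles the constant on the $Mu/(\delta_n n)$ term.

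The other uses also tolerate the amendment. These are Lemma~\ref{lem:fixed_delta_local_conc_lipschitz_product_increments}, Theorem~\ref{theorem:selfnorm_product_max}, and the peeling template of Appendix~\ref{appendix::localuniform}. Each bounds the expectation term by a Rademacher complexity or envelope $\phi_n$ only up to constants. So the paper's downstream results survive once the lemma is amended to your version.
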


The next lemma shows that the expected supremum term $E\bigl[\sup_{g\in\mathcal{G}} (P_n-P)g\bigr]$ in Bousquet's inequality can be upper bounded by the (unlocalized) Rademacher complexity of the function class. Rademacher symmetrization is a standard technique, used for example to derive Dudley-type entropy bounds in \citet{van1996weak} and in the study of Rademacher complexities in \citet{bartlett2002rademacher}  (see Lemma~2.3.1 of \citet{van1996weak}; Proposition 4.11 of \citet{wainwright2019high}).

\begin{lemma}[Rademacher symmetrization bound] \label{lem:rademacher}
For any measurable class of functions $\mathcal{G}$,
\[
\mathbb{E}\Bigl[\sup_{g\in\mathcal{G}} (P_n-P)g\Bigr]
\ \le\
2\,\mathfrak{R}_n(\mathcal{G}).
\]
\end{lemma}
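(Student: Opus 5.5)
The plan is the classical ghost-sample symmetrization argument of Lemma~2.3.1 in \citet{van1996weak}. It does not use identical distribution, so it covers the independent, non-identically distributed setting of this appendix, where $Pg=n^{-1}\sum_i \mathbb{E}\{g(Z_i)\}$.

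\emph{Ghost sample and Jensen.} Let $Z_1',\ldots,Z_n'$ be an independent copy of $Z_1,\ldots,Z_n$, with $Z_i'$ distributed as $Z_i$, and write $P_n'$ for its empirical measure. Then $Pg=\mathbb{E}'[P_n' g]$, where $\mathbb{E}'$ integrates over the ghost sample only. Hence
\[
\sup_{g\in\mathcal G}(P_n-P)g=\sup_{g\in\mathcal G}\mathbb{E}'\bigl[(P_n-P_n')g\bigr]\le \mathbb{E}'\Bigl[\sup_{g\in\mathcal G}(P_n-P_n')g\Bigr].
\]
Taking expectations over the original sample gives $\mathbb{E}\sup_g (P_n-P)g\le \mathbb{E}\sup_g \frac1n\sum_i\{g(Z_i)-g(Z_i')\}$.

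\emph{Sign insertion.} For each $i$, the pair $(Z_i,Z_i')$ is exchangeable, so $g(Z_i)-g(Z_i')$ and $\epsilon_i\{g(Z_i)-g(Z_i')\}$ have the same joint law, jointly over $i$ and uniformly in $g$. This holds for any fixed sign vector, by swapping $Z_i\leftrightarrow Z_i'$ for the indices with $\epsilon_i=-1$, and hence also for independent Rademacher signs. Therefore the right-hand side equals $\mathbb{E}\sup_g \frac1n\sum_i \epsilon_i\{g(Z_i)-g(Z_i')\}$.

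\emph{Splitting the supremum.} Using $\sup(a-b)\le \sup a+\sup(-b)$ and the fact that $-\epsilon_i$ has the same law as $\epsilon_i$, this is at most
\[
\mathbb{E}\sup_g P_n^\epsilon g+\mathbb{E}\sup_g \frac1n\sum_i \epsilon_i g(Z_i')=2\,\mathfrak{R}_n(\mathcal G).
\]

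The only delicate points are measurability and the exchange of supremum and expectation when $\mathcal G$ is uncountable. I would handle these by assuming $\mathcal G$ is pointwise measurable, or countable, and otherwise using outer expectations as in \citet{van1996weak}. I expect this technicality to be the main obstacle, since the algebra itself is routine. Integrability is not an issue because the lemma is applied to uniformly bounded classes.
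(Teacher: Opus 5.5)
Your proposal is correct and takes essentially the same approach as the paper. The paper gives no proof of its own and simply cites Lemma~2.3.1 of \citet{van1996weak} and Proposition~4.11 of \citet{wainwright2019high}, whose ghost-sample, sign-insertion and supremum-splitting steps you reproduce, correctly adapted to independent non-identically distributed data, the one-sided (no absolute value) $\mathfrak{R}_n$ used here, and outer expectations for measurability.
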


The following lemma restates Lemma 3.4 of \cite{bartlett2005local}.

\begin{lemma}[Scaling property of localized Rademacher complexity]\label{lem:rad_scaling}
Let \(\mathcal{G}\) be star-shaped. Then, for all \(t\in[0,1]\),
\[
\mathfrak{R}_n(\mathcal{G},t\delta)\ \ge\ t\,\mathfrak{R}_n(\mathcal{G},\delta),
\]
and consequently \(\delta \mapsto \mathfrak{R}_n(\mathcal{G},\delta)/\delta\)
is nonincreasing on \((0,\infty)\).
\end{lemma}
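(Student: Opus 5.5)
The plan is a direct comparison of the two suprema that define $\mathfrak{R}_n(\mathcal G,t\delta)$ and $\mathfrak{R}_n(\mathcal G,\delta)$. It is carried out pathwise, i.e.\ for each fixed realization of $(Z_i,\epsilon_i)_{i=1}^n$, and then expectations are taken.

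Fix $t\in(0,1]$ and $\delta>0$, and consider any $f\in\mathcal G$ with $\|f\|\le\delta$. Because $\mathcal G$ is star-shaped, $tf\in\mathcal G$. By homogeneity of the norm, $\|tf\|=t\|f\|\le t\delta$. Hence $tf$ is feasible in the supremum defining $\mathfrak{R}_n(\mathcal G,t\delta)$, which gives the pathwise bound
\[
\sup_{g\in\mathcal G,\ \|g\|\le t\delta} P_n^\epsilon g
\ \ge\ P_n^\epsilon(tf) \;=\; t\,P_n^\epsilon f .
\]
Taking the supremum over all such $f$ and using $t\ge 0$ (so that $\sup_f t\,P_n^\epsilon f = t\sup_f P_n^\epsilon f$) yields
\[
\sup_{\|g\|\le t\delta} P_n^\epsilon g \;\ge\; t\sup_{\|f\|\le\delta}P_n^\epsilon f .
\]
Taking expectations over $(Z_i,\epsilon_i)$ then gives $\mathfrak{R}_n(\mathcal G,t\delta)\ge t\,\mathfrak{R}_n(\mathcal G,\delta)$. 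The case $t=0$ is separate: star-shapedness puts $0\in\mathcal G$, so the localized supremum is nonnegative and $\mathfrak{R}_n(\mathcal G,0)\ge 0 = 0\cdot\mathfrak{R}_n(\mathcal G,\delta)$.

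For the monotonicity claim, take $0<\delta\le\delta'$ and apply the inequality with $\delta'$ in place of $\delta$ and $t:=\delta/\delta'\in(0,1]$. This gives $\mathfrak{R}_n(\mathcal G,\delta)\ge(\delta/\delta')\,\mathfrak{R}_n(\mathcal G,\delta')$, and dividing by $\delta$ gives
\[
\frac{\mathfrak{R}_n(\mathcal G,\delta)}{\delta}\;\ge\;\frac{\mathfrak{R}_n(\mathcal G,\delta')}{\delta'} ,
\]
so $\delta\mapsto\mathfrak{R}_n(\mathcal G,\delta)/\delta$ is nonincreasing on $(0,\infty)$.

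There is no substantive obstacle here. The only points needing care are bookkeeping. First, the supremum must commute with multiplication by a nonnegative scalar, which is why $t\ge0$ is used explicitly. Second, the expectations of the suprema must be well defined. For the latter I would adopt the standing convention, implicit in the definition of $\mathfrak{R}_n$, that suprema are measurable, for instance via a countable dense subclass, or else interpret $\mathbb E$ as an outer expectation; the pathwise inequality is preserved under either reading.
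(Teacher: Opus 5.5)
Your proposal is correct and is essentially the paper's proof. Both arguments use star-shapedness to map the feasible set $\{f\in\mathcal G:\|f\|\le\delta\}$ into $\{h\in\mathcal G:\|h\|\le t\delta\}$ via $f\mapsto tf$, pull out the nonnegative factor $t$, and then obtain monotonicity of $\mathfrak{R}_n(\mathcal G,\delta)/\delta$ by setting $t=\delta/\delta'$. Your extra remarks on the $t=0$ case and on measurability of the suprema are harmless bookkeeping that the paper leaves implicit.
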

\begin{proof}
Fix \(\delta>0\) and \(t\in[0,1]\). If \(\|f\|\le \delta\) and \(\mathcal{G}\) is
star-shaped, then \(tf\in\mathcal{G}\) and \(\|tf\|=t\|f\|\le t\delta\). Hence,
the collection \(\{tf: f\in\mathcal{G},\ \|f\|\le \delta\}\) is contained in
\(\{h\in\mathcal{G}: \|h\|\le t\delta\}\), and therefore
\begin{align*}
\mathfrak{R}_n(\mathcal{G},t\delta)
&=
\mathbb{E}\Bigl[\sup_{\substack{h\in\mathcal{G}\\ \|h\|\le t\delta}}
P_n^\epsilon h\Bigr]
\ \ge\
\mathbb{E}\Bigl[\sup_{\substack{f\in\mathcal{G}\\ \|f\|\le \delta}}
P_n^\epsilon (t f)\Bigr]
=
t\,\mathbb{E}\Bigl[\sup_{\substack{f\in\mathcal{G}\\ \|f\|\le \delta}}
P_n^\epsilon f\Bigr]
=
t\,\mathfrak{R}_n(\mathcal{G},\delta).
\end{align*}

For monotonicity, let \(0<\delta_1\le \delta_2\) and set \(t=\delta_1/\delta_2\in(0,1]\).
Applying the first claim with \(\delta=\delta_2\) gives
\(\mathfrak{R}_n(\mathcal{G},\delta_1)\ge (\delta_1/\delta_2)\mathfrak{R}_n(\mathcal{G},\delta_2)\).
Dividing by \(\delta_1\) yields
\(\mathfrak{R}_n(\mathcal{G},\delta_1)/\delta_1 \ge \mathfrak{R}_n(\mathcal{G},\delta_2)/\delta_2\),
so \(\delta\mapsto \mathfrak{R}_n(\mathcal{G},\delta)/\delta\) is nonincreasing.
\end{proof}

The next lemma bounds the Rademacher complexity of the self-normalized class
$\{f/(\|f\|\vee \delta_n): f\in\mathcal{F}\}$, which yields uniform
high-probability bounds for ratio-type functionals such as $(P-P_n)f/\|f\|$.
This self-normalization (via star-shapedness) is a standard device in the
local Rademacher complexity literature; see, e.g., \citet{bartlett2005local, mendelson2002improving}.

\begin{lemma}[Rademacher complexity for self-normalized classes]\label{lem:self_normalize_rad}
Let $\mathcal{F}$ be star-shaped. Fix $\delta_n>0$ and define $\mathcal{G}:=\bigl\{g=f/(\|f\|\vee \delta_n): f\in\mathcal{F}\bigr\}$. Then
\[
\mathfrak{R}_n(\mathcal{G})
\ \le\
\frac{2}{\delta_n}\,\mathfrak{R}_n(\mathcal{F},\delta_n).
\]
\end{lemma}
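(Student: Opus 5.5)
The plan is a peeling argument over dyadic shells of the $L^2$ norm. The star-shaped hull lets us rescale functions from each shell down to the single scale $\delta_n$. Once everything sits at that scale, the local Rademacher complexity at $\delta_n$ controls the supremum.

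Fix a sign vector $\epsilon$ and the sample, and split $\mathcal G$ into two pieces:
\[
\mathcal G_0:=\{f:\ f\in\mathcal F,\ \|f\|\le\delta_n\},
\qquad
\mathcal G_1:=\{f/\|f\|:\ f\in\mathcal F,\ \|f\|>\delta_n\}.
\]
On $\mathcal G_0$ we have $g=f/\delta_n$, so
\[
\sup_{g\in\mathcal G_0}P_n^\epsilon g=\delta_n^{-1}\sup_{f\in\mathcal F,\|f\|\le\delta_n}P_n^\epsilon f .
\]
For $f$ with $\|f\|>\delta_n$, set $t:=\delta_n/\|f\|\in(0,1)$. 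Star-shapedness gives $tf\in\mathcal F$ with $\|tf\|=\delta_n$. Hence $f/\|f\|=(tf)/\delta_n$, and
\[
\sup_{g\in\mathcal G_1}P_n^\epsilon g\le\delta_n^{-1}\sup_{h\in\mathcal F,\|h\|\le\delta_n}P_n^\epsilon h .
\]
So in fact no peeling is needed: $\mathcal G\subseteq\delta_n^{-1}\{h\in\mathcal F:\|h\|\le\delta_n\}$ pointwise as a set of functions.

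Since the supremum over $\mathcal G=\mathcal G_0\cup\mathcal G_1$ is at most the maximum of the two suprema, each bounded by the same quantity, we get
\[
\sup_{g\in\mathcal G}P_n^\epsilon g\le\delta_n^{-1}\sup_{\|h\|\le\delta_n}P_n^\epsilon h .
\]
Taking expectations gives $\mathfrak R_n(\mathcal G)\le\delta_n^{-1}\mathfrak R_n(\mathcal F,\delta_n)$, which is stronger than the stated factor of $2$.

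The only subtlety is the sign of the supremum. The bound needs to be valid even if the supremum is negative. It is, because the containment of sets gives pointwise domination of the suprema regardless of sign.

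If one is uneasy about measurability, or about the normalization $\|\cdot\|$ being the averaged population norm for non-identically distributed $Z_i$, the factor $2$ offers slack. One can bound the two pieces separately and add them, since each supremum is nonnegative when $0\in\mathcal F$ by star-shapedness. That yields exactly the constant $2/\delta_n$.

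The main step to check carefully is therefore the rescaling identity $f/\|f\|=(tf)/\delta_n$ together with membership $tf\in\mathcal F$. Everything else is monotonicity of the supremum and linearity of $P_n^\epsilon$.
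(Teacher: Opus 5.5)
Your argument is correct and is essentially the paper's proof: split $\mathcal F$ at $\|f\|=\delta_n$, and use star-shapedness to rescale each $f$ with $\|f\|>\delta_n$ to $tf$ with $\|tf\|=\delta_n$, so that $\mathcal G\subseteq\delta_n^{-1}\{h\in\mathcal F:\|h\|\le\delta_n\}$. You bound the maximum of the two suprema directly by the small-ball supremum, which gives the sharper constant $1/\delta_n$ and is slightly cleaner than the paper's step of bounding the maximum by the sum of the two suprema (this tacitly needs the second to be nonnegative) and then by twice the first; by the same token, your fallback remark is slightly off, since the supremum over $\mathcal G_1$ can be negative, but nothing rests on it.
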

\begin{proof}
Define
\[
S(\epsilon)
:=
\sup_{g\in\mathcal{G}} (P_n^\epsilon)g
=
\sup_{f\in\mathcal{F}} \frac{(P_n^\epsilon)f}{\|f\|\vee \delta_n}.
\]
Let $\mathcal{F}_{\le}:=\{f\in\mathcal{F}:\|f\|\le \delta_n\}$ and
$\mathcal{F}_{>}:=\{f\in\mathcal{F}:\|f\|>\delta_n\}$. Then
\[
S(\epsilon)
=
\max\Bigl\{
\sup_{f\in\mathcal{F}_{\le}}\frac{(P_n^\epsilon)f}{\delta_n},\,
\sup_{f\in\mathcal{F}_{>}}\frac{(P_n^\epsilon)f}{\|f\|}
\Bigr\}
\le A(\epsilon)+B(\epsilon),
\]
where
\[
A(\epsilon):=\sup_{f\in\mathcal{F}:\ \|f\|\le \delta_n}\frac{(P_n^\epsilon)f}{\delta_n},
\qquad
B(\epsilon):=\sup_{f\in\mathcal{F}:\ \|f\|>\delta_n}\frac{(P_n^\epsilon)f}{\|f\|}.
\]
We show that $B(\epsilon)\le A(\epsilon)$. Fix any $f\in\mathcal{F}$ with
$\|f\|>\delta_n$, set $\lambda:=\delta_n/\|f\|\in(0,1)$, and define $h:=\lambda f$.
By star-shapedness, $h\in\mathcal{F}$ and $\|h\|=\delta_n$. Moreover,
\[
\frac{(P_n^\epsilon)f}{\|f\|}
=
\frac{(P_n^\epsilon)h}{\delta_n}
\le
\sup_{u\in\mathcal{F}:\ \|u\|\le \delta_n}\frac{(P_n^\epsilon)u}{\delta_n}
=
A(\epsilon).
\]
Taking the supremum over $f\in\mathcal{F}$ with $\|f\|>\delta_n$ yields
$B(\epsilon)\le A(\epsilon)$, and hence $S(\epsilon)\le 2A(\epsilon)$, that is,
\[
\sup_{g\in\mathcal{G}} (P_n^\epsilon)g
\le
\frac{2}{\delta_n}\sup_{f\in\mathcal{F}:\ \|f\|\le \delta_n}(P_n^\epsilon)f.
\]
Taking expectations over $\epsilon$ and over the data completes the proof.
\end{proof}

We now provide the proof of our main result.

\begin{proof}[\textbf{Proof of Theorem \ref{theorem:loc_max_ineq}}]
Define the self-normalized class $\mathcal{G}:=\{g=f/(\|f\|\vee \delta_n): f\in\mathcal{F}\}$, where, by definition, $\delta_n$ satisfies the critical inequality $\mathfrak{R}_n(\mathcal{F},\delta_n)\le \delta_n^2$. Since $\sup_{f\in\mathcal{F}}\|f\|_\infty \le M$ and, for any $f\in\mathcal{F}$,
\[
\Bigl\|\frac{f}{\|f\|\vee \delta_n}\Bigr\|
=
\frac{\|f\|}{\|f\|\vee \delta_n}
\le 1,
\]
we have
\[
\sup_{g\in\mathcal{G}}\|g\|_\infty \le \frac{M}{\delta_n},
\qquad
\sup_{g\in\mathcal{G}}\|g\|\le 1.
\]
Hence, by Bousquet's inequality in Lemma~\ref{lem:bousquet}, there exists a universal constant $c>0$ such that, for all $u\ge 0$, with probability at least $1-e^{-u}$,
\[
\sup_{g\in\mathcal{G}} (P_n-P)g
\le
\mathbb{E}\Bigl[\sup_{g\in\mathcal{G}} (P_n-P)g\Bigr]
+
c\Bigl(
  \sqrt{\frac{u}{n}}
  + \frac{Mu}{\delta_n n}
\Bigr).
\]
By the Rademacher symmetrization bound in Lemma~\ref{lem:rademacher},
\[
\mathbb{E}\Bigl[\sup_{g\in\mathcal{G}} (P_n-P)g\Bigr]
\le
2\mathbb{E}\Bigl[\sup_{g\in\mathcal{G}} (P_n^\epsilon)g\Bigr]
=
2\mathfrak{R}_n(\mathcal{G}).
\]
By Lemma~\ref{lem:self_normalize_rad},
\[
\mathfrak{R}_n(\mathcal{G})
\le
\frac{2}{\delta_n}\,\mathfrak{R}_n(\mathcal{F},\delta_n).
\]
Thus, combining the above, there exists a universal constant $C>0$ such that, for all $u\ge 0$, with probability at least $1-e^{-u}$,
\[
\sup_{g\in\mathcal{G}} (P_n-P)g
\le
\frac{4}{\delta_n}\,\mathfrak{R}_n(\mathcal{F},\delta_n)
+
C\Bigl(
  \sqrt{\frac{u}{n}}
  + \frac{Mu}{\delta_n n}
\Bigr).
\]

Writing out $\mathcal{G}$, this means that, for all $u\ge 0$, with probability at least $1-e^{-u}$,
\[
\sup_{f\in\mathcal{F}}
\frac{(P_n-P)f}{\|f\|\vee \delta_n}
\le
\frac{4}{\delta_n}\,\mathfrak{R}_n(\mathcal{F},\delta_n)
+
C\Bigl(
  \sqrt{\frac{u}{n}}
  + \frac{Mu}{\delta_n n}
\Bigr).
\]
Thus, on the same event, for every $f\in\mathcal{F}$,
\[
(P_n-P)f
\le
(\|f\|\vee \delta_n)\,
\Biggl[
\frac{4}{\delta_n}\,\mathfrak{R}_n(\mathcal{F},\delta_n)
+
C\Bigl(
  \sqrt{\frac{u}{n}}
  + \frac{Mu}{\delta_n n}
\Bigr)
\Biggr].
\]

By definition of the critical radius $\delta_n$, we have $\mathfrak{R}_n(\mathcal{F},\delta_n)\le \delta_n^2$, and hence
\[
\frac{4}{\delta_n}\,\mathfrak{R}_n(\mathcal{F},\delta_n) \le 4\delta_n.
\]
Therefore, there exists a universal constant $C>0$ such that, for all $u\ge 0$, with probability at least $1-e^{-u}$, for every $f\in\mathcal{F}$,
\[
(P_n-P)f
\le
(\|f\|\vee \delta_n)\,
\Biggl[
4\delta_n
+
C\Bigl(
  \sqrt{\frac{u}{n}}
  + \frac{Mu}{\delta_n n}
\Bigr)
\Biggr].
\]
 Equivalently, absorbing constants into a universal $c\in(0,\infty)$ and using
$(\|f\|\vee\delta_n)\delta_n=\|f\|\delta_n \vee \delta_n^2$, we may write
\[
(P_n-P)f
\le
c\Biggl[
(\|f\|\delta_n \vee \delta_n^2)
+
(\|f\|\vee\delta_n)\Bigl(
  \sqrt{\frac{u}{n}}
  + \frac{Mu}{\delta_n n}
\Bigr)
\Biggr].
\]

Setting $u=\log(1/\eta)$ and using $(a\vee b)\le a+b$, we obtain that there exists a
universal constant $C>0$ such that, with probability at least $1-\eta$, for every
$f\in\mathcal{F}$,
\[
(P_n-P)f
\le
C\Biggl[
\delta_n\|f\|+\delta_n^2
+
(\|f\|+\delta_n)\sqrt{\frac{\log(1/\eta)}{n}}
+
(\|f\|+\delta_n)\frac{M\log(1/\eta)}{\delta_n n}
\Biggr].
\]
Recall that $\eta\in(0,1)$ and assume that, up to universal constants,
\[
\delta_n \ \gtrsim\ M\sqrt{\frac{\log(1/\eta)}{n}}.
\]
Then
\[
\frac{M\log(1/\eta)}{\delta_n n}
=
\frac{M}{\delta_n}\cdot \frac{\log(1/\eta)}{n}
\ \lesssim\
\sqrt{\frac{\log(1/\eta)}{n}},
\]
so the term $\{M\log(1/\eta)/(\delta_n n)\}\|f\|$ can be absorbed into
$\|f\|\sqrt{\log(1/\eta)/n}$. Moreover,
\[
\delta_n\sqrt{\frac{\log(1/\eta)}{n}}
\ \gtrsim\
M\frac{\log(1/\eta)}{n},
\]
so the term $(M\log(1/\eta)/n)$ is absorbed into
$\delta_n\sqrt{\log(1/\eta)/n}$. Regrouping yields that there exists a universal
constant $C>0$ such that, with probability at least $1-\eta$, for every
$f\in\mathcal{F}$,
\[
(P_n-P)f
\le
C\Biggl[
\|f\|\delta_n
+
\delta_n^2
+
(\|f\| + \delta_n)\sqrt{\frac{\log(1/\eta)}{n}}
\Biggr].
\]
Therefore, 
\[
(P_n-P)f
\le
C(\|f\|+\delta_n)\left(\sqrt{\frac{\log(1/\eta)}{n}}+\delta_n\right).
\]
Moreover, if $\delta_n \ \gtrsim\ (M \vee 1)\sqrt{\frac{\log(1/\eta)}{n}}$, then
\[
(P_n-P)f
\lesssim
2C\bigl(\|f\|\delta_n +\delta_n^2\bigr),
\]
which gives the second bound.

Finally, if $\widetilde \delta_n$ satisfies $\mathfrak{R}_n(\mathcal{F},\widetilde \delta_n)\le \widetilde \delta_n^2$, then $\delta_n := \widetilde \delta_n + (M \vee 1)\sqrt{\frac{\log(1/\eta)}{n}}$ satisfies $\delta_n \ \gtrsim\ (M \vee 1)\sqrt{\frac{\log(1/\eta)}{n}}$. Hence, applying the bound with this choice of $\delta_n$ and expanding squares, we conclude that, on the same event,
\[
(P_n-P)f
\lesssim
\|f\|\widetilde \delta_n
+
\widetilde \delta_n^2
+
(M \vee 1)\|f\| \sqrt{\frac{\log(1/\eta)}{n}}
+
(M^2 \vee 1)\frac{\log(1/\eta)}{n},
\]
which gives the first bound.

\end{proof}

\subsection{Uniform local concentration for Lipschitz transformations}
\label{appendix::locallip}

The following result is a direct corollary of Theorem~14.20(b) in
\cite{wainwright2019high} (see Lemma~14 of \cite{foster2023orthogonal} for a vector-valued extension). The main
differences are that we state the bound in PAC-style form and, rather than imposing a
lower-bound condition on the critical radius \(\delta_n\), we include the
additional term \(L^2\,\frac{\log\log(M e n)}{n}\).

\begin{theorem}[Uniform local concentration (Lipschitz transform; PAC-style form)]
\label{thm:uniform_local_conc_peeling}
Fix \(f_0\in\mathcal F\) and assume the difference class
\(\mathcal F-f_0:=\{f-f_0:\ f\in\mathcal F\}\) is star-shaped. Let
\(\varphi :\mathcal Z\times\mathcal F \rightarrow \mathbb{R}\) be a map with
\(\sup_{f\in\mathcal F}\|\varphi(\cdot,f)-\varphi(\cdot,f_0)\|_\infty \leq M\) for some $M \in [1, \infty)$.
Assume \(\varphi\) is pointwise Lipschitz: there exists \(L\in(0,\infty)\) such
that for all \(f,f'\in\mathcal F\),
\[
\bigl|\varphi(z,f)-\varphi(z,f')\bigr|
\le L\,|f(z)-f'(z)|,\qquad z\in\mathcal Z.
\]
Let \(\delta_n>0\) satisfy the critical-radius condition
\begin{equation}
\mathfrak R_n\bigl(\mathcal F-f_0,\delta_n\bigr)
\ \lesssim\
\delta_n^2.
\label{eq:crit_F}
\end{equation}
Then there exists a universal constant \(C\in(0,\infty)\) such that for every
\(\eta\in(0,1)\), with probability at least \(1-\eta\), the following holds
simultaneously for all \(f\in\mathcal F\):
\begin{align}
\bigl|(P_n-P)\{\varphi(\cdot,f)-\varphi(\cdot,f_0)\}\bigr|
\;\le\;
C\Biggl[
&L\,\delta_n\,\|f-f_0\|
+
L^2\,\delta_n^2
+
L^2\,\frac{\log\log(M e n)}{n}\\
& \quad +
L\,\|f-f_0\|\sqrt{\frac{\log(1/\eta)}{n}}
+
\frac{M\log(1/\eta)}{n}
\Biggr].
\label{eq:uniform_local_conc_peeling}
\end{align}
\end{theorem}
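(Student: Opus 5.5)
The plan is a peeling argument over shells of $\|f-f_0\|$, using Bousquet's inequality (Lemma~\ref{lem:bousquet}) on each shell and Ledoux--Talagrand contraction to reduce to the localized Rademacher complexity of $\mathcal F-f_0$.

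\textbf{Step 1: a single localized ball.} Write $h_f:=\varphi(\cdot,f)-\varphi(\cdot,f_0)$. For a fixed radius $r>0$, set $\mathcal H(r):=\{h_f: f\in\mathcal F,\ \|f-f_0\|\le r\}$. By pointwise Lipschitzness, $\|h_f\|\le L\|f-f_0\|\le Lr$ and $\|h_f\|_\infty\le M$. Lemma~\ref{lem:bousquet}, applied to $\mathcal H(r)$ and to $-\mathcal H(r)$, gives with probability at least $1-2e^{-u}$ the bound
\[
\sup_{\mathcal H(r)}|(P_n-P)h| \le \mathbb E\sup_{\mathcal H(r)}|(P_n-P)h| + c\bigl(Lr\sqrt{u/n}+Mu/n\bigr).
\]
For the expectation, first apply symmetrization (Lemma~\ref{lem:rademacher}). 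Then, since each $h_f(z)$ is an $L$-Lipschitz function of $f(z)-f_0(z)$ that vanishes at $0$, the Ledoux--Talagrand contraction inequality yields $\mathbb E\sup\lesssim L\,\mathfrak R_n(\mathcal F-f_0,r)$, up to the usual factor of $2$ for absolute values.

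\textbf{Step 2: scaling via star-shapedness.} Because $\mathcal F-f_0$ is star-shaped, Lemma~\ref{lem:rad_scaling} gives $\mathfrak R_n(\mathcal F-f_0,r)\le (r/\delta_n)\,\mathfrak R_n(\mathcal F-f_0,\delta_n)\lesssim r\delta_n$ for $r\ge\delta_n$. For $r\le \delta_n$, monotonicity gives $\mathfrak R_n(\mathcal F-f_0,r)\lesssim\delta_n^2$.

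\textbf{Step 3: peeling.} Let $r_0:=\delta_n\vee n^{-1/2}$ and $r_k:=2^k r_0$ for $k=0,1,\ldots,K$. Since $\|f-f_0\|\le\|f-f_0\|_\infty$ and $\mathcal F$ is bounded, the shells are exhausted after $K\asymp\log(Mn)$ steps, because $M$ controls the relevant range. Apply Step~1 on each ball $r_k$ with $u_k:=\log(1/\eta)+\log(K+1)+\log 2$ and take a union bound, so the total failure probability is at most $\eta$.

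For $f$ with $r_{k-1}<\|f-f_0\|\le r_k$, we have $r_k\le 2\|f-f_0\|$. This gives the bound
\[
L\delta_n\|f-f_0\|+L\|f-f_0\|\sqrt{u_k/n}+Mu_k/n.
\]
For the innermost ball, the corresponding terms are $L\delta_n^2$ and $Lr_0\sqrt{u_0/n}$. Young's inequality gives
\[
Lr_0\sqrt{\log\log(eMn)/n}\le L^2 r_0^2+\log\log(eMn)/n.
\]
Also $L\delta_n^2\le L^2\delta_n^2$ because $L$ can be taken $\ge1$ WLOG, or otherwise absorbed.

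\textbf{Expected obstacle: the extra $\log\log$ terms.} Splitting $\sqrt{u_k}\le\sqrt{\log(1/\eta)}+\sqrt{\log(K+1)}$ leaves the cross term $L\|f-f_0\|\sqrt{\log\log(eMn)/n}$. This term does not appear in the target display, so it must be absorbed. The plan is Young's inequality:
\[
L\|f-f_0\|\sqrt{\log\log/n}\le L\delta_n\|f-f_0\|+L\|f-f_0\|^2\cdot(\cdots).
\]
That split is not clean. The alternative is to exploit $\|f-f_0\|>r_0\ge\delta_n$ together with the $\log\log$ allowance, using the term $L^2\log\log(Men)/n$ plus $L\delta_n\|f-f_0\|$ whenever $\delta_n\gtrsim\sqrt{\log\log/n}$. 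Otherwise, replace $\delta_n$ by $\tilde\delta_n:=\delta_n\vee\sqrt{\log\log(Men)/n}$. This remains a valid critical radius by star-shapedness, and it produces exactly the additional terms $L^2\log\log(Men)/n$ and $L\|f-f_0\|\sqrt{\log\log/n}$. The latter is then bounded by $L\tilde\delta_n\|f-f_0\|$, which is admissible after rewriting $\tilde\delta_n$. The remaining $Mu_k/n$ terms contribute $M\log(1/\eta)/n+M\log\log(eMn)/n$; the second is absorbed as $L^2\log\log/n$ up to constants in the regime $M\lesssim L^2$, and otherwise requires the same adjustment.
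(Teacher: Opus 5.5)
Your machinery is the peeling argument behind Theorem~14.20 of \citet{wainwright2019high}: Bousquet's inequality on each ball $\{\|f-f_0\|\le r\}$, symmetrization plus contraction down to $L\,\mathfrak R_n(\mathcal F-f_0,r)$, the star-shaped scaling $\mathfrak R_n(\mathcal F-f_0,r)\lesssim r\delta_n$ for $r\ge\delta_n$, and a dyadic union bound. The paper cites that theorem in place of a proof and runs the same argument itself for Theorem~\ref{lem:uniform_local_conc_lipschitz_product_increments_pointwise}. Steps~1--2 are sound: the pointwise Lipschitz condition forces $\varphi(z,f)$ to depend on $f$ only through $f(z)$, so contraction applies.

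The gap is the final absorption step, where you anticipated trouble. After the union bound, a function in shell $k$ carries two remainders that are not on the displayed right-hand side: $L\|f-f_0\|\sqrt{\log\log(eMn)/n}$ and $M\log\log(eMn)/n$. Your repair removes neither.
\begin{itemize}
\item Enlarging the radius to $\tilde\delta_n=\delta_n\vee\sqrt{\log\log(eMn)/n}$ is circular. The linear term becomes $L\tilde\delta_n\|f-f_0\|\le L\delta_n\|f-f_0\|+L\|f-f_0\|\sqrt{\log\log(eMn)/n}$, which is the very cross term you set out to absorb. Only the quadratic term splits favorably, as $L^2\tilde\delta_n^2\le L^2\delta_n^2+L^2\log\log(eMn)/n$.
\item The term $Mu_k/n$ is the envelope term of Bousquet's inequality and does not involve $\delta_n$, so ``the same adjustment'' cannot touch it. It is covered by $L^2\log\log(eMn)/n$ only when $M\lesssim L^2$.
\end{itemize}
For instance, take $L=M=1$, $\eta=1/2$, $\delta_n\asymp n^{-1/2}$ and $\|f-f_0\|\asymp 1$. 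Your peeled bound is of order $\sqrt{\log\log n/n}$, whereas the displayed right-hand side is of order $n^{-1/2}$, so no universal $C$ closes the argument.

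What your argument does establish is the bound with $\tilde\delta_n$ in both the linear and the quadratic terms, and with last term $M\{\log(1/\eta)+\log\log(eMn)\}/n$. This has the same shape as Theorem~\ref{lem:uniform_local_conc_lipschitz_product_increments_pointwise}, which keeps $\log\log(eMn)$ inside its cross and envelope terms. The displayed form follows only under a lower bound on $\delta_n$ of the kind Wainwright assumes, e.g.\ $\delta_n\gtrsim(1\vee M)\sqrt{\{\log(1/\eta)+\log\log(eMn)\}/n}$ together with $L\ge1$. That is exactly the condition under which the paper invokes the result in the proof of Theorem~\ref{theorem::insamplenuis}. The paper's one-line justification (trading the lower bound for an additive $L^2\log\log(Men)/n$) does not address the cross term either.

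Two smaller points:
\begin{itemize}
\item The shell count $K\asymp\log(Mn)$ requires a bound on $\|f-f_0\|$, but the hypothesis only bounds $\varphi(\cdot,f)-\varphi(\cdot,f_0)$ by $M$. You need $\sup_{f\in\mathcal F}\|f\|_\infty\lesssim M$, as in Wainwright and in the paper's application. Otherwise you must stop peeling where $L\delta_n\|f-f_0\|\ge 2M$ makes the claim trivial, which brings $L$ and $\delta_n$ into the $\log\log$.
\item ``$L\ge1$ without loss of generality'' is not free, because replacing $L$ by $L\vee1$ weakens the two terms linear in $L$. For $L<1$ the innermost ball genuinely leaves $L\delta_n^2$, and your Young step leaves $\log\log(eMn)/n$ without the factor $L^2$.
\end{itemize}
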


\textbf{Remark.} A similar bound to Theorem \ref{thm:uniform_local_conc_peeling} can also be obtained by applying Theorem~\ref{theorem:loc_max_ineq}
to the star-shaped hull of the transformed class
\(\mathcal F_{\varphi}:=\{\varphi(\cdot,f)-\varphi(\cdot,f_0): f\in\mathcal F\}\). This approach avoids the additional
\(L^2\,\frac{\log\log(M e n)}{n}\) term, which is an artifact of the peeling
argument. The tradeoff is that Theorem~\ref{theorem:loc_max_ineq} would then be
stated in terms of the critical radius of \(\mathcal F_{\varphi}\), rather than
that of \(\mathcal F\). As shown in Lemma~\ref{lemma::starentropybounds} in
Section~\ref{sec:entropy}, this distinction is typically mild once we pass to
entropy integrals, since star-shaped hulls inflate covering numbers only
slightly. By contrast, the result in Theorem \ref{thm:uniform_local_conc_peeling} leverages the Lipschitz property of
\(\varphi\) and star-shapedness of $\mathcal{F} - f_0$ to express the bound directly in terms of the critical radius of
\(\mathcal{F} - f_0\). In most settings, the two approaches lead to comparable ERM
guarantees.

\subsection{Local maximal inequalities via metric entropy}

\label{appendix::localmax2}

We restate and prove the local maximal inequalities from Section~\ref{sec:entropy}.
Our proofs rely on the following standard lemma, taken from Theorem~5.22 (see
also equation~(5.48)) of \citet{wainwright2019high}. The lemma follows from a
chaining argument combined with sub-Gaussian concentration for Rademacher
processes.

\begin{lemma}[Dudley entropy bound \citep{dudley1967sizes}]
\label{lemma:dudley}
Let $\delta>0$ and define the empirical radius $\hat{\delta}_n \;:=\; \sup\bigl\{\|f\|_n : f\in\mathcal G,\ \|f\|\le \delta\bigr\}.$
Then,
\[
\mathfrak{R}_n(\mathcal{G},\delta)
\;\le\;
\frac{12}{\sqrt{n}}
\mathbb{E} \left[ \int_{0}^{\hat{\delta}_n}
\sqrt{\log N\left(\varepsilon,\ \mathcal G,\ L^2(P_n)\right)}\, d\varepsilon \right].
\]
\end{lemma}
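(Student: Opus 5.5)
This is the Dudley entropy bound for the localized Rademacher complexity (Lemma~\ref{lemma:dudley}). The plan is to condition on the data, apply the classical chaining bound to the empirical Rademacher process, and then take expectations over the sample.

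Fix $Z_1,\ldots,Z_n$ and set $\mathcal G_\delta:=\{f\in\mathcal G:\|f\|\le\delta\}$. Conditionally, the process $f\mapsto \sqrt n\,P_n^\epsilon f$ on $\mathcal G_\delta$ has independent Rademacher weights. By Hoeffding's inequality it is therefore sub-Gaussian with respect to the empirical metric $\|f-g\|_n$:
\[
E_\epsilon\exp\{\lambda\sqrt n\,P_n^\epsilon(f-g)\}\le \exp\{\lambda^2\|f-g\|_n^2/2\}.
\]
The $\|\cdot\|_n$-diameter of $\mathcal G_\delta$ is at most $2\hat\delta_n$, because every $f\in\mathcal G_\delta$ satisfies $\|f\|_n\le\hat\delta_n$ by definition of $\hat\delta_n$.

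Next I would run the standard chaining argument, as in Theorem~5.22 of \citet{wainwright2019high}, with scales $\varepsilon_k=2^{-k}\hat\delta_n$.
\begin{itemize}
\item At scale $\varepsilon_k$, take a minimal $\varepsilon_k$-cover of $\mathcal G_\delta$ in $L^2(P_n)$. Since $\mathcal G_\delta\subseteq\mathcal G$, its size is at most $N(\varepsilon_k,\mathcal G,L^2(P_n))$; the covers can be chosen inside $\mathcal G_\delta$, at the cost of a factor $2$ in the radius.
\item Write each $f$ as a telescoping sum of successive projections $\pi_k f$, starting from a single base point $f^{(0)}\in\mathcal G_\delta$.
\item Bound each link by the sub-Gaussian maximal inequality over at most $N_kN_{k-1}$ pairs, each of $\|\cdot\|_n$-length at most $3\varepsilon_k$.
\end{itemize}

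These steps give
\[
E_\epsilon\sup_{f\in\mathcal G_\delta}P_n^\epsilon(f-f^{(0)})\le \frac{C}{\sqrt n}\sum_k\varepsilon_k\sqrt{\log N(\varepsilon_k,\mathcal G,L^2(P_n))}.
\]
Comparing the sum with the integral over $[0,\hat\delta_n]$ bounds the right-hand side by $\frac{12}{\sqrt n}\int_0^{\hat\delta_n}\sqrt{\log N}\,d\varepsilon$. The base point is harmless because $E_\epsilon P_n^\epsilon f^{(0)}=0$; this holds since $f^{(0)}$ depends only on the data, not on $\epsilon$.

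Taking expectations over $Z_1,\ldots,Z_n$ then yields the claimed inequality.

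The main obstacle is bookkeeping rather than an idea. The constant $12$ must come out exactly: one has to either cite the chaining inequality in the stated form or take the integral's upper limit to be the radius rather than the diameter. Measurability of the supremum must also be handled, by restricting to countable subclasses. I would simply invoke the cited form of Dudley's bound and check that the $\mathcal G_\delta$ covering numbers are dominated by those of $\mathcal G$.
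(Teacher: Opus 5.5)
Your proposal is correct and follows essentially the paper's route: the paper gives no separate proof but cites the chaining bound of Theorem~5.22 in \citet{wainwright2019high}. That bound rests on conditional sub-Gaussianity of the Rademacher process in $\|\cdot\|_n$, chaining, and then averaging over the sample, which is exactly what you sketch. For the constant, drop the internal covers, because the factor-$2$ loss in the radius would not give $12$ in general: since $h\mapsto P_n^\epsilon h$ is linear on all functions of the sample, external $\varepsilon_k$-nets of $\mathcal G_\delta$ of size at most $N(\varepsilon_k,\mathcal G,L^2(P_n))$ suffice. Chaining from your base point at scales $\varepsilon_k=2^{1-k}\hat\delta_n$ (so $\varepsilon_0=2\hat\delta_n$ dominates the diameter and every link has length at most $3\varepsilon_k$) then gives $\frac{6}{\sqrt n}\sum_{k\ge1}\varepsilon_k\sqrt{\log N(\varepsilon_k,\mathcal G,L^2(P_n))}\le\frac{12}{\sqrt n}\int_0^{\hat\delta_n}\sqrt{\log N(\varepsilon,\mathcal G,L^2(P_n))}\,d\varepsilon$ exactly.
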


\begin{theorem*}[Local maximal inequality under
uniform $L^2$-entropy]
Let $\delta >0$. Define $\delta_{\infty}
:= \sup_{f \in \mathcal{G}(\delta)}\|f\|_{\infty}.$
Then, there exists a universal constant $C \in (0,\infty)$ such that
\[
\mathfrak{R}_n(\mathcal{G},\delta)
\ \le\
\frac{C}{\sqrt{n}}\,\mathcal{J}_2\bigl(\delta,\mathcal{G}(\delta)\bigr)
\left\{
1
+
\frac{\delta_{\infty}}{\sqrt{n}\,\delta^2}\,
\mathcal{J}_2\bigl(\delta,\mathcal{G}(\delta)\bigr)
\right\}.
\]
\end{theorem*}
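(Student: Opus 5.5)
The plan is to follow the standard argument of \citet{van2011local}: first Dudley's bound with a random upper limit, then Jensen's inequality to pass to the expected empirical radius, and finally a self-bounding inequality for that radius. Throughout, write $J(x):=\mathcal J_2(x,\mathcal G(\delta))$ and restrict attention to the localized class $\mathcal G(\delta)$. For every $f\in\mathcal G(\delta)$ we have $\|f\|\le\delta$ and $\|f\|_\infty\le\delta_\infty$.

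\textbf{Step 1 (Dudley and concavity).} By Lemma~\ref{lemma:dudley},
\[
\sqrt n\,\mathfrak R_n(\mathcal G,\delta)\le 12\,\mathbb E\Bigl[\int_0^{\hat\delta_n}\sqrt{\log N(\varepsilon,\mathcal G(\delta),L^2(P_n))}\,d\varepsilon\Bigr].
\]
Because $P_n$ is discrete and supported on $\mathrm{supp}(P)$, the integral is at most $J(\hat\delta_n)$ pointwise. The map $x\mapsto J(x)$ is concave, being the integral of the nonincreasing function $\varepsilon\mapsto\sup_Q\sqrt{\log N(\varepsilon,\cdot)}$. One subtlety is that the supremum of integrals is not the integral of the supremum. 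I would work with $\tilde J(x):=\int_0^x\sup_Q\sqrt{\log N}\,d\varepsilon$, which is concave, or note that the standard convention treats $J$ as concave, as in van der Vaart--Wellner. Jensen's inequality then gives $\mathbb E J(\hat\delta_n)\le J(\mathbb E\hat\delta_n)\le J\bigl((\mathbb E\hat\delta_n^2)^{1/2}\bigr)$. Since $J$ is nondecreasing and $J(x)/x$ is nonincreasing, we also have $J(cx)\le (1\vee c)J(x)$. It therefore suffices to bound $\sigma_n^2:=\mathbb E\hat\delta_n^2$.

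\textbf{Step 2 (bounding the empirical radius).} Write
\[
\hat\delta_n^2\le\delta^2+\sup_{f\in\mathcal G(\delta)}(P_n-P)f^2.
\]
Symmetrization (Lemma~\ref{lem:rademacher}) gives
\[
\mathbb E\sup(P_n-P)f^2\le 2\,\mathbb E\sup_f P_n^\epsilon f^2.
\]
Since $t\mapsto t^2$ is $2\delta_\infty$-Lipschitz on $[-\delta_\infty,\delta_\infty]$, the contraction principle bounds the right side by $4\delta_\infty\,\mathbb E\sup_f P_n^\epsilon f$, and I would apply contraction conditionally on the data. Applying Step 1 again to $\mathbb E\sup_f P_n^\epsilon f$ yields the self-bounding inequality
\[
\sigma_n^2\le\delta^2+\frac{C\delta_\infty}{\sqrt n}\,J(\sigma_n).
\]

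\textbf{Step 3 (solving the fixed point).} The main obstacle is solving this inequality for $\sigma_n$ in a way that produces exactly the stated form. I split into two cases.
\begin{itemize}
\item If $\sigma_n\le\delta$, then $\sqrt n\,\mathfrak R_n\lesssim J(\delta)$ and we are done.
\item Otherwise, using $J(\sigma_n)\le(\sigma_n/\delta)J(\delta)$ from the monotonicity of $J(x)/x$, we get $\sigma_n^2\le\delta^2+C\delta_\infty n^{-1/2}(\sigma_n/\delta)J(\delta)$. Solving this quadratic in $\sigma_n$ gives $\sigma_n/\delta\lesssim 1+\delta_\infty J(\delta)/(\sqrt n\,\delta^2)$.
\end{itemize}
Substituting back, $J(\sigma_n)\le(\sigma_n/\delta)J(\delta)\lesssim J(\delta)\{1+\delta_\infty J(\delta)/(\sqrt n\delta^2)\}$. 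Combining with Step 1 yields the claimed bound with a universal constant $C$. The care required is in keeping all constants universal and in justifying the concavity or Jensen step, which the $\tilde J$ device above handles.
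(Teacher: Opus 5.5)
Your proposal is correct and follows the paper's proof essentially step for step: Dudley's bound plus Jensen, then symmetrization, contraction and a second application of Dudley to reach $\sigma_n^2\le\delta^2+C\delta_\infty n^{-1/2}J(\sigma_n)$. The only difference is that you solve this fixed-point inequality directly, using only that $J$ and $x\mapsto J(x)/x$ are monotone, where the paper cites Lemma~2.1 of \citet{van2011local}.

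One correction: the $\tilde J$ device does not prove the stated bound, because it puts $\tilde J\ge\mathcal J_2$ in the conclusion. Instead, apply Jensen to the least concave majorant $\bar J$ of $J$. It satisfies $J\le\bar J\le 2J$ precisely because $J$ is nondecreasing and $J(x)/x$ is nonincreasing: for $x\le y\le z$ with $y=\lambda x+(1-\lambda)z$, one has $\lambda J(x)+(1-\lambda)J(z)\le J(y)\{\lambda+(1-\lambda)z/y\}\le 2J(y)$. So the Jensen step costs only a universal factor $2$.
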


\begin{proof}[\textbf{Proof of Theorem \ref{theorem::maximaluniform}}]
Dudley's bound in Lemma~\ref{lemma:dudley} implies that
\[
\mathfrak{R}_n(\mathcal{G},\delta)
\;\le\;
\frac{12}{\sqrt{n}}\, \mathbb{E}\left[\mathcal{J}_2(\hat{\delta}_n,\mathcal{G})\right], \quad \hat{\delta}_n
\;:=\;
\sup\bigl\{\|f\|_n : f\in\mathcal G,\ \|f\|\le \delta\bigr\}.
\]
In the proof of Theorem~2.1 of \cite{van2011local}, it is shown that the map $t \mapsto \mathcal{J}_2(\sqrt{t},\mathcal{G})$ is concave. Hence, by Jensen's inequality,
\[
\mathbb{E}\left[\mathcal{J}_2(\hat{\delta}_n,\mathcal{G})\right]
\;\le\;
\mathcal{J}_2(\delta_n,\mathcal{G}),
\qquad
\delta_n^2 \;:=\; \mathbb{E}\left[\hat{\delta}_n^2\right].
\]
Thus,
\begin{equation}
\label{eqn::firstentropybound}
\mathfrak{R}_n(\mathcal{G},\delta) \le \frac{12}{\sqrt{n}}  \mathcal{J}_2(\delta_n,\mathcal{G}).
\end{equation}
We now proceed by bounding $\delta_n^2$. Observe that
\begin{align*}
\delta_n^2
&= \mathbb{E}\left[ \sup_{f \in \mathcal{G}: \|f\|\le \delta} P_n f^2 \right] \\
&\le \mathbb{E}\left[ \sup_{f \in \mathcal{G}: \|f\|\le \delta} (P_n - P) f^2 \right]
\;+\;
\mathbb{E}\left[ \sup_{f \in \mathcal{G}: \|f\|\le \delta} P f^2 \right] \\
&\le \mathbb{E}\left[ \sup_{f \in \mathcal{G}: \|f\|\le \delta} (P_n - P) f^2 \right] + \delta^2.
\end{align*}
Furthermore, by Lemma~\ref{lem:rademacher},
\[
\mathbb{E}\left[ \sup_{f \in \mathcal{G}: \|f\|\le \delta} (P_n - P) f^2 \right]
\le 2\,\mathfrak{R}_n(\mathcal{G}^2,\delta),
\]
where $\mathcal{G}^2 := \{f^2 : f \in \mathcal{G}\}$.

Next, define
\[
\delta_{\infty}
:= \sup\bigl\{\|f\|_{\infty} : f \in \mathcal{G},\ \|f\|\le \delta\bigr\}.
\]
The map $t \mapsto t^2$ is $2\delta_{\infty}$-Lipschitz on $[-\delta_{\infty},\delta_{\infty}]$. Hence, by Talagrand's contraction lemma and \eqref{eqn::firstentropybound},
\[
\mathfrak{R}_n(\mathcal{G}^2,\delta)
\le 2\delta_{\infty}\,\mathfrak{R}_n(\mathcal{G},\delta)
\le \delta_{\infty}\,\frac{24}{\sqrt{n}}\,\mathcal{J}_2(\delta_n,\mathcal{G}).
\]
We conclude that
\[
\delta_n^2 \le \delta^2 + \delta_{\infty}\,\frac{24}{\sqrt{n}}\,\mathcal{J}_2(\delta_n,\mathcal{G}).
\]
As in the proof of Theorem~2.1 of \cite{van2011local} (see Equation~2.4), we apply Lemma~2.1 of that same work with $r=1$, $A=\delta$, and $B^2 = 24\delta_{\infty}/\sqrt{n}$ to conclude that
\begin{equation}
\mathcal{J}_2(\delta_n,\mathcal{G})
\le
\mathcal{J}_2(\delta,\mathcal{G})
+
\frac{24\delta_{\infty}}{\sqrt{n}\,\delta^2}\,\mathcal{J}_2(\delta,\mathcal{G})^2.
\end{equation}
Combining the above with \eqref{eqn::firstentropybound} yields the claim.

\end{proof}

\begin{theorem*}[Local maximal inequality under $L^\infty$-entropy]
Let \(\delta > 0\). Define $\mathcal{G}(\delta) := \{ f \in \mathcal{G}: \|f\| \le \delta\}$ and
\(B := 1 + \mathcal{J}_\infty\left(\infty, \mathcal{G}\right)\).
Then there exists a universal constant \(C\in(0,\infty)\) such that
\begin{equation*}
\mathfrak{R}_n(\mathcal{G},\delta)
\leq
\frac{C B}{\sqrt{n}}\,\mathcal{J}_\infty(\delta \vee n^{-1/2},\mathcal{G}(\delta)).
\end{equation*}
\end{theorem*}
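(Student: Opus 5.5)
Fix \(\delta>0\) and write \(\bar\delta:=\delta\vee n^{-1/2}\). The plan mirrors the proof of Theorem~\ref{theorem::maximaluniform}. The difference is that the random empirical radius \(\hat\delta_n:=\sup\{\|f\|_n: f\in\mathcal G(\delta)\}\) is controlled through the sup-norm entropy, rather than through the Lipschitz/contraction self-bounding argument. That argument produced the multiplicative bracket involving \(\delta_\infty\).

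\textbf{Step 1 (Dudley).} Apply Lemma~\ref{lemma:dudley}, and bound \(N(\varepsilon,\mathcal G(\delta),L^2(P_n))\le N_\infty(\varepsilon,\mathcal G(\delta))\), since \(\|\cdot\|_n\le\|\cdot\|_\infty\) almost surely. This gives
\[
\mathfrak R_n(\mathcal G,\delta)\le \frac{12}{\sqrt n}\,\mathbb E\bigl[\mathcal J_\infty(\hat\delta_n,\mathcal G(\delta))\bigr].
\]
Note that \(t\mapsto\mathcal J_\infty(t,\mathcal G(\delta))\) is nondecreasing and concave, because its integrand is nonincreasing in \(\varepsilon\). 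It also satisfies \(\mathcal J_\infty(ct,\cdot)\le c\,\mathcal J_\infty(t,\cdot)\) for \(c\ge1\).

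\textbf{Step 2 (empirical vs.\ population radius).} Invoke Theorem~2.2 of \citet{van2014uniform}, which concerns concentration of empirical \(L^2\) norms uniformly over a class with finite sup-norm entropy integral. I expect it to give a bound of the form
\[
\sup_{f\in\mathcal G}\bigl|\|f\|_n-\|f\|\bigr|\lesssim \frac{\mathcal J_\infty(\infty,\mathcal G)+\sqrt{t}}{\sqrt n}
\]
with probability at least \(1-e^{-t}\), perhaps with a factor \(\sup\|f\|_\infty\), which is itself \(\lesssim \mathcal J_\infty(\infty,\mathcal G)\) up to constants. On that event,
\[
\hat\delta_n\le \delta+C(B+\sqrt t)/\sqrt n\le C'(B+\sqrt t)\,\bar\delta.
\]

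\textbf{Step 3 (integrate the tail).} Using monotonicity together with the scaling property \(\mathcal J_\infty(ct,\cdot)\le c\,\mathcal J_\infty(t,\cdot)\), on that event
\[
\mathcal J_\infty(\hat\delta_n,\mathcal G(\delta))\le C'(B+\sqrt t)\,\mathcal J_\infty(\bar\delta,\mathcal G(\delta)).
\]
Integrating over \(t\), where \(\mathbb E\) of the \(\sqrt t\) tail contributes \(O(1)\le O(B)\), yields
\[
\mathbb E\,\mathcal J_\infty(\hat\delta_n,\mathcal G(\delta))\lesssim B\,\mathcal J_\infty(\bar\delta,\mathcal G(\delta)).
\]
Combining this with Step~1 gives the claim.

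\textbf{Main obstacle.} The main difficulty is Step~2: matching the exact form of the cited concentration result, in particular whether it controls \(\|f\|_n^2-\|f\|^2\) or \(\|f\|_n-\|f\|\), and how its constants depend on \(\mathcal J_\infty(\infty,\mathcal G)\). If only a squared-norm bound of the form \(\|f\|_n^2\le 2\|f\|^2+CB^2(1+t)/n\) is available, I would take square roots to obtain the same \(\hat\delta_n\lesssim \bar\delta\,(B+\sqrt t)\). Alternatively, I would use the expectation form \(\mathbb E\hat\delta_n^2\lesssim B^2\bar\delta^2\) together with Jensen via concavity of \(t\mapsto\mathcal J_\infty(\sqrt t,\cdot)\), exactly as in the proof of Theorem~\ref{theorem::maximaluniform}. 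This last route avoids tail integration entirely.
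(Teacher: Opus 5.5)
Your argument works, up to one fixable claim noted at the end, but your main route differs from the paper's. The alternative at the end of your proposal (expectation bound plus Jensen) is exactly what the paper does.

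The paper never makes a high-probability statement about $\hat\delta_n$. It reuses the Dudley--Jensen chain of Theorem~\ref{theorem::maximaluniform}, via concavity of $t\mapsto\mathcal J(\sqrt t)$, to get $\mathfrak R_n(\mathcal G,\delta)\le 12\,n^{-1/2}\,\mathcal J_\infty(\{\mathbb E\hat\delta_n^2\}^{1/2},\mathcal G)$. It then writes $\mathbb E\hat\delta_n^2\le\delta^2+\mathbb E\sup_{f\in\mathcal G(\delta)}(P_n-P)f^2$ and controls the last term by the \emph{expectation} bound of Theorem~2.1 of \citet{van2014uniform}, namely $2\delta\,\mathcal J_\infty/\sqrt n+4\mathcal J_\infty^2/n$. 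For $\delta\ge n^{-1/2}$ this gives $\mathbb E\hat\delta_n^2\lesssim B^2\delta^2$, and $\mathcal J_\infty(cB\delta,\cdot)\le cB\,\mathcal J_\infty(\delta,\cdot)$ finishes. Because Dudley already puts $\hat\delta_n$ inside an expectation, a first-moment bound suffices, so this route needs neither a deviation inequality nor tail integration.

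Your route needs a genuine concentration inequality for empirical norms. Such inequalities (Bousquet/Talagrand applied to $\{f^2\}$) come in squared-norm form, with extra $RK\sqrt{t/n}+K^2t/n$ terms, where $K$ is the sup-norm envelope and $R\le\delta$. Taking square roots as you propose gives $\hat\delta_n\lesssim (B+K)(1+\sqrt t)\,\bar\delta$, after which your tail integration is fine; it simply proves more than is needed. In its favour, your write-up keeps $\mathcal G(\delta)$ inside the entropy integral and handles $\delta<n^{-1/2}$ explicitly through $\bar\delta$. That matches the statement slightly better than the paper's proof, which assumes $\delta>n^{-1/2}$ without loss of generality and ends with $\mathcal J_\infty(\delta,\mathcal G)$.

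The point to fix is your parenthetical claim that $\sup_f\|f\|_\infty\lesssim\mathcal J_\infty(\infty,\mathcal G)$. It is false in general: a singleton $\{g\}$ with $\|g\|_\infty$ large has $\mathcal J_\infty\equiv 0$. What is true is that the sup-norm diameter $D$ satisfies $D\lesssim\mathcal J_\infty(\infty,\mathcal G)$, because $N_\infty(\varepsilon,\cdot)\ge 2$ for $\varepsilon<D/2$.

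So center first: replace $\mathcal G(\delta)$ by $\mathcal G(\delta)-g^*$ for a fixed $g^*\in\mathcal G(\delta)$. This leaves the Rademacher average unchanged (there is no absolute value, and $\mathbb E\,P_n^\epsilon g^*=0$). It also leaves all covering numbers unchanged and at most doubles the $L^2(P)$ radius. It then gives $K\le D\lesssim B$, so your bound becomes $\hat\delta_n\lesssim B(1+\sqrt t)\,\bar\delta$ as needed.

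The paper's proof passes over the same point. The cited norm bound cannot hold for singletons unless its entropy term at the envelope is at least of order $K$, yet the paper replaces that term by $\mathcal J_\infty(\infty,\mathcal G)$.
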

\begin{proof}[\textbf{Proof of Theorem \ref{theorem::maximalsup}}]
Without loss of generality, assume that $\delta > n^{-1/2}$, so that $\delta \vee n^{-1/2} = \delta$. The proof of Theorem \ref{theorem::maximaluniform} showed that
\begin{equation*}
\mathfrak{R}_n(\mathcal{G},\delta) \le \frac{12}{\sqrt{n}}  \mathcal{J}_2(\delta_n,\mathcal{G}), 
\qquad
\delta_n^2 \;:=\; \mathbb{E}\left[\hat{\delta}_n^2\right],
\end{equation*}
where $\hat{\delta}_n
\;:=\;
\sup\bigl\{\|f\|_n : f\in\mathcal G,\ \|f\|\le \delta\bigr\}$. Since $\|\cdot\|_{L^2(Q)} \le \|\cdot\|_{\infty}$ for any distribution $Q$, it follows that $\mathcal{J}_2(\delta,\mathcal{G}) \le  \mathcal{J}_\infty(\delta,\mathcal{G})$. Thus,
\begin{equation}
\label{eqn::firstentropybound:sup}
\mathfrak{R}_n(\mathcal{G},\delta) \le \frac{12}{\sqrt{n}}  \mathcal{J}_\infty(\delta_n,\mathcal{G}).
\end{equation}

We now proceed by bounding $\delta_n^2$. The proof of Theorem \ref{theorem::maximaluniform} further showed that
\begin{align*}
\delta_n^2
&\le \mathbb{E}\left[ \sup_{f \in \mathcal{G}: \|f\|\le \delta} (P_n - P) f^2 \right] + \delta^2.
\end{align*}
Let $\mathcal{G}_{\delta}:=\{f\in\mathcal{G}:\|f\|\le \delta\}$ and set $\delta_{\infty} := \sup_{f\in\mathcal{G}_{\delta}}\|f\|_\infty.$
Since $(P_n-P)f^2 = \|f\|_n^2-\|f\|^2$, we have
\[
\sup_{f\in\mathcal{G}_{\delta}}(P_n-P)f^2
\le
\sup_{f\in\mathcal{G}_{\delta}}\bigl|\|f\|_n^2-\|f\|^2\bigr|.
\]
Therefore, Theorem~2.1 in \cite{van2014uniform} (applied with $R=\sup_{f\in\mathcal{G}_{\delta}}\|f\|\le \delta$ and $K = \delta_{\infty}$) yields
\begin{align*}
\mathbb{E}\left[ \sup_{f \in \mathcal{G}: \|f\|\le \delta} (P_n - P) f^2 \right]
&\le
\frac{2\,\delta}{\sqrt{n}}\,\mathcal{J}_\infty\left(\delta_{\infty} ,\ \mathcal{G}\right)
+
\frac{4}{n}\,\mathcal{J}_\infty^2\left(\delta_{\infty} , \mathcal{G}\right).
\end{align*}
Thus,
\begin{align*}
\delta_n^2
&\le \delta^2
+ \frac{2B\,\delta}{\sqrt{n}}
+ \frac{4B^2}{n},
\end{align*}
where \(B := \mathcal{J}_\infty\left(\delta_{\infty}, \mathcal{G}\right)\).
By assumption \(\delta \ge n^{-1/2}\), we have \(\delta/\sqrt{n} \le \delta^2\) and \(B^2/n \le B^2\delta^2\).
Therefore,
\begin{align*}
\delta_n^2
&\le (1+2B)\delta^2 + 4B^2\delta^2
\;\le\; (2+8B^2)\delta^2.
\end{align*}

Combining the previous display with \eqref{eqn::firstentropybound:sup}, we obtain that there exists a universal constant \(C\in(0,\infty)\) such that
\begin{equation*}
\mathfrak{R}_n(\mathcal{G},\delta)
\le \frac{C}{\sqrt{n}}\,\mathcal{J}_\infty\bigl((1+B)\delta,\mathcal{G}\bigr)
\le \frac{C(1+B)}{\sqrt{n}}\,\mathcal{J}_\infty(\delta,\mathcal{G}).
\end{equation*}
where $B$ is upper bounded by $ \mathcal{J}_\infty\left(\infty, \mathcal{G}\right)$. The claim follows.
\end{proof}

\subsection{Template for deriving uniform local concentration inequalities}
\label{appendix::localuniform}
Let \(\mathcal{G}\) be a uniformly bounded function class. This section gives a general template for deriving local concentration inequalities for the empirical process \(\{(P_n-P)g : g\in\mathcal{G}\}\). The goal is to control the empirical process not only on a single localized set \(\mathcal{G}(\delta)\), but uniformly over all localization levels \(\delta\). Given such a bound, a uniform local concentration inequality then follows.

Let \(\mathcal{G}(\delta)\subseteq \mathcal{G}\) be a localized subset that is monotone increasing in \(\delta\), in the sense that \(\mathcal{G}(\delta)\subseteq \mathcal{G}(\delta')\) whenever \(\delta \le \delta'\). For example, one may take
\[
\mathcal{G}(\delta):=\{g\in\mathcal{G}:\|g\|\le \delta\}.
\]
Alternatively, if \(\mathcal{G}:=\{\varphi(f): f\in\mathcal{F}\}\) for some mapping \(\varphi:\mathcal{F}\to L^2(P)\), it can be useful to localize through the underlying class \(\mathcal{F}\) and define
\[
\mathcal{G}(\delta):=\{\varphi(f): f\in\mathcal{F},\ \|f\|\le \delta\}.
\]
The latter is useful when \(\varphi(f)\) is pointwise Lipschitz in \(f\), since Talagrand's contraction theorem for Rademacher processes then shows that $\mathfrak{R}(\mathcal{G}(\delta)) \lesssim \mathfrak{R}(\mathcal{F}, \delta),$
where \(\mathfrak{R}(\mathcal{G}(\delta)) := \mathfrak{R}(\mathcal{G}(\delta), \infty)\) denotes the global complexity.

The basic strategy has three steps. First, one derives a local maximal inequality for each fixed radius \(\delta\), typically in terms of an envelope \(\phi_n(\delta)\), such as a localized Rademacher complexity or an entropy-integral bound. Second, one upgrades this pointwise-in-\(\delta\) control to a bound that holds simultaneously over a range of radii using a peeling argument. Third, one rewrites the resulting bound in terms of a critical radius \(\delta_n\), defined through the envelope \(\phi_n\), yielding a cleaner expression that is especially useful for regret analysis.

A first core ingredient is thus access to a local maximal inequality for \(\mathcal{G}(\delta)\): assume that, for all \(\delta>0\), we have an envelope \(\phi_n(\delta)>0\) such that
\[
E\left[\sup_{f \in \mathcal{G}(\delta)} (P_n - P)f \right] \le \phi_n(\delta).
\]
For example, by the Rademacher symmetrization bound in Lemma~\ref{lem:rademacher}, one may take \(\phi_n(\delta)\) to be proportional to the localized Rademacher complexity \(\mathfrak{R}_n(\mathcal{G}, \delta)\), or one of its entropy-based upper bounds from Section~\ref{sec:entropy}.

Given such an envelope, Talagrand's concentration inequality (Lemma~\ref{lem:bousquet}) yields a high-probability bound for the empirical process restricted to a fixed local set \(\mathcal{G}(\delta)\). The next lemma records this pointwise local maximal inequality. 

\begin{lemma}[High-probability local maximal inequality]
\label{lem:bousquet}
Assume there exist bounds \(M(\delta), \sigma^2(\delta)\in(0,\infty)\) such that
\[
\sup_{g\in\mathcal{G}(\delta)}\|g\|_\infty \le M(\delta)
\qquad\text{and}\qquad
\sup_{g\in\mathcal{G}(\delta)} \|g-Pg\|^2 \le \sigma^2(\delta).
\]
Then there exists a universal constant \(c>0\) such that, for all \(u\ge 0\), with probability at least \(1-e^{-u}\),
\[
\sup_{g\in\mathcal{G}(\delta)} (P_n-P)g
\le
\phi_n(\delta)
+
c\Bigl(
  \sqrt{\frac{u\,\sigma^2(\delta)}{n}}
  + \frac{M(\delta)\,u}{n}
\Bigr).
\]
\end{lemma}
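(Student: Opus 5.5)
This bound is a direct application of Bousquet's version of Talagrand's inequality (the earlier Lemma~\ref{lem:bousquet} stated for a generic class) to the fixed localized class $\mathcal{G}(\delta)$, followed by bounding the expected supremum with the envelope $\phi_n(\delta)$. Fix $\delta>0$ and regard $\mathcal{G}(\delta)$ as a deterministic class of measurable functions. By hypothesis, every $g\in\mathcal{G}(\delta)$ satisfies $\|g\|_\infty\le M(\delta)$. The variance proxy also matches: $\Var\{g(Z)\}=\|g-Pg\|^2\le\sigma^2(\delta)$ for all $g\in\mathcal{G}(\delta)$, so $\sup_{g\in\mathcal{G}(\delta)}\Var\{g(Z)\}\le\sigma^2(\delta)<\infty$. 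The hypotheses of Bousquet's inequality therefore hold with envelope $M(\delta)$ and variance $\sigma^2(\delta)$.

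Applying that inequality gives a universal constant $c>0$ such that, for every $u\ge 0$, with probability at least $1-e^{-u}$,
\[
\sup_{g\in\mathcal{G}(\delta)}(P_n-P)g \le \mathbb{E}\Bigl[\sup_{g\in\mathcal{G}(\delta)}(P_n-P)g\Bigr] + c\Bigl(\sqrt{\frac{u\,\sigma^2(\delta)}{n}}+\frac{M(\delta)\,u}{n}\Bigr).
\]
The standing assumption of this subsection is the local maximal inequality $\mathbb{E}[\sup_{g\in\mathcal{G}(\delta)}(P_n-P)g]\le\phi_n(\delta)$. Substituting it into the expectation term yields the claim, with the same universal constant $c$.

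The only step requiring any care is bookkeeping rather than mathematics. One must check that the generic Bousquet inequality is stated with $\sigma^2$ defined as a supremum of variances, so that the centered $L^2$ bound $\|g-Pg\|^2$ is the right quantity. One must also confirm that it is applied to the non-random class $\mathcal{G}(\delta)$ at a fixed radius. Uniformity over $\delta$ is not claimed here; it is deferred to the subsequent peeling step. Measurability of the supremum can be handled in the usual way, for example by assuming $\mathcal{G}(\delta)$ is countable or pointwise measurable, as is implicit throughout the paper.
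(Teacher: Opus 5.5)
Your proposal is correct and matches the paper's argument, which the paper gives only implicitly: apply Bousquet's version of Talagrand's inequality to the fixed localized class $\mathcal{G}(\delta)$, with $\sigma^2(\delta)$ bounding $\sup_{g\in\mathcal{G}(\delta)}\Var\{g(Z)\}=\sup_{g\in\mathcal{G}(\delta)}\|g-Pg\|^2$. Then replace the expected supremum by the assumed envelope $\phi_n(\delta)$, with uniformity over $\delta$ deferred to the peeling lemma, exactly as you describe.
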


Next, by applying a peeling argument, we can strengthen Lemma~\ref{lem:bousquet} to obtain a bound that holds uniformly over \(\delta\).

\begin{lemma}[Peeling-based Uniform local concentration bound]
\label{lem:local-peeling}
Fix $\delta_0 > 0$. Suppose on $(0, \delta_0]$:
\begin{itemize}
\item \(\mathcal G(\delta)\) is nondecreasing in \(\delta\), i.e.\ \(\delta_1\le \delta_2\) implies \(\mathcal G(\delta_1)\subseteq \mathcal G(\delta_2)\);
\item \(\phi_n(0)=0\);
\item \(\delta\mapsto \phi_n(\delta)\) is nondecreasing;
\item \(\delta\mapsto \phi_n(\delta)/\delta\) is nonincreasing.
\end{itemize}
 Assume moreover that, for all \(\delta\in(0,\delta_0]\),
\[
\sup_{g\in\mathcal G(\delta)}\|g\|_\infty \le B
\qquad\text{and}\qquad
\sup_{g\in\mathcal G(\delta)} \|g-Pg\| \leq \sigma(\delta) ,
\]
where $\delta \mapsto \sigma(\delta) $ is nondecreasing and \(\sigma(2\delta)\leq c \sigma(\delta)\) for a universal constant $c \in (0,\infty)$.
Then there exists a universal constant \(C>0\) such that, for all \(u\ge 0\), with probability at least \(1-Ce^{-u}\), the following holds simultaneously for all \(\delta\in(0,\delta_0]\):
\[
\sup_{g\in\mathcal G(\delta)} (P_n-P)g
\le
C\left[
\phi_n(\delta)
+
\sigma(\delta)\sqrt{\frac{u+\log\!\Bigl(1+\log(\delta_0/\delta)\Bigr)}{n}}
+
B\frac{u+\log\!\Bigl(1+\log(\delta_0/\delta)\Bigr)}{n}
\right].
\]
\end{lemma}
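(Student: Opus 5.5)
The approach is the standard peeling argument: apply the fixed-radius bound (the high-probability local maximal inequality, Lemma~\ref{lem:bousquet} in this subsection) on a geometric grid of radii, take a union bound with level-dependent confidence, and use the monotonicity assumptions to interpolate between grid points.

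First, define the grid $\delta_k := \delta_0 2^{-k}$ for $k=0,1,2,\dots$. For each $k$, apply Lemma~\ref{lem:bousquet} to $\mathcal G(\delta_k)$ with $M(\delta_k)=B$, $\sigma^2(\delta_k)=\sigma^2(\delta_k)$, and confidence parameter $u_k := u + 2\log(k+1)$. This gives an event $E_k$ with $\Pr(E_k^c)\le e^{-u}(k+1)^{-2}$. On $E_k$,
\[
\sup_{g\in\mathcal G(\delta_k)}(P_n-P)g \le \phi_n(\delta_k) + c\Bigl(\sigma(\delta_k)\sqrt{u_k/n} + B u_k/n\Bigr).
\]
Since $\sum_k (k+1)^{-2} = \pi^2/6$, the intersection $\bigcap_k E_k$ has probability at least $1-Ce^{-u}$.

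Second, fix any $\delta\in(0,\delta_0]$ and choose $k$ with $\delta_{k+1}<\delta\le\delta_k$, so that $\delta_k<2\delta$. Because $\mathcal G(\cdot)$ is nondecreasing, $\mathcal G(\delta)\subseteq\mathcal G(\delta_k)$, and the supremum over $\mathcal G(\delta)$ is bounded by the display for index $k$. The three terms are then converted back to $\delta$ as follows:
\begin{itemize}
\item Since $\phi_n$ is nondecreasing and $\phi_n(\delta)/\delta$ is nonincreasing, $\phi_n(\delta_k)\le (\delta_k/\delta)\,\phi_n(\delta)\le 2\phi_n(\delta)$.
\item Since $\sigma$ is nondecreasing and doubling, $\sigma(\delta_k)\le\sigma(2\delta)\le c\,\sigma(\delta)$.
\item Since $k\le \log_2(\delta_0/\delta)$, we have $\log(k+1)\lesssim \log\bigl(1+\log(\delta_0/\delta)\bigr)$. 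Hence $u_k\lesssim u+\log\bigl(1+\log(\delta_0/\delta)\bigr)$.
\end{itemize}
Substituting these three bounds gives the claimed inequality simultaneously for all $\delta$, with a universal $C$.

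The main obstacle is bookkeeping rather than substance. One point is handling $k=0$ and the range $\delta$ near $\delta_0$, where the $\log\log$ term degenerates. This is handled by using $\log(k+1)$, which vanishes at $k=0$, together with $\log(1+\cdot)\ge0$. The other point is making sure the doubling constant of $\sigma$ enters only multiplicatively. Everything else reduces to the monotonicity hypotheses and a summable union bound.
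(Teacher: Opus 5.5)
Your proposal is correct and is essentially the paper's own proof: a dyadic grid $\delta_0 2^{-k}$, the fixed-radius Bousquet bound at each level with $u_k=u+2\log(k+1)$, a summable union bound, and interpolation between grid points. The interpolation uses monotonicity of $\mathcal G(\cdot)$, the nonincreasing ratio $\phi_n(\delta)/\delta$, and the doubling property of $\sigma$. Your index bound $k\le\log_2(\delta_0/\delta)$ is slightly tighter than the paper's $j\le 1+\log_q(\delta_0/\delta)$, and it is what makes the $\log\log$ term vanish cleanly at $k=0$.
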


The conditions on \(\delta \mapsto \phi_n(\delta)\) are closely related to the sub-root property assumed in \cite{bartlett2005local} for similar bounds. This property holds for \(\delta \mapsto \mathfrak{R}(\mathrm{star}(\mathcal{G}),\delta)\), the localized Rademacher complexity of the star-shaped hull of \(\mathcal{G}\). It also holds for the entropy-integral-based bounds in Section~\ref{sec:entropy}. Moreover, the sub-root property ensures the existence of a critical radius \(\delta_n\) satisfying
\[
\phi_n(\delta_n)\le \delta_n^2.
\]

The following theorem is our main result. It provides a uniform local concentration bound in terms of a generalized notion of critical radius induced by the envelope \(\phi_n\). The proof combines the peeling-based bound and scaling properties from Lemma~\ref{lem:local-peeling} with the definition of the critical radius.

\begin{theorem}[Uniform local concentration bound from a critical radius]
\label{cor:local-peeling-critical-radius}
Suppose that, for every \(\delta_0>0\), the assumptions of
Lemma~\ref{lem:local-peeling} hold on \((0,\delta_0]\). Let \(\delta_n>0\)
satisfy the critical radius condition
\[
\phi_n(\delta_n)\le \delta_n^2.
\]
Then there exists a universal constant \(C>0\) such that, for all \(u\ge 0\),
with probability at least \(1-C e^{-u}\), the following holds
simultaneously for all \(\delta>0\):
\[
\sup_{g\in\mathcal G(\delta)} (P_n-P)g
\le
C\left[
(\delta\vee\delta_n)^2
+
\sigma(\delta\vee\delta_n)
\sqrt{\frac{u+\log\log\!\Bigl(e\bigl((\delta/\delta_n)\vee 1\bigr)\Bigr)}{n}}
+
B\,\frac{u+\log\log\!\Bigl(e\bigl((\delta/\delta_n)\vee 1\bigr)\Bigr)}{n}
\right].
\]
\end{theorem}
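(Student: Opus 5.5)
The plan is to reduce the statement to Lemma~\ref{lem:local-peeling} by choosing a single ceiling $\delta_0$ and then handling the two regimes $\delta\le\delta_n$ and $\delta>\delta_n$ separately, using the sub-root properties of $\phi_n$ to turn $\phi_n(\delta)$ into $(\delta\vee\delta_n)^2$.

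\textbf{Step 1 (bounding the envelope).} Let $\bar\delta:=\delta\vee\delta_n$. Since $\phi_n$ is nondecreasing, $\phi_n(\delta)\le\phi_n(\bar\delta)$. Since $\phi_n(t)/t$ is nonincreasing and $\bar\delta\ge\delta_n$,
\[
\phi_n(\bar\delta)\le \frac{\bar\delta}{\delta_n}\,\phi_n(\delta_n)\le \bar\delta\,\delta_n\le\bar\delta^2 .
\]
Monotonicity of $\mathcal G(\cdot)$ also gives $\sup_{\mathcal G(\delta)}(P_n-P)g\le\sup_{\mathcal G(\bar\delta)}(P_n-P)g$. It therefore suffices to bound the right-hand side at radii $\bar\delta\ge\delta_n$.

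\textbf{Step 2 (peeling over dyadic shells).} The lemma is stated on $(0,\delta_0]$, and its $\log(1+\log(\delta_0/\delta))$ term measures distance from the top, whereas the target has $\log\log(e\,\delta/\delta_n)$, which measures distance from the bottom. To get the right orientation, I would apply Lemma~\ref{lem:local-peeling} separately on the dyadic scales $\delta_0=\delta_k:=2^k\delta_n$ for $k=0,1,2,\dots$, with confidence level $u_k:=u+2\log(k+1)$. A union bound then gives failure probability at most
\[
C\sum_{k\ge 0} e^{-u}(k+1)^{-2}\lesssim e^{-u},
\]
and the constant is absorbed.

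On the shell $\bar\delta\in(\delta_{k-1},\delta_k]$, evaluate the bound at the top radius $\delta_k$. There the inner log term vanishes, and Step~1 applied to $\delta_k\le 2\bar\delta$ gives $\phi_n(\delta_k)\le\delta_k^2\le 4\bar\delta^2$. The doubling property of $\sigma$ gives $\sigma(\delta_k)\le c\,\sigma(\bar\delta)$. Finally, since $k\approx\log_2(\bar\delta/\delta_n)$, we get $\log(k+1)\lesssim\log\log(e\,\bar\delta/\delta_n)$. This produces exactly the claimed form with $\sigma(\delta\vee\delta_n)$ and the $B$ term.

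\textbf{Main obstacle.} The hardest part is the bookkeeping in Step~2: getting the direction of the $\log\log$ correct and making the union bound over infinitely many scales converge. The choice $u_k=u+2\log(k+1)$ is what achieves both.

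Two further points need checking. First, the doubling constant of $\sigma$ should enter only once, which it does because we only ever compare $\delta_k$ with $\bar\delta\ge\delta_k/2$. Second, the case $\delta\le\delta_n$ is just $k=0$, where $\log\log e=0$ and the bound reduces to $(\delta_n^2+\sigma(\delta_n)\sqrt{u/n}+Bu/n)$. One could alternatively try to apply the lemma once with $\delta_0\to\infty$, but the $\log(\delta_0/\delta)$ term then blows up, so the per-scale union bound is the route I would commit to.
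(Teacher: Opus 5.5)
Your proposal is correct and follows essentially the same route as the paper: a union bound over the dyadic ceilings $2^k\delta_n$ with $u_k=u+2\log(k+1)$, reduction of $\delta<\delta_n$ to $\delta_n$ via monotonicity of $\mathcal G(\cdot)$, and the sub-root bound $\phi_n(\delta)\le(\delta\vee\delta_n)^2$. The only cosmetic difference is that the paper evaluates Lemma~\ref{lem:local-peeling} at $\delta$ itself using $\Delta_m/\delta<2$, whereas you evaluate at the ceiling and use doubling of $\sigma$; both work. One small caveat, which the paper shares, is that $\log(k+1)\lesssim\log\log(e\bar\delta/\delta_n)$ fails for $k=1$ when $\bar\delta$ is just above $\delta_n$, but the leftover constant $2\log 2$ is absorbed into $u$, since the statement is vacuous unless $Ce^{-u}<1$.
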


Theorem~\ref{cor:local-peeling-critical-radius} immediately yields a uniform local concentration inequality over \(\mathcal{G}\).

\begin{corollary}[Uniform local concentration inequality]
Assume the assumptions of Theorem~\ref{cor:local-peeling-critical-radius}. Suppose that
\[
\bigcup_{\delta>0}\mathcal G(\delta)=\mathcal G,
\]
and define, for each \(g\in\mathcal G\),
\[
\delta_g:=\inf\{\delta>0:g\in\mathcal G(\delta)\}.
\]
Then there exists a universal constant \(C>0\) such that, for all \(u\ge 0\), with probability at least \(1-C e^{-u}\), the following holds simultaneously for all \(g\in\mathcal G\):
\[
(P_n-P)g
\le
C\left[
(\delta_g\vee\delta_n)^2
+
\sigma(\delta_g\vee\delta_n)
\sqrt{\frac{u+\log\log\!\Bigl(e\bigl((\delta_g/\delta_n)\vee 1\bigr)\Bigr)}{n}}
+
B\,\frac{u+\log\log\!\Bigl(e\bigl((\delta_g/\delta_n)\vee 1\bigr)\Bigr)}{n}
\right].
\]
\end{corollary}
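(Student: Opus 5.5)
The corollary follows by evaluating the simultaneous-in-$\delta$ bound of Theorem~\ref{cor:local-peeling-critical-radius} at a $g$-dependent radius. We work on the event of probability at least $1-Ce^{-u}$ supplied by that theorem. On this event the displayed inequality holds for every $\delta>0$ at once, so we may pick $\delta$ after seeing $g$, without a further union bound.

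Fix $g\in\mathcal G$. Since $\bigcup_{\delta>0}\mathcal G(\delta)=\mathcal G$, the set $\{\delta>0:g\in\mathcal G(\delta)\}$ is nonempty, so $\delta_g<\infty$. By monotonicity of $\delta\mapsto\mathcal G(\delta)$ this set is an up-ray: $g\in\mathcal G(\delta)$ for every $\delta>\delta_g$.

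We first handle $\delta_g>0$. For each $\delta>\delta_g$ we have $(P_n-P)g\le\sup_{h\in\mathcal G(\delta)}(P_n-P)h$, which is bounded by the right-hand side of Theorem~\ref{cor:local-peeling-critical-radius} at $\delta$. To land on $\delta_g$ itself, take $\delta\in(\delta_g,2\delta_g]$. Then $\delta\vee\delta_n\le 2(\delta_g\vee\delta_n)$, which gives the following bounds.
\begin{itemize}
\item $(\delta\vee\delta_n)^2\le 4(\delta_g\vee\delta_n)^2$.
\item $\sigma(\delta\vee\delta_n)\le\sigma(2(\delta_g\vee\delta_n))\le c\,\sigma(\delta_g\vee\delta_n)$, using that $\sigma$ is nondecreasing and doubling.
\item $\log\log(e((\delta/\delta_n)\vee1))\le\log\log(e((2\delta_g/\delta_n)\vee1))\lesssim 1+\log\log(e((\delta_g/\delta_n)\vee1))$.
\end{itemize}
The additive constant in the last bound is absorbed into $u$-type terms after enlarging $C$; when $u$ is small one keeps it as $\max(u,1)$ or adjusts $C e^{-u}$ accordingly. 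Collecting these gives the stated bound with a larger universal constant.

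Now suppose $\delta_g=0$. Then $g\in\mathcal G(\delta)$ for every $\delta>0$, and we choose any $\delta\le\delta_n$. This gives $\delta\vee\delta_n=\delta_n=\delta_g\vee\delta_n$, and the $\log\log$ term equals $\log\log e=0$, so the bound holds directly.

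The main obstacle is minor bookkeeping: the infimum defining $\delta_g$ may not be attained, so we must work at a nearby radius and use the doubling property of $\sigma$. We must also check that the universal constants, in particular the additive constant produced by the $\log\log$ comparison, stay universal.
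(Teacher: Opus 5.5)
Your proposal is correct and is essentially the argument the paper intends: the paper gives no separate proof and only remarks that Theorem~\ref{cor:local-peeling-critical-radius} immediately yields the corollary by evaluating the simultaneous-in-$\delta$ bound at the radius $\delta_g$. Your handling of a possibly non-attained infimum, via monotonicity of $\mathcal G(\cdot)$ and the doubling property of $\sigma$, fills in a detail the paper glosses over. As a minor simplification, you can avoid the additive $\log\log$ constant entirely by letting $\delta\downarrow\delta_g$: since $(\delta\vee\delta_n)^2$ and the $\log\log$ factor are continuous in $\delta$, only the $\sigma$ term needs the doubling bound.
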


\paragraph{Proofs.} We now prove the key lemmas and theorems

\begin{proof}[Proof of Lemma \ref{lem:local-peeling}]
Fix \(q := 2\) and \(\delta_0>0\). Let \(\delta_j:=q^{-j}\delta_0\) for \(j\ge 0\), and apply Lemma~\ref{lem:bousquet} at scale \(\delta_j\) with
\[
u_j:=u+2\log(j+1).
\]
Since \(\sup_{g\in\mathcal G(\delta_j)}\|g-Pg\|\le \sigma(\delta_j)\) and \(\sup_{g\in\mathcal G(\delta_j)}\|g\|_\infty\le B\), Lemma~\ref{lem:bousquet} yields
\[
\sup_{g\in\mathcal G(\delta_j)} (P_n-P)g
\le
\phi_n(\delta_j)
+
c\left(
\sigma(\delta_j)\sqrt{\frac{u_j}{n}}
+
B\frac{u_j}{n}
\right)
\]
with probability at least \(1-e^{-u_j}\). By the union bound,
\[
\sum_{j=0}^\infty e^{-u_j}
=
e^{-u}\sum_{j=0}^\infty (j+1)^{-2}
\le
C e^{-u},
\]
so with probability at least \(1-Ce^{-u}\), the above display holds for all \(j\ge 0\) simultaneously.

Now fix \(\delta\in(0,\delta_0]\), and choose \(j\) such that
\[
\delta_{j+1}<\delta\le \delta_j.
\]
Then \(\mathcal G(\delta)\subseteq \mathcal G(\delta_j)\) by monotonicity, and \(\delta_j\le q\delta\). Since \(\phi_n(\delta)/\delta\) is nonincreasing,
\[
\phi_n(\delta_j)
\le
\frac{\delta_j}{\delta}\,\phi_n(\delta)
\le
q\,\phi_n(\delta).
\]
Also, since \(\sigma\) is nondecreasing and \(\delta_j\le q\delta\),
\[
\sigma(\delta_j)\le \sigma(q\delta).
\]
Finally, \(j\le 1+\log_q(\delta_0/\delta)\), so
\[
u_j
=
u+2\log(j+1)
\le
C\Bigl[u+\log\!\bigl(1+\log_q(\delta_0/\delta)\bigr)\Bigr].
\]
Substituting these bounds into the previous display and absorbing constants into \(C\) gives
\[
\sup_{g\in\mathcal G(\delta)} (P_n-P)g
\le
C\left[
\phi_n(\delta)
+
\sigma(q\delta)\sqrt{\frac{u+\log\!\bigl(1+\log_q(\delta_0/\delta)\bigr)}{n}}
+
B\frac{u+\log\!\bigl(1+\log_q(\delta_0/\delta)\bigr)}{n}
\right].
\]
This proves the claim up to replacing \(\sigma(\delta)\) by \(\sigma(q\delta)\). In particular, the stated form follows since \(\sigma(q\delta)\le C_q \sigma(\delta)\) for all \(\delta\in(0,\delta_0]\), with the constant \(C_q\) absorbed into \(C\).
\end{proof}

\begin{proof}[Proof of Theorem \ref{cor:local-peeling-critical-radius}]
For each integer \(m\ge 0\), let
\[
\Delta_m:=2^m\delta_n,
\qquad
u_m:=u+2\log(m+1).
\]
Apply Lemma~\ref{lem:local-peeling} with \(\delta_0=\Delta_m\) and confidence
parameter \(u_m\). This yields an event \(E_m\) with
\[
P(E_m)\ge 1-Ce^{-u_m},
\]
on which
\[
\sup_{g\in\mathcal G(\delta)} (P_n-P)g
\le
C\left[
\phi_n(\delta)
+
\sigma(\delta)
\sqrt{\frac{u_m+\log\!\Bigl(1+\log_2(\Delta_m/\delta)\Bigr)}{n}}
+
B\,\frac{u_m+\log\!\Bigl(1+\log_2(\Delta_m/\delta)\Bigr)}{n}
\right]
\]
simultaneously for all \(\delta\in(0,\Delta_m]\).

By the union bound,
\[
\sum_{m=0}^\infty e^{-u_m}
=
e^{-u}\sum_{m=0}^\infty (m+1)^{-2}
\le
C' e^{-u}.
\]
Hence the event
\[
E:=\bigcap_{m=0}^\infty E_m
\]
satisfies \(P(E)\ge 1-C''e^{-u}\). We work on \(E\).

We first treat the regime \(\delta\ge \delta_n\). Fix such a \(\delta\), and let
\[
m:=\left\lceil \log_2(\delta/\delta_n)\right\rceil.
\]
Then \(\delta\le \Delta_m\), so the bound on \(E_m\) applies at radius
\(\delta\). Also,
\[
1\le \frac{\Delta_m}{\delta}<2,
\]
and therefore
\[
\log\!\Bigl(1+\log_2(\Delta_m/\delta)\Bigr)\le \log 2.
\]
Moreover,
\[
m+1 \le 2+\log_2(\delta/\delta_n),
\]
so
\[
u_m
=
u+2\log(m+1)
\le
C\Bigl[u+\log\log\!\Bigl(e\,\delta/\delta_n\Bigr)\Bigr].
\]
Substituting these bounds into the display above gives
\[
\sup_{g\in\mathcal G(\delta)} (P_n-P)g
\le
C\left[
\phi_n(\delta)
+
\sigma(\delta)
\sqrt{\frac{u+\log\log\!\bigl(e\,\delta/\delta_n\bigr)}{n}}
+
B\,\frac{u+\log\log\!\bigl(e\,\delta/\delta_n\bigr)}{n}
\right].
\]

Now consider the regime \(\delta<\delta_n\). By monotonicity of
\(\delta\mapsto \mathcal G(\delta)\),
\[
\mathcal G(\delta)\subseteq \mathcal G(\delta_n).
\]
Therefore,
\[
\sup_{g\in\mathcal G(\delta)} (P_n-P)g
\le
\sup_{g\in\mathcal G(\delta_n)} (P_n-P)g.
\]
Applying the already proved bound with \(\delta=\delta_n\), we obtain
\[
\sup_{g\in\mathcal G(\delta)} (P_n-P)g
\le
C\left[
\phi_n(\delta_n)
+
\sigma(\delta_n)\sqrt{\frac{u}{n}}
+
B\,\frac{u}{n}
\right].
\]
Since \(\phi_n(\delta_n)\le \delta_n^2\), this yields
\[
\sup_{g\in\mathcal G(\delta)} (P_n-P)g
\le
C\left[
\delta_n^2
+
\sigma(\delta_n)\sqrt{\frac{u}{n}}
+
B\,\frac{u}{n}
\right].
\]
Because \(\delta\vee\delta_n=\delta_n\) and
\[
\log\log\!\Bigl(e\bigl((\delta/\delta_n)\vee 1\bigr)\Bigr)
=
\log\log(e)
=
0
\]
when \(\delta<\delta_n\).

Combining the two regimes, on the high probability event, we obtain that for all \(\delta>0\),
\[
\sup_{g\in\mathcal G(\delta)} (P_n-P)g
\le
C\left[
\phi_n(\delta)
+
\sigma(\delta)
\sqrt{\frac{u+\log\log\!\bigl(e\,(\delta/\delta_n\vee 1)\bigr)}{n}}
+
B\,\frac{u+\log\log\!\bigl(e\,(\delta/\delta_n\vee 1)\bigr)}{n}
\right].
\]
Moreover, since \(\phi_n\) is nondecreasing, \(\delta\mapsto \phi_n(\delta)/\delta\) is nonincreasing, and
\(\phi_n(\delta_n)\le \delta_n^2\), we have: if \(\delta\le \delta_n\), then
\(\phi_n(\delta)\le \phi_n(\delta_n)\le \delta_n^2=(\delta\vee\delta_n)^2\); if
\(\delta\ge \delta_n\), then
\[
\frac{\phi_n(\delta)}{\delta}
\le
\frac{\phi_n(\delta_n)}{\delta_n}
\le
\delta_n,
\]
so \(\phi_n(\delta)\le \delta\,\delta_n\le \delta^2=(\delta\vee\delta_n)^2\).
Hence, for all \(\delta>0\), \(\phi_n(\delta)\le (\delta\vee\delta_n)^2\).
\end{proof}

\subsection{Entropy preservation results for star shaped hulls}

The following lemma shows that passing from $\mathcal F$ to its star hull $\mathrm{star}(\mathcal F)$ increases the metric entropy (log covering number) only mildly: specifically,
\[
\log N\left(\varepsilon,\ \mathrm{star}(\mathcal F),\ L^q(Q)\right)
\ \lesssim\
\log N\left(\varepsilon,\ \mathcal F,\ L^q(Q)\right)
\;+\;
\log(1/\varepsilon).
\]
See Lemma 8 of \cite{foster2023orthogonal}  and Lemma 4.5 of \cite{mendelson2002improving} for related bounds.

\begin{lemma}[Covering numbers for Lipschitz images of star hulls in $L^q(Q)$]
\label{lem:cover_star_hull_lip_Lq}
Let $q\in[1,\infty)$ and let $\varphi:\mathbb R\to\mathbb R$ be $L$-Lipschitz in the sense that
\[
|\varphi(f_1(z))-\varphi(f_2(z))|
\le
L|f_1(z)-f_2(z)|
\qquad
\text{for all }z\in\mathcal Z,\ f_1,f_2\in\mathcal F.
\]
Assume $M:=\sup_{f\in\mathcal F}\|\varphi\circ f\|_\infty<\infty$. Then for every $\varepsilon>0$ and every probability measure $Q$,
\[
\log N\left(\varepsilon,\ \mathrm{star}(\varphi\circ\mathcal F),\ L^q(Q)\right)
\ \le\
\log N\left(\varepsilon/(4L),\ \mathcal F,\ L^q(Q)\right)
+
\log\Bigl(1+\bigl\lceil 4M/\varepsilon\bigr\rceil\Bigr).
\]
\end{lemma}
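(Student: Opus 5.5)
The plan is to build an explicit product cover. An element of $\mathrm{star}(\varphi\circ\mathcal F)$ is determined by a function $f\in\mathcal F$ and a scale $t\in[0,1]$, so I will cover each of the two coordinates separately. The function coordinate gets an $\varepsilon/(4L)$-net of $\mathcal F$; the scale coordinate gets a finite grid in $[0,1]$. The resulting net for the star hull is the product of these two nets, so its size is the product of their sizes, and taking logarithms gives the sum appearing in the statement.

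\textbf{Step 1 (function net).} Let $\{f_1,\dots,f_N\}$ be a minimal $\varepsilon/(4L)$-cover of $\mathcal F$ in $L^q(Q)$, so that $N=N(\varepsilon/(4L),\mathcal F,L^q(Q))$. For any $f\in\mathcal F$ pick $f_k$ with $\|f-f_k\|_{L^q(Q)}\le \varepsilon/(4L)$. The pointwise Lipschitz hypothesis gives $|\varphi(f(z))-\varphi(f_k(z))|\le L|f(z)-f_k(z)|$ for every $z$. Integrating the $q$-th power of this inequality against $Q$ yields $\|\varphi\circ f-\varphi\circ f_k\|_{L^q(Q)}\le \varepsilon/4$. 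Hence $\{\varphi\circ f_k\}$ is an $\varepsilon/4$-cover of $\varphi\circ\mathcal F$.

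\textbf{Step 2 (scale grid).} Let $K:=\lceil 4M/\varepsilon\rceil$ and use the grid $t_j:=j/K$ for $j=0,\dots,K$. This grid has $K+1$ points. Every $t\in[0,1]$ lies within $1/K\le \varepsilon/(4M)$ of some $t_j$.

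\textbf{Step 3 (combine).} Fix $h=t\,\varphi\circ f$ in the star hull, and choose $k$ and $j$ as in Steps 1 and 2. By the triangle inequality,
\[
\|t\,\varphi\circ f-t_j\,\varphi\circ f_k\|_{L^q(Q)}\le t\,\|\varphi\circ f-\varphi\circ f_k\|_{L^q(Q)}+|t-t_j|\,\|\varphi\circ f_k\|_{L^q(Q)}.
\]
The first term is at most $\varepsilon/4$ because $t\le 1$. The second term is at most $\tfrac{\varepsilon}{4M}\cdot M=\varepsilon/4$, using $\|\cdot\|_{L^q(Q)}\le\|\cdot\|_\infty\le M$. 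The total error is therefore at most $\varepsilon/2\le\varepsilon$. So $\{t_j\,\varphi\circ f_k\}$ is an $\varepsilon$-cover of size at most $N(K+1)$, and taking logarithms gives the claim.

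The only subtle point is the sup-norm step in Step 3. The constant $M$ bounds $\|\varphi\circ f\|_\infty$ for $f\in\mathcal F$ in the essential-supremum sense relevant to $Q$. This matters because $Q$ need not equal $P$: if $M$ is only a $P$-essential bound, I would either assume $Q$ is supported on $\mathrm{supp}(P)$ (as in the definition of $\mathcal J_2$) or read $M$ as a pointwise supremum. A second, minor point is that the net centers $f_k$ must belong to $\mathcal F$; if the cover of $\mathcal F$ uses external centers, the standard fix is to pass to internal centers at radius $\varepsilon/(2L)$. The slack in the argument already absorbs this, since the error bound $\varepsilon/2$ leaves a factor of two to spare.
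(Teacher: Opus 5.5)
Your proposal is correct and is essentially the paper's own proof: an $\varepsilon/(4L)$-net of $\mathcal F$ pushed through $\varphi$, times a grid of $1+\lceil 4M/\varepsilon\rceil$ scales, with the triangle inequality giving total error at most $\varepsilon/2$. Your two caveats are points the paper leaves implicit: the net centers must lie in $\mathcal F$, and $M$ must bound $\varphi\circ f$ $Q$-almost everywhere. Your internal-center fix does work, since the error then becomes at most $\varepsilon/2+\varepsilon/4\le\varepsilon$ with the same count.
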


\begin{corollary}[Uniform entropy for Lipschitz images of star hulls]
\label{cor:J2_star_hull_lip}
Under the assumptions of Lemma~\ref{lem:cover_star_hull_lip_Lq} with $q=2$, for all $\delta>0$,
\[
\mathcal J_2\left(\delta,\ \mathrm{star}(\varphi\circ\mathcal F)\right)
\ \lesssim\
\mathcal J_2\left(\delta/L,\ \mathcal F\right)
+
\delta\,\sqrt{\log\left(1+\frac{M}{\delta}\right)}.
\]
\end{corollary}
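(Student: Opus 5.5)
The plan is to apply Lemma~\ref{lem:cover_star_hull_lip_Lq} with $q=2$ under each discrete $Q$, take square roots, integrate over $\varepsilon\in(0,\delta)$, and then take the supremum over $Q$.

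\emph{Pointwise entropy bound.} Lemma~\ref{lem:cover_star_hull_lip_Lq} gives, for every $Q$,
\[
\sqrt{\log N(\varepsilon,\mathrm{star}(\varphi\circ\mathcal F),L^2(Q))}
\le \sqrt{\log N(\varepsilon/(4L),\mathcal F,L^2(Q))}+\sqrt{\log(1+\lceil 4M/\varepsilon\rceil)},
\]
using $\sqrt{a+b}\le\sqrt a+\sqrt b$.

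\emph{First term.} Integrating over $\varepsilon\in(0,\delta)$ and substituting $u=\varepsilon/(4L)$ turns it into
\[
4L\int_0^{\delta/(4L)}\sqrt{\log N(u,\mathcal F,L^2(Q))}\,du .
\]
Taking $\sup_Q$ and using that the sup of a sum is at most the sum of sups, this is at most $4L\,\mathcal J_2(\delta/(4L),\mathcal F)$.

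\emph{Rescaling the first term.} I then compare this with $\mathcal J_2(\delta/L,\mathcal F)$. Since $t\mapsto\mathcal J_2(t,\mathcal F)$ is nondecreasing, $\mathcal J_2(\delta/(4L),\mathcal F)\le\mathcal J_2(\delta/L,\mathcal F)$. That leaves a factor of $L$ in front, which is not allowed to appear in the bound. This constant bookkeeping is the main obstacle. Reading $\mathcal J_2(\delta/L,\mathcal F)$ in the Lipschitz-transformation sense of Theorem~\ref{theorem:vw_lipschitz_entropy}, i.e.\ as the entropy integral of the scaled class $L\mathcal F$ at radius $\delta$, removes the factor. Indeed, $N(\varepsilon,L\mathcal F)=N(\varepsilon/L,\mathcal F)$, so
\[
\mathcal J_2(\delta, L\mathcal F)=L\,\mathcal J_2(\delta/L,\mathcal F).
\]
Equivalently, I would normalize $L\le1$ by rescaling $\varphi$, or absorb $L$ into the implicit constant as the paper does in its examples. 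I would state explicitly which convention is used so the constant is honest.

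\emph{Second term.} I need to bound $\int_0^\delta\sqrt{\log(1+\lceil4M/\varepsilon\rceil)}\,d\varepsilon$. The integrand is decreasing in $\varepsilon$ and $\lceil x\rceil\le x+1$, so the integrand is at most $\sqrt{\log(2+4M/\varepsilon)}\lesssim\sqrt{\log(1+M/\varepsilon)}$. Substituting $\varepsilon=\delta s$ gives
\[
\delta\int_0^1\sqrt{\log(1+M/\delta)+\log(1/s)}\,ds
\le \delta\sqrt{\log(1+M/\delta)}+\delta\int_0^1\sqrt{\log(1/s)}\,ds .
\]
The last integral equals $\sqrt\pi/2$. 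To absorb that additive $\delta$, I use $\log(1+M/\delta)\gtrsim1$ when $\delta\lesssim M$. When $\delta\gtrsim M$, the star hull has diameter at most $2M\lesssim\delta$, so the integrand vanishes for $\varepsilon>M$ up to constants. In that regime the whole integral is $O(M)$, which is at most $\delta\sqrt{\log(1+M/\delta)}$ times a constant when $\delta\asymp M$. For $\delta\gg M$ one can cap the radius at $2M$ beforehand.

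Summing the bounds for the two terms gives the stated inequality.
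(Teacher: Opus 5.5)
Your route is the paper's: its entire proof of this corollary is that it ``follows from straightforward integration'' of the covering bound in Lemma~\ref{lem:cover_star_hull_lip_Lq}. Both places where you slow down are places where that one-liner is actually incomplete.

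\textbf{The factor $L$.} You are right that it appears, but your resolution does not work. Integration gives $4L\,\mathcal J_2(\delta/(4L),\mathcal F)$, and no convention removes the $L$. With $\varphi(x)=Lx$ one has $\mathcal J_2(\delta,\mathrm{star}(L\mathcal F))\ge\mathcal J_2(\delta,L\mathcal F)=L\,\mathcal J_2(\delta/L,\mathcal F)$. Take a class with $\|f\|_\infty\le 1$ and $\mathcal J_2(t,\mathcal F)\asymp t^{1-a/2}$, $a\in(0,2)$ (a H\"older ball, say), and set $\delta=1$. Then the left side grows like $L^{a/2}$, while the displayed right side is $O(\sqrt{\log(1+L)})$. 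So the statement as printed fails with universal constants for large $L$.

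Nor is ``normalize $L\le 1$ by rescaling $\varphi$'' a valid reduction. Replacing $\varphi$ by $\varphi/L$ also replaces $M$ by $M/L$ and shrinks the hull by $1/L$. Undoing the scaling returns exactly $L\,\mathcal J_2(\delta/L,\mathcal F)+\delta\sqrt{\log(1+M/\delta)}$. That corrected bound is what your argument proves; equivalently, it is the printed one when $L\le 1$, or with an $L$-dependent constant. You should state it as a correction, not as a reading of the notation.

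\textbf{The second term.} Here you catch a second real omission. For $\varepsilon\ge 4M$ the lemma's term equals $\log 2$. So when $\delta\gg M$, integrating the lemma alone gives order $\delta$, not $\delta\sqrt{\log(1+M/\delta)}\asymp\sqrt{M\delta}$. The missing step is to truncate at $\varepsilon=M$: the hull contains $0$ and lies in the $L^2(Q)$-ball of radius $M$ around it, so its covering number there is $1$.

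Two small repairs are needed:
\begin{itemize}
\item The comparison $\log(2+4M/\varepsilon)\lesssim\log(1+M/\varepsilon)$ holds only for $\varepsilon\lesssim M$. Run your substitution argument only for $\delta\le M$.
\item For every $\delta\ge M$, not just $\delta\asymp M$, the leftover $O(M)$ is absorbed. Indeed $\log(1+M/\delta)\ge M/(2\delta)$ gives $M\le\sqrt2\,\delta\sqrt{\log(1+M/\delta)}$. The first term is then handled by monotonicity of $t\mapsto\mathcal J_2(t,\mathcal F)$.
\end{itemize}
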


\begin{proof}[Proof of Lemma \ref{lem:cover_star_hull_lip_Lq} and Corollary \ref{cor:J2_star_hull_lip}]
Fix $\varepsilon>0$ and set $\delta:=\varepsilon/4$.
Let $\{f_1,\dots,f_N\}$ be a $(\delta/L)$-net for $\mathcal F$ in $L^q(Q)$, where
$N = N(\delta/L,\mathcal F,L^q(Q))$.
For each $f\in\mathcal F$, choose $j(f)\in\{1,\dots,N\}$ such that
\[
\|f-f_{j(f)}\|_{L^q(Q)}\le \delta/L.
\]
By Lipschitz continuity and monotonicity of $L^q$ norms,
\[
\|\varphi\circ f-\varphi\circ f_{j(f)}\|_{L^q(Q)}
=
\Bigl(\int |\varphi(f)-\varphi(f_{j(f)})|^q\,dQ\Bigr)^{1/q}
\le
\Bigl(\int (L|f-f_{j(f)}|)^q\,dQ\Bigr)^{1/q}
=
L\|f-f_{j(f)}\|_{L^q(Q)}
\le
\delta,
\]
so $\{\varphi\circ f_1,\dots,\varphi\circ f_N\}$ is a $\delta$-net for $\varphi\circ\mathcal F$ in $L^q(Q)$.

Next, discretize $t\in[0,1]$. Let
\[
\mathcal T
:=
\left\{0,\ \frac{\delta}{M},\ \frac{2\delta}{M},\ \dots,\ \frac{\lceil M/\delta\rceil\,\delta}{M}\right\}\cap[0,1],
\]
so that $|\mathcal T|\le 1+\lceil M/\delta\rceil = 1+\lceil 4M/\varepsilon\rceil$ and for every $t\in[0,1]$
there exists $t'\in\mathcal T$ with $|t-t'|\le \delta/M$.

Fix $h=t(\varphi\circ f)\in \mathrm{star}(\varphi\circ\mathcal F)$ and let $f':=f_{j(f)}$ and $t'\in\mathcal T$
be chosen as above. Then, using $\|\varphi\circ f'\|_{L^q(Q)}\le \|\varphi\circ f'\|_\infty\le M$,
\begin{align*}
\|t(\varphi\circ f)-t'(\varphi\circ f')\|_{L^q(Q)}
&\le
\|t(\varphi\circ f)-t(\varphi\circ f')\|_{L^q(Q)}
+
\|t(\varphi\circ f')-t'(\varphi\circ f')\|_{L^q(Q)}\\
&\le
t\,\|\varphi\circ f-\varphi\circ f'\|_{L^q(Q)}
+
|t-t'|\,\|\varphi\circ f'\|_{L^q(Q)}\\
&\le
\delta
+
(\delta/M)\,M
=
2\delta
=
\varepsilon/2.
\end{align*}
Thus the set $\{t'(\varphi\circ f_j):\ t'\in\mathcal T,\ j\in\{1,\dots,N\}\}$ is an $\varepsilon/2$-net for
$\mathrm{star}(\varphi\circ\mathcal F)$ in $L^q(Q)$, and hence also an $\varepsilon$-net, with cardinality at most $N|\mathcal T|$.
Taking logarithms yields the claim. Corollary \ref{cor:J2_star_hull_lip} follows from straightforward integration.
\end{proof}

\section{Uniform local concentration for empirical inner products}

\label{appendix:innerproduct}

\subsection{A general bound}

\label{appendix:innerproduct::gen}

The following theorem extends Theorem~\ref{thm:uniform_local_conc_peeling} to
function classes defined by pointwise products of two classes. Such product
classes arise naturally when controlling the concentration of empirical inner
products $P_n(fg)$ around $P(fg)$, as in Appendix~\ref{appendix:innerproduct}
and \citet{van2014uniform}.

Let $\mathcal F$ and $\mathcal G$ be classes of measurable functions on $\mathcal Z$, and assume
$\|\mathcal F\|_\infty:=\sup_{f\in\mathcal F}\|f\|_\infty<\infty$ and
$\|\mathcal G\|_\infty:=\sup_{g\in\mathcal G}\|g\|_\infty<\infty$.
Let $\varphi_1,\varphi_2:\mathbb R\to\mathbb R$ be Lipschitz on the ranges of $\mathcal F$ and $\mathcal G$, respectively: there exist
$L_1,L_2<\infty$ such that
\[
|\varphi_1(x)-\varphi_1(x')|\le L_1|x-x'|,
\qquad \forall\,x,x'\in[-\|\mathcal F\|_\infty,\|\mathcal F\|_\infty],
\]
and
\[
|\varphi_2(y)-\varphi_2(y')|\le L_2|y-y'|,
\qquad \forall\,y,y'\in[-\|\mathcal G\|_\infty,\|\mathcal G\|_\infty].
\]
The following theorem establishes a local concentration inequality for the product-increment class
\[
\Bigl\{
\bigl(\varphi_1(f)-\varphi_1(f_0)\bigr)\bigl(\varphi_2(g)-\varphi_2(g_0)\bigr)
:\ f,f_0\in\mathcal F,\ g,g_0\in\mathcal G
\Bigr\}.
\]
The key condition below is a H\"older-type coupling between the supremum and $L^2(P)$ norms on $\mathcal F-\mathcal F$, namely,
$\|u\|_\infty \le c_{\infty}\,\|u\|^\alpha$ for all $u\in\mathcal F-\mathcal F$. One may always take $\alpha=0$ with
$c_{\infty}:=2\sup_{f\in\mathcal F}\|f\|_\infty$.

\begin{theorem}[Uniform local concentration for Lipschitz-transformed increment products]
\label{lem:uniform_local_conc_lipschitz_product_increments_pointwise}
Let $\varphi_1$ and $\varphi_2$ be pointwise Lipschitz continuous on the ranges of
$\mathcal F$ and $\mathcal G$, respectively, with Lipschitz constants $L_1$ and $L_2$.
Assume there exist $c_\infty\in(0,\infty)$ and $\alpha\in[0,1]$ such that
$\|u\|_\infty\le c_\infty\|u\|^\alpha$ for all $u\in\mathcal F-\mathcal F$, and that
$M:=1+\|\mathcal G\|_\infty\vee \|\mathcal F\|_\infty<\infty$. Define the (star-hull) critical radii
\[
\delta_{n,\mathcal F}
:=
\inf\Bigl\{\delta>0:\ 
\mathfrak R_n\bigl(\mathrm{star}(\mathcal F-\mathcal F),\delta\bigr)\le \delta^2
\Bigr\},
\qquad
\delta_{n,\mathcal G}
:=
\inf\Bigl\{\delta>0:\ 
\mathfrak R_n\bigl(\mathrm{star}(\mathcal G-\mathcal G),\delta\bigr)\le \delta^2
\Bigr\}.
\]
Then for every $\eta\in(0,1)$, with probability at least $1-\eta$, the following holds
simultaneously for all $f,f_0\in\mathcal F$ and $g,g_0\in\mathcal G$:
\begin{align*}
&\bigl|(P_n-P)\{(\varphi_1(f)-\varphi_1(f_0))(\varphi_2(g)-\varphi_2(g_0))\}\bigr|\\
&\qquad \lesssim\ 
L_1L_2\,\|\mathcal G\|_\infty \,
\delta_{n,\mathcal F}\bigl(\|f-f_0\|\vee \delta_{n,\mathcal F}\bigr)
\;+\;
c_\infty\,L_1L_2\,\|f-f_0\|^{\alpha}\,
\delta_{n,\mathcal G}\bigl(\|g-g_0\|\vee \delta_{n,\mathcal G}\bigr)\\
&\qquad\quad +\;
L_1L_2\,\|\mathcal G\|_\infty \, \|f-f_0\|\,
\sqrt{\frac{\log\log\bigl(eMn\bigr)+\log(1/\eta)}{n}}
\;+\;
c_\infty\,L_1L_2\,\|f-f_0\|^{\alpha}\,\|\mathcal G\|_\infty\,
\frac{\log\log\bigl(eMn\bigr)+\log(1/\eta)}{n},
\end{align*}
where the implicit constant is universal.
\end{theorem}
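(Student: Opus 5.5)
The plan is a doubly localized peeling argument. Write $u:=\varphi_1(f)-\varphi_1(f_0)$ and $v:=\varphi_2(g)-\varphi_2(g_0)$. For radii $r,s>0$, let $\mathcal H(r,s)$ be the class of products $uv$ over all $f,f_0\in\mathcal F$ and $g,g_0\in\mathcal G$ with $\|f-f_0\|\le r$ and $\|g-g_0\|\le s$. Three quantities of this class are then controlled for fixed $(r,s)$.
\begin{itemize}
\item Sup norm: pointwise Lipschitzness gives $|u|\le L_1|f-f_0|$ and $|v|\le L_2|g-g_0|$. With the H\"older coupling this yields $\|uv\|_\infty\le 2L_1L_2 c_\infty r^\alpha\|\mathcal G\|_\infty$.
\item Variance: $\|uv\|\le \|v\|_\infty\|u\|\le 2L_1L_2\|\mathcal G\|_\infty\, r$.
\item Expected supremum: this is the main work, described next.
\end{itemize}
Once these are in hand, Lemma~\ref{lem:bousquet} (the high-probability local maximal inequality) gives, at each fixed $(r,s)$, a bound of the form
\[
\phi_n(r,s)+\|\mathcal G\|_\infty L_1L_2\, r\sqrt{u/n}+c_\infty L_1L_2\|\mathcal G\|_\infty\, r^\alpha\, u/n .
\]
These are exactly the shapes of the last two terms in the statement.

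For the expected supremum, I would symmetrize with Lemma~\ref{lem:rademacher} and then linearize the product. On the bounded box, $|xy-x'y'|\le \|v\|_\infty|x-x'|+\|u\|_\infty|y-y'|$. A two-step Ledoux--Talagrand contraction (or Maurer's vector contraction with index-dependent Lipschitz maps $\psi_i$) therefore gives, up to constants,
\[
\mathfrak R_n\bigl(\mathcal H(r,s)\bigr)\lesssim \|\mathcal G\|_\infty L_2\,\mathfrak R_n(\mathcal U(r)) + c_\infty L_1 r^\alpha\,\mathfrak R_n(\mathcal V(s)),
\]
where $\mathcal U(r)$ and $\mathcal V(s)$ are the localized increment classes. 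A second contraction should give $\mathfrak R_n(\mathcal U(r))\lesssim L_1\mathfrak R_n(\mathrm{star}(\mathcal F-\mathcal F),r)$, and similarly for $\mathcal V(s)$. Lemma~\ref{lem:rad_scaling} together with the critical inequality then yields $\mathfrak R_n(\mathrm{star}(\mathcal F-\mathcal F),r)\le r\,\delta_{n,\mathcal F}$ for $r\ge\delta_{n,\mathcal F}$, and $\le\delta_{n,\mathcal F}^2$ for $r<\delta_{n,\mathcal F}$. Hence
\[
\phi_n(r,s)\lesssim L_1L_2\|\mathcal G\|_\infty\,\delta_{n,\mathcal F}(r\vee\delta_{n,\mathcal F})+c_\infty L_1L_2\, r^\alpha\,\delta_{n,\mathcal G}(s\vee\delta_{n,\mathcal G}),
\]
which matches the first two terms of the statement.

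The statement must hold uniformly, so I would peel over a dyadic grid in both radii: $r_j=2^{j}\delta_{n,\mathcal F}$ and $s_k=2^{k}\delta_{n,\mathcal G}$, running up to $2M$ (beyond which the classes no longer grow), with the bottom cells absorbing all radii below the critical radius. I apply the fixed-$(r,s)$ bound on each cell with $u_{jk}=\log(1/\eta)+2\log(j+1)+2\log(k+1)$ and take a union bound. This is as in the proofs of Lemma~\ref{lem:local-peeling} and Theorem~\ref{cor:local-peeling-critical-radius}, now with a double index. The number of cells is $O(\log^2(Mn))$, since $\delta_{n,\cdot}\gtrsim n^{-1}$ can be assumed, so the union bound costs $\log\log(eMn)$. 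Any pair $(f,f_0,g,g_0)$ falls in a cell whose radii are within a factor $2$ of $\|f-f_0\|\vee\delta_{n,\mathcal F}$ and $\|g-g_0\|\vee\delta_{n,\mathcal G}$. Replacing $r^\alpha$ by $\|f-f_0\|^\alpha$ loses only a constant $2^\alpha$, except in the bottom cell; there $\|f-f_0\|^\alpha$ can be smaller than $\delta_{n,\mathcal F}^\alpha$, so I would either keep $(\|f-f_0\|\vee\delta_{n,\mathcal F})^\alpha$ or bound $\|u\|_\infty$ pointwise per function before taking suprema. Applying the argument to $-uv$ gives the absolute value.

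The main obstacle is the contraction step $\mathfrak R_n(\mathcal U(r))\lesssim L_1\mathfrak R_n(\mathrm{star}(\mathcal F-\mathcal F),r)$. The map $x\mapsto\varphi_1(f_0(z)+x)-\varphi_1(f_0(z))$ depends on $f_0$, which varies inside the supremum, so a single-index contraction does not apply directly. I would first try a vector contraction in the pair $(f,f_0)$. If that loses localization, I would instead write $f=f_0+h$ with $h\in\mathcal F-\mathcal F$ localized at $\|h\|\le r$ and use the decomposition $\varphi_1(f_0+h)-\varphi_1(f_0)$. Failing both, I would fall back on entropy: Theorem~\ref{theorem:vw_lipschitz_entropy} combined with Theorem~\ref{theorem::maximaluniform} should control $\mathcal U(r)$ in terms of $\mathcal F-\mathcal F$ at the cost of constants. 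The secondary difficulty is bookkeeping the $r^\alpha$ factor in the second-order term of Bousquet's inequality across cells.
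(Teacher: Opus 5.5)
Your architecture is the paper's proof. It has the same five ingredients:
\begin{itemize}
\item Bousquet's inequality on the doubly localized product class at fixed radii;
\item Maurer's vector contraction to split $uv$ into the two increment classes;
\item a further contraction down to $\mathcal F-\mathcal F$ and $\mathcal G-\mathcal G$;
\item the star-hull scaling $\mathfrak R_n(\mathrm{star}(\mathcal A),\delta)\lesssim\delta_{n,\mathcal A}(\delta\vee\delta_{n,\mathcal A})$;
\item a doubly indexed dyadic peeling whose union bound costs $\log\log(eMn)$.
\end{itemize}
The genuine gap is the step you single out yourself, $\mathfrak R_n(\mathcal U(r))\lesssim L_1\,\mathfrak R_n(\mathrm{star}(\mathcal F-\mathcal F),r)$. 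The paper disposes of it with the words ``by contraction'', but your suspicion is justified, and none of your three routes closes it:
\begin{itemize}
\item Vector contraction in the pair $(f,f_0)$ only yields the unlocalized $\mathfrak R_n(\mathcal F)$.
\item Writing $f=f_0+h$ still leaves $f_0$ ranging over all of $\mathcal F$, because $\varphi_1(f_0+h)-\varphi_1(f_0)$ is not a function of $h$ alone.
\item The entropy fallback (Theorem~\ref{theorem:vw_lipschitz_entropy} with Theorem~\ref{theorem::maximaluniform}) controls $\mathfrak R_n(\mathcal U(r))$ through $\mathcal J_2(r,\mathcal F)$. That replaces $\delta_{n,\mathcal F}$ by an entropy-based radius; it does not cost only constants.
\end{itemize}

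In fact the step is false, and so is the statement as written. Here is a counterexample.

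\emph{Construction.} Let $Z=(X,W)$ with $X,W$ independent uniform on $[0,1]$. Let $r_k$ ($k\ge1$) be the Rademacher functions of $X$, i.e.\ the signs of its binary digits. For $p,\rho\in(0,1)$ and $A\ge\rho$, let $w:=A$ on $\{W\le p\}$ and $w:=\rho$ on $\{W>p\}$. Put $\mathcal F:=\{r_kw,\ r_kw+\rho:\ k\ge1\}$ and $\varphi_1(x):=|x|$.

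\emph{The increment class is large.} Since $w\ge\rho$, we have $\varphi_1(r_kw+\rho)-\varphi_1(r_kw)=\rho\,r_k$. The vectors $(r_k(X_i))_{i\le n}$ are i.i.d.\ uniform on $\{\pm1\}^n$ across $k$, so almost surely every sign vector is attained. Hence $\mathfrak R_n(\mathcal U(\rho))\ge\rho$ and $\sup_k(P_n-P)(\rho\,r_k)=\rho$.

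\emph{The star hull is small.} Every nonconstant $u\in\mathcal F-\mathcal F$ has the form $(r_k-r_l)w+c$ with $k\ne l$ and $c\in\{0,\pm\rho\}$. So $\|u\|\ge A\sqrt{2p}$ and $|P_n^\epsilon u|\le 2P_nw+\rho|P_n^\epsilon 1|$. This gives $\mathfrak R_n(\mathrm{star}(\mathcal F-\mathcal F),\delta)\le\delta\{n^{-1/2}+(2Pw+\rho n^{-1/2})/(A\sqrt{2p})\}$.

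\emph{The contraction step fails.} Take $A=1$ and $p=\rho=n^{-1/2}$, so $\|\mathcal F\|_\infty\le2$. The star-hull bound is then $\lesssim n^{-1/4}\delta$, and at $r=\rho$ your contraction inequality fails by a factor $\gtrsim n^{1/4}$.

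\emph{The theorem fails.} Take instead $A=n^{1/4}$, $p=n^{-1/2}$ and $\rho=n^{-1/4}$. Then $\delta_{n,\mathcal F}\lesssim n^{-1/4}$, and the coupling holds with $\alpha=1$, $c_\infty\lesssim n^{1/4}$. Let $\mathcal G=\{0,1\}$ (constants) and $\varphi_2=\mathrm{id}$, so $\delta_{n,\mathcal G}\le n^{-1/2}$. At $(f,f_0,g,g_0)=(r_kw+\rho,\,r_kw,\,1,\,0)$, the right-hand side of the theorem is $O(n^{-1/2})$ for fixed $\eta$. The left-hand side equals $n^{-1/4}$ almost surely. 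Here $\|\mathcal F\|_\infty$ grows, which the statement permits since it enters only through $M$. Consistently, $\mathcal J_2(\cdot,\mathcal F)=\infty$ in this example, so the entropy route is not contradicted.

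So no argument can give the statement with the Rademacher critical radii of $\mathrm{star}(\mathcal F-\mathcal F)$ and $\mathrm{star}(\mathcal G-\mathcal G)$ and a universal constant. Your plan does go through whenever the step is valid, for example:
\begin{itemize}
\item for affine $\varphi_1,\varphi_2$;
\item with $\delta_{n,\mathcal F}$ defined through $\mathrm{star}\{\varphi_1(f)-\varphi_1(f'):f,f'\in\mathcal F\}$ localized at radius $L_1r$, and likewise for $\mathcal G$;
\item with entropy-based radii via your fallback.
\end{itemize}

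A secondary point concerns the bottom cell. Neither your grid nor the paper's delivers the factors $\|f-f_0\|^\alpha$ and $\|f-f_0\|$ in the last three terms there; in the paper's grid, $\delta_j\le2\delta_1$ fails for $\delta_1<n^{-1/2}/2$. Your first fix, replacing these factors by $\|f-f_0\|\vee\delta_{n,\mathcal F}$, is the honest one. Bounding $\|u\|_\infty$ per function does not help, because the Maurer and Bousquet steps involve suprema over the whole cell.
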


Our proof of the theorem relies on the following two lemmas.
Fix radii $\delta_1,\delta_2>0$ and define the localized classes
\[
\mathcal F_{\varphi}(\delta_1)
:=
\{\varphi_1(f)-\varphi_1(f'):\ f,f'\in\mathcal F,\ \|f-f'\|\le \delta_1\},
\qquad
\mathcal G_{\varphi}(\delta_2)
:=
\{\varphi_2(g)-\varphi_2(g'):\ g,g'\in\mathcal G,\ \|g-g'\|\le \delta_2\},
\]
and the product class
\[
\mathcal H(\delta_1,\delta_2)
:=
\{\tilde f\,\tilde g:\ \tilde f\in \mathcal F_{\varphi}(\delta_1),\ \tilde g\in \mathcal G_{\varphi}(\delta_2)\}.
\]

\begin{lemma}[Complexity bounds for a localized product class]
\label{lem:nonselfnorm_lipschitz_product}
Assume $\|\mathcal F_{\varphi}(\delta_1)\|_\infty<\infty$ and $\|\mathcal G_{\varphi}(\delta_2)\|_\infty<\infty$.
Then, up to universal constants,
\[
\mathfrak R_n\bigl(\mathcal H(\delta_1,\delta_2)\bigr)
\ \lesssim\
\|\mathcal G_{\varphi}(\delta_2)\|_\infty\; L_1\,\mathfrak R_n\bigl(\mathcal F-\mathcal F,\delta_1\bigr)
\;+\;
\|\mathcal F_{\varphi}(\delta_1)\|_\infty\; L_2\,\mathfrak R_n\bigl(\mathcal G-\mathcal G,\delta_2\bigr).
\]
\end{lemma}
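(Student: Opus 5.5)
The approach is to linearize the product $\tilde f\tilde g$ and reduce its Rademacher complexity to those of the two factor classes via contraction. For a product $h=\tilde f\tilde g$ with $\tilde f\in\mathcal F_\varphi(\delta_1)$ and $\tilde g\in\mathcal G_\varphi(\delta_2)$, use the polarization identity
\[
\tilde f\tilde g=\tfrac14\bigl\{(\tilde f+\tilde g)^2-(\tilde f-\tilde g)^2\bigr\}.
\]
Each square is a pointwise Lipschitz function of $(\tilde f,\tilde g)$ on the bounded range $[-\|\mathcal F_\varphi(\delta_1)\|_\infty,\|\mathcal F_\varphi(\delta_1)\|_\infty]\times[-\|\mathcal G_\varphi(\delta_2)\|_\infty,\|\mathcal G_\varphi(\delta_2)\|_\infty]$. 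A cleaner alternative, which I would try first, is to treat the map $(a,b)\mapsto ab$ directly on the rectangle. It satisfies
\[
|ab-a'b'|\le \|\mathcal G_\varphi(\delta_2)\|_\infty|a-a'|+\|\mathcal F_\varphi(\delta_1)\|_\infty|b-b'|.
\]
This Lipschitz property in a weighted $\ell_1$ sense is what the vector contraction inequality requires.

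The first step is to apply a two-coordinate contraction for Rademacher processes. I would use either the vector-valued contraction inequality of Maurer, or the elementary decomposition $\sup_{a,b}\sum_i\epsilon_i\psi_i(a_i,b_i)$, handled by conditioning on all but one sign and using the Ledoux--Talagrand comparison argument coordinatewise, with $\psi_i$ Lipschitz separately in each argument. This gives
\[
\mathfrak R_n(\mathcal H(\delta_1,\delta_2))\lesssim \|\mathcal G_\varphi(\delta_2)\|_\infty\,\mathfrak R_n(\mathcal F_\varphi(\delta_1))+\|\mathcal F_\varphi(\delta_1)\|_\infty\,\mathfrak R_n(\mathcal G_\varphi(\delta_2)).
\]
Up to universal constants this holds as soon as the separable-Lipschitz structure is exploited. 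Concretely, Maurer's inequality with $\psi(a,b)=ab$ yields a bound by $\sqrt2$ times the Rademacher process over the pairs $(\|\mathcal G_\varphi\|_\infty\tilde f,\|\mathcal F_\varphi\|_\infty\tilde g)$. The supremum of that sum splits into the sum of the two individual suprema.

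The second step is to peel off $\varphi_1,\varphi_2$. Elements of $\mathcal F_\varphi(\delta_1)$ are $\varphi_1(f)-\varphi_1(f')$ with $\|f-f'\|\le\delta_1$. Parametrize by the pair $(f,f')$ and write the element as $\varphi_1(f'+u)-\varphi_1(f')$ with $u=f-f'\in\mathcal F-\mathcal F$ and $\|u\|\le\delta_1$. The map $u\mapsto\varphi_1(f'+u)-\varphi_1(f')$ is $L_1$-Lipschitz and vanishes at $0$, but it depends on $f'$, so ordinary scalar contraction does not directly remove the dependence on $f'$.

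This is the main obstacle. The simplest route I would attempt is to bound the process over $\varphi_1(f)-\varphi_1(f')$ by twice the supremum over $\varphi_1(f)-\varphi_1(f^\ast)$ for a fixed anchor, but this loses the localization $\|f-f'\|\le\delta_1$. Instead, I would treat $(f,f')\mapsto\varphi_1(f)-\varphi_1(f')$ as a two-argument Lipschitz map. That requires a contraction onto the localized difference class, and a valid version holds when $\varphi_1$ is linear. For general $\varphi_1$, I would accept an honest comparison with the Rademacher complexity of the localized pair class. If needed, I would weaken the claim to $\mathfrak R_n(\{\varphi_1(f)-\varphi_1(f')\})\lesssim L_1\mathfrak R_n(\mathcal F-\mathcal F,\delta_1)$ by invoking the Lipschitz-preservation arguments for Gaussian complexities via Slepian's inequality. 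The sub-Gaussian increment comparison $|\varphi_1(f)-\varphi_1(f')-\varphi_1(g)+\varphi_1(g')|$ is not controlled by $L_1|(f-f')-(g-g')|$ in general, so the constant $L_1$ in the final display may require passing through Gaussian complexity at the cost of a $\sqrt{\log n}$ factor. I would apply the same argument to $\mathcal G_\varphi(\delta_2)$, combine it with the first step, and absorb constants.
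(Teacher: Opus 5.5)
Your first step is the paper's argument. The weighted Lipschitz bound $|ab-a'b'|\le\|\mathcal G_\varphi(\delta_2)\|_\infty|a-a'|+\|\mathcal F_\varphi(\delta_1)\|_\infty|b-b'|$, Maurer's vector contraction, and splitting the supremum over the product set give
\[
\mathfrak R_n\bigl(\mathcal H(\delta_1,\delta_2)\bigr)\lesssim\|\mathcal G_\varphi(\delta_2)\|_\infty\,\mathfrak R_n\bigl(\mathcal F_\varphi(\delta_1)\bigr)+\|\mathcal F_\varphi(\delta_1)\|_\infty\,\mathfrak R_n\bigl(\mathcal G_\varphi(\delta_2)\bigr),
\]
and this part is sound. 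The paper then finishes in one line, asserting ``by contraction'' that $\mathfrak R_n(\mathcal F_\varphi(\delta_1))\le L_1\,\mathfrak R_n(\mathcal F-\mathcal F,\delta_1)$. You are right that this is not an instance of Ledoux--Talagrand: $\varphi_1(f(Z_i))-\varphi_1(f'(Z_i))$ depends on the pair, not only on $(f-f')(Z_i)$. However, your proposal leaves this step open, and neither fallback closes it. Contraction applied to the pair $(f,f')$ gives the complexity of a class in which $f$ and $f'$ each range essentially freely over $\mathcal F$, so localization is lost. A Slepian or Sudakov--Fernique comparison needs exactly the increment domination you observe fails, so passing to Gaussian complexity does not help, with or without a $\sqrt{\log n}$ loss.

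In fact the step cannot be completed: for nonlinear $\varphi_1$ this inequality, and the lemma itself, is false. Here is a counterexample.
\begin{itemize}
\item Let $P$ be uniform on $m\ge 2n^2$ points. Let $\mathcal S$ be a family of subsets whose pairwise symmetric differences have at least $m/4$ elements and which shatters every $n$-point set; exponentially many independent uniform random subsets work once $m$ is large.
\item Put $K=7\delta$, $\mathcal F=\{-K1_S,\,-K1_S+\delta:\ S\in\mathcal S\}$ and $\varphi_1(x)=|x|$.
\item Functions built from different sets are at least $K/2-\delta>\delta$ apart in $L^2(P)$. Hence $\{u\in\mathcal F-\mathcal F:\ \|u\|\le\delta\}=\{0,\pm\delta\}$ and $\mathfrak R_n(\mathcal F-\mathcal F,\delta)\le\delta/\sqrt n$.
\item Yet $\varphi_1(-K1_S+\delta)-\varphi_1(-K1_S)=\delta(1-2\cdot 1_S)$. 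So $\mathcal F_\varphi(\delta)$ matches every sign pattern on distinct sample points, and $\mathfrak R_n(\mathcal F_\varphi(\delta))\ge\delta/2$.
\item Take $\mathcal G=\{0,1\}$ (constants), $\varphi_2$ the identity, $L_1=L_2=1$ and $\delta_2=1$. Then $\mathcal H(\delta,1)\supseteq\mathcal F_\varphi(\delta)$ and $\|\mathcal F_\varphi(\delta)\|_\infty\le\delta$.
\item The left side of the lemma is therefore at least $\delta/2$, while the right side is at most $2\delta/\sqrt n$.
\end{itemize}

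Two things survive. The first is the intermediate bound displayed above. The second is the stated bound when $\varphi_1,\varphi_2$ are affine. Then $\mathcal F_\varphi(\delta_1)=\{au:\ u\in\mathcal F-\mathcal F,\ \|u\|\le\delta_1\}$ with $|a|\le L_1$, and symmetry of $\mathcal F-\mathcal F$ gives the claim. This covers, for example, the pseudo-outcome loss, where $m_1$ and $m_2$ are linear. For general Lipschitz $\varphi_1,\varphi_2$, the right-hand side has to be stated in terms of $\mathfrak R_n(\mathcal F_\varphi(\delta_1))$ and $\mathfrak R_n(\mathcal G_\varphi(\delta_2))$.
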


Combining Lemma~\ref{lem:nonselfnorm_lipschitz_product} with Bousquet's version of Talagrand's concentration inequality yields the following high-probability bound. Our main result then follows from a peeling argument applied to Lemma~\ref{lem:fixed_delta_local_conc_lipschitz_product_increments} below.

\begin{lemma}[High probability bound for empirical inner products]
\label{lem:fixed_delta_local_conc_lipschitz_product_increments}
Assume that there exist $c_\infty\in(0,\infty)$ and $\alpha\in[0,1]$ such that $\|f\|_\infty\le c_\infty\|f\|^\alpha$ for all  $ f\in\mathcal F-\mathcal F.$
Assume $\|\mathcal G\|_\infty \vee \|\mathcal F\|_\infty<\infty$.
Then for every $\eta\in(0,1)$, with probability at least $1-\eta$,
\begin{align}
\label{eq:fixed_delta_local_conc_lipschitz_product_increments}
\sup_{h\in\mathcal H(\delta_1,\delta_2)} |(P_n-P)h|
\ \lesssim\
&\ L_1L_2\,\|\mathcal G\|_\infty \,\mathfrak R_n\bigl(\mathcal F-\mathcal F,\delta_1\bigr)
\;+\;
c_\infty\,L_1L_2\,\delta_1^{\alpha}\,\mathfrak R_n\bigl(\mathcal G-\mathcal G,\delta_2\bigr)\nonumber\\
&\;+\;
L_1L_2\,\delta_1\,\|\mathcal G\|_\infty\,
\sqrt{\frac{\log(1/\eta)}{n}}
\;+\;
c_\infty\,L_1L_2\,\delta_1^{\alpha}\,\|\mathcal G\|_\infty\,
\frac{\log(1/\eta)}{n}.
\end{align}
\end{lemma}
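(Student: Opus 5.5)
The plan is to combine Bousquet's inequality (Lemma~\ref{lem:bousquet}) with symmetrization (Lemma~\ref{lem:rademacher}) and the complexity bound of Lemma~\ref{lem:nonselfnorm_lipschitz_product}. Since we need a two-sided bound on $\sup_{h}|(P_n-P)h|$, we apply the argument to the class $\mathcal H(\delta_1,\delta_2)\cup(-\mathcal H(\delta_1,\delta_2))$. This costs only a factor $2$ in the Rademacher complexity and leaves the envelope and variance unchanged.

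\textbf{Envelope and variance parameters.} For $\tilde f=\varphi_1(f)-\varphi_1(f')\in\mathcal F_\varphi(\delta_1)$, the Lipschitz property gives $|\tilde f|\le L_1|f-f'|$ pointwise. Combined with the interpolation hypothesis, this yields
\[
\|\tilde f\|_\infty\le L_1 c_\infty\delta_1^{\alpha}
\qquad\text{and}\qquad
\|\tilde f\|\le L_1\delta_1 .
\]
Similarly, $|\tilde g|\le L_2|g-g'|\le 2L_2\|\mathcal G\|_\infty$. Hence every $h=\tilde f\tilde g\in\mathcal H(\delta_1,\delta_2)$ satisfies
\[
\|h\|_\infty\lesssim c_\infty L_1L_2\,\delta_1^{\alpha}\|\mathcal G\|_\infty
\qquad\text{and}\qquad
\Var(h)\le\|h\|^2\le \|\tilde g\|_\infty^2\|\tilde f\|^2\lesssim L_1^2L_2^2\|\mathcal G\|_\infty^2\delta_1^2 .
\]
Plugging these into Lemma~\ref{lem:bousquet} with $u=\log(1/\eta)$ gives the two deviation terms: $L_1L_2\delta_1\|\mathcal G\|_\infty\sqrt{\log(1/\eta)/n}$ and $c_\infty L_1L_2\delta_1^\alpha\|\mathcal G\|_\infty\log(1/\eta)/n$.

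\textbf{Expectation term.} By Lemma~\ref{lem:rademacher}, $\mathbb E\sup_h (P_n-P)h\le 2\mathfrak R_n(\mathcal H(\delta_1,\delta_2))$. We then invoke Lemma~\ref{lem:nonselfnorm_lipschitz_product}, using the sup-norm bounds just computed:
\[
\|\mathcal G_\varphi(\delta_2)\|_\infty\lesssim L_2\|\mathcal G\|_\infty,
\qquad
\|\mathcal F_\varphi(\delta_1)\|_\infty\le L_1c_\infty\delta_1^\alpha .
\]
This yields
\[
L_1L_2\|\mathcal G\|_\infty\,\mathfrak R_n(\mathcal F-\mathcal F,\delta_1)+c_\infty L_1L_2\,\delta_1^\alpha\,\mathfrak R_n(\mathcal G-\mathcal G,\delta_2),
\]
which matches the first line of \eqref{eq:fixed_delta_local_conc_lipschitz_product_increments}.

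\textbf{Main obstacle.} The only delicate point is whether Lemma~\ref{lem:nonselfnorm_lipschitz_product} can be used as stated. If it has to be re-derived, the natural route is the polarization identity
\[
\tilde f\tilde g=\tfrac14\bigl\{(\tilde f+\tilde g)^2-(\tilde f-\tilde g)^2\bigr\},
\]
followed by Talagrand's contraction. A cleaner alternative is a vector contraction argument applied to the map $(a,b)\mapsto ab$, which is Lipschitz on the bounded ranges with constants $\|\tilde g\|_\infty$ in $a$ and $\|\tilde f\|_\infty$ in $b$. Either route reduces $\mathfrak R_n(\mathcal F_\varphi(\delta_1))$ to $L_1\mathfrak R_n(\mathcal F-\mathcal F,\delta_1)$ by scalar contraction, since $\varphi_1(f)-\varphi_1(f')$ is pointwise an $L_1$-Lipschitz image of $f-f'$ in the sense required, and likewise for $\mathcal G$.

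One subtlety is that contraction is usually stated for functions of a single class element $\varphi(u)$ with $\varphi(0)=0$, while $\varphi_1(f)-\varphi_1(f')$ depends on the pair $(f,f')$. I would handle this by conditioning on the sample and applying the Ledoux--Talagrand comparison coordinatewise, using the pointwise bound $|\tilde f(Z_i)-\tilde f^{*}(Z_i)|\le L_1|(f-f')(Z_i)-(f^{*}-f'^{*})(Z_i)|$. If this inequality fails for a generic pair of pairs, I would fall back on bounding via the star hull of the class and accept the same constants. Collecting the three contributions and absorbing constants completes the argument.
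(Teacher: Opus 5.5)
Your main argument is the paper's proof essentially step for step: Bousquet's inequality with $u=\log(1/\eta)$, then symmetrization, then Lemma~\ref{lem:nonselfnorm_lipschitz_product}, with the same envelope bound $\|h\|_\infty\lesssim c_\infty L_1L_2\delta_1^\alpha\|\mathcal G\|_\infty$ and variance bound $\Var(h)\lesssim L_1^2L_2^2\delta_1^2\|\mathcal G\|_\infty^2$. You take the union with $-\mathcal H(\delta_1,\delta_2)$, whereas the paper applies Bousquet's inequality twice; this is immaterial, since the class is symmetric (swap $f$ and $f'$). So relative to the paper nothing is missing, provided Lemma~\ref{lem:nonselfnorm_lipschitz_product} is taken as given.

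The step you single out is, however, a genuine gap, and your proposed repair does not close it. Because $\varphi_1(f)-\varphi_1(f')$ is not a function of $f-f'$, the inequality $|\tilde f(Z_i)-\tilde f^{*}(Z_i)|\le L_1|(f-f')(Z_i)-(f^{*}-f'^{*})(Z_i)|$ is false for nonlinear $\varphi_1$. What does hold is $|\tilde f(Z_i)-\tilde f^{*}(Z_i)|\le L_1\{|f-f^{*}|+|f'-f'^{*}|\}(Z_i)$, and vector contraction with that bound gives only the unlocalized $L_1\mathfrak R_n(\mathcal F)$. Passing to a star hull does not restore the localization.

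The paper's proof of Lemma~\ref{lem:nonselfnorm_lipschitz_product} asserts the same reduction $\mathfrak R_n(\mathcal F_\varphi(\delta_1))\le L_1\mathfrak R_n(\mathcal F-\mathcal F,\delta_1)$ ``by contraction'' without argument, and it can fail. Here is a counterexample.
\begin{itemize}
\item Take $m$ equiprobable cells and a maximal code $S\subset\{\pm1\}^m$ with pairwise Hamming distances at least $m/16$. Let $\bar s$ be the cellwise-constant function with values $s$.
\item Set $\mathcal F=\{4\delta\bar s+t:\ s\in S,\ t\in[-\delta,\delta]\}$ ($t$ constant) and $\varphi_1(x)=|x|$.
\item Distinct codewords give $\|f-f'\|>\delta$. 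So the $\delta$-localized part of $\mathcal F-\mathcal F$ consists only of constants, with complexity at most $\delta/\sqrt n$.
\item Yet $|4\delta\bar s+\delta/2|-|4\delta\bar s-\delta/2|=\delta\bar s\in\mathcal F_\varphi(\delta)$ for every $s\in S$. This class has complexity of order $\delta\sqrt{m/n}$ for $m\le n$.
\item Now take $\mathcal G=\{0,1\}$ (constants), $\varphi_2$ the identity, $\delta_1=\delta$, $\delta_2=1$, $\alpha=0$, $c_\infty=10\delta$ and $m\asymp n$. The left side of the displayed bound is then of order $\delta$ with high probability, while the right side is $O(\delta\{1+\log(1/\eta)\}/\sqrt n)$.
\end{itemize}

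So the argument, yours and the paper's alike, is sound when $\varphi_1$ and $\varphi_2$ are affine. Then $\tilde f=b(f-f')$ and the reduction is exact, as for $m_1(z,f)=f(z)$ in the pseudo-outcome loss. For general Lipschitz transformations, the localized complexity terms need a different justification or must be replaced by unlocalized ones.
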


Our proof outline is as follows. Lemma \ref{lem:nonselfnorm_lipschitz_product} is the crux of the result. The proof of Lemma~\ref{lem:fixed_delta_local_conc_lipschitz_product_increments} follows from standard arguments

\begin{proof}[Proof of Lemma \ref{lem:nonselfnorm_lipschitz_product}]
Condition on $Z_{1:n}$. Let $S:=\mathcal F_{\varphi}(\delta_1)\times \mathcal G_{\varphi}(\delta_2)$ and, for
$s=(\tilde f,\tilde g)\in S$, define
\[
\psi_i(s):=\tilde f(Z_i)\tilde g(Z_i),
\qquad i=1,\dots,n.
\]
For any $s=(\tilde f,\tilde g)$ and $s'=(\tilde f',\tilde g')$ in $S$, we have
\begin{align*}
|\psi_i(s)-\psi_i(s')|
&=
|\tilde f(Z_i)\tilde g(Z_i)-\tilde f'(Z_i)\tilde g'(Z_i)|\\
&\le
|\tilde g(Z_i)|\,|\tilde f(Z_i)-\tilde f'(Z_i)|
+
|\tilde f'(Z_i)|\,|\tilde g(Z_i)-\tilde g'(Z_i)|\\
&\le
\|\mathcal G_{\varphi}(\delta_2)\|_\infty\,|\tilde f(Z_i)-\tilde f'(Z_i)|
+
\|\mathcal F_{\varphi}(\delta_1)\|_\infty\,|\tilde g(Z_i)-\tilde g'(Z_i)|.
\end{align*}
Introduce the $2$-vector-valued maps
\[
\phi_i(s)
:=
\bigl(
\|\mathcal G_{\varphi}(\delta_2)\|_\infty\,\tilde f(Z_i),\
\|\mathcal F_{\varphi}(\delta_1)\|_\infty\,\tilde g(Z_i)
\bigr)\in\mathbb R^2.
\]
Then $|\psi_i(s)-\psi_i(s')|\le \|\phi_i(s)-\phi_i(s')\|_2$ for all $s,s'\in S$, since
$|a|+|b|\le \sqrt{2}\sqrt{a^2+b^2}$ and the factor $\sqrt{2}$ can be absorbed into the universal constant.
Therefore, by the vector contraction inequality for Rademacher processes (e.g., Theorem~3 of \citet{maurer2016vector}),
\begin{align*}
\E_{\epsilon}\Big[\sup_{(\tilde f,\tilde g)\in S} P_n^\epsilon(\tilde f\tilde g)\Big]
&\lesssim
\|\mathcal G_{\varphi}(\delta_2)\|_\infty\,
\E_{\epsilon^{(1)}}\Big[\sup_{\tilde f\in\mathcal F_{\varphi}(\delta_1)} P_n^{\epsilon^{(1)}}\tilde f\Big]
+
\|\mathcal F_{\varphi}(\delta_1)\|_\infty\,
\E_{\epsilon^{(2)}}\Big[\sup_{\tilde g\in\mathcal G_{\varphi}(\delta_2)} P_n^{\epsilon^{(2)}}\tilde g\Big],
\end{align*}
where $\epsilon^{(1)}$ and $\epsilon^{(2)}$ are independent i.i.d.\ Rademacher sequences.
Taking expectation over $Z_{1:n}$ yields
\[
\mathfrak R_n\bigl(\mathcal H(\delta_1,\delta_2)\bigr)
\ \lesssim\
\|\mathcal G_{\varphi}(\delta_2)\|_\infty\; \mathfrak R_n\bigl(\mathcal F_{\varphi}(\delta_1)\bigr)
\;+\;
\|\mathcal F_{\varphi}(\delta_1)\|_\infty\; \mathfrak R_n\bigl(\mathcal G_{\varphi}(\delta_2)\bigr),
\]
as claimed.
Moreover, by contraction,
\[
\mathfrak R_n\bigl(\mathcal F_{\varphi}(\delta_1)\bigr)
\ \le\ 
L_1\,\mathfrak R_n\bigl(\mathcal F-\mathcal F,\delta_1\bigr),
\qquad
\mathfrak R_n\bigl(\mathcal G_{\varphi}(\delta_2)\bigr)
\ \le\ 
L_2\,\mathfrak R_n\bigl(\mathcal G-\mathcal G,\delta_2\bigr),
\]
where $\mathcal F-\mathcal F:=\{f-f': f,f'\in\mathcal F\}$ and
$\mathfrak R_n(\mathcal F-\mathcal F,\delta_1)$ denotes the localized complexity over
$\{u\in\mathcal F-\mathcal F:\ \|u\|\le \delta_1\}$ (and analogously for $\mathcal G$.

\end{proof}

\begin{proof}[Proof of Lemma~\ref{lem:fixed_delta_local_conc_lipschitz_product_increments}]
Fix $\delta_1,\delta_2>0$ and write $\mathcal H:=\mathcal H(\delta_1,\delta_2)$.
Set the envelope
\[
M \ :=\ \sup_{h\in\mathcal H}\|h\|_\infty,
\qquad
\sigma^2 \ :=\ \sup_{h\in\mathcal H}\Var\{h(Z)\}
\ \le\ \sup_{h\in\mathcal H} P h(Z)^2 .
\]

\paragraph{Step 1: Apply Bousquet's inequality.}
Apply Lemma~\ref{lem:bousquet} to the class $\mathcal H$ with $u:=\log(1/\eta)$ to obtain that, with probability at least
$1-\eta$,
\begin{equation}
\label{eq:bousquet_apply_fixed_delta}
\sup_{h\in\mathcal H} (P_n-P)h
\ \le\
\E\Big[\sup_{h\in\mathcal H}(P_n-P)h\Big]
+
c\Bigl(
\sqrt{\tfrac{\log(1/\eta)\,\sigma^2}{n}}
+
\tfrac{M\log(1/\eta)}{n}
\Bigr),
\end{equation}
for a universal constant $c>0$. Applying the same bound to the class $-\mathcal H$ and combining the two displays yields
\begin{equation}
\label{eq:bousquet_apply_fixed_delta_abs}
\sup_{h\in\mathcal H} |(P_n-P)h|
\ \lesssim\
\E\Big[\sup_{h\in\mathcal H}|(P_n-P)h|\Big]
+
\sqrt{\tfrac{\log(1/\eta)\,\sigma^2}{n}}
+
\tfrac{M\log(1/\eta)}{n}.
\end{equation}

\paragraph{Step 2: Bound the expectation by the Rademacher complexity.}
By symmetrization,
\[
\E\Big[\sup_{h\in\mathcal H}|(P_n-P)h|\Big]
\ \lesssim\
\mathfrak R_n(\mathcal H).
\]
Combining with \eqref{eq:bousquet_apply_fixed_delta_abs} gives
\begin{equation}
\label{eq:hp_reduction_fixed_delta}
\sup_{h\in\mathcal H} |(P_n-P)h|
\ \lesssim\
\mathfrak R_n(\mathcal H)
+
\sqrt{\tfrac{\log(1/\eta)\,\sigma^2}{n}}
+
\tfrac{M\log(1/\eta)}{n}.
\end{equation}

\paragraph{Step 3: Bound $\mathfrak R_n(\mathcal H)$ using Lemma~\ref{lem:nonselfnorm_lipschitz_product}.}
Lemma~\ref{lem:nonselfnorm_lipschitz_product} yields
\begin{equation}
\label{eq:rad_H_fixed_delta}
\mathfrak R_n(\mathcal H)
\ \lesssim\
\|\mathcal G_{\varphi}(\delta_2)\|_\infty\; L_1\,\mathfrak R_n\bigl(\mathcal F-\mathcal F,\delta_1\bigr)
\;+\;
\|\mathcal F_{\varphi}(\delta_1)\|_\infty\; L_2\,\mathfrak R_n\bigl(\mathcal G-\mathcal G,\delta_2\bigr).
\end{equation}
Moreover, since $\sup_{g\in\mathcal G}\|g\|_\infty\le \|\mathcal G\|_\infty$ and $\varphi_2$ is $L_2$-Lipschitz,
\[
\|\mathcal G_{\varphi}(\delta_2)\|_\infty
=
\sup_{\substack{g,g'\in\mathcal G\\ \|g-g'\|\le \delta_2}}
\|\varphi_2(g)-\varphi_2(g')\|_\infty
\le
L_2\,\sup_{\substack{g,g'\in\mathcal G\\ \|g-g'\|\le \delta_2}}
\|g-g'\|_\infty
\le
2L_2\,\|\mathcal G\|_\infty.
\]
Similarly, by Lipschitzness of $\varphi_1$ and the assumed local embedding on $\mathcal F-\mathcal F$,
\[
\|\mathcal F_{\varphi}(\delta_1)\|_\infty
=
\sup_{\substack{f,f'\in\mathcal F\\ \|f-f'\|\le \delta_1}}
\|\varphi_1(f)-\varphi_1(f')\|_\infty
\le
L_1\,\sup_{\substack{u\in\mathcal F-\mathcal F\\ \|u\|\le \delta_1}}\|u\|_\infty
\le
c_\infty\,L_1\,\delta_1^\alpha.
\]
Substituting these two bounds into \eqref{eq:rad_H_fixed_delta} yields
\begin{equation}
\label{eq:rad_H_fixed_delta_simplified}
\mathfrak R_n(\mathcal H)
\ \lesssim\
L_1L_2\,\|\mathcal G\|_\infty\,\mathfrak R_n\bigl(\mathcal F-\mathcal F,\delta_1\bigr)
\;+\;
c_\infty\,L_1L_2\,\delta_1^\alpha\,\mathfrak R_n\bigl(\mathcal G-\mathcal G,\delta_2\bigr).
\end{equation}

\paragraph{Step 4: Bound the variance proxy $\sigma^2$.}
For $h=\tilde f\,\tilde g\in\mathcal H$, we have
\[
P h^2
=
P\bigl[\tilde f^2\tilde g^2\bigr]
\le
\|\tilde g\|_\infty^2\,P[\tilde f^2]
\le
\|\mathcal G_{\varphi}(\delta_2)\|_\infty^2\,
\sup_{\tilde f\in\mathcal F_{\varphi}(\delta_1)} \|\tilde f\|^2.
\]
By Lipschitzness, for any $\tilde f=\varphi_1(f)-\varphi_1(f')$ with $\|f-f'\|\le \delta_1$,
\[
\|\tilde f\|
\le
L_1\|f-f'\|
\le
L_1\,\delta_1,
\]
so $\sup_{\tilde f\in\mathcal F_{\varphi}(\delta_1)}\|\tilde f\|\le L_1\delta_1$. Using also
$\|\mathcal G_{\varphi}(\delta_2)\|_\infty\le 2L_2\|\mathcal G\|_\infty$, we obtain
\[
\sigma^2
\le
\sup_{h\in\mathcal H}Ph^2
\ \lesssim\
L_1^2L_2^2\,\delta_1^2\,\|\mathcal G\|_\infty^2,
\]
and therefore
\begin{equation}
\label{eq:variance_term_fixed_delta}
\sqrt{\tfrac{\log(1/\eta)\,\sigma^2}{n}}
\ \lesssim\
L_1L_2\,\delta_1\,\|\mathcal G\|_\infty\,\sqrt{\tfrac{\log(1/\eta)}{n}}.
\end{equation}

\paragraph{Step 5: Bound the envelope $M$.}
For $h=\tilde f\,\tilde g\in\mathcal H$,
\[
\|h\|_\infty\le \|\tilde f\|_\infty\,\|\tilde g\|_\infty
\le
\|\mathcal F_{\varphi}(\delta_1)\|_\infty\,\|\mathcal G_{\varphi}(\delta_2)\|_\infty.
\]
Using $\|\mathcal F_{\varphi}(\delta_1)\|_\infty\le c_\infty L_1\delta_1^\alpha$ and
$\|\mathcal G_{\varphi}(\delta_2)\|_\infty\le 2L_2\|\mathcal G\|_\infty$ gives
\begin{equation}
\label{eq:envelope_term_fixed_delta}
M
\ \lesssim\
c_\infty\,L_1L_2\,\delta_1^\alpha\,\|\mathcal G\|_\infty,
\qquad\text{hence}\qquad
\tfrac{M\log(1/\eta)}{n}
\ \lesssim\
c_\infty\,L_1L_2\,\delta_1^\alpha\,\|\mathcal G\|_\infty\,\tfrac{\log(1/\eta)}{n}.
\end{equation}

\paragraph{Conclusion.}
Substituting \eqref{eq:rad_H_fixed_delta_simplified}, \eqref{eq:variance_term_fixed_delta}, and \eqref{eq:envelope_term_fixed_delta}
into \eqref{eq:hp_reduction_fixed_delta} yields \eqref{eq:fixed_delta_local_conc_lipschitz_product_increments}.
\end{proof}

\begin{proof}[Proof of Theorem \ref{lem:uniform_local_conc_lipschitz_product_increments_pointwise}]
Our proof combines Lemma~\ref{lem:fixed_delta_local_conc_lipschitz_product_increments} with a standard peeling argument, following the proof of Lemma~14 in \cite{foster2023orthogonal} and the proof of Theorem~14.20 in \cite{wainwright2019high}.

Set $M:=1+\|\mathcal G\|_\infty\vee \|\mathcal F\|_\infty$. Since
$\|f-f_0\|\le \|f-f_0\|_\infty\le 2\|\mathcal F\|_\infty\le 2M$ and similarly
$\|g-g_0\|\le 2\|\mathcal G\|_\infty\le 2M$, it suffices to consider radii in $(0,2M]$.

\medskip
\noindent\textbf{Step 1 (dyadic peeling and union bound).}
Let
\[
J:=\Bigl\lceil \log_2(2M\sqrt n)\Bigr\rceil_+,\qquad
\delta_j:=2^j n^{-1/2}\quad (j=0,\dots,J),
\qquad
\eta_{j,k}:=\frac{\eta\,c_0}{(j+1)^2(k+1)^2},
\]
where $c_0>0$ is chosen so that $\sum_{j,k=0}^J \eta_{j,k}\le \eta$.
By Lemma~\ref{lem:rad_scaling}, the map
$\delta\mapsto \mathfrak R_n(\mathrm{star}(\mathcal A),\delta)/\delta$ is nonincreasing. Hence, if
$\delta_{n,\mathcal A}>0$ satisfies the critical inequality
$\mathfrak R_n(\mathrm{star}(\mathcal A),\delta_{n,\mathcal A})\le \delta_{n,\mathcal A}^2$, then for all $\delta>0$,
\begin{equation}
\label{eq:rad_envelope_star_pf_short}
\mathfrak R_n\bigl(\mathrm{star}(\mathcal A),\delta\bigr)
\ \lesssim\
\delta_{n,\mathcal A}\,(\delta\vee \delta_{n,\mathcal A}).
\end{equation}
Apply Lemma~\ref{lem:fixed_delta_local_conc_lipschitz_product_increments} with
$(\delta_1,\delta_2)=(\delta_j,\delta_k)$ and confidence level $\eta_{j,k}$, and use
\eqref{eq:rad_envelope_star_pf_short} with $\mathcal A=\mathcal F-\mathcal F$ and $\mathcal A=\mathcal G-\mathcal G$.
A union bound over $(j,k)\in\{0,\dots,J\}^2$ yields an event $\mathcal E$ with $\Pr(\mathcal E)\ge 1-\eta$ such that, on $\mathcal E$,
for all $0\le j,k\le J$,
\begin{align}
\label{eq:grid_bound_pointwise_wcrit_pf_short}
\sup_{h\in\mathcal H(\delta_j,\delta_k)} |(P_n-P)h|
\ \lesssim\
&\ L_1L_2\,\|\mathcal G\|_\infty \,
\delta_{n,\mathcal F}\bigl(\delta_j\vee \delta_{n,\mathcal F}\bigr)
\;+\;
c_\infty\,L_1L_2\,\delta_j^{\alpha}\,
\delta_{n,\mathcal G}\bigl(\delta_k\vee \delta_{n,\mathcal G}\bigr)\nonumber\\
&\;+\;
L_1L_2\,\delta_j\,\|\mathcal G\|_\infty\,
\sqrt{\frac{\log(1/\eta_{j,k})}{n}}
\;+\;
c_\infty\,L_1L_2\,\delta_j^{\alpha}\,\|\mathcal G\|_\infty\,
\frac{\log(1/\eta_{j,k})}{n}.
\end{align}

\medskip
\noindent\textbf{Step 2 (extend off the grid).}
Fix $\delta_1,\delta_2\in(0,2M]$ and choose $j,k$ so that
$\delta_{j-1}<\delta_1\le \delta_j$ and $\delta_{k-1}<\delta_2\le \delta_k$ (with $\delta_{-1}=0$).
Then $\mathcal H(\delta_1,\delta_2)\subseteq \mathcal H(\delta_j,\delta_k)$ and $\delta_j\le 2\delta_1$, $\delta_k\le 2\delta_2$, so
\[
\delta_{n,\mathcal F}\bigl(\delta_j\vee \delta_{n,\mathcal F}\bigr)
\ \lesssim\
\delta_{n,\mathcal F}\bigl(\delta_1\vee \delta_{n,\mathcal F}\bigr),
\qquad
\delta_{n,\mathcal G}\bigl(\delta_k\vee \delta_{n,\mathcal G}\bigr)
\ \lesssim\
\delta_{n,\mathcal G}\bigl(\delta_2\vee \delta_{n,\mathcal G}\bigr).
\]
Moreover,
\[
\log(1/\eta_{j,k})
=
\log(1/\eta)+O\bigl(\log(j+1)+\log(k+1)\bigr)
\ \le\
\log(1/\eta)+O\bigl(\log(J+1)\bigr),
\]
and $\log(J+1)\lesssim \log\log(eMn)$. Also, since $\alpha\in[0,1]$ and $\delta_j\le 2\delta_1$, we have $\delta_j^\alpha\le 2^\alpha \delta_1^\alpha\lesssim \delta_1^\alpha$. Substituting these bounds into
\eqref{eq:grid_bound_pointwise_wcrit_pf_short} shows that, on $\mathcal E$, for all $\delta_1,\delta_2>0$,
\begin{align}
\label{eq:uniform_radii_bound_wcrit_pf_short}
\sup_{h\in\mathcal H(\delta_1,\delta_2)} |(P_n-P)h|
\ \lesssim\
&\ L_1L_2\,\|\mathcal G\|_\infty \,
\delta_{n,\mathcal F}\bigl(\delta_{1}\vee \delta_{n,\mathcal F}\bigr)
\;+\;
c_\infty\,L_1L_2\,\delta_{1}^{\alpha}\,
\delta_{n,\mathcal G}\bigl(\delta_{2}\vee \delta_{n,\mathcal G}\bigr)\nonumber\\
&\;+\;
L_1L_2\,\delta_{1}\,\|\mathcal G\|_\infty\,
\sqrt{\frac{\log\log(eMn)+\log(1/\eta)}{n}}\\
&\;+\;
c_\infty\,L_1L_2\,\delta_{1}^{\alpha}\,\|\mathcal G\|_\infty\,
\frac{\log\log(eMn)+\log(1/\eta)}{n}.
\end{align}

\medskip
\noindent\textbf{Step 3 (plug in increments).}
For fixed $f,f_0\in\mathcal F$ and $g,g_0\in\mathcal G$, set $\delta_1=\|f-f_0\|$ and $\delta_2=\|g-g_0\|$.
Then $(\varphi_1(f)-\varphi_1(f_0))(\varphi_2(g)-\varphi_2(g_0))\in\mathcal H(\delta_1,\delta_2)$, so
\eqref{eq:uniform_radii_bound_wcrit_pf_short} yields the claim.
\end{proof}

\subsection{Local maximal inequality via sup-norm metric entropy}
 \label{sec::innerproductentropy}

The local maximal inequality for the Rademacher complexity in Lemma~\ref{lem:nonselfnorm_lipschitz_product} can be sharpened if one is willing to work with sup-norm entropy integrals for $\mathcal F$. Let $\mathcal F$ and $\mathcal G$ be two (possibly localized) function classes, and define the product class
\[
\mathcal H := \{fg:\ f\in\mathcal F,\ g\in\mathcal G\}.
\]
Write $\|\mathcal F\|$ and $\|\mathcal F\|_{\infty}$ for the $L^2(P)$ and $L^\infty(P)$ radii of $\mathcal F$, respectively, and define $\|\mathcal G\|$ and $\|\mathcal G\|_{\infty}$ analogously. The following maximal inequality is most useful when $\mathcal F$ and $\mathcal G$ are suitably localized. The setting of the previous section is recovered by taking $\widetilde{\mathcal F}:=\mathcal F_{\varphi}(\delta_1)$ and $\widetilde{\mathcal G}:=\mathcal G_{\varphi}(\delta_2)$. The proof follows by modifying the argument of Theorem~2.1 in \cite{van2011local}; in place of their step corresponding to Equation~(2.2), we argue as in the proof of Theorem~3.1 in \cite{van2014uniform}.

\begin{theorem}[Local maximal inequality for inner product classes]
\label{theorem::entropyproduct}
Suppose that $C_{\mathcal{F}}
:= \mathcal{J}_{\infty}\bigl(\|\mathcal{F}\|_{\infty},\mathcal{F}\bigr)< \infty$
and $\|\mathcal{G}\|_{\infty} < \infty$. For any $\delta > 0$, it holds that
\begin{align*}
\mathfrak{R}_n(\mathcal{H})
\ \lesssim\
\frac{1}{\sqrt{n}}
\Biggl\{
&\|\mathcal{F}\|_{\infty}\,
\mathcal{J}_2\left(
\|\mathcal{G}\|  \vee \delta_{n,\mathcal{G}},\ \mathcal{G}
\right)\\
&\;+\;
(\|\mathcal{G}\|\vee \delta_{n,\mathcal{G}})\,
\mathcal{J}_{\infty}\left(
\frac{\|\mathcal{G}\|_{\infty}}{\|\mathcal{G}\|\vee \delta_{n,\mathcal{G}}}
\left\{
\|\mathcal{F}\|+\frac{C_{\mathcal{F}}}{\sqrt{n}}
\right\},\ \mathcal{F}
\right)
\Biggr\}.
\end{align*}
Consequently, $\mathfrak{R}_n(\mathcal{H})
\ \lesssim\
\frac{1}{\sqrt{n}}
\left\{
\|\mathcal{F}\|_{\infty}\,
\mathcal{J}_2\left(
\|\mathcal{G}\| \vee \delta_{n,\mathcal{G}},\ \mathcal{G}
\right)
+
\|\mathcal{G}\|_{\infty} \mathcal{J}_{\infty}\left(
\|\mathcal{F}\| \vee n^{-1/2},\ \mathcal{F}
\right)
\right\},$
where the implicit constant depends only on
$C_{\mathcal{F}}$.

\end{theorem}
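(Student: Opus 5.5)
Following the hint in the text, I will modify the proof of Theorem~\ref{theorem::maximaluniform}, adapted from Theorem~2.1 of \cite{van2011local}. The first step is to condition on $Z_{1:n}$ and apply Dudley's bound (Lemma~\ref{lemma:dudley}) to the Rademacher process indexed by $\mathcal H$. The point is to cover the product class by products of covers. For $h=fg$ and $h'=f'g'$, the decomposition
\[
fg-f'g' = f(g-g') + (f-f')g'
\]
gives
\[
\|fg-f'g'\|_{n}\le \|\mathcal F\|_\infty\|g-g'\|_n+\|g'\|_n\|f-f'\|_\infty .
\]
Hence
\[
\log N(\varepsilon,\mathcal H,L^2(P_n))\le \log N\bigl(\varepsilon/(2\|\mathcal F\|_\infty),\mathcal G,L^2(P_n)\bigr)+\log N_\infty\bigl(\varepsilon/(2\hat\rho_{\mathcal G}),\mathcal F\bigr),
\]
where $\hat\rho_{\mathcal G}:=\sup_{g\in\mathcal G}\|g\|_n$ is the empirical radius of $\mathcal G$. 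The empirical radius of $\mathcal H$ is at most $\hat\rho_{\mathcal H}\le \hat\rho_{\mathcal G}\,\|\mathcal F\|_\infty$, and also at most $\|\mathcal G\|_\infty\hat\rho_{\mathcal F}$ with $\hat\rho_{\mathcal F}:=\sup_{f}\|f\|_n$. Substituting into the entropy integral and changing variables yields the conditional bound
\[
\sqrt n\,\mathfrak R_n(\mathcal H\mid Z_{1:n})\lesssim \|\mathcal F\|_\infty\,\mathcal J_2(\hat\rho_{\mathcal G},\mathcal G)+\hat\rho_{\mathcal G}\,\mathcal J_\infty\Bigl(\tfrac{\|\mathcal G\|_\infty\hat\rho_{\mathcal F}}{\hat\rho_{\mathcal G}},\mathcal F\Bigr).
\]
In the second term I will use the upper limit $\hat\rho_{\mathcal H}/\hat\rho_{\mathcal G}\le \|\mathcal G\|_\infty\hat\rho_{\mathcal F}/\hat\rho_{\mathcal G}$. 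I will also replace $\hat\rho_{\mathcal G}$ by $\hat\rho_{\mathcal G}\vee\delta_{n,\mathcal G}$ throughout; this is allowed because both terms are monotone in the appropriate direction.

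The second step takes expectations over the data. For the first term, concavity of $t\mapsto \mathcal J_2(\sqrt t,\mathcal G)$ and Jensen's inequality give $E\,\mathcal J_2(\hat\rho_{\mathcal G},\mathcal G)\le \mathcal J_2\bigl((E\hat\rho_{\mathcal G}^2)^{1/2},\mathcal G\bigr)$. Exactly as in the proof of Theorem~\ref{theorem::maximaluniform}, with the \cite{van2011local} Lemma~2.1 fixed-point argument, this yields $(E\hat\rho_{\mathcal G}^2)^{1/2}\lesssim \|\mathcal G\|\vee\delta_{n,\mathcal G}$, where $\delta_{n,\mathcal G}$ is read as the entropy-based critical radius of $\mathcal G$. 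For the second term, following Theorem~3.1 of \cite{van2014uniform} in place of Eq.~(2.2), I will bound $E\hat\rho_{\mathcal F}$ using Theorem~2.1 of \cite{van2014uniform} (uniform concentration of empirical $L^2$ norms under $L^\infty$ entropy). This gives
\[
E\hat\rho_{\mathcal F}\lesssim \|\mathcal F\|+C_{\mathcal F}/\sqrt n .
\]
Since $x\mapsto\mathcal J_\infty(x,\mathcal F)$ is concave, Jensen's inequality lets me move the expectation inside after lower bounding the denominator by $\|\mathcal G\|\vee\delta_{n,\mathcal G}$ (in expectation-compatible form). This produces the first displayed bound.

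The main obstacle is this second term. It couples the random quantities $\hat\rho_{\mathcal G}$ and $\hat\rho_{\mathcal F}$ inside a ratio, so Jensen's inequality does not apply directly. I will handle it by writing $y\,\mathcal J_\infty(a/y,\mathcal F)$ as a jointly concave perspective-type function of $(y,a)$, using that $\mathcal J_\infty$ is concave and nondecreasing. If needed, I will instead work on the high-probability event where $\hat\rho_{\mathcal G}\lesssim\|\mathcal G\|\vee\delta_{n,\mathcal G}$ and control the complement crudely using boundedness.

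The consequence follows from monotonicity of $x\mapsto\mathcal J_\infty(x,\mathcal F)/x$. Write $\rho:=\|\mathcal G\|\vee\delta_{n,\mathcal G}$ and
\[
a:=\frac{\|\mathcal G\|_\infty}{\rho}\Bigl(\|\mathcal F\|+\frac{C_{\mathcal F}}{\sqrt n}\Bigr).
\]
Then $\rho\,\mathcal J_\infty(a,\mathcal F)$ is at most $\|\mathcal G\|_\infty\bigl(1+C_{\mathcal F}\bigr)\,\mathcal J_\infty(\|\mathcal F\|\vee n^{-1/2},\mathcal F)$, after using $\|\mathcal F\|+C_{\mathcal F}/\sqrt n\le(1+C_{\mathcal F})(\|\mathcal F\|\vee n^{-1/2})$. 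In the case $\|\mathcal G\|_\infty/\rho\ge1$, this uses $\mathcal J_\infty(cx)\le c\,\mathcal J_\infty(x)$ for $c\ge1$. In the other case it uses plain monotonicity.
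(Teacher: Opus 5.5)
Your argument for the first display follows the paper's proof: Dudley's bound conditional on the data, the product cover from $fg-f'g'=f(g-g')+(f-f')g'$, Jensen via joint concavity, and the empirical-radius bounds of \citet{van2014uniform}. Your perspective map $(a,y)\mapsto y\,\mathcal J_\infty(a/y,\mathcal F)$, used with first moments, is the paper's device $(x,y)\mapsto\sqrt y\,\mathcal J_\infty(\sqrt{x/y},\mathcal F)$, used with second moments; the latter is the former composed with coordinatewise square roots. So make the perspective argument your main one. Drop the sentence about lower bounding the denominator by $\|\mathcal G\|\vee\delta_{n,\mathcal G}$: that bound is false pathwise, since $\hat\rho_{\mathcal G}$ can fall below $\|\mathcal G\|$.

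One further detail. The fixed-point step of Theorem~\ref{theorem::maximaluniform} controls $\mathcal J_2$ of the radius, not the radius itself, and the perspective step needs the radius. You get it by combining $E\hat\rho_{\mathcal G}^2\le\|\mathcal G\|^2+c\,\|\mathcal G\|_\infty n^{-1/2}\mathcal J_2\bigl((E\hat\rho_{\mathcal G}^2)^{1/2},\mathcal G\bigr)$ with the critical inequality and monotonicity of $\delta\mapsto\mathcal J_2(\delta,\mathcal G)/\delta$. This gives $(E\hat\rho_{\mathcal G}^2)^{1/2}\lesssim\|\mathcal G\|+\|\mathcal G\|_\infty\delta_{n,\mathcal G}$. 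The constant therefore involves $\|\mathcal G\|_\infty$, as it implicitly does in the paper's citation.

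The genuine gap is in the consequence. Set $\rho:=\|\mathcal G\|\vee\delta_{n,\mathcal G}$, $b:=\|\mathcal F\|\vee n^{-1/2}$ and $r:=\|\mathcal G\|_\infty/\rho$. Your case $r\ge 1$ is fine. For $r<1$, plain monotonicity only gives $\rho\,\mathcal J_\infty(a,\mathcal F)\le(1+C_{\mathcal F})\,\rho\,\mathcal J_\infty(b,\mathcal F)$, and here $\rho>\|\mathcal G\|_\infty$. Getting the prefactor $\|\mathcal G\|_\infty$ would require $\mathcal J_\infty(rx,\mathcal F)\le r\,\mathcal J_\infty(x,\mathcal F)$ for $r<1$, which is the reverse of what concavity gives. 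For example, if $\mathcal J_\infty(x,\mathcal F)\asymp\sqrt x$ near $0$ (a Lipschitz class on $[0,1]$), then $\rho\,\mathcal J_\infty(a,\mathcal F)\asymp\{\rho\,\|\mathcal G\|_\infty(\|\mathcal F\|+C_{\mathcal F}/\sqrt n)\}^{1/2}$. This grows like $\sqrt\rho$, whereas the right-hand side of the consequence is eventually constant in $\rho$.

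The case is not vacuous: $\delta_{n,\mathcal G}$ is any solution of the critical inequality and may exceed $\|\mathcal G\|_\infty$. Your replacement of $\hat\rho_{\mathcal G}$ by $\hat\rho_{\mathcal G}\vee\delta_{n,\mathcal G}$ is exactly what allows $\rho>\|\mathcal G\|_\infty$. So the consequence cannot be read off the first display. Instead, return to the conditional bound. Since $\hat\rho_{\mathcal G}\le\|\mathcal G\|_\infty$ and $y\mapsto y\,\mathcal J_\infty(a/y,\mathcal F)$ is nondecreasing, the second term is at most $\|\mathcal G\|_\infty\,\mathcal J_\infty(\hat\rho_{\mathcal F},\mathcal F)$. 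Jensen (concavity of $\mathcal J_\infty$) and $E\hat\rho_{\mathcal F}\lesssim\|\mathcal F\|+C_{\mathcal F}/\sqrt n\le(1+C_{\mathcal F})\,b$ then give the claim in every case. The paper does not write this step out.
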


\begin{proof}[Proof of Theorem \ref{theorem::entropyproduct}]
 
Using Dudley's inequality (Lemma \ref{lemma:dudley}), we find that
\[
\mathfrak{R}_n(\mathcal{H})
\;\le\;
\frac{12}{\sqrt{n}}
\mathbb{E} \left[ \int_{0}^{\|\mathcal{H}\|_{n}}
\sqrt{\log N\left(\varepsilon,\ \mathcal H,\ L^2(P_n)\right)}\, d\varepsilon \right].
\]
We next relate the covering number $N\left(\varepsilon,\ \mathcal H,\ L^2(P_n)\right)$ of $\mathcal{H}$ to appropriate covering numbers  $\mathcal{F}$ anbd $\mathcal{G}$. For $f_1, f_2 \in \mathcal{F}$ and $g_1, g_2 \in \mathcal{G}$, note the basic identity:
$$f_1 g_1 - f_2 g_2 = f_1(g_1 - g_2) + g_2(f_1 - f_2).$$
Taking the $L^2(P_n)$ norm of both sides and applying the triangle inequality, we find that
\begin{align*}
   \|f_1 g_1 - f_2 g_2 \|_{n} & \leq  \|f_1(g_1 - g_2)\|_n + \|g_2(f_1 - f_2)\|_n\\
    & \leq  \|f_1\|_{\infty} \|g_1 - g_2\|_n + \|g_2\|_n \|f_1 - f_2\|_\infty\\
      & \leq  \|\mathcal{F}\|_{\infty} \|g_1 - g_2\|_n + \|\mathcal{G}\|_n \|f_1 - f_2\|_\infty,
\end{align*}
where $\|\mathcal G\|_n := \sup_{g\in\mathcal G}\|g\|_n$.
Thus, for every $\varepsilon>0$,
\begin{align*}
\log N\left(\varepsilon,\ \mathcal H,\ L^2(P_n)\right)
&\le
\log N\left(\frac{\varepsilon}{2\|\mathcal F\|_\infty},\ \mathcal G,\ L^2(P_n)\right)
+
\log N\left(\frac{\varepsilon}{2\|\mathcal G\|_n},\ \mathcal F,\ L^\infty\right)\\
&\le
\sup_Q \log N\left(\frac{\varepsilon}{2\|\mathcal F\|_\infty},\ \mathcal G,\ L^2(Q)\right)
+
\log N\left(\frac{\varepsilon}{2\|\mathcal G\|_n},\ \mathcal F,\ L^\infty\right),
\end{align*}
where the supremum is taken over all discrete distributions $Q$ supported on the support of $P$.

Combining the covering bound with Dudley's inequality (Lemma~\ref{lemma:dudley}), we obtain
\begin{align*}
\mathfrak{R}_n(\mathcal{H})
&\le
\frac{12}{\sqrt{n}}
\mathbb{E}\left[
\int_{0}^{\|\mathcal{H}\|_{n}}
\sqrt{\log N\left(\varepsilon,\ \mathcal H,\ L^2(P_n)\right)}\, d\varepsilon
\right]\\
&\le
\frac{12}{\sqrt{n}}
\mathbb{E}\Biggl[
\int_{0}^{\|\mathcal{H}\|_{n}}
\sqrt{
\log N\left(\frac{\varepsilon}{2\|\mathcal F\|_\infty},\ \mathcal G,\ L^2(P_n)\right)
+
\log N\left(\frac{\varepsilon}{2\|\mathcal G\|_n},\ \mathcal F,\ L^\infty\right)
}\, d\varepsilon
\Biggr]\\
&\le
\frac{12}{\sqrt{n}}
\mathbb{E}\Biggl[
\int_{0}^{\|\mathcal{H}\|_{n}}
\sqrt{
\sup_Q \log N\left(\frac{\varepsilon}{2\|\mathcal F\|_\infty},\ \mathcal G,\ L^2(Q)\right)
+
\log N\left(\frac{\varepsilon}{2\|\mathcal G\|_n},\ \mathcal F,\ L^\infty\right)
}\, d\varepsilon
\Biggr],
\end{align*}
where the supremum is taken over all discrete distributions $Q$ supported on the support of $P$.

Using $\sqrt{a+b}\le \sqrt a+\sqrt b$ and the change of variables
$u=\varepsilon/(2\|\mathcal F\|_\infty)$ in the first integral and
$v=\varepsilon/(2\|\mathcal G\|_n)$ in the second, we further obtain
\begin{align*}
\mathfrak{R}_n(\mathcal{H})
\lesssim
\frac{1}{\sqrt{n}}
\left\{
\|\mathcal F\|_\infty\,
\mathbb{E}\left[
\mathcal{J}_2\left(\frac{\|\mathcal H\|_n}{2\|\mathcal F\|_\infty},\ \mathcal G\right)
\right]
+
\mathbb{E}\left[
\|\mathcal G\|_n\,
\mathcal{J}_\infty\left(\frac{\|\mathcal H\|_n}{2\|\mathcal G\|_n},\ \mathcal F\right)
\right]
\right\},
\end{align*}
where $\|\mathcal H\|_n := \sup_{h\in\mathcal H}\|h\|_n$. Using the bounds $\|\mathcal H\|_n \le \|\mathcal F\|_\infty\|\mathcal G\|_n$ and
$\|\mathcal H\|_n \le \|\mathcal F\|_n\|\mathcal G\|_\infty$, and the fact that the
entropy integrals are nondecreasing in their first argument, we obtain
\begin{align*}
\mathfrak{R}_n(\mathcal{H})
\ \lesssim\
\frac{1}{\sqrt{n}}
\Biggl\{
\|\mathcal{F}\|_{\infty}\,
E\left[
\mathcal{J}_2\left(\|\mathcal{G}\|_{n},\ \mathcal{G}\right)
\right]
\;+\;
E\left[
\|\mathcal{G}\|_{n}\,
\mathcal{J}_{\infty}\left(
\frac{\|\mathcal{F}\|_{n}\,\|\mathcal{G}\|_{\infty}}{\|\mathcal{G}\|_{n}},\ \mathcal{F}
\right)
\right]
\Biggr\}.
\end{align*}
As in the proof of Theorem~2.1 of \cite{van2011local}, concavity of the maps
\[
(x,y)\ \mapsto\ \sqrt{y}\,\mathcal{J}_{\infty}\left(\sqrt{\frac{x}{y}},\ \mathcal{F}\right)
\qquad\text{and}\qquad
(x,y)\ \mapsto\ \sqrt{y}\,\mathcal{J}_{2}\left(\sqrt{\frac{x}{y}},\ \mathcal{G}\right),
\]
together with Jensen's inequality, yields
\begin{align*}
\mathfrak{R}_n(\mathcal{H})
\ \lesssim\
\frac{1}{\sqrt{n}}
\Biggl\{
\|\mathcal{F}\|_{\infty}\,
\mathcal{J}_2\left(
\bigl\{E\|\mathcal{G}\|_{n}^{2}\bigr\}^{1/2},\ \mathcal{G}
\right)
\;+\;
\bigl\{E\|\mathcal{G}\|_{n}^{2}\bigr\}^{1/2}\,
\mathcal{J}_{\infty}\left(
\frac{\bigl\{E\|\mathcal{F}\|_{n}^{2}\bigr\}^{1/2}\,\|\mathcal{G}\|_{\infty}}
{\bigl\{E\|\mathcal{G}\|_{n}^{2}\bigr\}^{1/2}},\ \mathcal{F}
\right)
\Biggr\}.
\end{align*}

To complete the proof, we apply Theorems~2.1 and~2.2 of \cite{van2014uniform}, which
respectively bound $E\|\mathcal{F}\|_{n}^{2}$ and $E\|\mathcal{G}\|_{n}^{2}$. In
particular, these results imply
\[
E\|\mathcal{F}\|_{n}^{2}
\ \lesssim\
\|\mathcal{F}\|^{2}
\;+\;
\frac{1}{n}\,
\mathcal{J}_{\infty}^2\bigl(\|\mathcal{F}\|_{\infty},\ \mathcal{F}\bigr); \qquad E\|\mathcal{G}\|_{n}^{2}
\ \lesssim\
\|\mathcal{G}\|^{2}
\;+\;
\delta_{n,\mathcal{G}}^{2},
\]
where $\delta_{n,\mathcal{G}}$ is any solution of the critical inequality
\[
\mathcal{J}_2\left(\delta_{n,\mathcal{G}},\ \mathcal{G}\right)
\ \lesssim\
\sqrt{n}\,\delta_{n,\mathcal{G}}^{2}.
\]
Hence,
\begin{align*}
\mathcal{J}_2\left(
\bigl\{E\|\mathcal{G}\|_{n}^{2}\bigr\}^{1/2},\ \mathcal{G}
\right)
&\lesssim
\mathcal{J}_2\left(
\|\mathcal{G}\|,\ \mathcal{G}
\right)
\;+\;
\sqrt{n}\,\delta_{n,\mathcal{G}}^{2},
\end{align*}
and
\begin{align*}
\bigl\{E\|\mathcal{G}\|_{n}^{2}\bigr\}^{1/2}\,
\mathcal{J}_{\infty}\left(
\frac{\bigl\{E\|\mathcal{F}\|_{n}^{2}\bigr\}^{1/2}\,\|\mathcal{G}\|_{\infty}}
{\bigl\{E\|\mathcal{G}\|_{n}^{2}\bigr\}^{1/2}},\ \mathcal{F}
\right)
&\lesssim
(\|\mathcal{G}\|+\delta_{n,\mathcal{G}})\,
\mathcal{J}_{\infty}\left(
\frac{\|\mathcal{G}\|_{\infty}}{\|\mathcal{G}\|+\delta_{n,\mathcal{G}}}
\left\{
\|\mathcal{F}\|+\frac{1}{\sqrt{n}}\,
\mathcal{J}_{\infty}\bigl(\|\mathcal{F}\|_{\infty},\mathcal{F}\bigr)
\right\},\ \mathcal{F}
\right).
\end{align*}
Combining these bounds yields
\begin{align*}
\mathfrak{R}_n(\mathcal{H})
\ \lesssim\
\frac{1}{\sqrt{n}}
\Biggl\{
&\|\mathcal{F}\|_{\infty}\,
\mathcal{J}_2\left(
\|\mathcal{G}\| \vee \delta_{n,\mathcal{G}},\ \mathcal{G}
\right)\\
&\;+\;
(\|\mathcal{G}\|\vee \delta_{n,\mathcal{G}})\,
\mathcal{J}_{\infty}\left(
\frac{\|\mathcal{G}\|_{\infty}}{\|\mathcal{G}\|\vee \delta_{n,\mathcal{G}}}
\left\{
\|\mathcal{F}\|+\frac{1}{\sqrt{n}}\,
\mathcal{J}_{\infty}\bigl(\|\mathcal{F}\|_{\infty},\mathcal{F}\bigr)
\right\},\ \mathcal{F}
\right)
\Biggr\}.
\end{align*}

\end{proof}

\subsection{A specialized bound for star-shaped classes}

Let $\mathcal F$ and $\mathcal G$ be classes of measurable functions. In this section, we study the empirical inner-product process
\[
\{(P_n-P)(fg):\ f\in\mathcal F,\ g\in\mathcal G\}.
\]
This is a special case of the setup in Appendix~\ref{appendix:innerproduct::gen}, in which $\varphi_1$ and $\varphi_2$ are the identity maps.

The following theorem extends Theorem~\ref{theorem:loc_max_ineq} to pointwise product classes of the form $\mathcal H$.
It is typically most useful when $\mathcal G$ is suitably localized. Refined local maximal inequalities and local concentration bounds based on the sup-norm entropy integral $\mathcal J_{\infty}(\delta,\mathcal F)$ are provided in Appendix~\ref{sec::innerproductentropy}.

\begin{theorem}[Uniform local concentration bound for empirical inner products]
\label{theorem:selfnorm_product_max}
Let $\mathcal F$ be star-shaped, and let $\mathcal G$ satisfy
$\sup_{g\in\mathcal G}\|g\|_\infty \le \|\mathcal G\|_\infty$.
Assume that there exist constants $c_{\infty}>0$ and $\alpha\in[0,1]$ such that
$\|f\|_\infty \le c_{\infty}\,\|f\|^\alpha$ for all $f\in\mathcal F$.
Let $\delta_{n,\mathcal F}>0$ satisfy the critical radius condition
$$\mathfrak{R}_n(\mathcal F,\delta_{n,\mathcal F})\le \delta_{n,\mathcal F}^2.$$
For all $\eta\in(0,1)$, there exists a universal constant $C>0$ such that, with probability at least $1-\eta$, for every
$f\in\mathcal F$ and $g\in\mathcal G$,
\begin{equation*}
(P_n-P)(fg)
\le
(\|f\|\vee \delta_{n,\mathcal F})\,
C\Biggl[
\|\mathcal G\|_\infty\,\delta_{n,\mathcal F}
+
c_{\infty}\,\delta_{n,\mathcal F}^{\alpha-1}\,\mathfrak{R}_n(\mathcal G)
+
\|\mathcal G\|_\infty\sqrt{\frac{\log(1/\eta)}{n}}
+
c_{\infty}\,\|\mathcal G\|_\infty\,\delta_{n,\mathcal F}^{\alpha-1}\frac{\log(1/\eta)}{n}
\Biggr].
\end{equation*}
Consequently, if $\delta_{n,\mathcal F} > \sqrt{\log(1/\eta)/n}$ and
$\left\{\mathfrak{R}_n(\mathcal G) \right\}^{1/2} > \sqrt{\log(1/\eta)/n}$, then
\begin{equation*}
(P_n-P)(fg)
\lesssim
(\|f\|\vee \delta_{n,\mathcal F}) \left(
\|\mathcal G\|_\infty\,\delta_{n,\mathcal F}
+
c_{\infty}\,(1 \vee \|\mathcal G\|_\infty)\,\delta_{n,\mathcal F}^{\alpha-1}\,\mathfrak{R}_n(\mathcal G)\right).
\end{equation*}
\end{theorem}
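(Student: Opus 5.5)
The plan is to mirror the proof of Theorem~\ref{theorem:loc_max_ineq}, self-normalizing only in the $\mathcal F$ coordinate. Define the class
\[
\mathcal H:=\Bigl\{h_{f,g}:=\frac{fg}{\|f\|\vee\delta_{n,\mathcal F}}:\ f\in\mathcal F,\ g\in\mathcal G\Bigr\}.
\]
It then suffices to show that, with probability at least $1-\eta$, $\sup_{h\in\mathcal H}(P_n-P)h$ is bounded by the bracketed quantity; multiplying through by $\|f\|\vee\delta_{n,\mathcal F}$ gives the claim.

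\textbf{Step 1: envelope and variance.} For the envelope, the interpolation assumption gives
\[
\|f\|_\infty/(\|f\|\vee\delta_{n,\mathcal F})\le c_\infty\|f\|^\alpha/(\|f\|\vee\delta_{n,\mathcal F})\le c_\infty\,\delta_{n,\mathcal F}^{\alpha-1},
\]
since $t\mapsto t^\alpha/(t\vee\delta)$ is maximized at $t=\delta$ for $\alpha\le 1$. Hence $\|h\|_\infty\le c_\infty\|\mathcal G\|_\infty\delta_{n,\mathcal F}^{\alpha-1}$. For the variance, $P h^2\le \|\mathcal G\|_\infty^2$. Bousquet's inequality (Lemma~\ref{lem:bousquet}) with $u=\log(1/\eta)$ then produces exactly the last two terms in the bracket. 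By symmetrization (Lemma~\ref{lem:rademacher}), the expectation term is at most $2\mathfrak R_n(\mathcal H)$.

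\textbf{Step 2: bound $\mathfrak R_n(\mathcal H)$.} This is the main step. Condition on the data and view $\mathcal H$ as a product of two sets: $\tilde{\mathcal F}:=\{f/(\|f\|\vee\delta_{n,\mathcal F}):f\in\mathcal F\}$ and $\mathcal G$. Arguing as in the proof of Lemma~\ref{lem:nonselfnorm_lipschitz_product}, use the pointwise bound
\[
|\tilde f g-\tilde f'g'|\le \|\mathcal G\|_\infty|\tilde f-\tilde f'|+\|\tilde{\mathcal F}\|_\infty|g-g'|
\]
together with the vector contraction inequality of \citet{maurer2016vector}. This gives
\[
\mathfrak R_n(\mathcal H)\lesssim \|\mathcal G\|_\infty\mathfrak R_n(\tilde{\mathcal F})+\|\tilde{\mathcal F}\|_\infty\mathfrak R_n(\mathcal G).
\]
By Lemma~\ref{lem:self_normalize_rad} (which needs star-shapedness of $\mathcal F$) and the critical inequality, $\mathfrak R_n(\tilde{\mathcal F})\le 2\delta_{n,\mathcal F}^{-1}\mathfrak R_n(\mathcal F,\delta_{n,\mathcal F})\le 2\delta_{n,\mathcal F}$. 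From Step 1, $\|\tilde{\mathcal F}\|_\infty\le c_\infty\delta_{n,\mathcal F}^{\alpha-1}$. These yield the first two bracketed terms.

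The delicate point is that contraction requires the Lipschitz constants to be uniform over the index set. They are, since $\|\mathcal G\|_\infty$ and $c_\infty\delta_{n,\mathcal F}^{\alpha-1}$ do not depend on the particular $(f,g)$.

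\textbf{Step 3: the consequence.} Under the stated lower bounds, $\|\mathcal G\|_\infty\sqrt{\log(1/\eta)/n}\le\|\mathcal G\|_\infty\delta_{n,\mathcal F}$. Also, $\log(1/\eta)/n<\mathfrak R_n(\mathcal G)$, so the last term is absorbed into $c_\infty\|\mathcal G\|_\infty\delta_{n,\mathcal F}^{\alpha-1}\mathfrak R_n(\mathcal G)$. Collecting terms gives the factor $(1\vee\|\mathcal G\|_\infty)$.
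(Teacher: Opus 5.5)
Your proposal is correct and follows essentially the paper's proof: the same self-normalized product class, Bousquet's inequality plus symmetrization, and vector contraction combined with star-shaped rescaling, yielding the same bracketed terms and the same absorption argument for the consequence. The only difference is the order of the last two ingredients. You contract on $\{f/(\|f\|\vee\delta_{n,\mathcal F}):f\in\mathcal F\}\times\mathcal G$ and then invoke Lemma~\ref{lem:self_normalize_rad} as a black box. The paper's Lemma~\ref{lemma::selfnormproduct} instead first reduces to $\|f\|\le\delta_{n,\mathcal F}$ by rescaling, and only then contracts on $\{f\in\mathcal F:\|f\|\le\delta_{n,\mathcal F}\}\times\mathcal G$.
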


\textbf{Remark.} A result similar to Theorem~\ref{lem:uniform_local_conc_lipschitz_product_increments_pointwise} could also be obtained from Theorem~\ref{theorem:selfnorm_product_max} by applying it to the star-shaped hull of the transformed classes
$\widetilde{\mathcal F}:=\{\varphi_1(f) - \varphi_1(f_0): f\in\mathcal F\}$ and
$\widetilde{\mathcal G}:=\{\varphi_2(g) - \varphi_2(g_0): g\in\mathcal G\}$.
A limitation of this approach is that the sup-norm bound condition in Theorem~\ref{theorem:selfnorm_product_max} would need to hold for $\widetilde{\mathcal F}$, which does not, in general, follow from the corresponding condition on $\mathcal F$, even when the transformations are Lipschitz. We include the less general result because it leverages the star-shapedness of $\mathcal F$ to yield a clean bound and proof, without invoking the peeling argument used in Theorem~\ref{lem:uniform_local_conc_lipschitz_product_increments_pointwise}.

The proof of the above theorem relies on the following generalization of
Lemma~\ref{lem:self_normalize_rad}, which bounds the Rademacher complexity of
self-normalized inner-product classes and is proved afterward.

\begin{lemma}[Complexity bounds for self-normalized product class]
\label{lemma::selfnormproduct}
Let $\mathcal{F}$ and $\mathcal{G}$ be classes of functions. Assume $\mathcal{F}$ is star shaped and that there exists a $c \in (0,\infty)$ and $\alpha \in [0,1]$ such that, for all $f \in \mathcal{F}$ that $\|f\|_{\infty} \leq c \|f\|^{\alpha}$. For $\delta_n > 0$, define the normalized class
\[
\mathcal{H}
:=
\left\{\frac{fg}{\|f\|\vee \delta_n} : f\in\mathcal{F},\ g\in\mathcal{G}\right\}.
\]
Then, up to universal constants,
\begin{align*}
\mathfrak R_n(\mathcal H)
&\lesssim
\delta_n^{-1}\,\|\mathcal G\|_\infty\,\mathfrak R_n(\mathcal F,\delta_n)
\;+\;
c\,\delta_n^{\alpha-1}\,\mathfrak R_n(\mathcal G).
\end{align*}
\end{lemma}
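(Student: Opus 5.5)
Following the proof of Lemma~\ref{lem:self_normalize_rad}, I will first rescale the functions in $\mathcal F$ so that they sit at the critical level, and then apply the vector contraction argument from the proof of Lemma~\ref{lem:nonselfnorm_lipschitz_product}.

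\textbf{Step 1: reduction to a localized product class.} Fix $f\in\mathcal F$ and set $\lambda:=\delta_n/(\|f\|\vee\delta_n)\in(0,1]$ and $h:=\lambda f$. Star-shapedness gives $h\in\mathcal F$ and $\|h\|=\|f\|\wedge\delta_n\le\delta_n$. Moreover
\[
\frac{fg}{\|f\|\vee\delta_n}=\frac{hg}{\delta_n}.
\]
Hence $\mathcal H\subseteq\delta_n^{-1}\mathcal H_0$, where $\mathcal H_0:=\{hg:\ h\in\mathcal F,\ \|h\|\le\delta_n,\ g\in\mathcal G\}$, and so $\mathfrak R_n(\mathcal H)\le\delta_n^{-1}\mathfrak R_n(\mathcal H_0)$.

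The sup-norm condition gives $\|h\|_\infty\le c\|h\|^\alpha\le c\,\delta_n^\alpha$ for every $h$ in the localized set, since $\alpha\ge0$. So the localized $\mathcal F$-factor has sup-norm radius at most $c\,\delta_n^\alpha$.

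\textbf{Step 2: contraction for the product.} Condition on $Z_{1:n}$ and index by pairs $s=(h,g)$. Set $\psi_i(s):=h(Z_i)g(Z_i)$. Then
\[
|\psi_i(s)-\psi_i(s')|\le\|\mathcal G\|_\infty\,|h(Z_i)-h'(Z_i)|+c\,\delta_n^\alpha\,|g(Z_i)-g'(Z_i)|.
\]
This is at most $\sqrt2\,\|\phi_i(s)-\phi_i(s')\|_2$ with $\phi_i(s):=(\|\mathcal G\|_\infty h(Z_i),\ c\,\delta_n^\alpha g(Z_i))$.

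The vector contraction inequality (Theorem~3 of \citet{maurer2016vector}), as used in Lemma~\ref{lem:nonselfnorm_lipschitz_product}, then bounds the conditional Rademacher average of $\mathcal H_0$. Up to universal constants, it is at most $\|\mathcal G\|_\infty$ times the Rademacher average of $\{h:\|h\|\le\delta_n\}$ plus $c\,\delta_n^\alpha$ times that of $\mathcal G$. Taking expectations over the data gives
\[
\mathfrak R_n(\mathcal H_0)\lesssim\|\mathcal G\|_\infty\,\mathfrak R_n(\mathcal F,\delta_n)+c\,\delta_n^\alpha\,\mathfrak R_n(\mathcal G).
\]
Multiplying by $\delta_n^{-1}$ yields the claim.

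\textbf{Main obstacle.} The delicate point is the application of vector contraction. The maps $\psi_i$ are not functions of a single real coordinate, and the index set is a product set. One must check that Maurer's inequality applies to arbitrary index sets, bounding $\mathbb E\sup_s\sum_i\epsilon_i\psi_i(s)$ by $\sqrt2\,\mathbb E\sup_s\sum_{i,k}\epsilon_{ik}\phi_{ik}(s)$; it does. The resulting supremum over the product splits into the sum of two separate suprema because the two coordinates depend on $h$ and $g$ separately.

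A second point to verify is that the sup-norm bound on the localized $\mathcal F$-factor uses the constraint $\|h\|\le\delta_n$ correctly. This works because every $h$ in $\mathcal H_0$ satisfies $\|h\|\le\delta_n$ and $\alpha\ge0$, so no normalization or peeling is needed.
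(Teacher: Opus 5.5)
Your proof is correct and follows essentially the same route as the paper: rescale via star-shapedness onto the $\delta_n$-localized part of $\mathcal F$, use $\|h\|_\infty\le c\,\delta_n^{\alpha}$ there, and apply Maurer's vector contraction with the two-coordinate map $(\|\mathcal G\|_\infty h(Z_i),\,c\,\delta_n^{\alpha} g(Z_i))$. The only difference is cosmetic. You obtain the direct inclusion $\mathcal H\subseteq\delta_n^{-1}\mathcal H_0$ (in fact an equality). The paper instead splits $\mathcal H$ into $\mathcal H_{\le}$ and $\mathcal H_{>}$, bounds the latter by the former, and combines them by sub-additivity, which costs a harmless factor of $2$.
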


\begin{proof}[Proof of Theorem \ref{theorem:selfnorm_product_max}]
Let $\delta_{n,\mathcal G}$ be any solution to $\mathfrak{R}_n(\mathcal G)\le \delta_{n,\mathcal G}^2$. Define the self-normalized product class
\[
\mathcal{H}
:=
\left\{\frac{fg}{\|f\|\vee \delta_{n,\mathcal F}} : f\in\mathcal{F},\ g\in\mathcal{G}\right\},
\]
and let \(\delta_{n,\mathcal F}\) satisfy the critical inequality
\(\mathfrak R_n(\mathcal F,\delta_{n,\mathcal F})\le \delta_{n,\mathcal F}^2\).
For any \(h=\frac{fg}{\|f\|\vee \delta_{n,\mathcal F}}\in\mathcal H\),
\[
\|h\|
\le
\|g\|_\infty \frac{\|f\|}{\|f\|\vee \delta_{n,\mathcal F}}
\le
\|\mathcal G\|_\infty.
\]
Moreover, by the assumption \(\|f\|_\infty \le c\|f\|^\alpha\) (with \(\alpha\in[0,1]\)),
\[
\|h\|_\infty
\le
\|g\|_\infty \frac{\|f\|_\infty}{\|f\|\vee \delta_{n,\mathcal F}}
\le
c\,\|\mathcal G\|_\infty \frac{\|f\|^\alpha}{\|f\|\vee \delta_{n,\mathcal F}}
\le
c\,\|\mathcal G\|_\infty\,\delta_{n,\mathcal F}^{\alpha-1}.
\]
Hence, by Bousquet's inequality in Lemma~\ref{lem:bousquet}, there exists a universal
constant \(c_0>0\) such that, for all \(u\ge 0\), with probability at least \(1-e^{-u}\),
\[
\sup_{h\in\mathcal H}(P_n-P)h
\le
\mathbb E\Bigl[\sup_{h\in\mathcal H}(P_n-P)h\Bigr]
+
c_0\Bigl(
\|\mathcal G\|_\infty\sqrt{\frac{u}{n}}
+
c\,\|\mathcal G\|_\infty\,\delta_{n,\mathcal F}^{\alpha-1}\frac{u}{n}
\Bigr).
\]
By the Rademacher symmetrization bound in Lemma~\ref{lem:rademacher},
\[
\mathbb E\Bigl[\sup_{h\in\mathcal H}(P_n-P)h\Bigr]
\le
2\,\mathfrak R_n(\mathcal H).
\]
By Lemma~\ref{lemma::selfnormproduct} with \(\delta_n=\delta_{n,\mathcal F}\),
up to universal constants,
\[
\mathfrak R_n(\mathcal H)
\lesssim
\delta_{n,\mathcal F}^{-1}\,\|\mathcal G\|_\infty\,\mathfrak R_n(\mathcal F,\delta_{n,\mathcal F})
\;+\;
c\,\delta_{n,\mathcal F}^{\alpha-1}\,\mathfrak R_n(\mathcal G).
\]
Let \(\delta_{n,\mathcal G}\) be such that \(\mathfrak R_n(\mathcal G)\le \delta_{n,\mathcal G}^2\).
Then, by the critical inequality for \(\mathcal F\),
\[
\mathfrak R_n(\mathcal H)
\lesssim
\|\mathcal G\|_\infty\,\delta_{n,\mathcal F}
\;+\;
c\,\delta_{n,\mathcal F}^{\alpha-1}\,\delta_{n,\mathcal G}^2.
\]
Combining the above displays, there exists a universal constant \(C>0\) such that,
for all \(u\ge 0\), with probability at least \(1-e^{-u}\),
\[
\sup_{f\in\mathcal F,\ g\in\mathcal G}
\frac{(P_n-P)(fg)}{\|f\|\vee \delta_{n,\mathcal F}}
\le
C\Biggl[
\|\mathcal G\|_\infty\,\delta_{n,\mathcal F}
+
c\,\delta_{n,\mathcal F}^{\alpha-1}\,\delta_{n,\mathcal G}^2
+
\|\mathcal G\|_\infty\sqrt{\frac{u}{n}}
+
c\,\|\mathcal G\|_\infty\,\delta_{n,\mathcal F}^{\alpha-1}\frac{u}{n}
\Biggr].
\]
Equivalently, on the same event, for every \(f\in\mathcal F\) and \(g\in\mathcal G\),
\[
(P_n-P)(fg)
\le
(\|f\|\vee \delta_{n,\mathcal F})\,
C\Biggl[
\|\mathcal G\|_\infty\,\delta_{n,\mathcal F}
+
c\,\delta_{n,\mathcal F}^{\alpha-1}\,\delta_{n,\mathcal G}^2
+
\|\mathcal G\|_\infty\sqrt{\frac{u}{n}}
+
c\,\|\mathcal G\|_\infty\,\delta_{n,\mathcal F}^{\alpha-1}\frac{u}{n}
\Biggr].
\]

Setting $u=\log(1/\eta)$ in the preceding display, there exists a universal
constant $C>0$ such that, with probability at least $1-\eta$,
\begin{equation}\label{eq:HN_hprob_raw}
\sup_{f\in\mathcal F,\ g\in\mathcal G}
\frac{(P_n-P)(fg)}{\|f\|\vee \delta_{n,\mathcal F}}
\le
C\Biggl[
\|\mathcal G\|_\infty\,\delta_{n,\mathcal F}
+
c\,\delta_{n,\mathcal F}^{\alpha-1}\,\delta_{n,\mathcal G}^2
+
\|\mathcal G\|_\infty\sqrt{\frac{\log(1/\eta)}{n}}
+
c\,\|\mathcal G\|_\infty\,\delta_{n,\mathcal F}^{\alpha-1}\frac{\log(1/\eta)}{n}
\Biggr].
\end{equation}
Equivalently, on the same event, for every $f\in\mathcal F$ and $g\in\mathcal G$,
\begin{equation}\label{eq:HN_hprob_pointwise_raw}
(P_n-P)(fg)
\le
(\|f\|\vee \delta_{n,\mathcal F})\,
C\Biggl[
\|\mathcal G\|_\infty\,\delta_{n,\mathcal F}
+
c\,\delta_{n,\mathcal F}^{\alpha-1}\,\delta_{n,\mathcal G}^2
+
\|\mathcal G\|_\infty\sqrt{\frac{\log(1/\eta)}{n}}
+
c\,\|\mathcal G\|_\infty\,\delta_{n,\mathcal F}^{\alpha-1}\frac{\log(1/\eta)}{n}
\Biggr].
\end{equation}
Now assume that $\delta_{n, \mathcal{F}} > \sqrt{\log (1 / \eta) / n}$, and $\delta_{n, \mathcal{G}} > \sqrt{\log (1 / \eta) / n}$. Then, combining like terms, for a possibly different $C$,
\begin{equation}
(P_n-P)(fg)
\le
(\|f\|\vee \delta_{n,\mathcal F})\,
C\Biggl[
\|\mathcal G\|_\infty\,\delta_{n,\mathcal F}
+
c\,(1 \vee \|\mathcal G\|_\infty)\,\delta_{n,\mathcal F}^{\alpha-1}\,\delta_{n,\mathcal G}^2
\Biggr].
\end{equation}

\end{proof}

We now prove the lemma.

\begin{proof}[Proof of Lemma \ref{lemma::selfnormproduct}]

Define $d_n(f):=\|f\|\vee \delta_n$. Consider the classes
\[
\mathcal{H}:=\{fg/d_n(f): f\in\mathcal{F},\ g\in\mathcal{G}\},
\]
\[
\mathcal{H}_{\le}:=\{fg/d_n(f)=fg/\delta_n: f\in\mathcal{F},\ g\in\mathcal{G},\ \|f\|\le \delta_n\},
\]
\[
\mathcal{H}_{>}:=\{fg/d_n(f)=fg/\|f\|: f\in\mathcal{F},\ g\in\mathcal{G},\ \|f\|> \delta_n\}.
\]

We first show that $\mathfrak{R}_n(\mathcal{H}) \lesssim \mathfrak{R}_n(\mathcal{H}_{\leq})$, so that it suffices to bound $\mathfrak{R}_n(\mathcal{H}_{\le})$.
Fix any $g\in\mathcal{G}$ and $f\in\mathcal{F}$ with $\|f\|>\delta_n$. Set
$\lambda:=\delta_n/\|f\|\in(0,1)$ and define $h:=\lambda f$.
By star-shapedness, $h\in\mathcal{F}$ and $\|h\|=\delta_n$. Then $d_n(f)=\|f\|$ and
\[
\frac{(P_n^\epsilon)(fg)}{d_n(f)}
=
\frac{(P_n^\epsilon)(fg)}{\|f\|}
=
\frac{(P_n^\epsilon)(hg)}{\delta_n}
\le
\sup_{f\in\mathcal{F}:\ \|f\|\le \delta_n}
\frac{(P_n^\epsilon)(fg)}{\delta_n}.
\]
Taking expectations, it follows that $\mathfrak{R}_n(\mathcal{H}_{>})
\le
\mathfrak{R}_n(\mathcal{H}_{\le}).$
Hence, by sub-additivity of Rademacher complexities,
\[
\mathfrak{R}_n(\mathcal{H})
\le
\mathfrak{R}_n(\mathcal{H}_{\le})+\mathfrak{R}_n(\mathcal{H}_{>})
\le
2\,\mathfrak{R}_n(\mathcal{H}_{\le}).
\]

We now bound ${\mathfrak R}_n(\mathcal H_{\le})$ conditional on $Z_{1:n}$.
Note that $(fg)(z)=\varphi\{f(z),g(z)\}$ with $\varphi(x,y)=xy$ and $\varphi(0,0)=0$.
If $\|\mathcal G\|_\infty<\infty$ and $\|\mathcal F_n\|_\infty<\infty$, then for all
$(x,y),(x',y')\in [-\|\mathcal F_n\|_\infty,\|\mathcal F_n\|_\infty]\times[-\|\mathcal G\|_\infty,\|\mathcal G\|_\infty]$,
\begin{align*}
|\varphi(x,y)-\varphi(x',y')|
=|xy-x'y'|
&\le
\|\mathcal G\|_\infty |x-x'| + \|\mathcal F_n\|_\infty |y-y'|\\
&\le \sqrt{2}
\sqrt{\|\mathcal G\|_\infty^2 |x-x'|^2 + \|\mathcal F_n\|_\infty^2 |y-y'|^2}.
\end{align*}
Moreover, by the assumed embedding $\|f\|_\infty\le c\|f\|^\alpha$ and the definition
$\mathcal F_n=\{f\in\mathcal F:\|f\|\le \delta_n\}$, we have
$\|\mathcal F_n\|_\infty \le c\,\delta_n^\alpha$.
Therefore, $\varphi$ is Lipschitz in each coordinate with constants
$\|\mathcal G\|_\infty$ and $c\,\delta_n^\alpha$. We now apply a vector contraction inequality for Rademacher processes (Theorem~3 of \citet{maurer2016vector}).
Let $S:=\mathcal F_n\times\mathcal G$ and, for $s=(f,g)\in S$, define
\[
\psi_i(s):=f(Z_i)g(Z_i),
\qquad
\phi_i(s):=\sqrt{2}\,\bigl(\|\mathcal G\|_\infty f(Z_i),\; c\,\delta_n^\alpha g(Z_i)\bigr)\in\mathbb R^2.
\]
Then for all $s=(f,g),s'=(f',g')\in S$,
\begin{align*}
|\psi_i(s)-\psi_i(s')|
&=|f(Z_i)g(Z_i)-f'(Z_i)g'(Z_i)|\\
&\le \|\mathcal G\|_\infty |f(Z_i)-f'(Z_i)| + \|\mathcal F_n\|_\infty |g(Z_i)-g'(Z_i)|\\
&\le \sqrt{2}\sqrt{\|\mathcal G\|_\infty^2|f(Z_i)-f'(Z_i)|^2 + \|\mathcal F_n\|_\infty^2|g(Z_i)-g'(Z_i)|^2}\\
&\le \|\phi_i(s)-\phi_i(s')\|_2,
\end{align*}
where we used $\|\mathcal F_n\|_\infty\le c\,\delta_n^\alpha$.
Thus, the hypotheses of Theorem~3 in \citet{maurer2016vector} hold. We conclude that
\[
\mathbb E_{\epsilon}\Big[\sup_{f\in\mathcal F_n,\ g\in\mathcal G} P_n^\epsilon(fg)\Big]
\ \lesssim\
\|\mathcal G\|_\infty\,\mathbb E_{\epsilon^{(1)}}\Big[\sup_{f\in\mathcal F_n} P_n^{\epsilon^{(1)}} f\Big]
+
c\,\delta_n^\alpha\,\mathbb E_{\epsilon^{(2)}}\Big[\sup_{g\in\mathcal G} P_n^{\epsilon^{(2)}} g\Big],
\]
where $\epsilon^{(1)}$ and $\epsilon^{(2)}$ are independent i.i.d.\ Rademacher sequences. Taking expectations yields
\begin{align*}
\mathfrak R_n(\mathcal H_{\le})
&=
\delta_n^{-1}\,
\mathbb E\Bigl[\sup_{f\in\mathcal F_n,\ g\in\mathcal G} P_n^\epsilon(fg)\Bigr]\\
&\lesssim
\|\mathcal G\|_\infty\,\mathfrak R_n(\mathcal F_n)
\;+\;
c\,\delta_n^\alpha\,\mathfrak R_n(\mathcal G)\\
&\lesssim
\delta_n^{-1}\,\|\mathcal G\|_\infty\,\mathfrak R_n(\mathcal F,\delta_n)
\;+\;
c\,\delta_n^{\alpha-1}\,\mathfrak R_n(\mathcal G).
\end{align*}
We conclude that, up to universal constants,
\begin{align*}
\mathfrak R_n(\mathcal H)
&\lesssim
\delta_n^{-1}\,\|\mathcal G\|_\infty\,\mathfrak R_n(\mathcal F,\delta_n)
\;+\;
c\,\delta_n^{\alpha-1}\,\mathfrak R_n(\mathcal G).
\end{align*}

\end{proof}

\section{Proofs of main results}

\subsection{Proofs for Section \ref{sec:genregret}}

\begin{proof}[Proof of Theorem~\ref{theorem::localmaxloss}]
The first part of the theorem is a direct application of Theorem~\ref{theorem:loc_max_ineq} with
$\mathcal{F}:=\overline{\mathcal{F}}_{\ell}$. The second part follows similarly by taking
$\mathcal{F}:=\mathcal{F}_{\ell}$.
\end{proof}

\begin{proof}[Proof of Theorem \ref{theorem:genregret}]
We first prove the result assuming that, up to universal constants,
$\delta_n \ \gtrsim\ (1 \vee M)\sqrt{\frac{\log(1/\eta)}{n}}$, where
$M :=  \sup_{f \in \mathcal{F}} \|\ell(\cdot, f_0) - \ell(\cdot, f)\|_{\infty}$. By Lemma \ref{theorem::basic},
\[
R(\hat f_n)-R(f_0) \le (P_n-P)\bigl\{\ell(\cdot,f_0)-\ell(\cdot,\hat f_n)\bigr\}.
\]
By Theorem \ref{theorem::localmaxloss}, there exists a universal constant $C>0$ such that, with probability at least $1-\eta$, for every $f\in\mathcal{F}$,
letting $\sigma_f^2 := \Var\{\ell(Z, f_0) - \ell(Z, f)\}$,
\[
(P_n-P) \{\ell(\cdot, f_0) - \ell(\cdot, f)\}
\le
C\left[
\sigma_f\delta_n
+
\delta_n^2
\right].
\]
Since the above bound holds uniformly over $f \in \mathcal{F}$, it also holds for the random element
$\hat{f}_n \in \mathcal{F}$. Hence, with the same confidence,
\[
(P_n-P) \{\ell(\cdot, f_0) - \ell(\cdot, \hat f_n)\}
\le
C\Biggl[
\hat{\sigma}_n \delta_n
+
\delta_n^2
\Biggr],
\]
where $\hat{\sigma}_n := \sigma_{\hat f_n}$. By the Bernstein condition in \ref{cond::bernstein},
\[
\hat{\sigma}_n^2
=
\Var\bigl(\ell(Z,\hat f_n)-\ell(Z,f_0)\bigr)
\le
c_{\mathrm{Bern}}\{R(\hat f_n)-R(f_0)\}.
\]
Letting $\mathrm{d}_n(\hat f_n, f_0) := \{R(\hat f_n) - R(f_0)\}^{1/2}$ and applying the basic
inequality $\mathrm{d}_n^2(\hat f_n, f_0) \le (P_n-P)\{\ell(\cdot, f_0) - \ell(\cdot, \hat f_n)\}$,
we obtain that there exists a universal constant $C>0$ such that, with probability at least $1-\eta$,
\begin{equation}
\label{eqn::proofcriticalineq}
\mathrm{d}_n^2(\hat f_n, f_0)
\le
C\Biggl[
\sqrt{c_{\mathrm{Bern}}}\,\mathrm{d}_n(\hat f_n, f_0)\,\delta_n
+
\delta_n^2
\Biggr].
\end{equation}

We now extract a regret rate using Young's inequality. Let
\(d := \mathrm{d}_n(\hat f_n,f_0)\). From
\[
d^2 \le C\Bigl[\sqrt{c_{\mathrm{Bern}}}\,d\,\delta_n + \delta_n^2\Bigr],
\]
we have, for any \(\alpha\in(0,1)\),
\[
\sqrt{c_{\mathrm{Bern}}}\,d\,\delta_n
\le
\frac{\alpha}{2}\,d^2
+
\frac{c_{\mathrm{Bern}}}{2\alpha}\,\delta_n^2.
\]
Taking \(\alpha := 1/C\) and substituting yields
\[
d^2
\le
\frac{1}{2}d^2
+
C\Bigl(1+c_{\mathrm{Bern}}\Bigr)\delta_n^2.
\]
Rearranging gives, for a universal constant \(C'>0\),
\[
d^2
\le
C'\Bigl(1+c_{\mathrm{Bern}}\Bigr)\delta_n^2.
\]
Equivalently, with probability at least \(1-\eta\),
\[
R(\hat f_n)-R(f_0)
\le
C'\Bigl(1+c_{\mathrm{Bern}}\Bigr)\delta_n^2.
\]

Now, the bound above was derived under the assumption that
\[
\delta_n \ \gtrsim\ (1 \vee M) \sqrt{\frac{\log(1/\eta)}{n}}.
\]
If \(\delta_n\) does not satisfy this condition, define the enlarged radius
\[
\tilde\delta_n := \delta_n \ \vee\ c\,M\sqrt{\frac{\log(1/\eta)}{n}},
\]
for a universal constant \(c>0\). Since \(\delta_n\) satisfies the critical inequality
\(\mathfrak{R}_n(\mathrm{star}(\overline{\mathcal{F}}_{\ell}),\delta_n)\le \delta_n^2\), any
\(\tilde\delta_n\ge \delta_n\) also satisfies
\(\mathfrak{R}_n(\mathrm{star}(\overline{\mathcal{F}}_{\ell}),\tilde\delta_n)\le \tilde\delta_n^2\),
because \(\delta \mapsto \mathfrak{R}_n(\mathcal{G},\delta)/\delta\) is nonincreasing by
Lemma~\ref{lem:rad_scaling}. Repeating the argument yields, with probability at least \(1-\eta\),
\[
R(\hat f_n)-R(f_0)
\le
C'\Bigl(1+c_{\mathrm{Bern}}\Bigr)\tilde\delta_n^2.
\]
In particular, using \((a\vee b)^2\le a^2+b^2\), we have
\[
\tilde\delta_n^2
\le
\delta_n^2 + c^2 M^2\frac{\log(1/\eta)}{n},
\]
and hence, with probability at least \(1-\eta\),
\[
R(\hat f_n)-R(f_0)
\le
C'\Bigl(1+c_{\mathrm{Bern}}\Bigr)\delta_n^2
+
C'\Bigl(1+c_{\mathrm{Bern}}\Bigr)c^2 M^2\frac{\log(1/\eta)}{n}.
\]
The first claim follows by changing the constant $C'$ since $c_{\mathrm{Bern}} \geq 1$.

We now show that, if \(n\) is large enough that \(R(\hat f_n)-R(f_0)\le 1\) with probability
at least \(1-\eta/2\), then the same bound holds (up to a universal constant) when \(\delta_n>0\)
is instead chosen to satisfy the critical radius condition
\(\mathfrak{R}_n(\mathrm{star}(\mathcal{F}_{\ell}),\delta_n)\le \delta_n^2\) for the uncentered
class \(\mathcal{F}_{\ell}\).

We note that Theorem~\ref{theorem::localmaxloss} can be modified so that \(\delta_n\) satisfies the
critical radius condition \(\mathfrak{R}_n(\mathrm{star}(\mathcal{F}_{\ell}),\delta_n)\le \delta_n^2\)
for the uncentered loss-difference class. As before, first assume
\(\delta_n \gtrsim M\sqrt{\frac{\log(1/\eta)}{n}}\). Then, applying
Theorem~\ref{theorem:loc_max_ineq} to \(\mathcal{F}_{\ell}\), there exists a universal constant
\(C>0\) such that, with probability at least \(1-\eta/2\), for every \(f\in\mathcal{F}\), letting
\(\delta_f^2 := \|\ell(\cdot,f)-\ell(\cdot,f_0)\|^2\),
\[
(P_n-P)\{\ell(\cdot,f_0)-\ell(\cdot,f)\}
\le
C\Biggl[
\delta_f \delta_n
+
\delta_n^2
+
\delta_f \sqrt{\frac{\log(1/\eta)}{n}}
+
\frac{M\log(1/\eta)}{n}
\Biggr].
\]
The difference is that \(\sigma_f^2\) is replaced by the larger norm
\(\|\ell(\cdot,f)-\ell(\cdot,f_0)\|^2\). By the elementary decomposition
\[
\|\ell(\cdot,f)-\ell(\cdot,f_0)\|^2
=
\Var\bigl(\ell(Z,f)-\ell(Z,f_0)\bigr)
+\{R(f)-R(f_0)\}^2,
\]
Condition~\ref{cond::bernstein} implies
\[
\|\ell(\cdot,f)-\ell(\cdot,f_0)\|
\le
\sqrt{c_{\mathrm{Bern}}\,\{R(f)-R(f_0)\}}
+
\{R(f)-R(f_0)\}.
\]
Provided \(n\) is large enough that \(R(\hat f_n)-R(f_0)\le 1\), it follows that
\[
\|\ell(\cdot,\hat f_n)-\ell(\cdot,f_0)\|
\le
\bigl(1+\sqrt{c_{\mathrm{Bern}}}\bigr)\sqrt{R(\hat f_n)-R(f_0)}.
\]
Combining the high-probability bound with the basic inequality, we find that, with probability at
least \(1-\eta\), letting \(\hat d_n := \sqrt{R(\hat f_n)-R(f_0)}\),
\[
\hat d_n^2
\le
C\Biggl[
\bigl(1+\sqrt{c_{\mathrm{Bern}}}\bigr)\hat d_n \delta_n
+
\delta_n^2
+
\bigl(1+\sqrt{c_{\mathrm{Bern}}}\bigr)\hat d_n\sqrt{\frac{\log(1/\eta)}{n}}
+
\frac{M\log(1/\eta)}{n}
\Biggr].
\]
This is the same inequality as \eqref{eqn::proofcriticalineq}, up to the multiplicative factor
\((1+\sqrt{c_{\mathrm{Bern}}})\) on the linear terms. Applying Young's inequality as before absorbs
these terms into the left-hand side and yields the same regret bound, with constants depending on
\(c_{\mathrm{Bern}}\) only through universal factors. Hence, the desired bound holds with probability
at least \(1-\eta/2\). If, in addition, \(\Pr\{R(\hat f_n)-R(f_0)\le 1\}\ge 1-\eta/2\), then by a union
bound the event \(\{R(\hat f_n)-R(f_0)\le 1\}\) and the desired bound both hold with probability at
least \(1-\eta\).

\end{proof}

Our proof of Theorem~\ref{theorem:genregret::ell2} uses the second part of Theorem~\ref{theorem::localmaxloss}, which for clarity we restate as the following corollary.

\begin{corollary}[Uniform local concentration inequality] 
\label{cor::localmaxloss::ell2}
Let $\delta_n>0$ satisfy the critical radius condition
$\mathfrak{R}_n(\mathrm{star}(\mathcal{F}_{\ell}),\delta_n)\le \delta_n^2$. Define $M := \sup_{f \in \mathcal{F}} \|\ell(\cdot, f_0) - \ell(\cdot, f)\|_{\infty}$ for each $f\in\mathcal{F}$.
For all $\eta \in (0,1)$, there exists a universal constant $C>0$ such that, with probability at least $1-\eta$, for every $f\in\mathcal{F}$,
\[
(P_n-P) \{\ell(\cdot, f_0) - \ell(\cdot, f)\}
\le
C\left[
\|\ell(\cdot, f_0) - \ell(\cdot, f)\|\delta_n
+
\delta_n^2
+
\sigma_f (M \vee 1) \sqrt{\frac{\log(1/\eta)}{n}}
+
(M \vee 1)^2\frac{\log(1/\eta)}{n}
\right].
\]
\end{corollary}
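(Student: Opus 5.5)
This corollary is a restatement of the second part of Theorem~\ref{theorem::localmaxloss}, so the plan is to apply Theorem~\ref{theorem:loc_max_ineq} to the star-shaped class $\mathcal{S} := \mathrm{star}(\mathcal{F}_{\ell})$. Everything then reduces to checking that theorem's hypotheses and converting its conclusion back to the loss-difference notation.

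\textbf{Checking the hypotheses.} The class $\mathcal{S}$ is star-shaped by construction. Its sup-norm envelope is at most $M$, because $\|t h\|_\infty \le \|h\|_\infty \le M$ for every $h\in\mathcal{F}_\ell$ and $t\in[0,1]$. Theorem~\ref{theorem:loc_max_ineq} involves only the centered quantity $(P_n-P)h$ and the norm $\|h\|$, with no centering of the class required, so we can apply it to the uncentered class directly. By assumption $\mathfrak{R}_n(\mathcal{S},\delta_n)\le\delta_n^2$. The theorem therefore gives an event of probability at least $1-\eta$ on which, simultaneously for all $h\in\mathcal{S}$,
\[
(P_n-P)h \le C\Bigl[\|h\|\delta_n+\delta_n^2+(M\vee1)\|h\|\sqrt{\log(1/\eta)/n}+(M^2\vee1)\log(1/\eta)/n\Bigr].
\]

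\textbf{Specializing to loss differences.} On this event, take $h=\ell(\cdot,f_0)-\ell(\cdot,f)\in\mathcal{F}_\ell\subseteq\mathcal{S}$. This yields the stated bound, except that the deviation term carries $\|\ell(\cdot,f_0)-\ell(\cdot,f)\|$ in place of $\sigma_f$.

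\textbf{Reconciling $\sigma_f$ with the norm (the only delicate step).} Since $\sigma_f\le\|\ell(\cdot,f_0)-\ell(\cdot,f)\|$, simply substituting goes in the wrong direction. I would handle this in two steps.

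\begin{enumerate}
\item Write $\|h\|\le\sigma_f+|Ph|$. The leftover piece is then bounded by
\[
(M\vee1)\,|Ph|\sqrt{\log(1/\eta)/n}\;\le\;\tfrac12|Ph|^2+\tfrac12(M\vee1)^2\log(1/\eta)/n,
\]
using Young's inequality (Lemma~\ref{lem:young}). The second piece is absorbed into the final term $(M\vee1)^2\log(1/\eta)/n$.
\item The remaining $|Ph|^2$ cannot be absorbed in general. So I would instead rerun the proof of Theorem~\ref{theorem:loc_max_ineq}, feeding Bousquet's inequality (Lemma~\ref{lem:bousquet}) the variance of the self-normalized class rather than its second moment.
\end{enumerate}

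\textbf{Rerunning the proof with variances.} Concretely, consider the normalized class $\{h/(\|h\|\vee\delta_n)\}$. Its functions have variance at most $\sigma_h^2/(\|h\|\vee\delta_n)^2\le 1$. Bousquet's bound then produces the deviation $\sqrt{u/n}$ multiplied by $(\|h\|\vee\delta_n)$. To keep $\sigma_f$ rather than $\|h\|$ in the deviation, I would apply peeling over the variance level (Lemma~\ref{lem:local-peeling}) in place of a single normalization. Theorem~\ref{cor:local-peeling-critical-radius} would then give deviation $\sigma(\delta)\sqrt{u/n}$, at the cost of a $\log\log$ term. I expect this step to be the main obstacle. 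Given that the paper labels the display as a restatement, the practical resolution is to read $\sigma_f$ as (upper bounded by) $\|\ell(\cdot,f_0)-\ell(\cdot,f)\|$, which is exactly what the second part of Theorem~\ref{theorem::localmaxloss} delivers.
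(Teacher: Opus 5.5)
Your argument is the paper's own. The second part of Theorem~\ref{theorem::localmaxloss}, which this corollary restates, is proved by applying Theorem~\ref{theorem:loc_max_ineq} to $\mathrm{star}(\mathcal F_\ell)$, exactly as you do.

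You are right that this yields $\|\ell(\cdot,f_0)-\ell(\cdot,f)\|$ rather than $\sigma_f$ in the deviation term. That is also what Theorem~\ref{theorem::localmaxloss} itself asserts, so the $\sigma_f$ in the displayed corollary is a slip in the statement, not something the paper proves. The slip is harmless where the corollary is used: the proof of Theorem~\ref{theorem:genregret::ell2} enlarges the radius to $\tilde\delta_n$ and absorbs both deviation terms into $\|\ell(\cdot,f_0)-\ell(\cdot,f)\|\tilde\delta_n+\tilde\delta_n^2$.

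You should drop the variance-peeling repair, because it does not work as described. Lemma~\ref{lem:local-peeling} with $L^2$-norm localization only gives $\sigma(\delta)\le\delta$, which just reproduces $\|h\|$. Localizing by variance instead requires the complexity of the centered class, which your hypothesis on $\mathrm{star}(\mathcal F_\ell)$ does not control. Getting $\sigma_f$ would need a two-parameter peel over $(\|h\|,\sigma_h)$, and that adds a $\log\log$ term the display does not have.
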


\begin{proof}[Proof of Theorem~\ref{theorem:genregret::ell2}]
The proof largely follows that of Theorem~\ref{theorem:genregret}, with minor modifications as described in the remark in Section~\ref{sec:template}.

Let $\hat f_n\in\arg\min_{f\in\mathcal F} P_n\ell(\cdot,f)$. By Theorem~\ref{theorem::basic},
\[
R(\hat f_n)-R(f_0)
\le
(P_n-P)\bigl\{\ell(\cdot,f_0)-\ell(\cdot,\hat f_n)\bigr\}.
\]
By the curvature bound in Condition~\ref{cond::bernstein},
\begin{equation}
\label{eqn:l2_basic_curvature}
\kappa\|\hat f_n-f_0\|^2
\le
(P_n-P)\bigl\{\ell(\cdot,f_0)-\ell(\cdot,\hat f_n)\bigr\}.
\end{equation}

Next, apply Corollary~\ref{cor::localmaxloss::ell2}. 
Define the enlarged radius
\[
\tilde\delta_n
:=
\delta_n\ \vee\ c\,M\sqrt{\frac{\log(1/\eta)}{n}},
\]
for a universal constant $c>0$. Since $\tilde\delta_n\ge \delta_n$ and
$\delta\mapsto\mathfrak{R}_n(\mathrm{star}(\mathcal F_\ell),\delta)/\delta$ is nonincreasing
(Lemma~\ref{lem:rad_scaling}), $\tilde\delta_n$ also satisfies the critical inequality
$\mathfrak{R}_n(\mathrm{star}(\mathcal F_\ell),\tilde\delta_n)\lesssim \tilde\delta_n^2$.

By Corollary~\ref{cor::localmaxloss::ell2}, there exists a universal constant $C>0$ and an event
$\mathcal E$ with $\Pr(\mathcal E)\ge 1-\eta$ such that, on $\mathcal E$,
\emph{uniformly over all $f\in\mathcal F$},
\begin{equation}
\label{eqn:localmax_uncentered}
(P_n-P)\{\ell(\cdot,f_0)-\ell(\cdot,f)\}
\le
C\Bigl[
\|\ell(\cdot,f_0)-\ell(\cdot,f)\|\,\tilde\delta_n
+
\tilde\delta_n^2
\Bigr].
\end{equation}
In particular, on $\mathcal E$ this bound holds for $f=\hat f_n$. Using the $L^2(P)$-Lipschitz bound
in Condition~\ref{cond::bernstein},
\[
\|\ell(\cdot,f_0)-\ell(\cdot,\hat f_n)\|
\le
L\|\hat f_n-f_0\|,
\]
and substituting into \eqref{eqn:localmax_uncentered} yields, on $\mathcal E$,
\[
(P_n-P)\{\ell(\cdot,f_0)-\ell(\cdot,\hat f_n)\}
\le
C\Bigl[
L\|\hat f_n-f_0\|\,\tilde\delta_n
+
\tilde\delta_n^2
\Bigr].
\]
Combining with \eqref{eqn:l2_basic_curvature} gives, on $\mathcal E$,
\begin{equation}
\label{eqn:l2_fixed_point}
\kappa\|\hat f_n-f_0\|^2
\le
C\Bigl[
L\|\hat f_n-f_0\|\,\tilde\delta_n
+
\tilde\delta_n^2
\Bigr].
\end{equation}

Finally, apply Young's inequality to the linear term. Writing $u:=\|\hat f_n-f_0\|$,
\[
CLu\tilde\delta_n
\le
\frac{\kappa}{2}u^2
+
\frac{C^2L^2}{2\kappa}\tilde\delta_n^2.
\]
Substituting into \eqref{eqn:l2_fixed_point} and absorbing $(\kappa/2)u^2$ into the left-hand side
gives, for a (possibly larger) universal constant $C>0$,
\[
u^2
\le
C\,\kappa^{-1}L^2\,\tilde\delta_n^2.
\]
Using $(a\vee b)^2\le a^2+b^2$ and $M\le 2LM$, we have
\[
\tilde\delta_n^2
\le
\delta_n^2
+
c^2M^2\frac{\log(1/\eta)}{n}
\le
\delta_n^2
+
C\,L^2M^2\frac{\log(1/\eta)}{n},
\]
after adjusting the universal constant. Hence, with probability at least $1-\eta$,
\[
\|\hat f_n-f_0\|^2
\le
C\,\kappa^{-1}L^2\Biggl[
\delta_n^2
+
M^2\frac{\log(1/\eta)}{n}
\Biggr],
\]
which is the desired bound (up to a universal constant).
\end{proof}

\begin{proof}[Proof of Corollary \ref{cor::localmaxloss_as}]
Apply Theorem~\ref{theorem::localmaxloss} with \(\eta_n=n^{-2}\). Since
\(\sum_{n=1}^\infty \eta_n<\infty\), the Borel--Cantelli lemma implies that, with probability one, the theorem's bound fails only finitely often. The result follows because \(\log(1/\eta_n)=2\log n\), and the factor \(2\) is absorbed into the universal constant \(C\).
\end{proof}
\begin{proof}[Proof of Corollary \ref{cor::genregret_as}]
Apply Theorem~\ref{theorem:genregret} with \(\eta_n=n^{-2}\). Since \(\sum_{n=1}^\infty \eta_n<\infty\), the Borel--Cantelli lemma implies that, with probability one, the theorem's bound fails only finitely often. The result follows because \(\log(1/\eta_n)=2\log n\), and the factor \(2\) is absorbed into the universal constant \(C\).
\end{proof}

\subsection{Proofs for Section \ref{sec:entropy}}

The metric entropy maximal inequalities are proven in Appendix \ref{appendix::localmax2}. Here, we prove the remaining results.

\begin{proof}[Proof of Lemma \ref{lemma::starentropybounds}]
We first show that
\[
\mathcal J_2\left(\delta,\ \mathrm{star}(\overline{\mathcal{F}_{\ell}})\right)
\ \lesssim\
\mathcal J_2\left(\delta,\ \mathrm{star}(\mathcal{F}_{\ell})\right)
+
\delta\sqrt{\log\left(1+\frac{M}{\delta}\right)}.
\]
By linearity of $P$, centering commutes with scaling, so
\[
\mathrm{star}(\overline{\mathcal{F}}_{\ell})
=
\overline{\mathrm{star}(\mathcal{F}_{\ell})}.
\]
Moreover, for any $f\in \mathrm{star}(\mathcal{F}_{\ell})$ we have
$f-Pf = f + c$ with $c:=-Pf$. Since $\sup_{f\in\mathcal{F}}\|\ell(\cdot,f)\|_\infty\le M$, we have
$\sup_{h\in\mathcal F_\ell}\|h\|_\infty\le 2M$, and hence also
$\sup_{f\in\mathrm{star}(\mathcal F_\ell)}\|f\|_\infty\le 2M$, which implies $|c|\le 2M$. Therefore,
\[
\overline{\mathrm{star}(\mathcal{F}_{\ell})}
\subseteq
\mathrm{star}(\mathcal{F}_{\ell}) + [-2M,2M].
\]
Consequently, for every $\varepsilon>0$ and every $Q$, using the standard product-cover bound
$N(\varepsilon,A+B)\le N(\varepsilon/2,A)\,N(\varepsilon/2,B)$,
\[
\log N\left(\varepsilon,\ \overline{\mathrm{star}(\mathcal{F}_{\ell})},\ L^2(Q)\right)
\le
\log N\left(\varepsilon/2,\ \mathrm{star}(\mathcal{F}_{\ell}),\ L^2(Q)\right)
+
\log\Bigl(1+\bigl\lceil 4M/\varepsilon\bigr\rceil\Bigr).
\]
Integrating this bound yields
\[
\mathcal J_2\left(\delta,\ \overline{\mathrm{star}(\mathcal{F}_{\ell})}\right)
\ \lesssim\
\mathcal J_2\left(\delta,\ \mathrm{star}(\mathcal{F}_{\ell})\right)
+
\delta\sqrt{\log\left(1+\frac{M}{\delta}\right)}.
\]

Finally, a direct application of Corollary~\ref{cor:J2_star_hull_lip} yields
\[
\mathcal J_2\left(\delta,\ \mathrm{star}(\mathcal{F}_{\ell})\right)
\ \lesssim\
\mathcal J_2\left(\delta,\ \mathcal{F}_{\ell}\right)
+
\delta\,\sqrt{\log\left(1+\frac{M}{\delta}\right)}.
\]
Combining the previous two displays and absorbing constants gives
\[
\mathcal J_2\left(\delta,\ \mathrm{star}(\overline{\mathcal{F}_{\ell}})\right)
=
\mathcal J_2\left(\delta,\ \overline{\mathrm{star}(\mathcal{F}_{\ell})}\right)
\ \lesssim\
\mathcal J_2\left(\delta,\ \mathcal{F}_{\ell}\right)
+
\delta\sqrt{\log\left(1+\frac{M}{\delta}\right)}.
\]

If Condition~\ref{cond::lipschitz} holds, then Theorem~\ref{theorem:vw_lipschitz_entropy} implies
$\mathcal J_2(\delta,\mathcal F_\ell)\lesssim \mathcal J_2(\delta/L,\mathcal F)$, and the same argument yields
\[
\mathcal J_2\left(\delta,\ \mathrm{star}(\overline{\mathcal{F}_{\ell}})\right)
\ \lesssim\
\mathcal J_2\left(\delta/L,\ \mathcal{F}\right)
+
\delta\,\sqrt{\log\left(1+\frac{M}{\delta}\right)}.
\]
\end{proof}

\subsection{Proofs for Section  \ref{sec:nuis}}

\begin{proof}[Proof of Theorem \ref{theorem:weightedERM}]
Denote $R(f;w) := P\{w(\cdot)\,\ell(\cdot,f)\}$. Let $\hat f_0 \in \arg\min_{f\in\mathcal F} R(f;\hat w)$.  Add and subtract
$R(\hat f_0;\hat w)$ and $R(f_0;\hat w)$ to obtain
\begin{align*}
R(\hat f_0;w_0)-R(f_0;w_0)
&=
\bigl\{R(\hat f_0;w_0)-R(\hat f_0;\hat w)\bigr\}
+
\bigl\{R(\hat f_0;\hat w)-R(f_0;\hat w)\bigr\}
+
\bigl\{R(f_0;\hat w)-R(f_0;w_0)\bigr\}.
\end{align*}
By definition, $\hat f_0$ minimizes $f\mapsto R(f;\hat w)$ over $\mathcal F$, so
$R(\hat f_0;\hat w)\le R(f_0;\hat w)$ and hence
$\bigl\{R(\hat f_0;\hat w)-R(f_0;\hat w)\bigr\}\le 0$. Therefore,
\begin{align*}
R(\hat f_0;w_0)-R(f_0;w_0)
&\le
\bigl\{R(\hat f_0;w_0)-R(\hat f_0;\hat w)\bigr\}
+
\bigl\{R(f_0;\hat w)-R(f_0;w_0)\bigr\}\\
&\le
P\left\{(w_0-\hat w)\,\ell(\cdot,\hat f_0)\right\}
-
P\left\{(w_0-\hat w)\,\ell(\cdot,f_0)\right\}\\
&=
P\left\{(w_0-\hat w)\bigl(\ell(\cdot,\hat f_0)-\ell(\cdot,f_0)\bigr)\right\}\\
&=
P\left\{(1-\hat w/w_0)w_0\bigl(\ell(\cdot,\hat f_0)-\ell(\cdot,f_0)\bigr)\right\}.
\end{align*}
We further decompose
\begin{align*}
P\left\{\Bigl(1-\frac{\hat w}{w_0}\Bigr) w_0\bigl(\ell(\cdot,\hat f_0)-\ell(\cdot,f_0)\bigr)\right\}
&=
P\left\{\Bigl(1-\frac{\hat w}{w_0}\Bigr)\Bigl[w_0\bigl(\ell(\cdot,\hat f_0)-\ell(\cdot,f_0)\bigr)
- P\left\{w_0\bigl(\ell(\cdot,\hat f_0)-\ell(\cdot,f_0)\bigr)\right\}\Bigr]\right\}\\
&\quad+
P\left(1-\frac{\hat w}{w_0}\right)\,
P\left\{w_0\bigl(\ell(\cdot,\hat f_0)-\ell(\cdot,f_0)\bigr)\right\}.
\end{align*}
By the Bernstein condition and the Cauchy--Schwarz inequality, the first term satisfies
\begin{align*}
&P\left\{\Bigl(1-\frac{\hat w}{w_0}\Bigr)\Bigl[w_0\bigl(\ell(\cdot,\hat f_0)-\ell(\cdot,f_0)\bigr)
- P\left\{w_0\bigl(\ell(\cdot,\hat f_0)-\ell(\cdot,f_0)\bigr)\right\}\Bigr]\right\} \\
&\le \left\|1-\frac{\hat w}{w_0}\right\|\,
\left\|w_0\bigl(\ell(\cdot,\hat f_0)-\ell(\cdot,f_0)\bigr)
- P\left\{w_0\bigl(\ell(\cdot,\hat f_0)-\ell(\cdot,f_0)\bigr)\right\}\right\|\\
&= \left\|1-\frac{\hat w}{w_0}\right\|\,
\Var\left\{w_0\bigl(\ell(\cdot,\hat f_0)-\ell(\cdot,f_0)\bigr)\right\}^{1/2}\\
&\le c\left\|1-\frac{\hat w}{w_0}\right\|\,\bigl\{R(\hat f_0;w_0)-R(f_0;w_0)\bigr\}^{1/2}.
\end{align*}
Next, since $P\left\{w_0\bigl(\ell(\cdot,\hat f_0)-\ell(\cdot,f_0)\bigr)\right\}
=
R(\hat f_0;w_0)-R(f_0;w_0)$, the second term satisfies
\begin{align*}
P\left(1-\frac{\hat w}{w_0}\right)\,
P\left\{w_0\bigl(\ell(\cdot,\hat f_0)-\ell(\cdot,f_0)\bigr)\right\}
&=
P\left(1-\frac{\hat w}{w_0}\right)\,
\bigl\{R(\hat f_0;w_0)-R(f_0;w_0)\bigr\}\\
&\le
\left|P\left(1-\frac{\hat w}{w_0}\right)\right|\,
\bigl\{R(\hat f_0;w_0)-R(f_0;w_0)\bigr\}\\
&\le
\left\|1-\frac{\hat w}{w_0}\right\|\,
\bigl\{R(\hat f_0;w_0)-R(f_0;w_0)\bigr\}.
\end{align*}
Thus, combining both bounds,
\[
P\left\{\Bigl(1-\frac{\hat w}{w_0}\Bigr) w_0\bigl(\ell(\cdot,\hat f_0)-\ell(\cdot,f_0)\bigr)\right\}
\le
c\left\|1-\frac{\hat w}{w_0}\right\|\,\bigl\{R(\hat f_0;w_0)-R(f_0;w_0)\bigr\}^{1/2}
+
\left\|1-\frac{\hat w}{w_0}\right\|\,
\bigl\{R(\hat f_0;w_0)-R(f_0;w_0)\bigr\}.
\]
We conclude that
\begin{align*}
R(\hat f_0;w_0)-R(f_0;w_0) \le
c\left\|1-\frac{\hat w}{w_0}\right\|\,\bigl\{R(\hat f_0;w_0)-R(f_0;w_0)\bigr\}^{1/2}
+
\left\|1-\frac{\hat w}{w_0}\right\|\,
\bigl\{R(\hat f_0;w_0)-R(f_0;w_0)\bigr\}.
\end{align*}
Assuming that $\left\|1-\frac{\hat w}{w_0}\right\| < 1/2$, it follows that
\[
R(\hat f_0;w_0)-R(f_0;w_0)\le 4c^2\left\|1-\frac{\hat w}{w_0}\right\|^2.
\]

\end{proof}

\begin{proof}[Proof of Theorem \ref{lemma::starentropybounds}]
By Lemma~\ref{lemma::starentropybounds}, for all \(\delta>0\),
\[
\mathcal J_2\Bigl(\delta,\ \mathrm{star}(\overline{\mathcal F}_\ell)\Bigr)
\ \lesssim\
\mathcal J_2(\delta/L,\mathcal F)\;+\;\delta\sqrt{\log\Bigl(1+\frac{M}{\delta}\Bigr)}.
\]
Using \(\mathcal J_2(\delta/L,\mathcal F)\lesssim \phi(\delta/L)\), define
\[
\phi_n(\delta)\ :=\ \phi(\delta/L)\;+\;\delta\sqrt{\log\Bigl(1+\frac{M}{\delta}\Bigr)}.
\]

Let \(c>0\) denote the implied constant, and define
\[
\delta_{n,0}\ :=\ \inf\bigl\{\delta>0:\ \phi(\delta)\le c\,\sqrt n\,\delta^2\bigr\},
\qquad
\delta_{n,1}\ :=\ \sqrt{\frac{\log(eMn)}{n}}.
\]
Set \(\delta_n:= L\delta_{n,0}\vee \delta_{n,1}\). We show that
\(\phi_n(\delta_n)\lesssim \sqrt n\,\delta_n^2\).
By Lemma~\ref{lemma::critentropy}, \(\delta_n\) then satisfies the critical-radius condition, and the result follows by direct application of Theorem~\ref{theorem:genregret}.

For the first term, consider two cases.
If \(\delta_n=L\delta_{n,0}\), then \(\delta_n/L=\delta_{n,0}\) and
\[
\phi(\delta_n/L)=\phi(\delta_{n,0})\le c\,\sqrt n\,\delta_{n,0}^2\le c\,\sqrt n\,\delta_n^2.
\]
If \(\delta_n=\delta_{n,1}\), then \(\delta_n/L=\delta_{n,1}/L\ge \delta_{n,0}\), so by definition of \(\delta_{n,0}\),
\[
\phi(\delta_n/L)=\phi(\delta_{n,1}/L)\ \le\ c\,\sqrt n\,(\delta_{n,1}/L)^2\ \le\ c\,\sqrt n\,\delta_n^2.
\]
Thus in all cases \(\phi(\delta_n/L)\lesssim \sqrt n\,\delta_n^2\).

For the second term, since \(\delta_n\ge \delta_{n,1}\),
\[
\delta_n\sqrt{\log\Bigl(1+\frac{M}{\delta_n}\Bigr)}
\ \le\
\delta_n\sqrt{\log\Bigl(1+\frac{M}{\delta_{n,1}}\Bigr)}.
\]
Moreover, \(\delta_{n,1}\ge n^{-1}\) implies \(M/\delta_{n,1}\le Mn\), hence
\(1+M/\delta_{n,1}\le 1+Mn\le eMn\) (since \(Mn\ge 1\). Therefore
\[
\delta_n\sqrt{\log\Bigl(1+\frac{M}{\delta_n}\Bigr)}
\ \le\
\delta_n\sqrt{\log(eMn)}
\ \le\
\sqrt n\,\delta_n^2,
\]
where the last step uses \(\sqrt{\log(eMn)}\le \sqrt n\,\delta_n\) since \(\delta_n\ge \delta_{n,1}=\sqrt{\log(eMn)/n}\).

Combining yields \(\phi_n(\delta_n)\lesssim \sqrt n\,\delta_n^2\), hence
\(\mathcal J_2(\delta_n,\mathrm{star}(\overline{\mathcal F}_\ell))\lesssim \sqrt n\,\delta_n^2\).
Applying Theorem~\ref{theorem:genregret} to \(\mathrm{star}(\overline{\mathcal F}_\ell)\) gives the claimed regret bound (after collecting constants).

\end{proof}

\subsection{Proof for Section \ref{sec:insample}}

\begin{proof}[Proof of Theorem~\ref{theorem::insamplenuis}]
In what follows, we use the shorthand $\alpha := 1 - 1/(2\beta)$, so that Condition~\ref{cond::holdersup} can be written as the existence of an $\alpha\ge 0$ such that
\[
\|f-f'\|_\infty \le c_{\infty}\,\|f-f'\|^{\alpha}
\qquad\text{for all } f,f'\in\mathcal F.
\]

Theorem~\ref{theorem::basic} implies that
\begin{equation}
\label{eqn::basicnuis}
\begin{aligned}
\Reg(\hat f_n, \hat g)
&\le (P_n-P)\bigl\{\ell_{\hat g}(\cdot,\hat f_0)-\ell_{\hat g}(\cdot,\hat f_n)\bigr\} \\
&=
(P_n-P)\bigl\{\ell_{g_0}(\cdot,\hat f_0)-\ell_{g_0}(\cdot,\hat f_n)\bigr\}\\
&\quad + (P_n-P)\Bigl(\bigl\{\ell_{\hat g}(\cdot,\hat f_0)-\ell_{\hat g}(\cdot,\hat f_n)\bigr\}
-\bigl\{\ell_{g_0}(\cdot,\hat f_0)-\ell_{g_0}(\cdot,\hat f_n)\bigr\}\Bigr).
\end{aligned}
\end{equation}
We proceed by deriving high-probability bounds for each term on the right-hand side, following the strategy used in the proof of Theorem~\ref{theorem::localmaxloss}.

\paragraph{Step 1: The first term.}
Define the class
\[
\mathcal{F}_{\ell,g_0}
:=
\Bigl\{\ell_{g_0}(\cdot,f)-\ell_{g_0}(\cdot,f') : f,f'\in\mathcal F\Bigr\}.
\]
By the Lipschitz conditions in Condition~\ref{cond::lipcross}, we have
\[
\|\mathcal{F}_{\ell,g_0}\|_{\infty}
:= \sup_{h \in \mathcal{F}_{\ell,g_0}}\|h\|_{\infty}
\ \lesssim\ L \sup_{f \in \mathcal{F}}\|f\|_{\infty}.
\]
Moreover, for all $z$,
\begin{align*}
\Bigl|
\ell_{g_0}(z,f_1)-\ell_{g_0}(z,f_1')
-
\bigl\{\ell_{g_0}(z,f_2)-\ell_{g_0}(z,f_2')\bigr\}
\Bigr|
&\lesssim
L \|\mathcal{G}\|_{\infty} \Bigl\{|f_1(z)-f_1'(z)|+|f_2(z)-f_2'(z)|\Bigr\}\\
&\lesssim
L\|\mathcal{G}\|_{\infty}\,
\sqrt{|f_1(z)-f_1'(z)|^2+|f_2(z)-f_2'(z)|^2}.
\end{align*}

By convexity of $\mathcal{F}$, $\mathrm{star}(\mathcal F-\mathcal F) = \mathcal{F}$. Hence, a slight modification of the proof of Theorem~\ref{thm:uniform_local_conc_peeling}, replacing $\mathcal{F}-\{f_0\}$ with the difference class $\mathcal{F}-\mathcal{F}$, or a direct application of Theorem~\ref{lem:uniform_local_conc_lipschitz_product_increments_pointwise}, yields the following: there exists a universal constant \(C\in(0,\infty)\) such that, for every \(\eta\in(0,1)\), with probability at least \(1-\eta/4\), the following holds simultaneously for all \(f,f'\in\mathcal F\):
\begin{align*}
\bigl|(P_n-P)\{\ell_{g_0}(\cdot,f)-\ell_{g_0}(\cdot,f')\}\bigr|
\;\le\;
C\Biggl[
&L\,\delta_{n,\mathcal F}\,\|f-f'\|
+
L^2\,\delta_{n,\mathcal F}^2
+
L^2\,\frac{\log\log(M e n)}{n}\\
&\quad +
L\,\|f-f'\|\sqrt{\frac{\log(1/\eta)}{n}}
+
\frac{M\log(1/\eta)}{n}
\Biggr],
\end{align*}
where we used that $\delta_{n,\mathcal F}$ satisfies the critical inequality
\[
\mathfrak R_n\bigl(\mathrm{star}(\mathcal F-\mathcal F),\delta_{n,\mathcal F}\bigr)
\le \delta_{n,\mathcal F}^2.
\]
By assumption, we have
\[
\delta_{n,\mathcal F}
\ \gtrsim\
(1 \vee M)\sqrt{\frac{\log(1/\eta)+\log\log(M e n)}{n}}.
\]
Therefore,
\[
\sqrt{\frac{\log(1/\eta)}{n}}
\ \lesssim\
\delta_{n,\mathcal F},
\qquad
\frac{\log\log(M e n)}{n}
\ \lesssim\
\delta_{n,\mathcal F}^2,
\qquad
\frac{M\log(1/\eta)}{n}
\ \lesssim\
\delta_{n,\mathcal F}^2,
\]
and the explicit remainder terms above are absorbed. Hence, on the same event,
\begin{align*}
\bigl|(P_n-P)\{\ell_{g_0}(\cdot,f)-\ell_{g_0}(\cdot,f')\}\bigr|
\;\lesssim\;
\delta_{n,\mathcal F}\,\|f-f'\|
+
\delta_{n,\mathcal F}^2,
\end{align*}
where the implicit constant depends only on $L$, $M$, and $\|\mathcal G\|_\infty$.

Taking $f=\hat f_n$ and $f'=\hat f_0$, we obtain that, with probability at least $1-\eta/4$,
\begin{equation}
\label{eqn::firstnuis}
(P_n-P)\{\ell_{g_0}(\cdot,\hat f_0)-\ell_{g_0}(\cdot,\hat f_n)\}
\;\lesssim\;
\|\hat f_n-\hat f_0\|\,\delta_{n,\mathcal{F}-\mathcal{F}}
+
\delta_{n,\mathcal{F}-\mathcal{F}}^2,
\end{equation}
where the implicit constant depends only on $L$, $M$, and $\|\mathcal G\|_\infty$.

\paragraph{Step 2: the second term.}
By Condition~\ref{cond::lipcross}, write
$\ell_g(z,f)=m_1(z,f)m_2(z,g)+r_1(z,f)+r_2(z,g)$. Then
\begin{align*}
&(P_n-P)\Bigl(\bigl\{\ell_{\hat g}(\cdot,\hat f_0)-\ell_{\hat g}(\cdot,\hat f_n)\bigr\}
-\bigl\{\ell_{g_0}(\cdot,\hat f_0)-\ell_{g_0}(\cdot,\hat f_n)\bigr\}\Bigr) \\
&\qquad=
(P_n-P)\Bigl[
\{m_1(\cdot,\hat f_0)-m_1(\cdot,\hat f_n)\}\,
\{m_2(\cdot,\hat g)-m_2(\cdot,g_0)\}
\Bigr],
\end{align*}
since the $r_1$ terms cancel between the two brackets and the $r_2$ terms cancel within each bracket.

Under Condition~\ref{cond::lipcross}, we may write $m_1(\cdot,f)=\varphi_1(f)$ and $m_2(\cdot,g)=\varphi_2(g)$ for the pointwise maps
$\varphi_1,\varphi_2$ introduced above. Hence the display above equals
\[
(P_n-P)\Bigl[\{\varphi_1(\hat f_0)-\varphi_1(\hat f_n)\}\{\varphi_2(\hat g)-\varphi_2(g_0)\}\Bigr].
\]

By Condition~\ref{cond::nuispac}, with probability at least $1-\eta/2$,
\[
\|\hat g-g_0\|_{\mathcal G}\ \le\ \varepsilon_{\mathrm{nuis}}(n,\eta).
\]
Apply Theorem~\ref{lem:uniform_local_conc_lipschitz_product_increments_pointwise} with
$f=\hat f_0$, $f_0=\hat f_n$, $g=\hat g$, and $g_0=g_0$ (with classes $\mathcal F$ and $\mathcal G$).
Since the theorem holds uniformly over $g,g_0\in\mathcal G$, it applies to the random pair $(\hat g,g_0)$.
Thus, with probability at least $1-\eta/4$,
\begin{align}
\label{eqn::secnuis_corrected}
&\bigl|(P_n-P)\bigl[\{\varphi_1(\hat f_0)-\varphi_1(\hat f_n)\}\{\varphi_2(\hat g)-\varphi_2(g_0)\}\bigr]\bigr|
\nonumber\\
&\qquad\lesssim\ 
\|\mathcal G\|_\infty\,
\delta_{n,\mathcal F}\,
\Bigl(\|\hat f_0-\hat f_n\|\vee \delta_{n,\mathcal F}\Bigr)
\;+\;
c_\infty\,
\|\hat f_0-\hat f_n\|^{\alpha}\,
\delta_{n,\mathcal G}\,
\Bigl(\|\hat g-g_0\|\vee \delta_{n,\mathcal G}\Bigr)
\nonumber\\
&\qquad\quad+\ 
\|\mathcal G\|_\infty\,
\|\hat f_0-\hat f_n\|\,
\sqrt{\frac{\log\log(eMn)+\log(1/\eta)}{n}}
\;+\;
c_\infty\,
\|\hat f_0-\hat f_n\|^{\alpha}\,\|\mathcal G\|_\infty\,
\frac{\log\log(eMn)+\log(1/\eta)}{n}.
\end{align}
where $M:=1+\|\mathcal G\|_\infty\vee \|\mathcal F\|_\infty$, and the implicit constant equals $CL_1L_2$ for a universal $C\in(0,\infty)$.

Intersecting the event in \eqref{eqn::secnuis_corrected} with the PAC-style event
$\{\|\hat g-g_0\|_{\mathcal G}\le \varepsilon_{\mathrm{nuis}}(n,\eta)\}$ and taking a union bound yields that,
with probability at least $1-3\eta/4$,
\begin{align}
\label{eqn::secnuis_corrected_pac}
&\bigl|(P_n-P)\bigl[\{\varphi_1(\hat f_0)-\varphi_1(\hat f_n)\}\{\varphi_2(\hat g)-\varphi_2(g_0)\}\bigr]\bigr|
\nonumber\\
&\qquad\lesssim\ 
\|\mathcal G\|_\infty\,
\delta_{n,\mathcal F}\,
\Bigl(\|\hat f_0-\hat f_n\|\vee \delta_{n,\mathcal F}\Bigr)
\;+\;
c_\infty\,
\|\hat f_0-\hat f_n\|^{\alpha}\,
\delta_{n,\mathcal G}\,
\Bigl(\varepsilon_{\mathrm{nuis}}(n,\eta)\vee \delta_{n,\mathcal G}\Bigr)
\nonumber\\
&\qquad\quad+\ 
\|\mathcal G\|_\infty\,
\|\hat f_0-\hat f_n\|\,
\sqrt{\frac{\log\log(eMn)+\log(1/\eta)}{n}}
\;+\;
c_\infty\,
\|\hat f_0-\hat f_n\|^{\alpha}\,\|\mathcal G\|_\infty\,
\frac{\log\log(eMn)+\log(1/\eta)}{n}.
\end{align}
\paragraph{Step 3: Conclude.}
Let $\mathcal E_1$ denote the event on which \eqref{eqn::firstnuis} holds with confidence level $\eta/4$, and let $\mathcal E_2$ denote the event on which \eqref{eqn::secnuis_corrected_pac} holds with confidence level $3\eta/4$.
By the preceding steps and a union bound (adjusting constants if needed), we may assume that
\[
\Pr(\mathcal E_1\cap \mathcal E_2)\ \ge\ 1-\eta.
\]
Work on $\mathcal E_1\cap \mathcal E_2$. Combining \eqref{eqn::basicnuis}, \eqref{eqn::firstnuis}, and \eqref{eqn::secnuis_corrected_pac}, and using the strong convexity condition \eqref{cond::strongconvexnuis} (so that $\kappa\|\hat f_n-\hat f_0\|^2\le \Reg(\hat f_n;\hat g)$), we obtain
\begin{align}
\label{eqn::step3_start}
\kappa\|\hat f_n-\hat f_0\|^2
&\lesssim
\|\hat f_n-\hat f_0\|\,\delta_{n,\mathcal F}
+\delta_{n,\mathcal F}^2
\nonumber\\
&\quad+
\|\mathcal G\|_\infty\,
\delta_{n,\mathcal F}\,
\Bigl(\|\hat f_n-\hat f_0\|\vee \delta_{n,\mathcal F}\Bigr)
\nonumber\\
&\quad+
c_\infty\,
\|\hat f_n-\hat f_0\|^{\alpha}\,
\delta_{n,\mathcal G}\,
\Bigl(\varepsilon_{\mathrm{nuis}}(n,\eta)\vee \delta_{n,\mathcal G}\Bigr)
\nonumber\\
&\quad+
\|\mathcal G\|_\infty\,
\|\hat f_n-\hat f_0\|\,
\sqrt{\frac{\log\log(eMn)+\log(1/\eta)}{n}}
+
c_\infty\,
\|\hat f_n-\hat f_0\|^{\alpha}\,\|\mathcal G\|_\infty\,
\frac{\log\log(eMn)+\log(1/\eta)}{n},
\end{align}
where $M:=1+\|\mathcal G\|_\infty\vee \|\mathcal F\|_\infty$.

Next, invoke the assumed lower bounds on the critical radii (and the definition of the high-probability radii in the theorem statement) to absorb the explicit logarithmic remainder terms. In particular, under the same type of lower bound used in Step~1,
\[
\delta_{n,\mathcal F}
\ \gtrsim\
(1 \vee M)\sqrt{\frac{\log(1/\eta)+\log\log(eMn)}{n}},
\qquad
\delta_{n,\mathcal G}
\ \gtrsim\
\sqrt{\frac{\log(1/\eta)+\log\log(eMn)}{n}},
\]
we have, after enlarging the implicit constants,
\[
\sqrt{\frac{\log\log(eMn)+\log(1/\eta)}{n}}
\ \lesssim\
\delta_{n,\mathcal F},
\qquad
\frac{\log\log(eMn)+\log(1/\eta)}{n}
\ \lesssim\
\delta_{n,\mathcal F}^2.
\]
Substituting these bounds into \eqref{eqn::step3_start} yields
\begin{align}
\label{eqn::step3_simplified}
\kappa\|\hat f_n-\hat f_0\|^2
&\lesssim
\delta_{n,\mathcal F}\,\|\hat f_n-\hat f_0\|
+\delta_{n,\mathcal F}^2
\nonumber\\
&\quad+
\|\mathcal G\|_\infty\,
\delta_{n,\mathcal F}\,
\Bigl(\|\hat f_n-\hat f_0\|\vee \delta_{n,\mathcal F}\Bigr)
\nonumber\\
&\quad+
c_\infty\,
\|\hat f_n-\hat f_0\|^{\alpha}\,
\delta_{n,\mathcal G}\,
\Bigl(\varepsilon_{\mathrm{nuis}}(n,\eta)\vee \delta_{n,\mathcal G}\Bigr).
\end{align}

We now use the above fixed-point inequality to extract a rate bound. Let $\Delta:=\|\hat f_n-\hat f_0\|$, $\delta_F:=\delta_{n,\mathcal F}$, and $\delta_G:=\delta_{n,\mathcal G}$, and define $\Delta_G:=\varepsilon_{\mathrm{nuis}}(n,\eta)\vee \delta_G$.
From \eqref{eqn::step3_simplified} and the bound $\Delta\vee \delta_F\le \Delta+\delta_F$, we obtain
\[
\kappa \Delta^2
\ \lesssim\
(1+\|\mathcal G\|_\infty)\bigl(\delta_F\Delta+\delta_F^2\bigr)
+c_\infty\,\Delta^\alpha\,\delta_G\,\Delta_G.
\]
Apply Young's inequality to the linear term:
\[
\delta_F\Delta \ \le\ \tfrac{\kappa}{4}\Delta^2 + C\,\delta_F^2,
\]
and absorb $\tfrac{\kappa}{4}\Delta^2$ into the left-hand side. This yields
\begin{equation}
\label{eq:delta_basic_short}
\Delta^2 \ \lesssim\ \delta_F^2 + \Delta^\alpha\,\delta_G\,\Delta_G .
\end{equation}
If $\alpha\in(0,1]$, apply Young's inequality with exponents $p=2/\alpha$ and $q=2/(2-\alpha)$ to the product $\Delta^\alpha\cdot(\delta_G\Delta_G)$:
\[
\Delta^\alpha(\delta_G\Delta_G)
\ \le\
\varepsilon\,\Delta^2
+ C_\alpha\,\varepsilon^{-\alpha/(2-\alpha)}(\delta_G\Delta_G)^{2/(2-\alpha)}.
\]
Choosing $\varepsilon>0$ sufficiently small and absorbing $\varepsilon\Delta^2$ into the left-hand side of \eqref{eq:delta_basic_short} gives
\[
\Delta^2
\ \lesssim\
\delta_F^2
+\bigl(\delta_G\Delta_G\bigr)^{2/(2-\alpha)},
\qquad \alpha\in(0,1).
\]
The endpoint $\alpha=0$ follows directly from \eqref{eq:delta_basic_short}, yielding
\[
\Delta^2\ \lesssim\ \delta_F^2+\delta_G\Delta_G,
\qquad \alpha=0.
\]
The result now follows recalling that $\alpha := 1 - 1/(2\beta)$.

\end{proof}

\end{document}